\begin{document}

\title{The Interpolation Phase Transition in Neural Networks:\\ 
Memorization and Generalization under Lazy Training}
\date{\today}

\author{Andrea Montanari\thanks{Department of Electrical Engineering and
  Department of Statistics, Stanford University}\;\; and \;\; Yiqiao Zhong\footnotemark[1]}

\date{\today}
\maketitle

\begin{abstract}
  Modern neural networks are often operated in a strongly overparametrized regime: they comprise so many
  parameters that they can interpolate the training set, even if actual labels are replaced by purely random ones.
  Despite this, they achieve good prediction error on unseen data: interpolating the training set does not
  lead to a large generalization error. Further, overparametrization appears to be beneficial in that it simplifies the optimization
  landscape.  Here we study these phenomena in the context of two-layers neural networks in the
  neural tangent (NT) regime. We consider a simple data model, with isotropic covariates vectors in $d$ dimensions,
  and $N$ hidden neurons. We assume that both the sample size $n$ and the dimension $d$ are large, 
  and they are polynomially related. 
Our first main result is a characterization of the eigenstructure 
  of the empirical NT kernel in the overparametrized regime $Nd\gg n$.
  This characterization implies as a corollary that the minimum eigenvalue 
  of the empirical NT kernel is bounded away from zero as soon as $Nd\gg n$,
  and therefore the network can exactly interpolate
  arbitrary labels in the same regime.

Our second main result is a characterization of the generalization error of NT ridge regression 
including, as a special case, min-$\ell_2$ norm interpolation.
We prove that, as soon as $Nd\gg n$, the test error is well approximated by the 
one of kernel ridge regression with respect to the infinite-width kernel.
The latter is in turn well approximated by the error of polynomial ridge regression,
whereby the regularization parameter is increased by a `self-induced' term related to 
the high-degree components of the activation function. The polynomial  degree depends on the
sample size and the dimension (in particular on $\log n/\log d$).
\end{abstract}

\tableofcontents
\newpage

\section{Introduction}
\label{sec:intro}

Tractability and generalization are two key problems in statistical learning.
Classically, tractability is achieved by crafting suitable convex objectives,
and generalization by regularizing (or restricting) the
function class of interest to guarantee uniform convergence.
In modern neural networks, a different mechanism
appears to be often at work \cite{neyshabur2015search,zhang2016understanding,belkin2019reconciling}.
Empirical risk minimization becomes tractable despite non-convexity
because the model is overparametrized. In fact, so overparametrized that a model
interpolating perfectly the training set is found in the neighborhood of most initializations.
Despite this, the resulting model generalizes well to unseen data: the inductive bias produced
by gradient-based algorithms is sufficient to select models that generalize well.

Elements of this picture have been rigorously established in special regimes. In particular, it is known
that for neural networks with a sufficiently large number of neurons, gradient descent converges
quickly to a model with vanishing training error \cite{du2018gradient,allen2019convergence,oymak2020towards}.
In a parallel research direction, the generalization properties
of several examples of interpolating models have been studied in detail 
\cite{bartlett2020benign,liang2018just,hastie2019surprises,belkin2019two,mei2019generalization,montanari2019generalization}.
The present paper continues along the last direction, by studying the generalization properties 
of linear interpolating models that arise from the analysis of neural networks.

In this context, many fundamental questions remain challenging. (We refer to Section \ref{sec:related}
for pointers to recent progress on these questions.)
\begin{enumerate}
\item[{\sf Q1.}] When is a neural
network sufficiently complex to interpolate $n$ data points? Counting degrees of freedom would
suggest that this happens as soon as the number of parameters in the network is larger than $n$.
Does this lower bound predict the correct threshold? What are the architectures that achieve this lower bound?
\item[{\sf Q2.}]
  Assume that the answer to the previous question is positive, namely a network with the order of $n$
  parameters can interpolate $n$ data points. Can such a network be found efficiently, 
  using standard gradient descent (GD) or stochastic gradient descent (SGD)? 
\item[{\sf Q3.}] Can we characterize the generalization error above this interpolation threshold?
  Does it decrease with the number of parameters? What is the nature of the implicit regularization
  and of the resulting model $\hat f(\xx)$?
\end{enumerate}
Here we address these questions by studying a class of linear models known as `neural tangent' models. 
Our focus will be on characterizing test error and generalization, i.e.,
on the last question  {\sf Q3}, but our results are also relevant to {\sf Q1} and {\sf Q2}.

We assume to be given data
$\{(\xx_i,y_i)\}_{i\le n}$ with i.i.d.\ $d$-dimensional covariates vectors 
$\xx_i\sim\Unif(\S^{d-1}(\sqrt{d}))$
(we denote by $\S^{d-1}(r)$ the sphere of radius $r$ in $d$ dimensions).
In addressing questions {\sf Q1} and {\sf Q2}, we assume the labels $y_i$ to be arbitrary.
For {\sf Q3} we will assume $y_i = f_*(\xx_i)+\veps_i$, 
where $\veps_1, \ldots, \veps_n$ are independent noise variables with $\E(\veps_i) = 0$ and 
$\E(\veps^2_i) = \sigma_{\veps}^2$. Crucially, we allow for a general target function 
$f_*$ under the minimal condition $\E\{f_*(\xx_i)^2\}<\infty$. We assume both the dimension and
sample size to be large and polynomially related, namely $c_0 d\le n\le d^{1/c_0}$
for some small constant $c_0$.

A substantial literature \cite{du2018gradient,allen2019convergence,zou2020gradient,oymak2020towards,liu2020linearity} shows that,
 \emph{under certain training schemes}, multi-layer
neural networks can be well approximated by linear models  with a nonlinear (randomized)
featurization map that depends on the network architecture and its initialization.
In this paper, we will focus on the simplest such models.
Given a set of weights $\WW = (\ww_1,\dots, \ww_N)$, we define the following 
function class with $Nd$ parameters $\aa := (\aa_1,\dots, \aa_N)$:
\begin{align}
  \cF_{\NT}^N(\WW) :=\Big\{f(\xx;\aa) :=\frac{1}{\sqrt{Nd}}\sum_{i=1}^N
  \<\aa_i,\xx\> \sigma'(\<\ww_i,\xx\>)\, \;\;\; \aa_i\in\R^d\Big\}\, .\label{eq:FNT}
\end{align}
In other words, to a vector of covariates $\xx\in\R^d$, the NT model associates a (random) 
features vector
\begin{equation}
\bPhi(\xx) = 
\frac{1}{\sqrt{Nd}} [ \sigma'(\<\xx, \ww_1\>) \xx^\sT, \ldots, \sigma'(\< \xx, \ww_N \rangle) \xx^\sT ]
\in \R^{Nd}.\label{eq:FeaturizationMap}
\end{equation}
The model then computes a linear function $f(\xx;\aa)= \<\aa,\bPhi(\xx)\>$.
We fit
the parameters $\aa$ via minimum $\ell_2$-norm interpolation:
\begin{equation}\label{eq:InterpolationFirst}
\begin{aligned}
\mbox{minimize} &\;\;\;\; \|\aa\|_2\, ,\\
\mbox{subj. to} &\;\;\;\; f(\xx_i;\aa) = y_i\;\;\; \forall i\le n\, .
\end{aligned}
\end{equation}
We will also study a generalization of min $\ell_2$-norm interpolation
which is given by least-squares regression with ridge penalty.  Interpolation is
recovered in the limit in which the ridge parameter tends to $0$.

As mentioned, the  construction of  model \eqref{eq:FNT} and the choice of 
minimum $\ell_2$-norm interpolation \eqref{eq:InterpolationFirst} are motivated 
by the analysis of two-layers neural networks. In a nutshell, for a suitable scaling of 
the initialization, two-layer networks trained via gradient descent are well approximated
by the min-norm interpolator described above. 

While our focus is not on establishing or expanding the connection
between neural networks and neural tangent models,  Section \ref{sec:ConnectionOptimization}
discusses the relation between model \eqref{eq:FNT} and two-layer networks, mainly
based on \cite{chizat2019lazy,bartlett_montanari_rakhlin_2021}.
We also emphasize that the model \eqref{eq:FNT} is the neural tangent model
corresponding to a two-layer network in which only first-layer weights are trained. 
If second-layer weights were trained as well, this model would have to be slightly 
modified (see Section \ref{sec:ConnectionOptimization}). However, our proofs and results
would remain essentially unchanged at the cost of substantial notational burden. 
The reason is intuitively clear: in large dimensions, the number of second layer weights
$N$ is negligible compared to the number of first-layer weights $Nd$. 

We next summarize our results. We assume the weights $(\ww_k)_{k\le N}$ to be i.i.d.\ with 
$\ww_k\sim\Unif(\S^{d-1}(1))$. In large dimensions, this choice is closely related to one of the most
standard initialization of gradient descent, that is $\ww_k\sim\normal(\bzero,\id_d/d)$.
In the summary below $C$ denotes constants that can change from line to line,
and various statements hold `with high probability' (i.e., with probability converging 
to one as $N,d,n\to\infty$).
\begin{description}
\item[Interpolation threshold.] Considering ---as mentioned above--- feature vectors
  $(\xx_i)_{i\le n}\sim\Unif(\S^{d-1}(1))$, and arbitrary labels $y_i\in\R$ we prove that, 
  if $Nd/(\log Nd)^C\ge n$ then with high probability there exists $f\in\cF^{N}_{\NT}(\WW)$
  that interpolates the data. Namely $f(\xx_i;\aa) = y_i$ for all $i\le n$.

  Finding such an interpolator amounts to solving the $n$ linear equations $f(\xx_i;\aa)=y_i$, $i\le n$
  in the $Nd$ unknowns $\aa_1,\dots, \aa_N$, which parametrize $\cF^{N}_{\NT}(\WW)$,
  cf. Eq.~\eqref{eq:FNT}. Hence the function $f$ can be found efficiently, e.g. via
  gradient descent with respect to the square loss.
\item[Minimum eigenvalue of the empirical kernel.] In order to prove the previous upper bound on the
  interpolation threshold, we show that the linear system $f(\xx_i;\aa)=y_i$, $i\le n$ has full row rank
  provided $Nd/(\log Nd)^C\ge n$. In fact, our proof provides quantitative control on the 
  eigenstructure of the associated empirical kernel matrix. 
  
  Denoting by $\bPhi\in\reals^{n\times (Nd)}$ the matrix whose $i$-th row
  is the $i$-th feature vector $\bPhi(\xx_i)$, the empirical kernel is given 
  by $\bK_N:= \bPhi\bPhi^{\top}$. We then prove that, for $Nd/(\log Nd)^C\ge n$,
  $\bK_N$ tightly concentrates around its expectation with respect to the weights
  $\bK:= \E_{\WW}[\bPhi\bPhi^{\top}]$, which can be interpreted as the kernel associated to
  an infinite-width network. We then prove that the latter is well approximated
  by $\bK^{p}+\gamma_{>\ell}\id_n$, where $\bK^{p}$ is a polynomial kernel of constant degree $\ell$,
  and $\gamma_{>\ell}$ is bounded away from zero, and depends on the high-degree components
  of the activation function.
  This result implies a tight lower bound on the minimum eigenvalue of 
  the kernel $\bK_N$ as well as estimates of all the other eigenvalues.
  The term $\gamma_{>\ell}\id_n$ acts as a `self-induced' ridge regularization.
  
  We note that $\lambda_{\min}(\bK_N)$ has a direct algorithmic relevance, as discussed in
  \cite{du2018gradient,chizat2019lazy,oymak2020towards}.
\item[Generalization error.] Most of our work is devoted to the analysis 
 of the generalization error of the min-norm NT interpolator
\eqref{eq:InterpolationFirst}.
 As mentioned above, we consider general labels of the form $y_i=f_*(\xx_i)+\eps_i$, 
 for $f_*$ a target function with finite second moment. We prove that,
 as soon as $Nd/(\log Nd)^C\ge n$, \emph{the risk of NT interpolation is close 
 to the one of polynomial ridge regression with kernel $\bK^p$ and ridge parameter 
 $\gamma_{>\ell}$}.
 
 The degree $\ell$ of the effective polynomial kernel is defined by the condition 
 $d^{\ell}(\log d)^C\le n\ll d^{\ell+1}/(\log d)^C$ for $\ell\ge 2$ or
 $d/C\le n\ll d^{2}/(\log d)^C$ for $\ell=1$
 (in particular our result does not cover the case $n\asymp d^{\ell}$ for an integer $\ell\ge 2$,
 but they cover $n\asymp d^{\alpha}$ for any non-integer $\alpha$).

Remarkably, even if the original NT model interpolates the data, the 
equivalent polynomial regression model does not interpolate and has a positive 
`self-induced' regularization 
parameter $\gamma_{>\ell}$. 

From a technical viewpoint, the characterization of the generalization behavior
is significantly more challenging than the analysis of the eigenstructure of the kernel $\bK_N$
at the previous point. Indeed understanding generalization requires to study the
eigenvectors of $\bK_N$, and how they change when adding new sample points. 
\end{description}

These results provide a clear picture of how neural networks achieve good generalizations
in the neural tangent regime, despite interpolating the data. 
First, the model is nonlinear in the input covariates, and sufficiently overparametrized. Thanks to this flexibility
it can interpolate the data points.
Second, gradient descent selects the min-norm interpolator.
This results in a model that is very close to $\hat{f}_{\ell}(\xx)$ a polynomial of degree $\ell$
at `most' points $\xx$ (more formally, the model is close to a polynomial in the 
$L^2$ sense). Third, because of the large dimension $d$, the empirical kernel 
matrix contains a portion proportional to the identity, which acts as 
a \emph{self-induced regularization} term.

The rest of the paper is organized as follows. 
In the next section, we illustrate our results through some simple numerical simulations. 
We formally present our main results in Section~\ref{sec:Main}.
We state our characterization of the NT kernel, and of the generalization error of NT regression.
 In Section~\ref{sec:related}, we briefly overview related work on interpolation and 
 generalization of neural networks.
 We discuss the connection between GD-trained neural networks and neural tangent models 
 in Section \ref{sec:ConnectionOptimization}.
 Section~\ref{sec:pre} provides some technical background on orthogonal polynomials,
 which is useful for the proofs.
  The proofs of the main theorems are outlined in Sections~\ref{sec:ProofInvertibility}
  and \ref{sec:ProofGenMain},
  with most technical work  deferred to the appendices.

\newpage

\section{Two numerical illustrations}\label{sec:numerical}

\subsection{Comparing NT models and Kernel Ridge Regression}

\begin{figure}[H]
\centering
\begin{subfigure}[t]{0.7\textwidth}
\centering
\includegraphics[width=\linewidth]{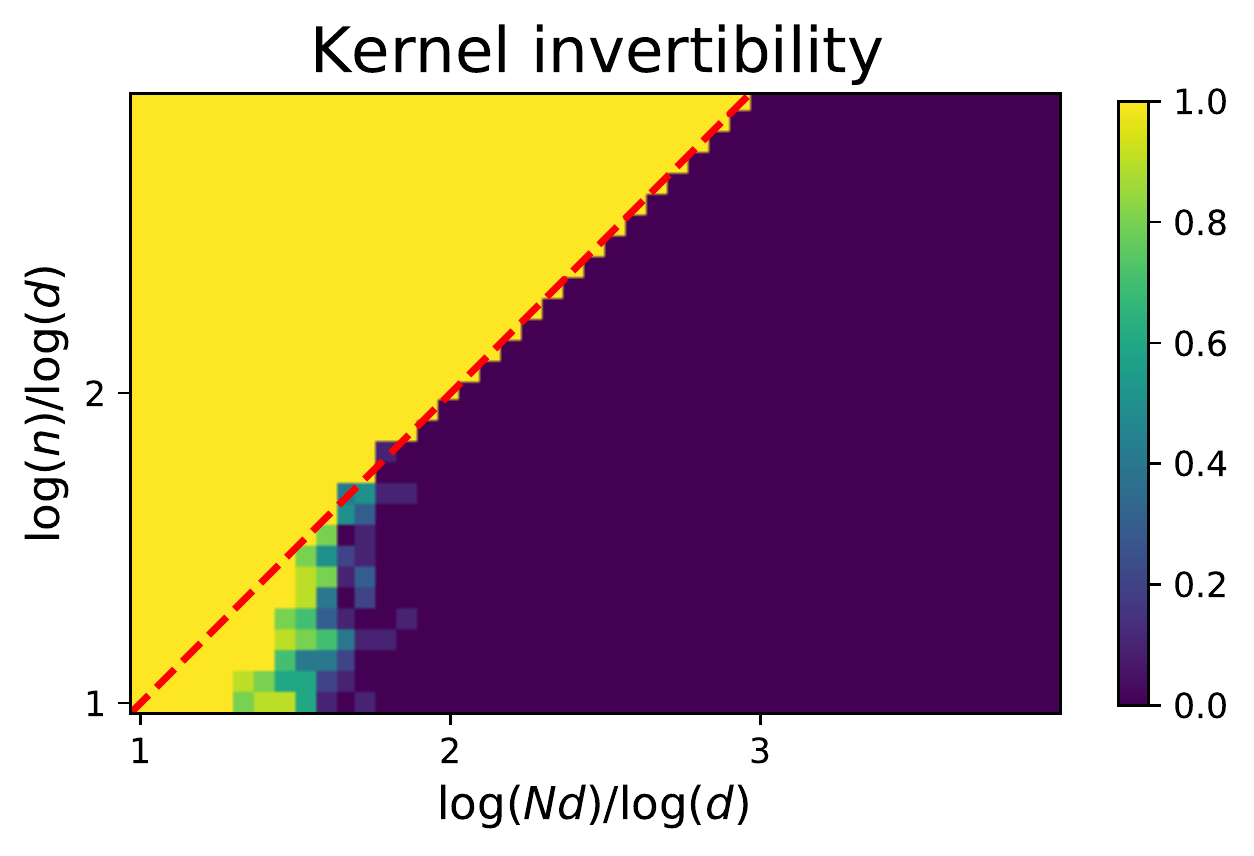}
\end{subfigure}
\begin{subfigure}[t]{0.7\textwidth}
\centering
\includegraphics[width=\linewidth]{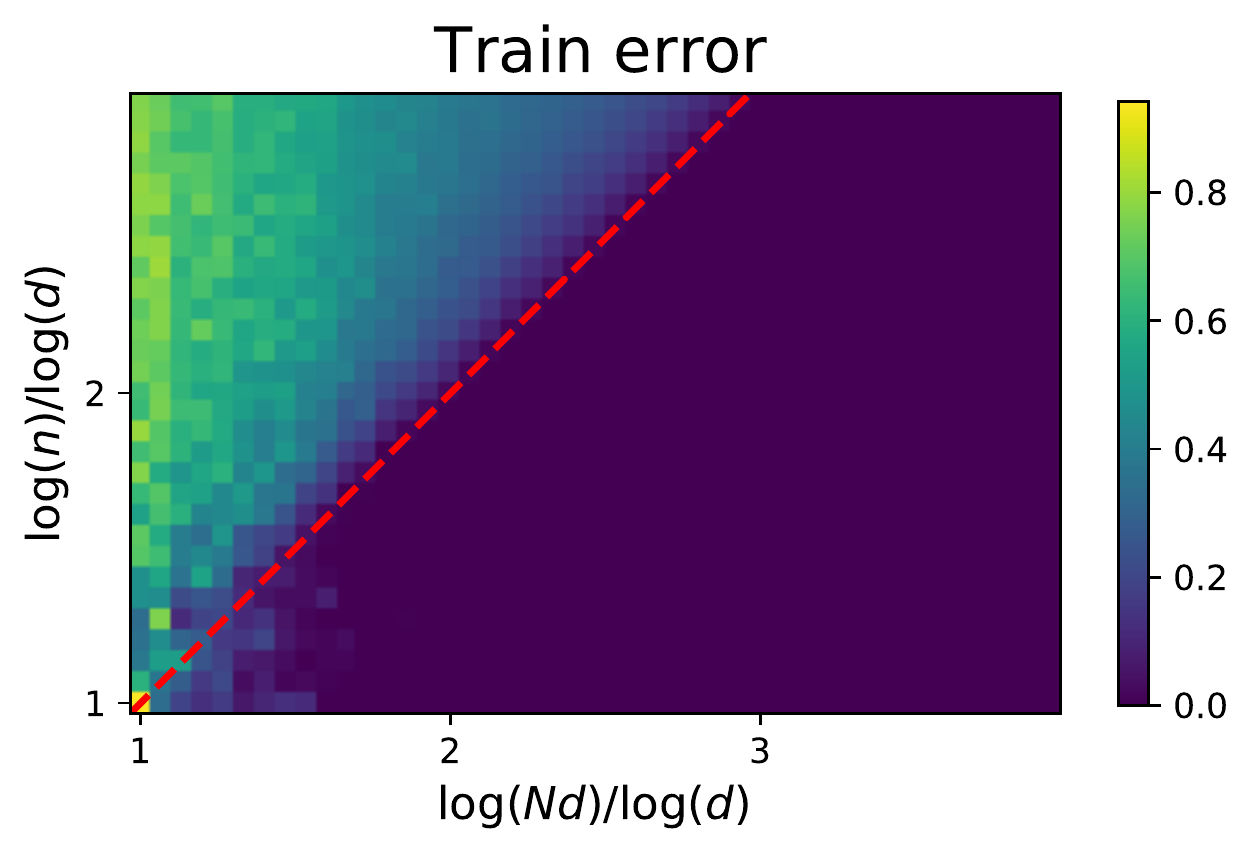}
\end{subfigure}
\vspace{-3mm}
\caption{In both heatmaps, we fix the dimension $d = 20$ and use min-$\ell_2$ norm NT regression to fit data generated according to 
\eqref{eq:Target}. Results are averaged over $n_{\text{rep}}=10$ repetitions. 
\textbf{Top:} For varying network parameters $Nd$ and sample size $n$, we check 
if $\bK_N$ is singular (we report the empirical probability).  
\textbf{Bottom:} We calculate the train error of min-$\ell_2$ norm NT regression.}\label{fig:phasetran1} 
\end{figure}

In the first experiment, we generated data according to the model
$y_i = f_*(\xx_i)+\veps_i$, with $\xx_i\sim\Unif(\S^{d-1}(\sqrt{d}))$,
$\veps_i\sim\normal(0,\sigma^2_{\veps})$, $\sigma_{\veps}=0.5$, and 
\begin{align}
f_*(\xx) = \sqrt{\frac{4}{10}} \,h_1(\<\bbeta_*,\xx\>)+
\sqrt{\frac{4}{10}} \, h_2(\<\bbeta_*,\xx\>)+\sqrt{\frac{2}{10}} \,h_4(\<\bbeta_*,\xx\>)\, .\label{eq:Target}
\end{align}

Here $\bbeta_*$ is a fixed unit norm vector (randomly generated and then fixed throughout), and $(h_k)_{k\ge 0}$ are orthonormal
Hermite polynomials, e.g. $h_1(x) =x$, $h_1(x) =(x^2-1)/\sqrt{2}$ 
(see Section \ref{sec:pre} for general definitions).
We fix $d=20$, and compute the min $\ell_2$-norm NT interpolator
for  ReLU activations $\sigma(t)=\max(t,0)$, and random weights 
$\ww_k\sim\Unif(\S^{d-1}(\sqrt{d}))$. We average our results
over $n_{\text{rep}}=10$ repetitions.

\begin{figure}[H]
\centering
\begin{subfigure}{0.7\textwidth}
\centering
\includegraphics[width=\linewidth]{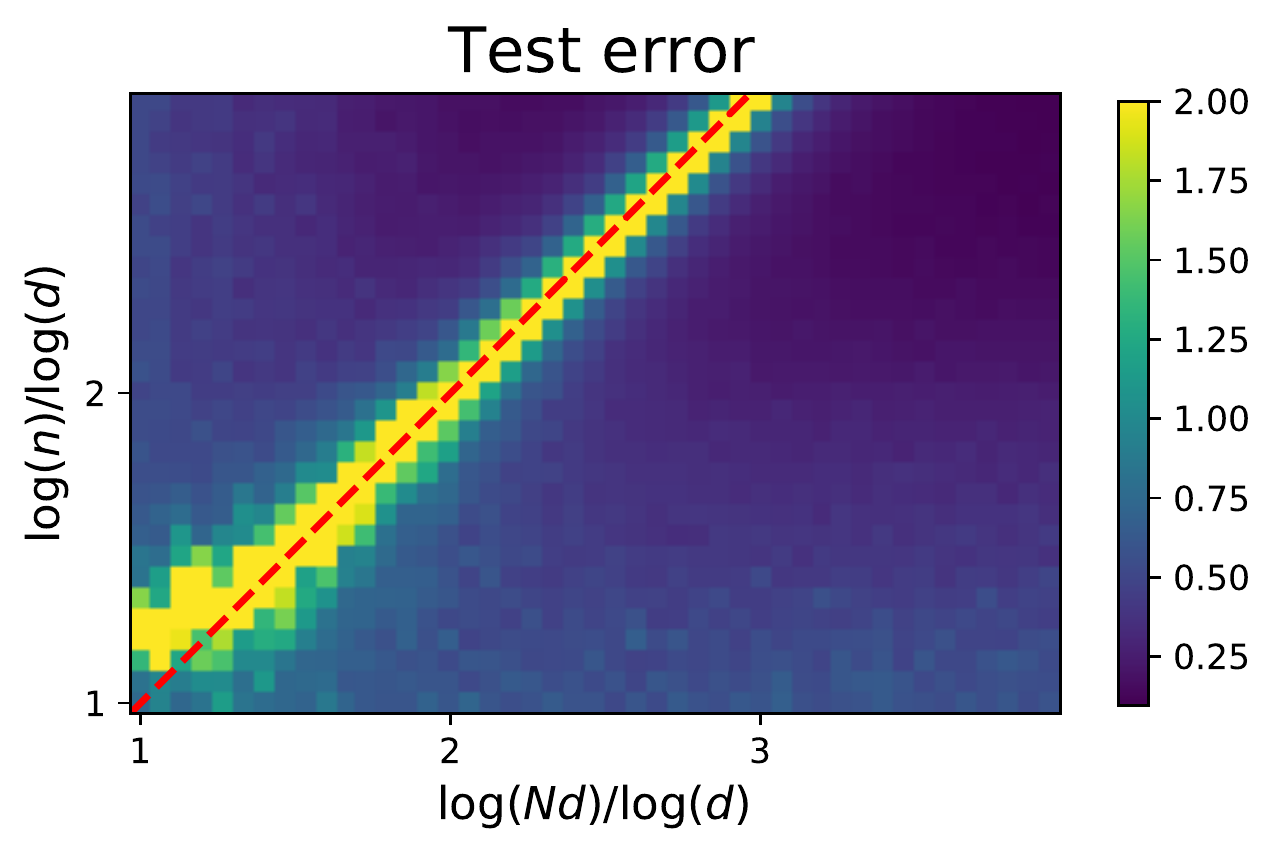}
\end{subfigure}
\begin{subfigure}{0.7\textwidth}
\centering
\includegraphics[width=0.9\linewidth]{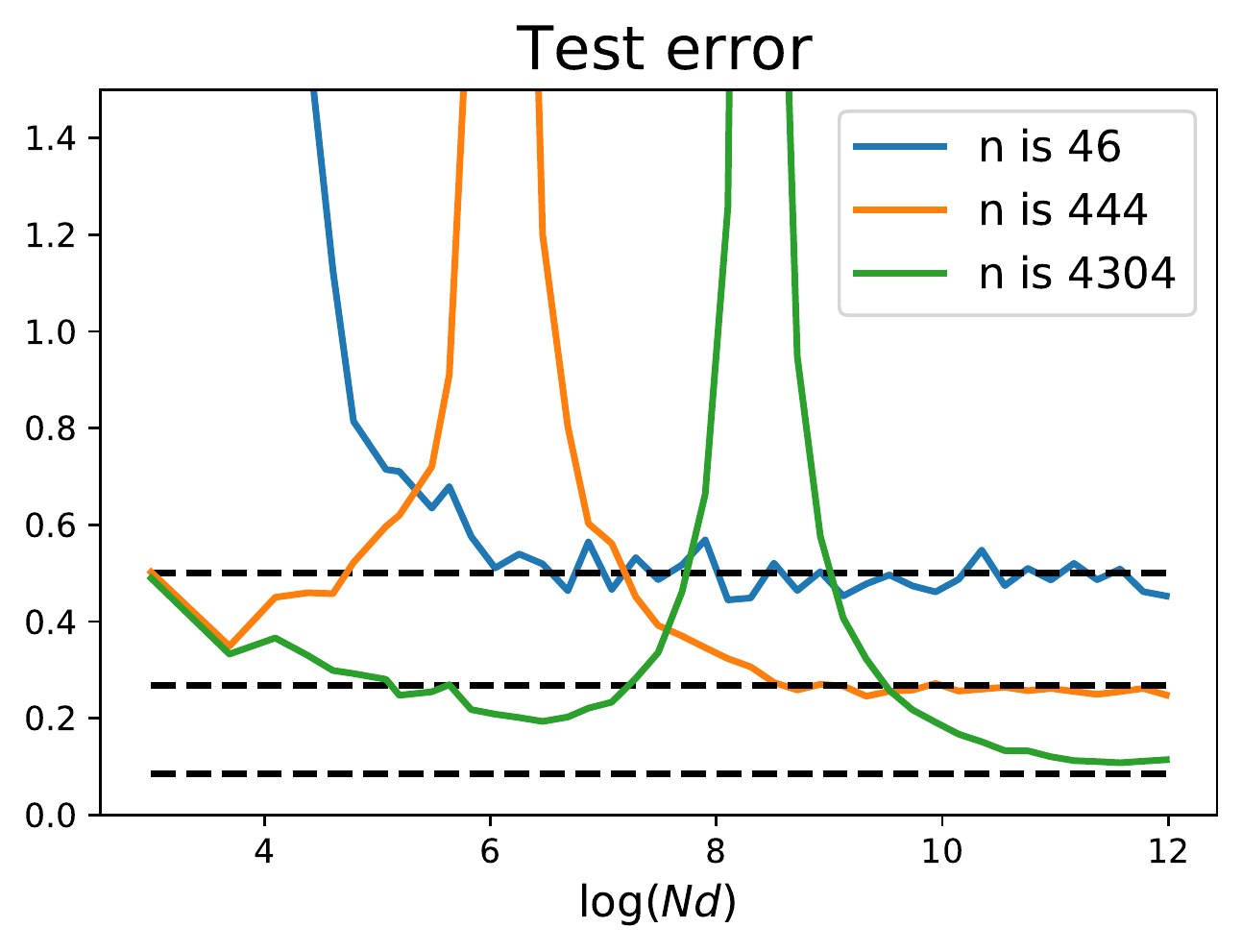}
\end{subfigure}
\vspace{-3mm}
\caption{We fix the dimension $d = 20$ and use min-$\ell_2$ norm NT regression as before. 
We calculated the test error using an independent test set of size $4000$. Results are 
averaged over $n_{\text{rep}}=10$ repetitions. \textbf{Top:} Test errors are plotted for 
varying $Nd$ and $n$; we cap the errors at $2$ so that the blowup near the dashed line 
is easy to visualize. \textbf{Bottom:} We examine the test errors at three particular sample 
sizes $n=46, 444$ and $4304$. The dashed horizontal lines represent test errors of KRR with 
the infinite-width kernel $\bK$ at three corresponding sample sizes.} \label{fig:phasetran2}
\end{figure}

From Figure~\ref{fig:phasetran1}, we observe that $(1)$~the minimum eigenvalue of the kernel becomes strictly positive 
very sharply as soon as $Nd/n\gtrsim 1$, $(2)$~as a consequence, the train error vanishes sharply as  
$Nd/n$ crosses $1$. Both phenomena are captured by
our theorems in the next section, although we require the 
condition $Nd/(\log Nd)^C\ge n$ which is suboptimal by a 
 polylogarithmic factor.

From Figure~\ref{fig:phasetran2}, we make the following observations and remarks.
\begin{enumerate}
 \item The number of samples $n$ and the number of parameters $Nd$ play a strikingly symmetric role. The
 test error is large when $Nd\approx n$ (the interpolation threshold)
 and decreases rapidly when either $Nd$ or $n$ increases (i.e.\ moving either along horizontal
 or vertical lines). In the context of random features model, a form of this symmetry property was established 
 rigorously in \cite{mei2021generalization}. For the present work, we only focus on the overparametrized regime $Nd\gg n$.
  
\item The test error rapidly decays to a limit value as $Nd$ grows at fixed $n$. 
We interpret the limit value as the infinite-width limit, and indeed matches 
the risk of kernel ridge(--less) regression with respect to the infinite-width kernel $\bK$
(dashed lines); see Theorem \ref{thm:gen}.

 \item Considering the most favorable case, namely $Nd\gg n$, the test error appears
 to remain bounded away from zero even when  $n\approx d^2$. This appears to be surprising
 given the simplicity of the target function \eqref{eq:Target}. 
This phenomenon can be explained by Theorem~\ref{thm:gen}, in which we show that for $d^\ell \ll n\ll d^{\ell+1}$ with $\ell$ an integer,  
 NT regression is roughly equivalent to regression with respect to 
 degree-$\ell$ polynomials. In particular, for $n\ll d^3$ it will not capture 
 components  of degree larger than 2 in the target $f_*$ of Eq.~\eqref{eq:Target}. 
\end{enumerate}

\begin{figure}[t]
\centering
\includegraphics[scale=0.8]{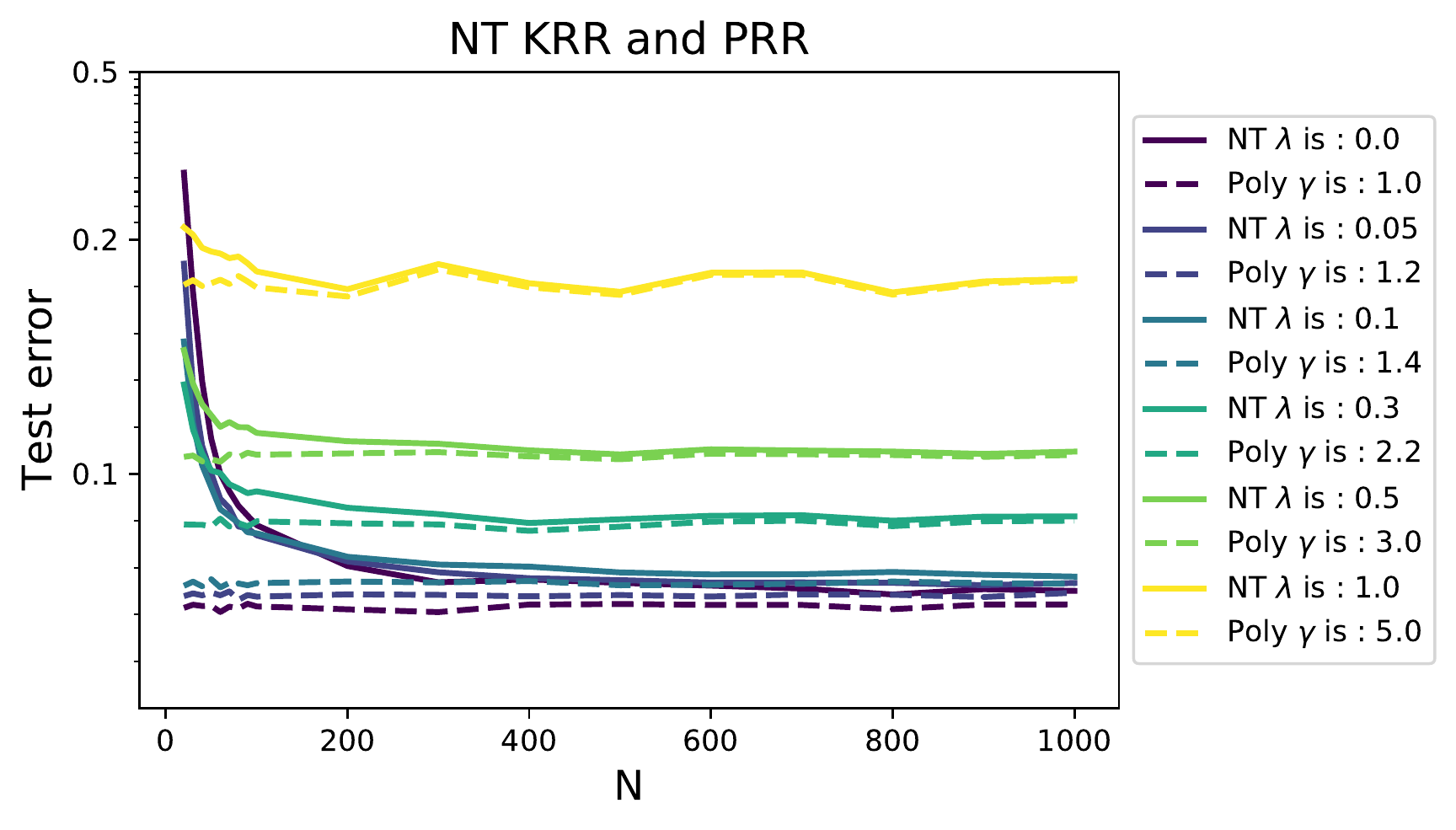}
\caption{Test/generalization errors of NT kernel ridge regression (NT) and polynomial ridge regression (Poly). We fix $n = 4000$ and $d=500$. For each regularization parameter $\lambda$ (which corresponds to one color), we plot two curves (solid: NT, dashed: Poly) that represent $R_{\NT}(\lambda)$ and $R_{\slin}(\gamma_{\seff}(\lambda, \sigma))$ respectively. }\label{fig:sim}
\end{figure}

\begin{figure}[t]
\centering
\includegraphics[scale=0.8]{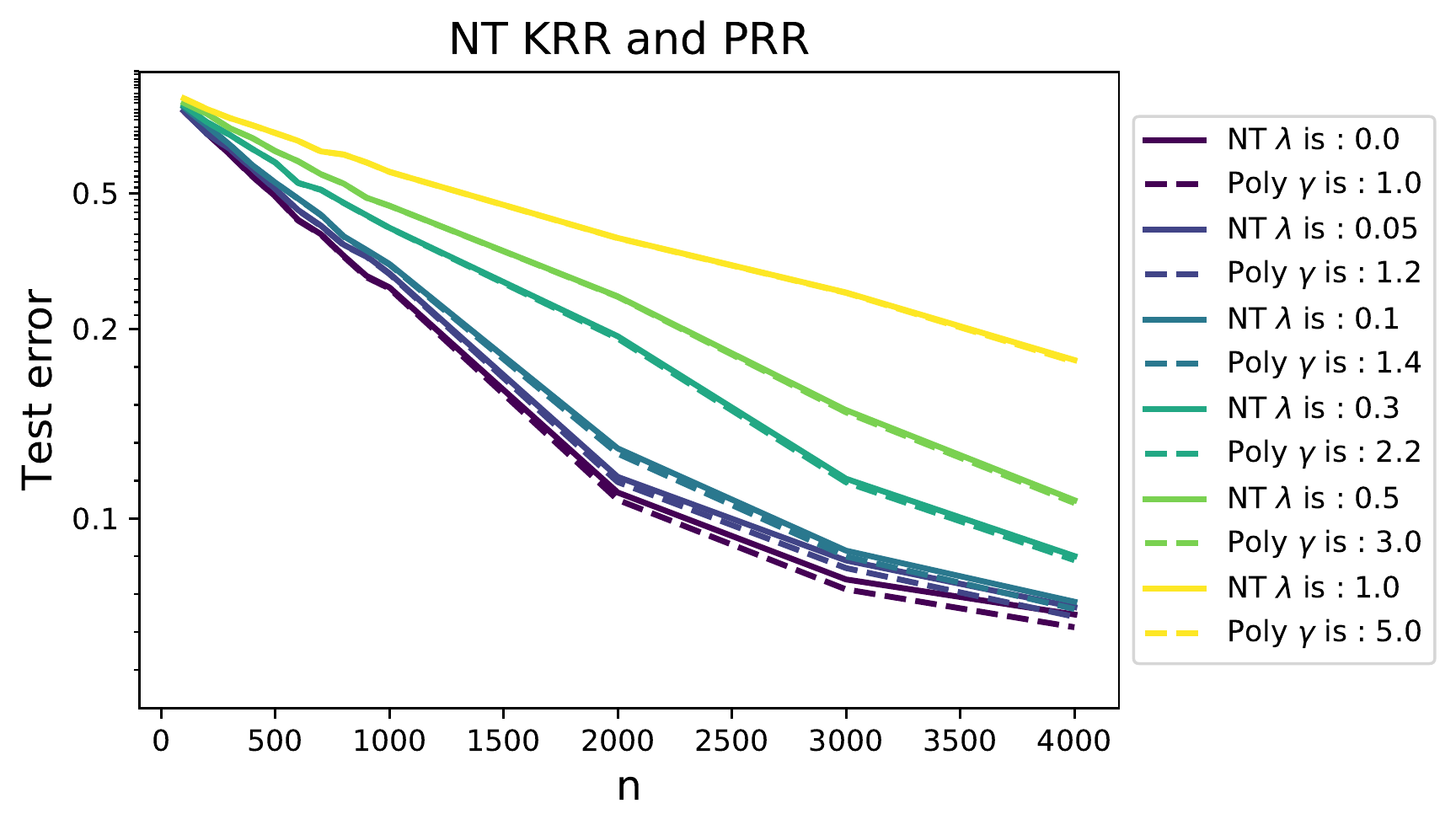}
\caption{Test/generalization errors of NT and Poly.  As before, for each regularization parameter $\lambda$ (which corresponds to one color), we plot two curves. 
}
\label{fig:sim2}
\end{figure}

\subsection{Comparing NT regression and polynomial regression}

In the second experiment, we generated data from a linear model
 $y_i = \langle \xx_i, \bbeta_* \rangle + \veps_i$. As before, $\xx_i\sim\Unif(\S^{d-1}(\sqrt{d}))$,
$\veps_i\sim\normal(0,\sigma^2_{\veps})$, $\sigma_{\veps}=0.5$, and $\bbeta_*$ 
is a fixed unit norm vector (randomly generated and then fixed throughout).
In Figure~\ref{fig:sim}, We fix $n = 4000$ and $d=500$, and vary the number of neurons 
\seqsplit{$N \in \{10,20,\ldots,90,100, \ldots, 1000\}$}.  In Figure~\ref{fig:sim2}, we 
fix $N = 800$ and $d=500$, and vary the sample size 
\seqsplit{$n \in \{100,200,\ldots,900,1000,1500,\ldots,4000\}$}. The results are averaged over
 $n_{\text{rep}}=10$ repetitions. 
 
 Notice that in all of these experiments $n\ll d^2$.  The theory developed below
 (see in particular Theorem \ref{thm:gen} and Corollary~\ref{cor:linear})
 implies that the risk of NT ridge regression should be well approximated by the risk
 polynomial ridge regression, although with an inflated ridge parameter. 
 For $n\ll d^2$, the polynomial degree is $\ell=1$. If $\lambda\ge 0$ denotes the 
 regularization parameter in NT ridge regression, the equivalent regularization
 in polynomial regression is predicted to be
 \begin{equation}
\gamma = \frac{\lambda + \var(\sigma')}{\big\{ \E \big[\sigma'(G)] \big\}^2},
\label{eq:MatchGammaLambda}
\end{equation}
where $G \sim \cN(0,1)$.

In Figures~\ref{fig:sim} and~\ref{fig:sim2} we fit NT ridge regression (NT) and polynomial
 ridge regression of degree $\ell=1$ (Poly).
 We use different pairs of regularization parameters $\lambda$ (for NT) and $\gamma$ (for Poly),
 satisfying  Eq.~\eqref{eq:MatchGammaLambda}. We observe a close match  between the risk 
 of NT regression
 and polynomial, in agreement with the theory established below.

We also trained a two-layer neural network to fit this linear model. Under specific 
initialization of weights, the test error is well aligned with the one from NT KRR and 
polynomial ridge regression. Details can be found in the appendix.

\section{Main results}
\label{sec:Main}

\subsection{Notations}\label{sec:notation}

For a positive integer, we denote by $[n]$ the set $\{1,2,\ldots,n\}$.  
The $\ell_2$ norm of a vector $\uu\in\reals^m$  is denoted by $\| \uu \|$. We denote by 
$\S^{d-1}(r) = \{\uu\in \R^d: \| \uu \|=r\}$ the sphere of radius $r$ in $d$ dimensions
 (sometimes we simply write $\S^{d-1} := \S^{d-1}(1)$). 

Let $\bA \in \R^{n \times m}$ be a matrix. 
We use $\sigma_j(\bA)$ to denote the $j$-th largest singular value of $\bA$, and we also denote 
$\sigma_{\max}(\bA) = \sigma_{1}(\bA)$ and $\sigma_{\min}(\bA) = \sigma_{\min\{m,n\}}(\bA)$. 
If $\bA$ is a symmetric matrix, we use $\lambda_j(\bA)$ to denote its $j$-th largest eigenvalue. We denote by $\| \bA \|_{\op} = \max_{\uu \in \S^{m-1}} \| \bA \uu \|$ 
the operator norm,  by $\| \bA \|_{\max} = \max_{i \in [n], j \in [m]} |A_{ij}|$ the maximum norm, 
 by $\| \bA \|_{F} = \big( \sum_{i,j} A_{ij}^2 \big)^{1/2}$ the Frobenius norm, and 
  by $\| \bA \|_* = \sum_{j=1}^{\min\{n,m\}} \sigma_j(\bA)$ the nuclear norm 
  (where $\sigma_j$ is the $j$-th singular value). If $\bA \in \R^{n \times n}$ is a square matrix, 
  the trace of $\bA$ is denoted by $\Tr(\bA) = \sum_{i \in [n]} A_{ii}$. 
  Positive semi-definite is abbreviated as p.s.d.

We will use $O_d(\cdot)$ and $o_d(\cdot)$ for the standard big-$O$ and small-$o$ notation, 
where $d$ is the asymptotic variable. 
We write $a_d = \Omega_d(b_d)$ for scalars $a_d,b_d$ if there exists $d_0, C>0$ 
such that $a_d \ge C b_d$ for $d > d_0$.
 For random variables $\xi_1(d)$ and $\xi_2(d)$, $\xi_1(d) = O_{d,\P}(\xi_2(d))$ if for any
  $\veps$, there exists $C_{\veps}>0$ and $d_{\veps}>0$ such that
\begin{equation*}
\P\big( | \xi_1(d) / \xi_2(d) | > C_\veps \big) \le \veps, \qquad \text{for all}~d > d_\veps.
\end{equation*}
Similarly, 
 $\xi_1(d) = o_{d,\P}(\xi_2(d))$ if $\xi_1(d) / \xi_2(d)$ converges to $0$ in probability. 
Occasionally, we use the notation $\tilde O_{d,\P}(\cdot)$ and $\tilde o_{d,\P}(\cdot)$: 
we write $\xi_1(d) = \tilde O_{d,\P}(\xi_2(d))$ if there exists a constant $C>0$ such that 
$\xi_1(d) = O_{d,\P}((\log d)^C \xi_2(d))$, and similarly we write 
$\xi_1(d) = \tilde o_{d,\P}(\xi_2(d))$ if there exists a constant $C>0$ such that 
$\xi_1(d) = o_{d,\P}((\log d)^C \xi_2(d))$. We may drop the subscript $d$ if there is no confusion 
in a context.

For a nonnegative sequence $(a_d)_{d \ge 1}$, we write $a_d = \poly(d)$ if there 
exists a constant $C>0$ such that $a_d \le Cd^C$.

Throughout, we will use $C,C_0,C_1,C_2,C_3$ to refer to constants that do not depend on $d$. 
In particular, for notational convenience, the value of $C$  may change from line to line.

Most of our statement apply to settings in which $N,n,d$ all grow to $\infty$,
while satisfying certain conditions. Without loss of generality, one can thing that 
such sequences are indexed by $d$, with $N,n$ functions of $d$.

Recall that we say that $A_{N,n,d}$ happens 
\textit{with  high probability} (abbreviated as w.h.p.) if its probability tends
 to one as $N,n,d\to \infty$
(in whatever way is specified in the text).
In certain proofs, we will say that an event $A_{N,n,d}$ happens 
\textit{with very high probability} if for every $\beta > 0$, we have 
$\lim_{d \to \infty} d^{\beta} \P(A_{N,n,d}^c) = 0$ (again, we think of $N,n\to\infty$
when $d\to\infty$ in whatever way prescribed in the text).

\subsection{Definitions and assumptions}

Throughout, we assume $\xx_i \sim_\iid \Unif(\S^{d-1}(\sqrt{d}))$ and 
$\ww_k \sim_\iid \Unif(\S^{d-1})$. 
To a vector of covariates $\xx\in\R^d$, the NT model associates a (random) 
features vector $\bPhi(\xx)$ as per Eq.~\eqref{eq:FeaturizationMap}.
We denote by $\bPhi \in \R^{n \times (Nd)}$ the matrix whose $i$-th row contains the feature vector of the $i$-th sample,
and by $\bK_N := \bPhi\bPhi^{\top}$ the corresponding empirical kernel:
\begin{equation*}
  \bPhi = \left[\begin{matrix} \bPhi(\xx_1)^{\top}\\
     \bPhi(\xx_2)^{\top}\\ \ldots\\
      \bPhi(\xx_n)^{\top} \end{matrix}\right] \in \R^{n \times (Nd)},\;\;\;\;\;\; 
      \bK_N := \bPhi\bPhi^{\top}\in\R^{n\times n}\, .
\end{equation*}
The entries of the kernel matrix take the form
\begin{equation}
[\bK_N]_{ij} = \frac{1}{Nd}\sum_{k=1}^N \sigma'(\< \xx_i, \ww_k \> ) \sigma'(\< \xx_j, \ww_k \> ) \< \xx_i, \xx_j \>\, .\label{eq:KernelDef}
\end{equation}
The infinite-width kernel matrix is given by
\begin{equation}
\label{eq:InfiniteWidthKernelDef}
[\bK]_{ij} = \E_\ww \big[ \sigma'(\langle \xx_i, \ww \rangle) \sigma'(\langle \xx_j, \ww \rangle) \big] \frac{\langle \xx_i, \xx_j \rangle}{d}.
\end{equation}
In terms of the featurization map $\bPhi$, an NT function $f\in\cF_{\NT}^N(\WW)$ reads
\begin{align}
f(\xx;\aa) = \<\aa,\bPhi(\xx)\>\, ,
\end{align}
where $\aa = (\aa_1,\dots \aa_N)^\top\in\R^{Nd}$.

\begin{ass}\label{ass:Asymp}
Given an arbitrary integer $\ell \ge 1$, and a small constant $c_0>0$, we assume that the following hold for a 
sufficiently large constant $C_0$ (depending on $c_0$, $\ell$, and on the activation $\sigma$)
\begin{align}
&  c_0d \le n \le \frac{d^2}{(\log d)^{C_0} }, & \text{if}~\ell = 1, \label{eq:AssLge1}\\
& d^{\ell} (\log d)^{C_0} \le n \le \frac{d^{\ell + 1}}{(\log d)^{C_0} }, & \text{if}~\ell > 1.
\label{eq:AssLge2}
\end{align}
\end{ass}

Throughout, we assume that the activation function $\sigma:\R\to\R$ satisfies the following condition. Note that commonly used activation functions, such as ReLU, sigmoid, tanh, leaky ReLU, satisfy this condition. Low-degree polynomials are excluded from this assumption. We denote by $\mu_k(\sigma')$ the $k$-th coefficient in the Hermite expansion of $\sigma'$; see its definition in Section~\ref{sec:pre}, especially Eqn.~\ref{decomp:Hermite}.

\begin{ass}[polynomial growth]\label{ass:Sigma}
We assume that $\sigma$ is weakly differentiable with weak derivative $\sigma'$ satisfying 
$|\sigma'(x)|\le B(1+|x|)^B$ for some finite constant $B>0$, and that
 $\sum_{k \ge \ell} [\mu_k(\sigma')]^2 > 0$.
\end{ass}

Note that this condition is extremely mild: it is satisfied by any activation function
of practical use. The existence of the weak derivative $\sigma'$
is needed for the NT model to make sense at all. 
The assumption of polynomial growth for  $\sigma'$ 
ensures that we can use harmonic analysis  on the sphere to analyze its behavior.
Finally the condition $\sum_{k \ge \ell} [\mu_k(\sigma')]^2 > 0$ is satisfied 
if $\sigma$ is not a polynomial of degree $\ell$.

\subsection{Structure of the kernel matrix}

Given points $\xx_1,\dots \xx_n$, $\bPhi$ has full row rank, i.e.\ $\rank(\bPhi) = n$ if and 
only if for any choice of the labels or responses $y_1,\dots,y_n\in\R$,
there exists  a function $f\in\cF_{\NT}^N(\WW)$  interpolating those data, i.e. $y_i=f(\xx_i)$ for all $i\le n$.  This of course requires $Nd\ge n$.
Our first result---which is a direct corollary of Theorem~\ref{thm:MinEigenvalue} that we present below---shows that this lower bound is roughly correct.
\begin{cor}\label{thm:1}
Assume $(\xx_i)_{i\le n}\sim_{\iid} \Unif(\S^{d-1}(\sqrt{d}))$, and  
$(\ww_k)_{k \le N}\sim_{\iid}\Unif(\S^{d-1}(1))$, and let $B,c_0>0$, $\ell\in\naturals$ be fixed.
 Then, there exist constants $C_0, C>0$ such that the following holds. 

If Assumptions~\ref{ass:Asymp} and~\ref{ass:Sigma} hold with constant $\ell,c_0,C_0, B$,
and
if $Nd/(\log (Nd))^C\ge n$, then $\rank(\bPhi) = n$ with high probability. 
In particular an NT interpolator exists with high probability for any choice of the responses $(y_i)_{i\le n}$.
\end{cor}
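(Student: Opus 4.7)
The plan is to deduce Corollary \ref{thm:1} directly from the characterization of the minimum eigenvalue of $\bK_N$ provided by Theorem \ref{thm:MinEigenvalue}. The key observation is that $\rank(\bPhi) = n$ is equivalent to $\bK_N = \bPhi\bPhi^\top$ being strictly positive definite, i.e. $\lambda_{\min}(\bK_N) > 0$. So it suffices to show that under the stated hypotheses, $\lambda_{\min}(\bK_N)$ is bounded away from zero with high probability.

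The argument proceeds via a three-step comparison chain. First, I would approximate the empirical kernel $\bK_N$ by its weight-average $\bK = \E_{\WW}[\bPhi\bPhi^\top]$ (the infinite-width kernel defined in \eqref{eq:InfiniteWidthKernelDef}), showing that $\|\bK_N - \bK\|_{\op}$ is small whenever $Nd/(\log Nd)^C \ge n$. This is a matrix concentration statement: each summand in \eqref{eq:KernelDef} is, up to the scaling $1/(Nd)$, a rank-one contribution from a single hidden neuron $\ww_k$, so a matrix Bernstein or matrix Chernoff inequality (combined with a truncation step to handle the polynomial growth of $\sigma'$ from Assumption \ref{ass:Sigma}) should yield a bound of the form $\|\bK_N - \bK\|_{\op} \lesssim n/\sqrt{Nd}$ up to polylogarithmic factors. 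Second, I would invoke the structural decomposition $\bK \approx \bK^{p} + \gamma_{>\ell}\id_n$ announced in the introduction, where $\bK^p$ is the positive semidefinite polynomial kernel of degree $\ell$ and $\gamma_{>\ell}$ is a strictly positive constant arising from the high-degree Hermite coefficients of $\sigma'$ (nonzero precisely by the condition $\sum_{k\ge \ell}[\mu_k(\sigma')]^2 > 0$ in Assumption \ref{ass:Sigma}). This decomposition yields $\lambda_{\min}(\bK) \ge \gamma_{>\ell} - \|\bK - \bK^p - \gamma_{>\ell}\id_n\|_{\op}$, and standard harmonic analysis on the sphere (using that $\xx_i \sim \Unif(\S^{d-1}(\sqrt{d}))$ and Assumption \ref{ass:Asymp}) controls the error term.

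Combining these two steps by Weyl's inequality,
\begin{equation*}
\lambda_{\min}(\bK_N) \;\ge\; \gamma_{>\ell} \;-\; \|\bK_N - \bK\|_{\op} \;-\; \|\bK - \bK^p - \gamma_{>\ell}\id_n\|_{\op} \;-\; \lambda_{\min}(-\bK^p),
\end{equation*}
and since $\bK^p \succeq 0$ the last term is nonpositive. Under the assumption $Nd/(\log Nd)^C \ge n$, both error terms are $o(1)$, so $\lambda_{\min}(\bK_N) \ge \gamma_{>\ell}/2 > 0$ w.h.p. This gives $\rank(\bK_N) = n$, hence $\rank(\bPhi) = n$, so the linear system $\bPhi\aa = \yy$ admits a solution for every response vector $\yy\in\R^n$, producing an NT interpolator $f(\cdot;\aa)\in\cF_{\NT}^N(\WW)$.

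The main technical obstacle is step one, the concentration of $\bK_N$ around $\bK$ in operator norm at the sharp rate $n/\sqrt{Nd}$ (up to polylog factors), because the summands involve the products $\sigma'(\langle\xx_i,\ww_k\rangle)\sigma'(\langle\xx_j,\ww_k\rangle)\langle\xx_i,\xx_j\rangle$ which are only sub-exponential (not bounded) and mildly correlated across $(i,j)$ pairs. This forces either a careful truncation combined with matrix Bernstein, or a decoupling argument, and it is precisely the regime $Nd \gtrsim n$ (polylog) that makes the bound tight — hence the slight suboptimality by the $(\log Nd)^C$ factor noted by the authors. All the remaining ingredients (harmonic analysis on the sphere to diagonalize $\bK$, spherical harmonics expansion of $\sigma'$, and quantitative control of the polynomial approximation error) are classical but must be tracked carefully in the regime prescribed by Assumption \ref{ass:Asymp}.
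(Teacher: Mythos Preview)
Your overall strategy is the same as the paper's: reduce to $\lambda_{\min}(\bK_N)>0$, compare $\bK_N$ to the infinite-width kernel $\bK$, and lower-bound $\lambda_{\min}(\bK)$ via the decomposition $\bK\approx\bK^p+\gamma_{>\ell}\bI_n$ (Lemma~\ref{lem:Kdecomp}). Steps~2 and~3 are fine. But Step~1 as written contains a genuine gap.

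You propose to bound $\|\bK_N-\bK\|_{\op}$ and then apply Weyl additively. Check your own arithmetic: under the hypothesis $Nd\approx n(\log Nd)^C$, your claimed rate $n/\sqrt{Nd}$ is of order $\sqrt{n}/(\log Nd)^{C/2}$, which diverges, so the assertion that ``both error terms are $o(1)$'' is false. The obstruction is structural: $\|\bK\|_{\op}$ is not $O(1)$---by Lemma~\ref{lem:eigval} its top eigenvalues are of order $\gamma_1 n/d$, which under Assumption~\ref{ass:Asymp} with $\ell\ge 2$ already diverges---and each summand $d^{-1}\bD_k\XX\XX^\top\bD_k$ in $\bK_N$ (rank $\min(n,d)$, not rank one as you wrote) has comparable operator norm. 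No matrix Bernstein argument on $\bK_N-\bK$ can beat $\|\bK\|_{\op}\sqrt{(n+d)/(Nd)}$ up to logs, and this is not $o(1)$ under $Nd\gtrsim n$ alone; it would require roughly $Nd\gtrsim n\|\bK\|_{\op}^2$.

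The paper's fix (Theorem~\ref{thm:invert2}) is to prove a \emph{relative} concentration: matrix Bernstein is applied not to $\bK^{(k)}$ but to the pre-conditioned summands $\bH_k=d^{-1}(\bK^0)^{-1/2}\bD_k\XX\XX^\top\bD_k(\bK^0)^{-1/2}$. The key point is that $\|\bH_k\|_{\op}$ and $\|\E_\ww[\bH_k^2]\|_{\op}$ are both $O\big((n+d)(\log(nNd))^C/d\big)$, \emph{independently of} $\|\bK\|_{\op}$, which yields
\[
\big\|\bK^{-1/2}\bK_N\bK^{-1/2}-\bI_n\big\|_{\op}\ \le\ \sqrt{\frac{(n+d)(\log(nNd))^{C'}}{Nd}}+\frac{(n+d)(\log(nNd))^{C'}}{Nd}\, ,
\]
and this \emph{is} $o(1)$ under $Nd\ge n(\log Nd)^C$. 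Via \eqref{ineq:KNtwosides} one gets the multiplicative sandwich $(1-o(1))\bK\preceq\bK_N$, hence $\lambda_{\min}(\bK_N)\ge(1-o(1))\lambda_{\min}(\bK)\ge v(\sigma)-o(1)>0$. The pre-conditioning by $\bK^{-1/2}$ before invoking matrix Bernstein is the missing ingredient in your Step~1.
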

\begin{remark}
Note that any NT model \eqref{eq:FNT} can be approximated arbitrarily well by a two-layer
neural network with $2N$ neurons. This can be seen by taking $\eps\to 0$ in
$f_{\veps}(\xx) = \sum_{k=1}^N\frac{1}{\veps}\big\{\sigma(\<\ww_k,\xx\>+\eps\<\aa_k,\xx\>)
  - \sigma(\<\ww_k,\xx\>-\veps\<\aa_k,\xx\>) \big\}$.  
As a consequence, in the above
  setting,
  a $2N$-neurons neural network can interpolate $n$ data points 
  with arbitrarily small approximation error with high probability
  provided $Nd/(\log (Nd))^C\ge n$.

To the best of our knowledge, this is the first result of this type for regression. 
The concurrent paper \cite{daniely2020memorizing} proves a similar memorization 
result for classification but exploits in a crucial way the fact that in classification 
it is sufficient to ensure $y_if(\xx_i)>0$. Regression is considered in 
\cite{bubeck2020network} but the number of required neurons depends on the interpolation accuracy.
\end{remark}

While we stated the above as an independent result because of its interest, it is in fact an immediate corollary of
a quantitative lower bound on the minimum eigenvalue of the kernel $\bK_N\in\R^{n\times n}$, stated below. Given $g:\reals\to\reals$, we let $\mu_k(g)=\E[g(G)h_k(G)]$ denote its $k$-th 
Hermite coefficient (here $G\sim\normal(0,1)$, $\E[h_k(G)^2]=1$);
 see also Section~\ref{sec:pre}, Eq.~\ref{decomp:Hermite}.

\begin{thm}\label{thm:MinEigenvalue}
Assume $(\xx_i)_{i\le n}\sim_{\iid} \Unif(\S^{d-1}(\sqrt{d}))$, and  
$(\ww_k)_{k \le N}\sim_{\iid}\Unif(\S^{d-1}(1))$, and let $B,c_0>0$, $\ell\in\naturals$ be fixed.
 Then, there exist constants $C_0, C>0$ such that the following holds. 

If Assumptions~\ref{ass:Asymp} and~\ref{ass:Sigma} hold with constant $\ell,c_0,C_0, B$,
and further $n\ge d+1$ and $Nd/(\log (Nd))^C\ge n$, then, defining 
$v(\sigma) := \sum_{k\ge\ell} [\mu_k(\sigma')]^2$,
we have 
   \begin{align}
     \lambda_{\min}(\bK_N) = v(\sigma)+o_{d,\P}(1)\, .
   \end{align}
If the assumption $n\ge d+1$ is replaced by $n\ge c_0d$ for a strictly positive constant $c_0$,
then   $\lambda_{\min}(\bK_N) \ge  v(\sigma)-o_{d,\P}(1)$.
\end{thm}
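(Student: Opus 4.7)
The plan is to produce, with high probability, the tight decomposition
\begin{equation*}
\bK_N \;=\; \bK^p + v(\sigma)\,\id_n + \bE,
\end{equation*}
where $\bK^p\succeq 0$ is a polynomial kernel of rank $O(d^\ell)$ and $\|\bE\|_{\op}=o_{d,\P}(1)$. The lower bound $\lambda_{\min}(\bK_N)\ge v(\sigma)-o_{d,\P}(1)$ then follows at once from $\bK^p\succeq 0$, covering the second part of the theorem (under only $n\ge c_0 d$). For the matching upper bound in the first part, it suffices to exhibit a unit vector in $\ker(\bK^p)$: for $\ell=1$, $\bK^p=\mu_0(\sigma')^2\,\bX\bX^\top/d$ has rank $\le d$, so the assumption $n\ge d+1$ is exactly what is needed; for $\ell\ge 2$, $\rank(\bK^p)\le\sum_{j=0}^\ell B(d,j)=O(d^\ell)\ll n$ is automatic from Assumption~\ref{ass:Asymp}. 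Testing $\bK_N$ against such a null vector gives $\lambda_{\min}(\bK_N)\le v(\sigma)+o_{d,\P}(1)$.

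The first ingredient is the concentration $\|\bK_N-\bK\|_{\op}=o_{d,\P}(1)$. Conditional on $(\xx_i)_{i\le n}$, write $\bK_N=\frac{1}{N}\sum_{k=1}^N \bM_k$ with $\bM_k=\frac{1}{d}D_k\,\bX\bX^\top\,D_k$ and $D_k=\diag(\sigma'(\<\xx_i,\ww_k\>))_{i\le n}$. Spherical concentration gives $\max_{i,k}|\<\xx_i,\ww_k\>|\le C\sqrt{\log d}$ w.h.p., and the polynomial growth in Assumption~\ref{ass:Sigma} yields $\|D_k\|_{\op}\le(\log d)^C$; combined with $\|\bX\bX^\top/d\|_{\op}\le C\max(1,n/d)$ this bounds $\|\bM_k\|_{\op}\le(\log d)^C\max(1,n/d)$. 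Matrix Bernstein applied across the $N$ independent neurons then gives $\|\bK_N-\bK\|_{\op}=\tilde O_{d,\P}(\sqrt{n/(Nd)}+n/(Nd))$, which is $o_{d,\P}(1)$ under $Nd/(\log Nd)^C\ge n$.

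The second ingredient is the structural decomposition of $\bK$. The Hermite expansion of $\sigma'$ and zonal harmonic analysis on $\S^{d-1}$ give
\begin{equation*}
\E_{\ww}[\sigma'(\<\xx_i,\ww\>)\sigma'(\<\xx_j,\ww\>)]=\sum_{k\ge 0}\mu_k(\sigma')^2\,Q_k^{(d)}(t_{ij}),\qquad t_{ij}:=\<\xx_i,\xx_j\>/d,
\end{equation*}
with $Q_k^{(d)}$ the Gegenbauer polynomial normalized by $Q_k^{(d)}(1)=1$. Splitting the Hermite index at $\ell$ and restoring the Hadamard factor $t_{ij}$, set $\bK^p_{ij}:=t_{ij}\sum_{k<\ell}\mu_k(\sigma')^2\,Q_k^{(d)}(t_{ij})$ and $\bK^h_{ij}:=t_{ij}\sum_{k\ge\ell}\mu_k(\sigma')^2\,Q_k^{(d)}(t_{ij})$, so that $\bK=\bK^p+\bK^h$. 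The Schur product theorem gives $\bK^p\succeq 0$ (Hadamard of two PSD kernels), and its polynomial degree in $t_{ij}$ is at most $\ell$, whence $\rank(\bK^p)\le\sum_{j=0}^\ell B(d,j)=O(d^\ell)$. On the diagonal, $t_{ii}=1$ and $Q_k^{(d)}(1)=1$ force $[\bK^h]_{ii}=v(\sigma)$ identically.

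The main technical step is the off-diagonal estimate $\|\bK^h-v(\sigma)\,\id_n\|_{\op}=o_{d,\P}(1)$. Gegenbauer orthogonality gives $\E[Q_k^{(d)}(t_{ij})^2]=1/B(d,k)\asymp k!/d^k$ for $i\ne j$, so entries of the $k$-th Hadamard-modulated term have variance $\asymp 1/d^{k+1}$. A high-moment trace expansion of $\E\,\Tr[(\bK^h_{\mathrm{off}})^{2q}]$, executed uniformly in $k\ge\ell$ and exploiting both the mutual orthogonality of distinct Gegenbauer components and the sub-exponential tail of $t_{ij}$ on the scale $d^{-1/2}$, delivers $\|\bK^h_{\mathrm{off}}\|_{\op}=\tilde O_{d,\P}(\sqrt{n}/d^{(\ell+1)/2})$, which is $o_{d,\P}(1)$ precisely under the upper bound $n\le d^{\ell+1}/(\log d)^{C_0}$ in Assumption~\ref{ass:Asymp}. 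This off-diagonal estimate is the main obstacle: crude entrywise bounds lose polynomial factors in $d$, so one must exploit Gegenbauer orthogonality and sharp high-moment control on $t_{ij}$ to attain the critical scaling. Assembling the three ingredients yields the announced decomposition, from which the two claims of the theorem follow as sketched in the first paragraph.
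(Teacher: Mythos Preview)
Your structural decomposition of $\bK$ (ingredients 2 and 3) is essentially the paper's Lemma~\ref{lem:Kdecomp}, and your off-diagonal bound on $\bK^h$ matches Proposition~\ref{prop:Qconctr}. The gap is in ingredient~1.

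You claim that matrix Bernstein applied to $\bK_N=\frac{1}{N}\sum_k\bM_k$ gives $\|\bK_N-\bK\|_{\op}=\tilde O_{d,\P}(\sqrt{n/(Nd)})$. It does not. With your own bound $\|\bM_k\|_{\op}\le(\log d)^C\max(1,n/d)$, the variance proxy satisfies $\|\E_\ww\bM_k^2\|_{\op}\lesssim(\log d)^C(n/d)\,\|\bK\|_{\op}$, and since $\|\bK\|_{\op}\gtrsim n/d$ whenever $\mu_0(\sigma')\ne 0$ (the top block in Lemma~\ref{lem:eigval} has size $\gamma_1 n/d$), matrix Bernstein only yields
\[
\|\bK_N-\bK\|_{\op}\;=\;\tilde O_{d,\P}\!\Big(\frac{n/d}{\sqrt{N}}\Big)\;=\;\tilde O_{d,\P}\!\Big(\sqrt{\tfrac{n}{d}}\cdot\sqrt{\tfrac{n}{Nd}}\Big).
\]
Under the hypothesis $Nd/(\log Nd)^C\ge n$ this is $\tilde O(\sqrt{n/d})$, which is \emph{not} $o_{d,\P}(1)$: for $\ell\ge 2$ Assumption~\ref{ass:Asymp} forces $n/d\ge d^{\ell-1}(\log d)^{C_0}\to\infty$, and even for $\ell=1$ the ratio $n/d$ can be as large as $d/(\log d)^{C_0}$. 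Consequently your error matrix $\bE$ is not small in operator norm, and both the lower bound (via $\bK^p\succeq 0$) and the upper bound (testing on a null vector of $\bK^p$) break down.

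The paper's remedy (Theorem~\ref{thm:invert2}) is to first use your ingredients~2--3 to establish $\bK\succeq\gamma\bI_n$ with $\gamma=v(\sigma)/2$, and only then apply matrix Bernstein to the \emph{normalized} summands $\bH_k=(\bK^0)^{-1/2}\bM_k(\bK^0)^{-1/2}$. The sandwiching is what buys the correct variance: one gets $\E_\ww\bH_k^2\preceq C(n+d)d^{-1}(\log(nNd))^C\,\bI_n$ rather than $C(n/d)\|\bK\|_{\op}\,\bI_n$, a gain of a full factor $n/d$. This delivers $\|\bK^{-1/2}\bK_N\bK^{-1/2}-\bI_n\|_{\op}=\tilde O(\sqrt{n/(Nd)})$, hence $(1-\eta)\bK\preceq\bK_N\preceq(1+\eta)\bK$ with $\eta=o_{d,\P}(1)$, and the minimum eigenvalue conclusion follows multiplicatively: $\lambda_{\min}(\bK_N)=(1+o_{d,\P}(1))\lambda_{\min}(\bK)=v(\sigma)+o_{d,\P}(1)$. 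Note that this route never asserts $\|\bK_N-\bK\|_{\op}=o(1)$, which is false in general.
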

\begin{remark}\label{rmk:var}
In the case $\ell=1$, the eigenvalue lower bound $v(\sigma)$ is simply 
$v(\sigma) = \Var(\sigma'(G)) = \E[\sigma'(G)^2]-\{\E[\sigma'(G)]\}^2$ where $G\sim\normal(0,1)$.
\end{remark}
The only earlier result comparable to Theorem \ref{thm:MinEigenvalue} 
was obtained in \cite{soltanolkotabi2018theoretical} which, in a similar setting,
proved that, with high probability, $\lambda_{\min}(\bK_N) \ge c_*$ for a strictly positive
constant $c_*$, provided $N\ge 2d$.

The proof of Theorem \ref{thm:MinEigenvalue}  is presented in Section~\ref{sec:ProofInvertibility}. 
The key is to show that the NT kernel concentrates to the infinite-width kernel for large enough $N$.

For any constant $\gamma>0$, we define an event $\cA_\gamma$ as follows.
\begin{equation}
\cA_\gamma = \big\{ \bK \succeq \gamma \bI_n, ~ \| \XX \|_\op \le 2(\sqrt{n}+\sqrt{d}) \big\}.
\end{equation}
This event only involves $(\xx_i)_{i\le n}$ (formally speaking, it lies in the sigma-algebra generated by 
$(\xx_i)_{i\le n}$). We will show that, if $\gamma>0$ is a constant no larger than $v(\sigma)/2$, then the
 event $\cA_\gamma$ happens with
 very high probability under Assumption \ref{ass:Sigma}, provided $n\le d^{\ell+1}/(\log d)^{C_0}$. 
 Below we will use $\P_{\ww}$ to mean the probability over the randomness of 
 $\ww_1,\ldots,\ww_N$
 (equivalently, conditional on $(\xx_i)_{i \le n}$).

\begin{thm}[Kernel concentration]\label{thm:invert2}
Assume $(\xx_i)_{i\le n}\sim_{\iid} \Unif(\S^{d-1}(\sqrt{d}))$, and  
$(\ww_k)_{k \le N}\sim_{\iid}\Unif(\S^{d-1}(1))$. Let $\gamma = v(\sigma)/2$. Then, there exist constants 
$C', C_0>0$ such that the following holds. Under Assumption~\ref{ass:Sigma}, the event $A_\gamma$ holds with very high probability, and on the event $A_\gamma$, for any constant $\beta > 0$, 
\begin{equation*}
d^\beta \cdot \P_\ww \left( \big\| \bK^{-1/2} \bK_N \bK^{-1/2} - \bI_n \big\|_\op > 
\sqrt{\frac{(n+d)( \log (nNd))^{C'}}{Nd}} + \frac{(n+d)( \log (nNd))^{C'}}{Nd} \right) = o_d(1).
\end{equation*}
As a consequence, if $n\le d^{\ell+1}/(\log d)^{C_0}$ (i.e.\ the upper bound in 
Assumption~\ref{ass:Asymp} holds), with the same $\ell$ as in  Assumption~\ref{ass:Sigma}, 
then with very high probability,
\begin{equation}\label{ineq:invert2}
\big\| \bK^{-1/2} \bK_N \bK^{-1/2} - \bI_n \big\|_\op  \le  \sqrt{\frac{(n+d)( \log (nNd))^{C'}}{Nd}} + 
\frac{(n+d)( \log (nNd))^{C'}}{Nd}.
\end{equation}

\end{thm}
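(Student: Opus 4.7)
The proof has two main components: (i) verifying that the ``good event'' $\cA_\gamma$ occurs with very high probability, and (ii) applying a matrix Bernstein inequality, conditional on the covariates and on $\cA_\gamma$, to the i.i.d.\ sum representation of $\bK_N$ over the weights.

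\textbf{Part (i).} The covariate bound $\|\XX\|_\op \le 2(\sqrt n + \sqrt d)$ is standard non-asymptotic concentration for random matrices with isotropic sub-Gaussian rows (here $\xx_i/\sqrt d \sim \Unif(\S^{d-1})$). For the spectral lower bound $\bK \succeq (v(\sigma)/2)\bI_n$, I expand $\sigma' = \sum_{k\ge 0}\mu_k(\sigma')h_k$ in Hermite polynomials and combine with the Gegenbauer/spherical-harmonic expansion of the expected kernel entries to decompose $\bK$ into a low-degree piece ($k < \ell$) and a high-degree piece ($k \ge \ell$). The low-degree piece is a p.s.d.\ polynomial kernel matrix (a Gram matrix of tensors of the $\xx_i$), while the high-degree piece concentrates around $v(\sigma)\bI_n$ in operator norm, yielding $\bK \succeq (v(\sigma)- o_{d,\P}(1))\bI_n$. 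The operator-norm control of the high-degree piece follows from orthogonality of spherical harmonics (cf.\ Section~\ref{sec:pre}), using that the off-diagonal Gegenbauer matrices of degree $\ge \ell+1$ have small operator norm in the asymptotic regime of interest.

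\textbf{Part (ii).} Condition on $(\xx_i)_{i\le n}$ on the event $\cA_\gamma$, so $\bK \succeq \gamma \bI_n$ is deterministic. Write $\bK_N = N^{-1}\sum_{k} \bM_k$ with $\bM_k := d^{-1}\bD_k \XX\XX^\top \bD_k$ and $\bD_k := \diag(\sigma'(\<\xx_i,\ww_k\>))_{i=1}^n$, so that $\E_\ww \bM_k = \bK$. Set $\bA_k := \bK^{-1/2}\bM_k\bK^{-1/2} \succeq 0$ and $\bZ_k := \bA_k - \bI_n$; then $\bK^{-1/2}\bK_N\bK^{-1/2} - \bI_n = N^{-1}\sum_k \bZ_k$ is an i.i.d.\ centered, self-adjoint sum. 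On the truncation event $E := \{\max_{i\le n,\, k\le N}|\sigma'(\<\xx_i,\ww_k\>)| \le (\log nNd)^{C}\}$---which has probability $1 - o(d^{-\beta})$ for any $\beta$ by polynomial growth of $\sigma'$, sub-Gaussian tails of $\<\xx_i,\ww_k\>$, and a union bound over $nN \le \poly(d)$ pairs---one gets the uniform bound
\[
\|\bA_k\|_\op \le \|\bK^{-1}\|_\op \cdot d^{-1}\|\bD_k\|_\op^2 \|\XX\|_\op^2 \le \gamma^{-1}\cdot 4(n+d)(\log nNd)^{C}/d \;=:\; R,
\]
and positive semi-definiteness of $\bA_k$ collapses the variance: $\bA_k^2 \preceq \|\bA_k\|_\op \bA_k \preceq R\bA_k$, so $\E_\ww \bA_k^2 \preceq R\bI_n$ and $\|\E_\ww \bZ_k^2\|_\op = \|\E_\ww \bA_k^2 - \bI_n\|_\op \le R$. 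Matrix Bernstein then gives
\[
\P_\ww\Bigl(\bigl\|N^{-1}\!\sum_k \bZ_k\bigr\|_\op > t\Bigr) \le 2n\exp\!\Bigl(-\tfrac{Nt^2/2}{R + Rt/3}\Bigr) + o(d^{-\beta});
\]
choosing $t = \sqrt{R(\log nNd)^{C'}/N} + R(\log nNd)^{C'}/N$ and substituting $R = C(n+d)(\log nNd)^{C}/d$ yields the first stated inequality. The ``consequence'' clause then follows by combining this bound (valid on $\cA_\gamma$) with the very-high-probability statement from part (i) via a union bound.

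\textbf{Main obstacle.} The most delicate step is the spectral lower bound $\bK \succeq (v(\sigma)/2)\bI_n$ in part (i): one must show that the ``high-degree'' (Gegenbauer degree $\ge \ell+1$) off-diagonal blocks of $\bK$ have operator norm $o_{d,\P}(1)$, even though $n$ can be as large as $d^{\ell+1}/\polylog(d)$. This is precisely where the upper bound on $n$ in Assumption~\ref{ass:Asymp} enters, and requires delicate orthogonality and concentration estimates for spherical harmonics rather than a crude entrywise bound. The matrix Bernstein step, by contrast, is routine; the one noteworthy observation is that the positive semi-definiteness of $\bA_k$ collapses the matrix-variance proxy down to the same order as the uniform bound $R$, which is exactly what produces the sharp $\sqrt{(n+d)/(Nd)}$ scaling in~\eqref{ineq:invert2}.
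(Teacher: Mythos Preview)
Your proposal is correct and follows essentially the same approach as the paper: verify $\cA_\gamma$ via the Gegenbauer decomposition of $\bK$ (this is the content of Lemma~\ref{lem:Kdecomp}), then apply matrix Bernstein to the centered, normalized summands after truncating $\sigma'$, using the positive-semidefiniteness collapse $\bA_k^2 \preceq \|\bA_k\|_\op\,\bA_k$ to control the matrix variance.

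One minor technical point worth tightening: you phrase the truncation as ``work on the event $E$ and add $o(d^{-\beta})$,'' but matrix Bernstein requires an almost-sure bound on the summands, not a high-probability one. The paper handles this by defining the truncated function $\varphi(x)=\sigma'(x)\bone\{|x|\le C_0\log(nNd)\}$ and the corresponding truncated kernels $\bK^0,\bK_N^0$; the truncated summands $\bH_k=(\bK^0)^{-1/2}\bD_k\XX\XX^\top\bD_k(\bK^0)^{-1/2}/d$ are then deterministically bounded, Bernstein applies cleanly to $(\bK^0)^{-1/2}\bK_N^0(\bK^0)^{-1/2}-\bI_n$, and a separate step (``Step~3'') transfers the bound back to $\bK,\bK_N$ using $\|\bK-\bK^0\|_\op\le C/(nNd)$ and $\P_\ww(\bK_N\ne\bK_N^0)=o(d^{-\beta})$. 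Your route---normalizing directly by $\bK^{-1/2}$ and invoking the event $E$---runs into the nuisance that if you truncate $\bM_k$ then $\E_\ww\bM_k^0=\bK^0\ne\bK$ so $\bZ_k$ is not centered, while if you do not truncate then $\bZ_k$ is centered but not a.s.\ bounded. The fix is exactly the paper's: center with respect to $\bK^0$, apply Bernstein, then transfer. This is cosmetic rather than conceptual; your identification of the key variance collapse and the resulting $\sqrt{(n+d)/(Nd)}$ scaling is on target.
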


When the right-hand side of \eqref{ineq:invert2} is $o_{d,\P}(1)$, it is clear that the eigenvalues of $\bK_N$ are bounded from below, since
\begin{equation}\label{ineq:KNtwosides}
\big\| \bK^{-1/2} \bK_N \bK^{-1/2} - \bI_n \big\|_\op \le \eta' \quad \Longrightarrow \quad   (1-\eta') \bK  \preceq  \bK_N \preceq (1+\eta') \bK.
\end{equation} 
If we further have $n \le  Nd/(\log(Nd))^{C}$, $n\ge c_0d$ for $c_0>0$ and a sufficiently large constant $C$,
 then we can take $\eta'= (n( \log (Nd))^{C'}/Nd)^{1/2}$ here. 
 
\begin{remark}\label{rmk:nonpoly}
 If $\sigma'$ is not a polynomial, then $\cA_{\gamma}$ holds with high probability
as soon as $n\le d^{C''}$ for some constant $C''$. Hence, in this case, the stronger assumption 
$n\le d^{\ell+1}/(\log d)^{C_0}$ is not needed for the second part of this theorem.
\end{remark}

\subsection{Test error}
\label{sec:Gen}

In order to study the generalization properties of the NT model, we consider a general regression model for
the data distribution.  Data $(\xx_1, y_1), \ldots, (\xx_n, y_n)$ are i.i.d.\
with $\xx_i\sim\Unif(\S^{d-1}(\sqrt{d}))$ and $y_i = f_*(\xx_i)+\veps_i$ where 
$f_*\in L^2 := L^2(\S^{d-1}(\sqrt{d}), \tau_{d-1})$. Here, $L^2(\S^{d-1}(\sqrt{d}), \tau_{d-1})$ is the space of functions on $\S^{d-1}(\sqrt{d})$ which are square integrable
   with respect to the uniform measure. In matrix notation, 
we let $\XX\in\R^{n\times d}$ denote the matrix
whose $i$-th row is $\xx_i$, $\yy:= (y_i)_{i\le n}$, and $\ff_* = (f_*(\xx_i))_{i \le n}$. We then have
\begin{equation}\label{model:linear}
\yy =  \ff_*  + \bveps, \qquad \where~\var(\veps_i) = \sigma_\veps^2. 
\end{equation}
The noise variables  $\veps_1,\ldots,\veps_n \sim_{\iid} \P_\veps$ are assumed to have zero mean 
and finite second moment, i.e.,
$\sigma_\veps^2= \E(\veps_1^2)$.

We fit the coefficients $\aa= (\aa_1,\dots,\aa_N)\in\R^{Nd}$ of the NT model using  ridge regression. Namely
\begin{align}
  \hba(\lambda) := \arg\min_{\aa\in\R^{Nd}}\Big\{ \sum_{i=1}^n(y_i-f(\xx_i;\aa)\big)^2 + \lambda \|\aa\|^2\Big\}\, .
  \label{eq:NT-Regression}
\end{align}
where $f(\,\cdot\,;\aa)$ is defined as per Eq.~\eqref{eq:FNT}.
Explicitly, we have
\begin{equation}
\hba(\lambda) = \bPhi^\top \big( \bPhi \bPhi^\top + \lambda \bI_n)^{-1} \yy.
\label{eq:NT-Regression-Explicit}
\end{equation}
We evaluate this approach on a new input $\xx_0$ that has the same distribution as the training input.
Our analysis covers the case $\lambda=0$, in which case we obtain the minimum $\ell_2$-norm interpolator.

The test  error is defined as
\begin{align}
  R_{\NT}(f_*;\lambda) = \E_{\xx_0}\big[ (f_*(\xx_0)-\<\hba(\lambda),\bPhi(\xx_0)\>)^2\big]\, .
\end{align}
We occasionally call this `generalization error', with a slight abuse of terminology
(sometimes this term is referred to the difference between test error 
and the train error
$n^{-1}\sum_{i\le n} (y_i-\<\hba,\bPhi(\xx_i)\>)^2$.)

Our main results on the generalization behavior of NT KRR establish its equivalence with  simpler
methods. Namely, we perform kernel ridge regression with the infinite-width kernel $\bK$. Let $\gamma\ge 0$ be any ridge regularization parameter. The prediction function fitted by the data alongside its associated risk is given by
\begin{align*}
& \hat f_{\KRR}^\gamma(\xx) = \bK(\cdot, \xx)^\top ( \gamma \bI_n + \bK)^{-1} \yy, \\
&R_{\KRR}(f_*;\gamma) := \E_{\xx_0}\big[ (f_*(\xx_0)-\hat f_{\KRR}^\gamma(\xx_0))^2\big].
\end{align*}
We also define the polynomial ridge regression as follows. The infinite-width kernel 
$K$ can be decomposed uniquely in orthogonal polynomials as 
$K(\xx, \xx') = \sum_{k=0}^\infty \gamma_k Q_k^{(d)}(\langle \xx, \xx' \rangle)$,
where $Q_k^{(d)}(z)$ is the Gegenbauer polynomial of degree $k$ 
(see Lemma~\ref{lem:K_Harmonic}). We consider truncating the kernel function up to the degree-$\ell$ polynomials:
\begin{equation}\label{def:Kp}
K^p(\xx, \xx') = \sum_{k=0}^\ell \gamma_k Q_k^{(d)}(\langle \xx, \xx' \rangle).
\end{equation}
The superscript refers to the name ``polynomial''. We also define
\begin{equation}
\label{eq:MatrixKp}
\bK^p = \big(K^p(\xx_i, \xx_j)\big)_{i,j\le n}, \qquad \bK^{p}(\cdot, \xx) = \big( K^p(\xx_i, \xx) \big)_{i\le n}.
\end{equation}
For example, in the case $\ell=1$, we have equivalence $\bK^p = \gamma_0 \bone_n \bone_n^\top + \frac{\gamma_1}{d} \XX \XX^\top$ (the kernel of linear regression with an intercept). The prediction function fitted by the data and its associated risk are
\begin{align*}
&\hat f_{\PRR}^\gamma(\xx) = \bK^p(\cdot, \xx)^\top (\gamma \bI_n + \bK^p)^{-1} \yy, \\
&R_{\PRR}(f_*;\gamma) := \E_{\xx_0}\big[ (f_*(\xx_0)-\hat f_{\PRR}^\gamma(\xx_0))^2\big].
\end{align*}
Kernel ridge regression and polynomial ridge regression are well understood. Our next result establishes a relation
between these risks: the neural tangent model behaves as the polynomial model, albeit with a different value of the regularization parameter.

\begin{thm}\label{thm:gen}
Assume $(\xx_i)_{i\le n}\sim_{\iid} \Unif(\S^{d-1}(\sqrt{d}))$, and  
$(\ww_k)_{k \le N}\sim_{\iid}\Unif(\S^{d-1}(1))$. Recall $v(\sigma) := \sum_{k\ge\ell} [\mu_k(\sigma')]^2$
and let $B,c_0>0$, $\ell\in\naturals$  be fixed. 
 Then, there exist constants $C_0, C,C'>0$ such that the following holds.  

If Assumptions~\ref{ass:Asymp} and~\ref{ass:Sigma} hold with constants $B,c_0,\ell$, 
and $Nd/(\log (Nd))^C\ge n$, then for any $\lambda\ge 0$,
\begin{align*}
R_{\NT}(f_*;\lambda) &= R_{\KRR}(f_*;\lambda) + O_{d,\P}\Big(\tau^2\sqrt{\frac{n(\log (Nd))^{C'}}{Nd}}\Big) \\
&=R_{\PRR}(f_*;\lambda + v(\sigma)) + O_{d,\P}\Big(\tau^2\sqrt{\frac{n(\log (Nd))^{C'}}{Nd}} \, + \tau^2 \sqrt{\frac{n (\log n)^C}{d^{\ell+1}}} \Big),
\end{align*}
where $\tau^2 := \|f_*\|^2_{L^2}+\sigma^2_{\veps}$.
\end{thm}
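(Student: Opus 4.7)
The two equalities will be proved in succession, with the NT/KRR reduction handled first and the KRR/PRR identification second. In both steps the template is the same: bound the difference of the two prediction functions in $L^2(\xx_0)$, then use Cauchy--Schwarz together with the $L^2$-bounds on $f_*$ and on the fitted models (each $O(\tau)$ on the good event) to convert a pointwise estimate into a bound on the risk difference. The two ingredients already in hand are Theorem~\ref{thm:invert2} (kernel concentration) and the event $\cA_{v(\sigma)/2}$ on which $\bK$ is well-conditioned; these control all resolvents that appear.

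\textbf{Step 1: NT to KRR.} Writing $\bK_N(\cdot,\xx_0):=\bPhi\bPhi(\xx_0)\in\R^n$ and $\bK(\cdot,\xx_0):=(K(\xx_i,\xx_0))_{i\le n}$, decompose
\[
\hat f_{\NT}^\lambda(\xx_0)-\hat f_{\KRR}^\lambda(\xx_0)=(\bK_N(\cdot,\xx_0)-\bK(\cdot,\xx_0))^\top(\bK_N+\lambda\bI)^{-1}\yy+\bK(\cdot,\xx_0)^\top\big[(\bK_N+\lambda\bI)^{-1}-(\bK+\lambda\bI)^{-1}\big]\yy.
\]
Via the resolvent identity, the second term is bounded in terms of $\|\bK^{-1/2}\bK_N\bK^{-1/2}-\bI\|_{\op}=\tilde O_\P(\sqrt{n/(Nd)})$ from Theorem~\ref{thm:invert2}, together with $\|(\bK+\lambda\bI)^{-1}\|_{\op}\lesssim 1/v(\sigma)$ on $\cA_{v(\sigma)/2}$. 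For the first term I would apply the same concentration inequality to the augmented sample $\{\xx_1,\dots,\xx_n,\xx_0\}$ and integrate in $\xx_0$, which gives $\E_{\xx_0}\|\bK_N(\cdot,\xx_0)-\bK(\cdot,\xx_0)\|^2=\tilde O_\P(n/(Nd))$. Combined with $\E\|\yy\|^2\le n\tau^2$, this yields the first claimed bound.

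\textbf{Step 2: KRR to PRR.} Split the infinite-width kernel as $K=K^p+K^{>\ell}$ along the Gegenbauer expansion. The crux is the claim
\[
\|\bK^{>\ell}-v(\sigma)\bI_n\|_{\op}=\tilde O_\P\big(\sqrt{n/d^{\ell+1}}\big),
\]
and the parallel $L^2(\xx_0)$ estimate $\E_{\xx_0}\|\bK^{>\ell}(\cdot,\xx_0)\|^2\lesssim n/d^{\ell+1}$. Both rely on orthogonality of Gegenbauer polynomials on $\S^{d-1}(\sqrt d)$: off-diagonal entries $Q_k^{(d)}(\<\xx_i,\xx_j\>/d)$ are mean-zero with second moment $\asymp d^{-k}$, diagonals equal one, and the coefficients $\mu_k(\sigma')^2$ sum to $v(\sigma)$ over $k>\ell$. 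The operator-norm bound is then obtained by the same moment/graph-counting argument underlying Theorem~\ref{thm:MinEigenvalue}. Given these two estimates, a resolvent expansion rewrites $(\bK+\lambda\bI)^{-1}$ as $(\bK^p+(\lambda+v(\sigma))\bI_n)^{-1}$ up to $\|\cdot\|_{\op}$-error of order $\sqrt{n/d^{\ell+1}}$; replacing also $\bK(\cdot,\xx_0)$ by $\bK^p(\cdot,\xx_0)$ with $L^2(\xx_0)$-error of order $\sqrt{n/d^{\ell+1}}$ converts $\hat f_{\KRR}^\lambda$ into $\hat f_{\PRR}^{\lambda+v(\sigma)}$, and Cauchy--Schwarz against the $\tau$-sized fitted residual closes the second bound.

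\textbf{Main obstacle.} The hardest ingredient is the operator-norm bound on $\bK^{>\ell}-v(\sigma)\bI_n$. The entries are dependent (they share rows/columns $\xx_i$), so a standard matrix Bernstein bound does not apply directly; one must leverage the orthogonality of products of Gegenbauer polynomials to suppress cross-terms, together with a careful truncation separating degrees just above $\ell$ (which dominate the fluctuation) from the tail (which is controlled by diagonal dominance). A second subtlety is the case $\lambda=0$: the resolvents $(\bK_N+\lambda\bI)^{-1}$ and $(\bK+\lambda\bI)^{-1}$ must be uniformly bounded, which is precisely what Theorem~\ref{thm:MinEigenvalue} and the event $\cA_{v(\sigma)/2}$ provide. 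Once these structural estimates are in place, the remaining manipulations are algebraic resolvent expansions and Cauchy--Schwarz.
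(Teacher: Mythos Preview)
Your high-level plan---bound $\|\hat f_{\NT}^\lambda-\hat f_{\KRR}^\lambda\|_{L^2(\xx_0)}$ and then apply Cauchy--Schwarz on the risks---loses a factor of $\sqrt{n}$ and therefore does not close. Concretely, for the first term $T_1(\xx_0)=(\bK_N(\cdot,\xx_0)-\bK(\cdot,\xx_0))^\top(\bK_N+\lambda\bI)^{-1}\yy$, your two ingredients give at best
\[
\E_{\xx_0}[T_1^2]\;\le\;\E_{\xx_0}\|\bK_N(\cdot,\xx_0)-\bK(\cdot,\xx_0)\|^2\cdot\|(\bK_N+\lambda\bI)^{-1}\yy\|^2
\;\lesssim\;\frac{n}{Nd}\cdot n\tau^2,
\]
so $\|T_1\|_{L^2}\lesssim n\tau/\sqrt{Nd}$ and the risk difference is $\lesssim n\tau^2/\sqrt{Nd}$, whereas the claim is $\tau^2\sqrt{n/(Nd)}$. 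The same overcount occurs in Step~2. The loss comes from separating the difference vector from the resolvent by Cauchy--Schwarz: $\|(\bK_N+\lambda\bI)^{-1}\yy\|$ is genuinely of order $\sqrt{n}\,\tau$, and nothing in your argument cancels it.

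The paper avoids this by \emph{never} bounding $\|\hat f_{\NT}-\hat f_{\KRR}\|_{L^2}$. It expands the risk itself into bias/variance/cross quadratic forms in $\yy$ and bounds each difference directly. Two structural facts do the work that your Cauchy--Schwarz cannot. First, setting $\bM:=\E_{\xx_0}[(\bK_N-\bK)(\cdot,\xx_0)(\bK_N-\bK)(\cdot,\xx_0)^\top]$, one has the matrix inequality $\E_{\ww}\bM\preceq \frac{C}{Nd}\bK$ (Lemma~\ref{lem:keybnd2}); combined with $\vv=(\lambda\bI+\bK)^{-1}\hh$ this yields $\vv^\top\bM\vv\le \frac{C}{Nd}\vv^\top\bK\vv\le \frac{C}{Nd}\|\hh\|^2$, which is the missing factor of $n$. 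Second, one needs $\|\bK_N^{-1}\bK_N^{(2)}\bK_N^{-1}\|_\op\le C/n$ and $\|(\lambda\bI+\bK)^{-1}\bK^{(2)}(\lambda\bI+\bK)^{-1}\|_\op\le C/n$ (Lemmas~\ref{lem:keybnd} and~\ref{lem:K2}); these are statements about eigenvectors, not just eigenvalues, and are obtained via a sample-augmentation device together with a Davis--Kahan-type analysis of the block eigenstructure (Lemmas~\ref{lem:eigval}--\ref{lem:reduceeig}). Your ``main obstacle'' (the bound on $\bK^{>\ell}-v(\sigma)\bI_n$) is indeed needed and is Proposition~\ref{prop:Qconctr}, but it is the easier half; the part your outline does not supply is how to keep the resolvent and the kernel perturbation coupled so that the extra $\sqrt{n}$ never appears.
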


\paragraph{Example: $n\ll d^2$.} 
 Suppose  Assumptions~\ref{ass:Asymp} 
 and~\ref{ass:Sigma} hold with $\ell=1$, i.e.\  $c_0 d \le n \le d^2/(\log d)^{C_0}$.
 In this case, Theorem \ref{thm:gen} implies that NT regression can only fit the 
 linear component in the target function $f_*(\xx)$. In order to simplify our treatment,
 we assume that the target is linear: $f_*(\xx) = \<\bbeta_*,\xx\>$.
 
 Consider  ridge regression with respect to the linear features, with regularization 
 $\gamma \ge 0$:
\begin{align}
  \hbbeta(\gamma) &:=  \arg\min_{\bbeta\in\R^{d}}\Big\{\frac{1}{d} \sum_{i=1}^n(y_i-\<\bbeta,\xx_i\>\big)^2 + \gamma\|\bbeta\|_2^2
                    \Big\}\, ,\\
  R_{\slin}(f_*;\gamma) &:= \E_{\xx_0}\big[ (\<\bbeta_*,\xx_0\>-\<\hbbeta(\gamma),\xx_0\>)^2\big]\, .
\end{align}
Note that $R_{\slin}(f_*;\gamma)$ is essentially the same as $R_{\PRR}(f_*;\lambda + v(\sigma))$,
with the minor difference that we are not fitting an intercept.
 The scaling factor $d^{-1}$ is specially chosen for comparison with NT KRR. 
 Theorem~\ref{thm:gen} implies the following correspondence.
\begin{cor}[NT KRR for linear model]\label{cor:linear}
Assume $(\xx_i)_{i\le n}\sim_{\iid} \Unif(\S^{d-1}(\sqrt{d}))$, 
$(\ww_k)_{k \le N}\sim_{\iid}\Unif(\S^{d-1}(1))$, and 
$\E(\veps_i^4)\le C\sigma_{\veps}^4$. Denote $\tau^2 = \| \bbeta_* \|^2 + \sigma_\veps^2$.
 Then, there exist constants $C_0, C>0$ such that the following holds. 
 Under Assumption~\ref{ass:Asymp} and~\ref{ass:Sigma} with $\ell=1$ and $Nd/(\log(Nd))^C \ge n$, 
\begin{align}
    R_{\NT}(f_*;\lambda) &= R_{\slin}(f_*;\gamma_{\seff}(\lambda,\sigma))+O_{d,\P}\Big(\tau^2\sqrt{\frac{n (\log d)^C}{Nd}} + \frac{\tau^2}{\sqrt{d}} + \tau^2\sqrt{\frac{n(\log n)^C}{d^2}}\Big)\, , \qquad \where \label{eq:NT-lin-expr} \\
    \gamma_{\seff}(\lambda,\sigma)&:=\frac{\lambda+v(\sigma)}{\{ \E[\sigma'(G)] \}^2}.\,  \label{def:gammaeff}
    %
  \end{align}
Further, 
\begin{align}
R_{\slin}(f_*;\gamma) &= \|\bbeta_*\|^2_2\cuB_{\slin}(\gamma) + \sigma_\veps^2\cuV_{\slin}(\gamma) + 
O_{d,\P}( \tau^2/\sqrt{d} ), \label{eq:Lin-NT}\\
\cuB_{\slin}(\gamma) & := \frac{\gamma^2}{d}  \Tr\Big( \big(\gamma\id_d+\XX^{\sT}\XX/d\big)^{-2} \Big), \label{eq:Lin-NT1} \\
    \cuV_{\slin}(\gamma) &:=\frac{1}{d^2} \Tr\Big(\XX^{\sT}\XX\big(\gamma\id_d+\XX^{\sT}\XX/d\big)^{-2}\Big)\, . \label{eq:Lin-NT2}
  \end{align}
\end{cor}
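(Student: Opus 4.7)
The plan is to combine the reduction to polynomial ridge regression from Theorem~\ref{thm:gen} (applied at $\ell=1$) with an explicit analysis of the degree-one polynomial kernel, and then to carry out a bias-variance decomposition for linear ridge regression. Theorem~\ref{thm:gen} at $\ell=1$ immediately gives
\[
R_{\NT}(f_*;\lambda)=R_{\PRR}(f_*;\lambda+v(\sigma))+O_{d,\P}\!\left(\tau^2\sqrt{\frac{n(\log(Nd))^{C'}}{Nd}}+\tau^2\sqrt{\frac{n(\log n)^{C}}{d^{2}}}\right),
\]
which already accounts for the first and third error terms in \eqref{eq:NT-lin-expr}. The remaining tasks are therefore to compare $R_{\PRR}(f_*;\lambda+v(\sigma))$ with $R_{\slin}(f_*;\gamma_{\seff})$ up to $O(\tau^2/\sqrt d)$, and to establish \eqref{eq:Lin-NT}.

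At $\ell=1$, Lemma~\ref{lem:K_Harmonic} (or a direct Hermite computation, using that $\langle\xx,\ww\rangle$ is approximately $\normal(0,1)$ when $\xx\in\S^{d-1}(\sqrt d)$ and $\ww\in\S^{d-1}$) yields $\gamma_1=\{\E[\sigma'(G)]\}^{2}+O(1/d)$ and $\gamma_0=O(1/d)$, so $\bK^{p}=\gamma_0\bone_n\bone_n^{\top}+(\gamma_1/d)\XX\XX^{\top}$. If the rank-one intercept piece were absent, the change of variables $\tilde\XX=\sqrt{\gamma_1/d}\,\XX$ would identify the PRR predictor with ridge $\gamma=\lambda+v(\sigma)$ with the linear ridge predictor at $\gamma/\gamma_1=\gamma_{\seff}$ identically, so the risks would match exactly. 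To handle $\gamma_0\bone_n\bone_n^{\top}$ I would apply the Sherman-Morrison identity to the resolvent $((\lambda+v(\sigma))\bI_n+\bK^{p})^{-1}$ and bound the resulting rank-one correction using $\|\XX^{\top}\bone_n/\sqrt n\|=O(\sqrt d)$ (spherical concentration) and $\|\XX\|_{\op}=O(\sqrt n+\sqrt d)$; combined with $\E[\xx_0]=\bzero$, which makes the intercept contribution to $\hat f_{\PRR}(\xx_0)$ small in $L^{2}$, this produces the $O(\tau^2/\sqrt d)$ correction.

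For \eqref{eq:Lin-NT}, I set $M=(\gamma\id+\XX^{\top}\XX/d)^{-1}$ and expand
\[
R_{\slin}(f_*;\gamma)=\gamma^{2}\bbeta_*^{\top}M^{2}\bbeta_*-\frac{2\gamma}{d}\bbeta_*^{\top}M^{2}\XX^{\top}\bveps+\frac{1}{d^{2}}\bveps^{\top}\XX M^{2}\XX^{\top}\bveps.
\]
Because the law of $\XX$ is rotationally invariant, the random matrix $M^{2}$ has a distribution invariant under orthogonal conjugation, so $\bbeta_*^{\top}M^{2}\bbeta_*\stackrel{d}{=}\|\bbeta_*\|^{2}\,\uu^{\top}M^{2}\uu$ for $\uu\sim\Unif(\S^{d-1})$ independent of $M^{2}$. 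Concentration of quadratic forms in $\uu$, together with $\|M\|_{\op}\le 1/\gamma_{\seff}=O(1)$ (using $v(\sigma)>0$), gives $\gamma^{2}\bbeta_*^{\top}M^{2}\bbeta_*=\|\bbeta_*\|^{2}\cuB_{\slin}(\gamma)+O_{d,\P}(\|\bbeta_*\|^{2}/\sqrt d)$. The noise-noise term concentrates around its conditional mean $\sigma_\veps^{2}\Tr(\XX M^{2}\XX^{\top})/d^{2}=\sigma_\veps^{2}\cuV_{\slin}(\gamma)$ by a second quadratic-form concentration, using the assumed bound on $\E[\veps_1^{4}]$; the cross term is $O_{d,\P}(\|\bbeta_*\|\sigma_\veps/\sqrt d)$ by Cauchy-Schwarz. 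Summing the three estimates yields \eqref{eq:Lin-NT}.

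The main technical obstacle I anticipate is the intercept correction: one must verify that the Sherman-Morrison perturbation of $((\lambda+v(\sigma))\bI_n+\bK^{p})^{-1}\yy$, evaluated at the fresh point $\xx_0$, produces only an $O(\tau^2/\sqrt d)$ error, and this requires simultaneous control of $\XX^{\top}\bone_n/\sqrt n$, of the resolvent on the good event $\cA_\gamma$, and of the centering $\E[\xx_0]=\bzero$. The bias-variance step is then routine, because the effective ridge parameter $\gamma_{\seff}\ge v(\sigma)/\{\E[\sigma'(G)]\}^{2}$ is bounded away from zero, so $\|M\|_{\op}$ is $O(1)$ and all concentration arguments close cleanly.
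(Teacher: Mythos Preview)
Your overall plan is sound and close to what the paper does: apply Theorem~\ref{thm:gen} at $\ell=1$, handle the degree-one polynomial kernel explicitly, and then decompose $R_{\slin}$ via rotational invariance and quadratic-form concentration. The last part (your treatment of $R_{\slin}$) matches the paper's Lemma~\ref{lem:quadraticform} essentially verbatim.

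There is, however, a genuine gap in your intercept step. You propose to remove the rank-one piece $\gamma_0\bone_n\bone_n^{\top}$ from $\bK^p$ using only Sherman--Morrison, the bound $\|\XX^{\top}\bone_n\|=O(\sqrt{nd})$, operator-norm control of $\XX$, and $\E[\xx_0]=\bzero$. These tools are not enough to get $O(\tau^2/\sqrt d)$ when $n\asymp d$. Concretely, the constant (in $\xx_0$) part of $\hat f_{\PRR}(\xx_0)-\hat f_{\slin}(\xx_0)$ is $c_1=\gamma_0\bone_n^{\top}\bA^{-1}\yy$ with $\bA=\bar\lambda\bI_n+\gamma_0\bone_n\bone_n^{\top}+\tfrac{\gamma_1}{d}\XX\XX^{\top}$; the centering $\E[\xx_0]=\bzero$ only ensures that $c_1$ contributes $c_1^{2}$ to the risk difference. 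With your listed tools one gets $\|\gamma_0\bA^{-1}\bone_n\|\le C/\sqrt n$ (this is Lemma~\ref{lem:interceptbnd}) and hence $|\gamma_0\bone_n^{\top}\bA^{-1}\XX\bbeta_*|\le C\|\XX\|_{\op}\|\bbeta_*\|/\sqrt n = O(\|\bbeta_*\|)$, so $c_1^{2}=O(\tau^{2})$, not $O(\tau^{2}/\sqrt d)$. The missing ingredient is the \emph{same} rotational-invariance trick you already use for $R_{\slin}$: writing $c_1=\vv^{\top}\bbeta_*$ with $\vv=\gamma_0\XX^{\top}\bA^{-1}\bone_n$ and $\|\vv\|=O(1)$, the randomness of $\XX$ makes $\vv^{\top}\bbeta_*=O_{d,\P}(\|\bbeta_*\|/\sqrt d)$; without it the bound is sharp only for $n\gg d$.

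The paper avoids this by reversing the order of operations. It first concentrates $R_{\PRR}$ to trace expressions via rotational invariance (Lemma~\ref{lem:quadraticform}, applied with the intercept still present), and only then removes the intercept at the level of traces (Lemma~\ref{lem:linreduce2}), where the Sherman--Morrison-type comparison between $\bA$ and $\bA_0=\bar\lambda\bI_n+\tfrac{\gamma_1}{d}\XX\XX^{\top}$ becomes a purely deterministic $O(1/d)$ estimate. Either fix works: you can keep your ordering and add the rotational-invariance observation to the intercept bound, or swap to the paper's ordering (concentrate first, then compare traces).
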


\begin{remark}\label{rmk:explicit}
The error term $O_{d,\P}(\tau^2 d^{-1/2})$ in \eqref{eq:NT-lin-expr} is due to the effect of the intercept. 
It is easy to derive the asymptotic formulas  
if $n/d \to\kappa\in (0,\infty)$ where $\kappa$ is a constant \cite{hastie2019surprises}:
  \begin{align}
    \cuB_{\slin}(\kappa,\gamma) & = \frac{1}{2}\left\{1-\kappa+\sqrt{(\kappa-1+\gamma)^2+4\gamma}-
                                  \frac{\gamma(1+\kappa+\gamma)}{\sqrt{(\kappa-1+\gamma)^2+4\gamma}}\right\}+o_{d,\P}(1)\, , \label{eq:Lin-B}\\
    \cuV_{\slin}(\kappa,\gamma) &= \frac{1}{2}\left\{-1+\frac{\kappa+\gamma+1}{\sqrt{(\kappa-1+\gamma)^2+4\gamma}}\right\}+o_{d,\P}(1)\, . \label{eq:Lin-V}
  \end{align}
Here we emphasized the dependence on $\kappa$. Also, as $\kappa \to\infty$,  $\cuB_{\slin}(\kappa,\gamma) = \gamma^2\kappa^{-2} +O(\kappa^{-3})$ and
  $\cuV_{\slin}(\kappa,\gamma) = \kappa^{-1} +O(\kappa^{-2})$.
\end{remark}

In particular, the ridgeless NT model at $\lambda=0$ corresponds to linear regression with regularization
  $\gamma = v(\sigma)/\{ \E[\sigma'(G)] \}^2$. Note that the denominator in \eqref{def:gammaeff} is due to the different scaling for the linear term (cf. Eq.~\ref{def:Kp}). 
Since $\cuB_{\slin}(\kappa,\gamma) = O(d^2/n^2)$ and $\cuV_{\slin}(\kappa,\gamma) = O(d/n)$ from the formulas \eqref{eq:Lin-B} and \eqref{eq:Lin-V}, we find that the error term of order $\sqrt{n/Nd}$  in Eq.~\eqref{eq:NT-lin-expr}
  is negligible for $1\le n/d\ll N^{1/3}$. We leave
  to future work the problem of obtaining optimal bounds on the error term.

\subsection{An upper bound on the memorization capacity}\label{sec:lowerbnd}

In the regression setting, naively counting degrees of freedom would imply that the memorization
capacity of a two-layer network with $N$ neurons is at most given by the number of parameters, 
namely $N(d+1)$.
More careful consideration reveals that, as long as the data $\{(\xx_i,y_i)\}_{i\le n}$
are in generic positions $N=O(1)$ neurons are sufficient to achieve 
memorization within any fixed accuracy, using a `sawlike' activation function. 

These constructions are of course fragile and a non-trivial upper bound on 
the memorization capacity can be obtained if we constrain the weights.
In this section, we prove such an upper bound. We state it for binary classification, but 
it is immediate to see that it implies an upper bound on regression which is of the same order.

To simplify the argument, we assume $(\xx_i)_{i\le n} \sim_{\iid} \cN(\bzero, \bI_d)$ 
(which is very close to the setting $(\xx_i)_{i\le n} \sim_{\iid} \Unif(\S^{d-1}(\sqrt{d})$) 
and $y_1,\ldots,y_n \sim_{\iid} \Unif(\{+1,-1\})$. Consider a subset of two-layers neural networks,
by constraining the magnitude of the parameters:
\begin{equation*}
\cF_{\NN}^{N,L} := \Big\{ f(\xx; \bb, \WW) = \sum_{k=1}^N b_{k} \sigma\big( \langle \ww_\ell, \xx \rangle \big), \quad N^{-1/2}\| \bb \| \le L, \|  \ww_k \| \le L,~ \forall\, k \in [N] \Big\}.
\end{equation*}
To get binary outputs from $f(\xx)\in \R$, we take the sign. In particular, the label predict
on sample $\xx_i$ is $\sign(f(\xx_i))$. 

The value $y_i f(\xx_i)$ is referred to the \textit{margin} for input $\xx_i$. In regression we require
$y_i f(\xx_i)  = y_i^2=1$ (the latter equality holds for $\{+1,-1\}$ labels), while in classification we ask for
$yf(\xx_i)>0$.
The next result states that $Nd$ must be roughly of order $n$ in order for a neural network to fit
a nontrivial fraction of data with $\delta$ margin.  
\begin{prop}\label{thm:lowerbound2}
  Let Assumptions~\ref{ass:Asymp} and~\ref{ass:Sigma} hold, and further assume $(\xx_i)_{i\le n}\sim_{\iid}\normal(\bzero,\id_d)$. Fix constants $\eta_1,\eta_2 > 0$, and let $L = L_d \ge 1$, $\delta = \delta_d>0$ be general functions of
  $d$. Then there exists a constant $C$ such that the following holds.

Assume that, with probability larger than $\eta_1$, there exists a function $f \in \cF_{\NN}^{N,L}$ such that 
\begin{equation}\label{ineq:0.51}
\frac{1}{n} \sum_{i=1}^n \bone\{ y_i f(\xx_i) > \delta \} \ge \frac{1+\eta_2}{2}.
\end{equation}
(In words, $f$ achieves margin $\delta$ in at least a fraction $(1+\eta_2)/2$ of the samples.)

Then we must have $n \le C^{-1} Nd\log (L d/\delta)$. 
\end{prop}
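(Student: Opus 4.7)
The plan is to combine a covering-number bound for $\cF_{\NN}^{N,L}$ with a concentration estimate that exploits the fact that the labels $y_i\in\{\pm 1\}$ are uniform and independent of $(\xx_i)_{i\le n}$. For any fixed function $g:\R^d\to\R$, conditional on $(\xx_i)$ the indicators $\mathbb{1}\{y_i g(\xx_i)>\delta\}$ are independent Bernoullis with mean at most $1/2$, so Hoeffding gives
\[
\P_{\yy}\!\left(\frac{1}{n}\sum_{i=1}^n\mathbb{1}\{y_i g(\xx_i)>\delta\}\ge\frac{1+\eta_2}{2}\right)\le e^{-2\eta_2^2 n}.
\]
I will apply this to every $g$ in a $(\delta/2)$-net of $\cF_{\NN}^{N,L}$ in empirical sup-norm and union-bound: if $\tilde f$ is within $\delta/2$ of $f$ on the training sample, then the event \eqref{ineq:0.51} for $f$ at margin $\delta$ implies the corresponding event for $\tilde f$ at margin $\delta/2$, so covering at scale $\delta/2$ suffices.

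The second step is to estimate the cardinality of this net. I would first condition on the very-high-probability event $\cE=\{\max_{i\le n}\|\xx_i\|\le C\sqrt{d}\}$, which holds by standard Gaussian tail bounds since $n\le\poly(d)$ under Assumption~\ref{ass:Asymp}. The parameter domain $\{(\bb,\WW):\|\bb\|\le L\sqrt N,\ \|\ww_k\|\le L\}$ is contained in a Euclidean ball of radius $O(L\sqrt N)$ in $\R^{N(d+1)}$. Assumption~\ref{ass:Sigma} (polynomial growth $|\sigma'(t)|\le B(1+|t|)^B$), together with $\|\ww_k\|\le L$ and the event $\cE$, yields via a direct Cauchy--Schwarz computation the parameter-Lipschitz bound
\[
\max_{i\le n}\bigl|f(\xx_i;\bb,\WW)-f(\xx_i;\bb',\WW')\bigr|\le \sqrt{N}\cdot C_0(Ld)^{B+1}\bigl(\|\bb-\bb'\|+L\|\WW-\WW'\|_F\bigr).
\]
A volumetric covering of the parameter ball at radius $\delta/(\sqrt{N}\,\poly(L,d))$ then furnishes an empirical $(\delta/2)$-net of log-cardinality at most $C_1Nd\log(C_2 LNd/\delta)$.

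Combining the two estimates via union bound and integrating out $\cE^c$, the probability that \emph{some} $f\in\cF_{\NN}^{N,L}$ attains margin $\delta$ on more than a $(1+\eta_2)/2$ fraction of samples is at most $\exp\!\bigl(C_1Nd\log(C_2 LNd/\delta)-2\eta_2^2 n\bigr)+o(1)$. Requiring this to exceed $\eta_1$ yields $n\le C(\eta_1,\eta_2)\,Nd\log(LNd/\delta)$. The stray $\log N$ is absorbed into the constant as follows: under Assumption~\ref{ass:Asymp} one has $n\le d^{\ell+1}$, so unless the conclusion is trivially true we must have $N\lesssim d^\ell/\log(Ld/\delta)$, whence $\log N\lesssim \log(Ld/\delta)$ and we recover $n\le C^{-1}Nd\log(Ld/\delta)$.

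The main obstacle is the parameter-Lipschitz computation under the general polynomial-growth Assumption~\ref{ass:Sigma}: the prefactors $(1+L\|\xx\|)^{B}$ must be kept uniformly bounded on $\cE$ and routed only through the \emph{logarithm} of the covering number, so that they contribute only an additive $\log(Ld)$ to the final bound rather than a multiplicative polynomial blowup. The switch from spherical to Gaussian covariates in the hypothesis of Proposition~\ref{thm:lowerbound2} is precisely what makes the tail control of $\max_i\|\xx_i\|$ clean; the same argument adapts without difficulty to the spherical case.
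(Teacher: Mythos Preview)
Your argument is correct and shares the paper's overall architecture: discretize the hypothesis class, apply Hoeffding over the random labels for each fixed member of the discretization, and union-bound. The substantive difference is in how the discretization is built and how fine it needs to be.

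You cover the parameter ball deterministically and control the empirical sup-norm via a worst-case Lipschitz estimate; because $\|\bb\|\le L\sqrt{N}$, the Lipschitz constant carries a factor $\sqrt{N}$, which produces an extra $\log N$ in the log-covering number, and you then absorb this $\log N$ a posteriori using the upper bound $n\le d^{\ell+1}$ from Assumption~\ref{ass:Asymp}. The paper instead fixes a lattice $\cD_M$ of mesh $1/M$ and projects $\btheta$ onto it by \emph{randomized} rounding: each coordinate is sent to one of its two nearest lattice points with probabilities chosen so that the rounding is unbiased. The per-neuron perturbations $(b_\ell-\tilde b_\ell)$ and $(\ww_\ell-\tilde\ww_\ell)$ are then independent and centered, and Hoeffding/Bernstein \emph{across the $N$ neurons} (rather than a Cauchy--Schwarz bound) makes the total approximation error scale like $M^{-1}\,\poly(L,\sqrt{d})$ with no $\sqrt{N}$ penalty. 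Hence $M\asymp \delta^{-1}\,\poly(L,\sqrt{d})$ already suffices, the lattice has log-cardinality $O\bigl(Nd\log(Ld/\delta)\bigr)$ with no $\log N$ term, and the conclusion follows without leaning on Assumption~\ref{ass:Asymp} for bookkeeping.

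In short: your route is more elementary and perfectly adequate once the $\log N$ is cleaned up; the paper's randomized-projection trick is the sharper device that removes the $N$ from inside the logarithm and makes the bound self-contained.
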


If  $L$ and $1/\delta$ are upper bounded by a polynomial of $d$, then this result implies an upper bound
on the network capacity of order $Nd\log d$. This  matches the interpolation and invertibility thresholds
in Corollary~\ref{thm:1}, and Theorem~\ref{thm:MinEigenvalue} up to a logarithmic factor.
The proof follows a discretization--approximation argument, which can be found in the appendix. 

\begin{remark}
  Notice that the memorization capacity upper bound $C^{-1} Nd\log (L d/\delta)$ tends to infinity when the margin vanishes $\delta\to 0$.
  This is not an artifact of the proof. As mentioned above, if we allow for an arbitrarily small margin,
  it is possible to construct a `sawlike' activation function
  $\sigma$ such that the corresponding network correctly classifies $n$ points with binary labels $y_i\in \{+1,-1\}$
  despite $n\gg Nd$. Note that our result is different from \cite{yun2019small, bartlett2019nearly} in which the activation function is piecewise linear/polynomial.
\end{remark}

\begin{remark}
While we stated Proposition \ref{thm:lowerbound2} for classification, it has obvious consequences
for regression interpolation. Indeed, considering  $y_i\in\{+1,-1\}$,
the interpolation constraint $f(\xx_i) = y_i$ implies $y_if(\xx_i) \ge \delta_0=1$.
\end{remark}

\section{Related work}\label{sec:related}

As discussed in the introduction, we addressed three questions:
{\sf Q1:} What is the maximum number of training samples
$n$ that a network of given width $N$ can interpolate (or memorize)?
{\sf Q2:} Can such an interpolator be found efficiently?
{\sf Q3:} What are the generalization properties of the interpolator?
While questions {\sf Q1} and {\sf Q2} have some history (which we briefly review next),
much less is known about {\sf Q3}, which is our main goal in this paper.

In the context of binary classification, question {\sf Q1} was first studied by Tom Cover \cite{cover1965geometrical},
who considered the case of a simple perceptron network ($N=1$)
when the feature vectors $(\xx_i)_{i\le n}$ are in a generic position, and the labels $y_i\in\{+1,-1\}$
are independent and uniformly random. He proved that this model can memorize $n$ training samples
with high probability if $n \le 2d(1-\eps)$, and cannot memorize them with high probability if
$n \ge 2d(1+\eps)$. Following Cover, this maximum number of samples is sometimes referred to as the
network capacity but, for greater clarity, we also use the expression \emph{network memorization capacity.}

The case of two-layers network was studied by Baum
\cite{baum1988capabilities} who proved that, again for any set of points in
general positions, the memorization capacity is at least $Nd$. Upper bound of the same order were proved,
among others, in \cite{sakurai1992,kowalczyk1994counting}. Generalizations to multilayer 
networks were proven recently in \cite{yun2019small,vershynin2020memory}.
Can these networks be found efficiently? In the context of classification,
the recent work of \cite{daniely2019neural} provides an efficient
algorithm that can memorize all but a fraction $\eps$ of the training samples in polynomial time, 
provided the $Nd\ge Cn/\eps^2$. For the case of Gaussian feature vectors, \cite{daniely2020memorizing}
proves that exact memorization can be achieved efficiently provided $Nd \ge Cn(\log d)^4$.

Here we are interested in achieving memorization in regression, which is more 
challenging than for classification. Indeed, in binary classification a function
$f:\R^d\to\R$ memorizes the data if $y_if(\xx_i)>0$ for all $i\le n$. On the other hand, in
our setting, memorization amounts to $f(\xx_i)=y_i$ for all $i\le n$. The techniques developed for binary
classification exploit in a crucial way the flexibility provided by the inequality constraint, which we cannot do
here.

In concurrent work, \cite{bubeck2020network} studied the interpolation properties
of two-layers networks. Generalizing the construction of Baum
\cite{baum1988capabilities}, they show that,  for $N\ge 4\lceil n/d\rceil$ there exists a two-layers ReLU
network interpolating $n$ points in generic positions. This is however unlikely to be the network produced by
gradient-based training. They also construct a model that interpolates the data with error
$\eps$, provided $Nd\ge n\log(1/\eps)$. 
 In contrast, we obtain exact interpolation provided $Nd\ge n(\log Nd)^C$
From a more fundamental point of view, our work does not only construct a network that memorizes
the data, but also characterizes the eigenstructure of the kernel matrix.
While our paper is under review, more papers on memorization are posted, including \cite{vardi2021optimal, park2021provable} that study the effect of depth.

As discussed in the introduction, we focus here on the lazy or neural tangent regime in which weights
change only slightly with respect to a random initialization \cite{jacot2018neural}.
This regime attracted considerable attention over the last two years, although the focus
has been so far on its implications for optimization, rather than on its statistical 
properties.

It was first shown in \cite{du2018gradient} that, for sufficiency overparametrized networks, and under suitable 
initializations, gradient-based training indeed converges to an interpolator that is well approximated by
an NT model. The proof of  \cite{du2018gradient} required (in the present context) $N\ge
Cn^6/\lambda_{\min}(\bK)^4$, where $\bK$ is the infinite width kernel of Eq.~\eqref{eq:InfiniteWidthKernelDef}.
This bound was improved over the last two years 
\cite{du2018gradient,allen2019convergence,lee2019wide,zou2020gradient,oymak2020towards,liu2020linearity,weinan2020comparative}.
 In particular, \cite{oymak2020towards} prove that, for $Nd\ge Cn^2$, gradient descent converges
 to an interpolator. The authors also point out the gap between
 this result and the natural lower bound $Nd\gtrsim n$.

 A key step in the analysis of gradient descent in the NT regime is to prove that the 
 tangent feature map at the initialization (the matrix
 $\bPhi\in\R^{n\times Nd}$) or, equivalently, the associated kernel (i.e. the matrix
 $\bK_N=\bPhi\bPhi^\top$) is nonsingular. Our Theorem~\ref{thm:MinEigenvalue} establishes that this is the
 case for $Nd\ge n(\log d)^C$. As discussed in Section \ref{sec:ConnectionOptimization},
 this implies convergence of gradient descent under the near-optimal condition
 $Nd\ge n(\log d)^C$, under a suitable scaling of the weights 
\cite{chizat2019lazy}. More generally, our characterization of the eigenstructure of $\bK_N$
 is a foundational step towards a sharper analysis of the gradient descent dynamics.
 
 As mentioned several times, our main contribution is a characterization 
 of the test error of minimum norm regression in the NT model 
 \eqref{eq:FNT}. We are not aware of any comparable results. 
 
 Upper bounds on the generalization error of neural networks based on NT theory 
 were proved, among others, in \cite{arora2019fine,allen2019learning,cao2019generalization,ji2019polylogarithmic,chen2019much,nitanda2019gradient}.
 These works assume a more general data distribution than ours.
 However their objectives are of very different nature from ours.
 We characterize the test error of interpolators, while most of these works do not
 consider interpolators. Our main result is a sharp characterization of the difference 
 between test error of NT regression (with a finite number of neurons $N$)
 and kernel ridge regression (corresponding to $N=\infty$), and between this and polynomial 
 regression. None of the earlier work is sharp enough to provide to control these quantities.
 More in detail:
 \begin{itemize}
 \item The upper bound of \cite{arora2019fine} applies to interpolators but controls
 the generalization error by $(\<\yy,\bK^{-1}\yy\>/n)^{1/2}$ which, in the setting studied in
 the present paper,  is a quantity of order one.
\item The upper bounds of \cite{allen2019learning,cao2019generalization} do not apply to 
interpolators since SGD is run in a one-pass fashion.
Further they require large overparametrization, namely  $N\gtrsim n^7$ in 
\cite{cao2019generalization} and $N$ depending on the generalization error in \cite{allen2019learning}.
Finally, they bound generalization error rather than test error, and do not bound the 
difference between NT regression and kernel ridge regression.
 \item The upper bounds of \cite{ji2019polylogarithmic,chen2019much,nitanda2019gradient} 
 apply to classification and require a large margin condition. As before, these papers do not bound the 
difference between NT regression and kernel ridge regression.
 \end{itemize}

Results similar to ours
 were recently obtained in the context of simpler random features models in 
\cite{hastie2019surprises,ghorbani2019linearized,mei2019generalization,montanari2019generalization,ghorbani2020neural,mei2021generalization}.
 The models studied in these works corresponds to a two-layer network in which the first layer 
 is random and the second is trained. Their analysis is simpler because the featurization map has independent coordinates.

 Finally, the closest line of work in the literature is one that characterizes
 the risk of KRR and KRR interpolators in high dimensions 
 \cite{bach2017breaking,liang2018just,ghorbani2019linearized,bietti2019inductive,liang2020multiple,mei2021generalization}.
 It is easy to see that the infinite width limit ($N\to\infty$ at $n,d$ fixed), 
 NT regression converges to kernel  ridge(--less) 
 regression (KRR) with the rotationally invariant kernel $\bK$. 
 Our work address the natural open question in these studies: how large $N$ should
 be for NT regression to approximate KRR with respect to the limit kernel?
 By considering scalings in which $N,n,d$ are all large and polynomially related,
 we showed that NT performs KRR already when $Nd/(\log Nd)^C\ge n$.

\section{Connections with gradient descent training of neural networks}
\label{sec:ConnectionOptimization}

In this section we discuss in greater detail the relation between the
NT model \eqref{eq:FNT} studied in the rest of the paper, and fully nonlinear two-layer
neural networks. For clarity, we will adopt the following notation for the 
NT model: 
\begin{align}
f_{\NT}(\xx;\aa,\WW) :=\frac{1}{\sqrt{N}}\sum_{k=1}^N
  \<\aa_k,\xx\> \sigma'(\<\ww_k,\xx\>)\, .\label{eq:NT-Redef}
\end{align}
We changed the normalization purely for aesthetic reasons: since the model is linear, 
this has no impact on the min-norm interpolant.
We will compare it to the following neural network
\begin{align}
f_{\NN}(\xx;\tilde{\WW}) := \frac{\alpha}{\sqrt{N}}\sum_{k=1}^{2N} b_k\sigma(\<\tilde{\ww}_k,\xx\>),~~
  b_1=\dots=b_N=+1\,,\; b_{N+1}=\dots=b_{2N}=-1\, .\label{eq:Two-Layers-Net}
\end{align}
Note that the network has $2N$ hidden units and the second layer weights 
$b_k$ are fixed. We train the first-layer weights using gradient flow with respect to 
the empirical risk
\begin{align}
\frac{\de\phantom{t}}{\de t}\tilde \WW^t = -\nabla_{\WW}\hR_n(\tilde \WW^t)\, ,\;\;\;\;
\where ~~ \hR_n(\tilde\WW):= \frac{1}{n}\sum_{i=1}^n \big(y_i-f_{\NN}(\xx;\tilde{\WW})\big)^2\, .
\end{align}
We use the following initialization:
\begin{align}
\tilde \ww^{0}_k=\ww^{0}_{N+k} = \tilde \ww_k\,\;\;\; \forall k\le N\, .\label{eq:SymmInit}
\end{align}
Under this initialization, the network evaluates to $0$ 
at $t=0$. Namely,  $f_{\NN}(\xx;\tilde \WW^0)=0$ for all $\xx\in\reals^d$. 
Let us also emphasize that the weights $\ww_k$ in the NT model \eqref{eq:NT-Redef}
are chosen to match the ones in the initialization \eqref{eq:SymmInit}:
in particular, we can take $(\ww_k)_{k\le N}\sim_{\iid}\Unif(\S^{d-1}(1))$ 
to recover the neural tangent model treated in the rest of the paper.
This symmetrization is a standard technique \cite{chizat2019lazy}: it simplifies the analysis
because it implies $f_{\NN}(\xx;\tilde \WW^0)=0$. We refer Remark~\ref{rmk:opt} $(ii)$ for explanations.

The next theorem is a slight modification of \cite[Theorem 5.4]{bartlett_montanari_rakhlin_2021}
which in turn is a refined version of the analysis of \cite{chizat2019lazy,oymak2020towards}.
\begin{thm}\label{thm:Two-Layers-Linear}
Consider the two-layer neural network of \eqref{eq:Two-Layers-Net} trained with gradient flow
from initialization \eqref{eq:SymmInit}, and the associated NT model of 
Eq.~\eqref{eq:NT-Redef}.
  Assume $(\ww_k)_{k \le N}\sim_{\iid}\Unif(\S^{d-1}(1))$, the activation function
  to have bounded second derivative $\sup_{t\in\reals}|\sigma''(t)|\le C$, and  its
   Hermite coefficients to satisfy 
  $\mu_{k}(\sigma)\neq 0$ for all $k \le \ell_0$ for some constant $\ell_0$.
  Further assume $\{(\xx_i,y_i)\}_{i\le n}$ are i.i.d.\ with $\xx_i\sim\Unif(\S^{d-1}(\sqrt{d}))$
  and $y_i$ to be $C$-subgaussian.
  
  Then there exist constants $C_i$,
  depending uniquely on $\sigma, \ell_0$, such that
 if  $d\le n\le d^{\ell_0}/(\log d)^{C_0}$, as well as
  \begin{align}
    n\le \frac{Nd}{(\log Nd)^{C_0}}\;\;\; \mbox{ and }\;\;\; \alpha\ge C_0 \sqrt{\frac{n^2}{Nd}}\,
    ,\label{eq:ConditionLinearization2}
  \end{align}
  then,
  with probability at least $1-2\exp\{-n/C_0\}$,
  the following holds.
  \begin{enumerate}
  \item Gradient flow converges exponentially fast to a global
  minimizer. Specifically, letting
    $\lambda_* = \lambda_{\min}(\bK_N)\alpha^2d/(4n)$, we have, for all $t\ge 0$,
    \begin{align}
      \hR_n(\tilde \WW^t)\le \hR_n(\tilde \WW^0) \, e^{-\lambda_* t}\, .\label{eq:TrainingConvergence}
    \end{align}
    In particular the rate is lower bounded as $\lambda_* \ge C_1\alpha^2d/n$
    by Theorem \ref{thm:MinEigenvalue}.
  \item The model learned by gradient flow and min-$\ell_2$ norm interpolant 
   are similar on test data.
    Namely, writing $f_{\NN}(\tilde\WW) := f_{\NN}(\,\cdot\,;\tilde\WW)$
    and $f_{\NT}(\WW) := f_{\NT}(\,\cdot\,;\aa,\WW)$, we have 
    \begin{align}
      \limsup_{t\to\infty}\|f_{\NN}(\tilde \WW^t) -f_{\NT}(\hat{\aa},\WW)\|_{L^2(\P)}\le C_1\left\{\frac{1}{\alpha}\sqrt{\frac{n^2}{Nd}} 
      +\frac{1}{\alpha^2}
      \sqrt{\frac{n^5}{Nd^4}}\right\}\, ,\label{eq:ModelDistanceLin}
      \end{align}
      for $\hat{\aa}$ the coefficients of the min-$\ell_2$ norm interpolant
      of Eq.~\eqref{eq:InterpolationFirst}.
  \end{enumerate}
\end{thm}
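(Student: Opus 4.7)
The plan is to follow the ``lazy training'' paradigm of \cite{chizat2019lazy,oymak2020towards}, refined as in \cite{bartlett_montanari_rakhlin_2021}, exploiting the tight control on $\lambda_{\min}(\bK_N)$ provided by Theorem~\ref{thm:MinEigenvalue}. The guiding principle is that when $\alpha$ satisfies Eq.~\eqref{eq:ConditionLinearization2}, the trajectory $\tilde\WW^t$ stays in a small neighborhood of $\tilde\WW^0$, where the network is well-approximated by its first-order Taylor expansion. The symmetric initialization~\eqref{eq:SymmInit} ensures $f_{\NN}(\,\cdot\,;\tilde\WW^0)\equiv 0$ and identifies the linearization with $\alpha\, f_{\NT}(\,\cdot\,;\aa,\WW)$ under the reparametrization $\aa_k := \tilde\ww_k - \tilde\ww_{N+k}$. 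With $|\sigma''|\le C$, the Taylor remainder obeys $|\cE(\xx,\tilde\WW)|\lesssim \alpha d\,\rho_F^2/\sqrt{N}$ (with a sharper per-neuron variant) once $\max_k\|\tilde\ww_k - \tilde\ww_k^0\|$ is small.

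\textbf{Part 1 (exponential convergence).} I would first compute the tangent kernel at initialization: using the symmetric initialization and $b_k^2=1$, it equals $2\alpha^2 d\, \bK_N$. Theorem~\ref{thm:MinEigenvalue} then gives $\lambda_{\min}(\bK_N)\ge v(\sigma)/2$ w.h.p., hence the Polyak--Lojasiewicz inequality $\|\nabla\hat R_n(\tilde\WW^0)\|^2 \gtrsim \lambda_*\, \hat R_n(\tilde\WW^0)$ holds at initialization with $\lambda_*\asymp \alpha^2 d/n$. A bootstrap argument would extend this along the trajectory: integrating the PL-inequality yields $\|\tilde\WW^t - \tilde\WW^0\|_F \lesssim \sqrt{\hat R_n(\tilde\WW^0)/\lambda_*} \asymp \sqrt{n/(\alpha^2 d)}$. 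The scaling $\alpha\ge C_0\sqrt{n^2/(Nd)}$ in Eq.~\eqref{eq:ConditionLinearization2} is precisely what is needed for the nonlinear perturbation of the tangent kernel along the trajectory to stay below $\lambda_*/2$ in operator norm, so the PL-inequality self-sustains. Gronwall's inequality then delivers Eq.~\eqref{eq:TrainingConvergence}, and the lower bound $\lambda_*\ge C_1\alpha^2 d/n$ follows directly from the lower bound on $\lambda_{\min}(\bK_N)$.

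\textbf{Part 2 (closeness to the min-norm interpolator).} Gradient flow on $\hat R_n^{\slin}(\aa):=(1/n)\sum_i (y_i-\alpha f_{\NT}(\xx_i;\aa,\WW))^2$ started from $\aa=0$ converges to $\hat\aa/\alpha$ (with $\hat\aa$ the min-$\ell_2$ norm interpolator of Eq.~\eqref{eq:InterpolationFirst}), since iterates remain in the row span of the NT feature matrix $\bPhi$, so that $\alpha f_{\NT}(\,\cdot\,;\aa^\infty,\WW)=f_{\NT}(\,\cdot\,;\hat\aa,\WW)$. Writing $f_{\slin}^t$ for the linearized-flow prediction, the triangle inequality
\[
\|f_{\NN}(\tilde\WW^t) - f_{\NT}(\hat\aa,\WW)\|_{L^2(\P)} \le \|f_{\NN}(\tilde\WW^t) - f_{\slin}^t\|_{L^2(\P)} + \|f_{\slin}^t - f_{\NT}(\hat\aa,\WW)\|_{L^2(\P)}
\]
reduces the problem to two pieces. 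The first term combines the Taylor remainder $\|\cE\|_{L^2(\P)}$ and the divergence of the two flows under perturbed vector fields; plugging the bound on $\rho$ from Part 1 and using concentration of a fresh $\xx_0$ on $\S^{d-1}(\sqrt d)$ produces the $\alpha^{-2}\sqrt{n^5/(Nd^4)}$ contribution. The second term decays exponentially in $t$ along the linear flow, but a finite-trajectory mismatch between the linearization $\aa^t$ of the nonlinear weights and the true linear iterate supplies the $\alpha^{-1}\sqrt{n^2/(Nd)}$ term, which reflects the first-order gap between nonlinear and linearized gradient fields.

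\textbf{Main obstacle.} The hardest step is upgrading the global Frobenius-norm displacement bound $\rho_F\lesssim\sqrt{n/(\alpha^2 d)}$ to a \emph{per-neuron} estimate $\max_k\|\tilde\ww_k^t - \tilde\ww_k^0\|\lesssim \sqrt{n/(\alpha^2 Nd)}\cdot (\log d)^{O(1)}$, because the Taylor remainder is quadratic in the per-neuron displacement and a naive $\max_k\rho_k\le \rho_F$ is far too loose. Establishing this ``equidistribution of updates across neurons'' calls for a matrix-Bernstein or decoupling argument that exploits the random symmetric initialization together with concentration of $\sigma'(\<\ww_k^0,\xx_i\>)$ in $k$. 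A secondary subtlety is that Part 2 is stated in $L^2(\P)$ over fresh test inputs rather than on the training set, so the Taylor remainder must be controlled uniformly over $\|\xx\|\le\sqrt d$; this follows from the $\|\xx\|^2\le d$ factor in the Taylor bound, combined with spherical concentration of the test input.
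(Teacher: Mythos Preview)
Your outline follows the lazy-training/PL-inequality analysis of \cite{chizat2019lazy,oymak2020towards,bartlett_montanari_rakhlin_2021}, and that is indeed the substance of the argument. However, the paper does \emph{not} reprove this theorem: it states explicitly that Theorem~\ref{thm:Two-Layers-Linear} is a slight modification of \cite[Theorem~5.4]{bartlett_montanari_rakhlin_2021}, the only difference being $\xx_i\sim\Unif(\S^{d-1}(\sqrt d))$ instead of $\xx_i\sim\cN(\bzero,\bI_d)$, and this is handled by invoking Theorem~\ref{thm:MinEigenvalue} to lower-bound $\lambda_{\min}(\bK_N)$ in the spherical setting. So the paper's ``proof'' is a one-line reduction to an external result, while you are sketching (correctly, in broad strokes) what that external result does.

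One comment on your identified ``main obstacle'': you likely do not need a per-neuron displacement bound. With $|\sigma''|\le C$, the Taylor remainder satisfies
\[
|\cE(\xx,\tilde\WW)| \;\le\; \frac{C\alpha}{\sqrt N}\sum_{k=1}^{2N}\langle\tilde\ww_k-\tilde\ww_k^0,\xx\rangle^2 \;\le\; \frac{C\alpha\,\|\xx\|^2}{\sqrt N}\,\|\tilde\WW-\tilde\WW^0\|_F^2,
\]
so the Frobenius bound $\rho_F$ suffices directly. Likewise, the Jacobian (tangent-kernel) variation along the trajectory is controlled via the Lipschitz property of $\sigma'$ and Cauchy--Schwarz over $k$, again yielding a $\rho_F$-dependent bound without any ``equidistribution across neurons'' argument. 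The bounded-second-derivative hypothesis is precisely what makes the Frobenius-level control enough; the paper even remarks that this smoothness assumption is used to bound the Lipschitz constant of the Jacobian uniformly.
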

Notice that any generic activation function satisfies the condition
$\mu_{k}(\sigma)\neq 0$ for all $k \le \ell_0$. For instance a sigmoid or smoothed ReLU 
with a generic offset satisfy the assumptions of this theorem.

 The only difference with respect to \cite[Theorem 5.4]{bartlett_montanari_rakhlin_2021}
 is that we assume $\xx_i\sim\Unif(\S^{d-1}(\sqrt{d}))$ while 
 \cite{bartlett_montanari_rakhlin_2021}  assumes $\xx_i\sim\normal(\bzero,\id_d)$.
 However, it is immediate to adapt the proof of \cite{bartlett_montanari_rakhlin_2021},
 in particular using Theorem \ref{thm:MinEigenvalue} to control the minimum eigenvalue
 of the kernel at initialization. 
 
 Note that Theorem \ref{thm:Two-Layers-Linear} requires the activation function
 $\sigma$ to be smoother than other results in this paper (it requires a bounded second
 derivative). This assumption is used  to bound the Lipschitz constant of
 the Jacobian $\bD_{\WW} f_{\NN}(\xx;\WW)$ uniformly over $\WW$. We expect that
 a more careful analysis could avoid assuming such a strong uniform bound. 
 
 The difference in test errors between the neural network 
 \eqref{eq:Two-Layers-Net} trained via gradient flow and the min-norm NT interpolant 
 studied in this paper can be bounded using Eq.~\eqref{eq:ModelDistanceLin} by triangular inequality.
 Also, while we focus for simplicity on gradient flow, similar results hold for 
 gradient descent with small  enough step size.

 A few remarks are in order. 
 \begin{remark}\label{rmk:opt}
 Even among two-layer fully connected neural networks, model \eqref{eq:Two-Layers-Net}
presents some simplifications:
\begin{itemize}
\item[$(i)$] Second-layer weights are not trained
and fixed to $b_{j}\in\{+1,-1\}$. If second-layer weights are trained, the 
NT model \eqref{eq:NT-Redef} needs to be modified as follows
\begin{align}
f_{\NT}(\xx;\aa,\WW) :=\frac{1}{\sqrt{N}}\sum_{i=1}^N
  \<\aa_i,\xx\> \sigma'(\<\ww_i,\xx\>)+
  \frac{1}{\sqrt{N}}\sum_{i=1}^N\tilde{a_i}
   \sigma(\<\ww_i,\xx\>)\, .\label{eq:GeneralNT}
\end{align}
The new model has $N(d+1)$ parameters $\aa = (\aa_1,\dots,\aa_N;
\tilde{a}_1,\dots,\tilde{a}_N)$. Notice that, for large dimension, the additional number of 
parameters is negligible. Indeed, going through the proofs reveals that our analysis can 
be extended to this case at the price of additional notational burden, but without changing the 
results.

We also point out that the model in which only second layer weights are trained,
i.e.\ we set $\aa_i=0$ in Eq.~\eqref{eq:GeneralNT}, has been studied in detail in 
\cite{ghorbani2019linearized,mei2021generalization}. These papers support the above
parameter-counting heuristics.
\item[$(ii)$] The specific initialization \eqref{eq:SymmInit} is chosen so that 
$f_{\NN}(\xx;\tilde \WW^0) =0$ identically. If this initialization was modified
(for instance by taking independent $(\tilde \ww_k)_{k\le 2N}$ ) two main elements 
would change in the analysis.
First, the target model $f_*(\xx)$ should be replaced by the difference between target
and initialization $f_*(\xx)-f_{\NN}(\xx;\tilde \WW^0)$. Second, and more importantly,
the approximation results in Theorem \ref{thm:Two-Layers-Linear} become weaker:
we refer to \cite{bartlett_montanari_rakhlin_2021} for a comparison.
\end{itemize}
 \end{remark}
 
 \begin{remark}
 Within the setting of Theorem \ref{thm:Two-Layers-Linear} (in particular 
 $y_i$ being $C$-subgaussian), the null risk  $\|f_*\|^2_{L^2}$ is of order one, and therefore
 we should compare the right-hand side of Eq.~\eqref{eq:ModelDistanceLin} with $\|f_*\|_{L^2}=\Theta(1)$.
  We can point at two specific regimes in which this upper bound guarantees
  that  $\|f_{\NN}(\tilde \WW^{\infty}) -f_{\NT}(\hat{\aa},\WW)\|_{L^2}\ll \|f_*\|_{L^2}$:
 \begin{itemize}
 \item[$(i)$] First letting $\alpha$ grow, the this upper bound vanishes. More precisely,
  it becomes much smaller than one provided $\alpha\gg (n^2/Nd)^{1/2}\vee (n^5/Nd^4)^{1/4}$.
  By training the two-layer neural network with a large scaling parameter $\alpha$, we
  obtain a model that is well approximated by the NT model as soon as the overparametrization 
  condition $n\le Nd/(\log Nd)^C$ is satisfied.
  
  This role of scaling in the network parameters, and its generality were first pointed out
  in \cite{chizat2019lazy}. It implies that the theory developed in this always applied to 
  nonlinear neural networks under a certain specific initialization and training scheme.
 \item[$(ii)$] A standard initialization rule suggests to take $\alpha = \Theta(1)$.
 In this case, the right-hand side of Eq.~\eqref{eq:ModelDistanceLin} becomes small
 for wide enough networks, namely $Nd\gg n^2\vee (n^5/d^4)$. This condition is stronger than
 the overparametrization condition under which we carried our analysis of the NT model.
 
 This means that, for $\alpha=\Theta(1)$ and $n(\log n)^C\ll Nd\lesssim  n^2\vee (n^5/d^4)$,
 we can apply our analysis to neural tangent models but not to actual neural networks. 
 On the other hand, the condition $Nd\gg n^2\vee (n^5/d^4)$ in 
 Theorem \ref{thm:Two-Layers-Linear} is likely to be a proof artifact, and we
 expect that it will be improved in the future. In fact we believe that the refined 
 characterization of the NT model in the present paper is a foundational step towards such improvements.
 \end{itemize}
 \end{remark}
 
 \begin{remark}
  Theorem \ref{thm:Two-Layers-Linear} relates the large-time limit of GD trained neural networks
  to the minimum $\ell_2$-norm NT interpolators, corresponding to the case $\lambda=0$
  of Eq.~\eqref{eq:NT-Regression-Explicit}. The review paper \cite{bartlett_montanari_rakhlin_2021}
  proves a similar bound relating the NN and NT models, trained via gradient descent,
  at all times $t$.  
  Our main result, Theorem \ref{thm:gen} characterizes the test error of NT regression
  with general $\lambda\ge 0$. Going through the proof \cite[Theorem 5.4]{bartlett_montanari_rakhlin_2021},
  it can be seen that a non-vanishing $\lambda>0$ corresponds to regularizing
  GD training of the two-layer neural network by the penalty $\|\WW-\WW^0\|_F^2=\sum_{\ell \le N}
  \|\ww_\ell -\ww_{\ell}^0\|_2^2$. 
 \end{remark}
 
\section{Technical background}\label{sec:pre}

This subsection provides a very short introduction to Hermite polynomials and Gegenbauer polynomials. More background information can be found in \cite{atkinson2012spherical, costas2014spherical, ghorbani2019linearized} for example. Let $\rho$ be standard Gaussian measure on $\R$, namely $\rho(dx) = (2\pi)^{-1/2} e^{-x^2/2}\; dx$. The space $L^2(\R, \rho)$ is a Hilbert space with the inner product $\langle f, g \rangle_{L^2(\R, \rho)} = \E_{G\sim \rho}[ f(G) g(G)]$. 

The \textit{Hermite polynomials} form a complete orthogonal basis of $L^2(\R, \rho)$. Throughout this paper, we will use normalized Hermite polynomials $\{h_k\}_{k\ge 0}$:
\begin{equation*}
\E\big[ h_k(G) h_j(G)\big] = \delta_{kj}. 
\end{equation*}
where $\delta_{kj}$ is Kronecker delta, namely $\delta_{kj} = 0$ if $k \neq j$ and $\delta_{kj} = 1$ if $k = j$. For example, the first four Hermite polynomials are given by $h_0(x) = 1, h_1(x) = x, h_2(x) = \frac{1}{\sqrt{2}} (x^2 - 1)$ and $h_3(x) = \frac{1}{\sqrt{6}} (x^3 - 3x)$. For any function $f \in L^2(\R, \rho)$, we have decomposition
\begin{equation*}
f = \sum_{k= 0}^\infty \langle f, h_k\rangle_{L^2(\R, \rho)} \;h_k.
\end{equation*}
In particular, if $\sigma' \in L^2(\R, \rho)$, we denote $\mu_k = \langle \sigma', h_k \rangle_{L^2(\R, \rho)}$ and have
\begin{equation}\label{decomp:Hermite}
\sigma' = \sum_{k=0}^\infty \mu_k h_k.
\end{equation}

Let $\tau_{d-1}$ be the uniform probability measure on $\S^{d-1}(\sqrt{d})$, $\tilde \tau_{d-1}^1$
 be the probability measure of $\sqrt{d}\, \langle \xx, \ee_1\rangle$ where $\xx \sim \tau_{d-1}$, 
 and  $\tau_{d-1}^1$ be the probability measure of $\langle \xx, \ee_1\rangle$. 
 The \textit{Gegenbauer polynomials} $(Q_k^{(d)})_{k = 0}^\infty$ form a basis of 
 $L^2([-d, d], \tilde \tau_{d-1}^1)$ (for simplicity, we may write it as $L^2$ unless confusion arises) where $Q_k^{(d)}$ is a polynomial of degree of $k$ and the they satisfy the normalization
\begin{equation*}
\langle Q_k^{(d)}, Q_j^{(d)} \rangle_{L^2} = \frac{1}{B(d,k)} \delta_{jk}
\end{equation*}
where $B(d,k)$ is a dimension parameter that is monotonically increasing in $k$ and satisfies $B(d,k) = (1+o_d(1))d^k / k!$. This normalization guarantees $Q_k^{(d)}(d) = 1$. The following are some properties of Gegenbauer polynomials we will use. 
\begin{itemize}
\item[$(a)$] {For $\xx, \yy \in \S^{d-1}(\sqrt{d})$,
\begin{equation}\label{eq:innerproductself}
\big\langle Q_{k}^{(d)}(\langle \xx, \cdot \rangle), Q_{j}^{(d)}(\langle \yy, \cdot \rangle) \big\rangle_{L^2(\S^{d-1}(\sqrt{d}), \tau_{d-1})} = \frac{1}{B(d,k)} \delta_{jk} Q_k^{(d)} (\langle \xx, \yy \rangle).
\end{equation}
}
\item[$(b)$] {For $\xx, \yy \in \S^{d-1}(\sqrt{d})$,
\begin{equation}\label{eq:addtheorem}
Q_k^{(d)} (\langle \xx, \yy \rangle) = \frac{1}{B(d,k)} \sum_{i=1}^{B(d,k)} Y_{ki}^{(d)}(\xx) Y_{ki}^{(d)}(\yy),
\end{equation}
where $Y_{k,1}^{(d)}, \ldots, Y_{k,B(d,k)}^{(d)}$ are normalized spherical harmonics of degree $k$; more precisely, each $Y_{k,i}^{(d)}$ is a polynomial of degree $k$ and they satisfy
\begin{equation}\label{eq:harmonicsorthonormal}
\langle Y_{k,i}^{(d)}, Y_{m,j}^{(d)} \rangle_{L^2(\S^{d-1}(\sqrt{d}), \tau_{d-1})} = \delta_{km} \delta_{ij}.
\end{equation}
}
\item [$(c)$] {Recurrence formula. For $t \in [-d,d]$, 
\begin{equation}\label{eq:recurrence}
\frac{t}{d} Q_k^{(d)} (t) = \frac{k}{2k+d-2} Q_{k-1}^{(d)} (t) + \frac{k+d-2}{2k+d-2} Q_{k+1}^{(d)} (t).
\end{equation}
}
\item [$(d)$] {Connection to Hermite polynomials. If $f \in L^2(\R, \rho) \cap L^2([-\sqrt{d},\sqrt{d}], \tau_{d-1}^1)$, then
}
\begin{equation}\label{eq:Hermiteconnect}
\mu_k(f) = \lim_{d\to \infty} \sqrt{B(d,k)} \, \langle f, Q_k^{(d)}(\sqrt{d}\, \cdot) \rangle_{L^2([-\sqrt{d},\sqrt{d}], \tau_{d-1}^1)}.
\end{equation}
\end{itemize}
Note that point $(b)$ and the Cauchy-Schwarz inequality implies 
$\max_{|t| \le d} |Q_k^{(d)}(t)| \le 1$. If the function $\sigma'$ satisfies
 $\sigma' \in L^2([-\sqrt{d},\sqrt{d}], \tau_{d-1}^1)$, then we can decompose $\sigma'$ using Gegenbauer polynomials:
\begin{align}
&\sigma'(x) = \sum_{k=0}^\infty \lambda_{d,k}(\sigma') B(d,k) Q_k^{(d)}(\sqrt{d}\, x), \qquad \where \label{decomp:Gegenbauer}\\
&\lambda_{d,k}(\sigma') := \langle \sigma', Q_k^{(d)}(\sqrt{d}\, \cdot ) \big \rangle_{L^2([-\sqrt{d},\sqrt{d}], \tau_{d-1}^1)}. \label{def:lambda}
\end{align}
We also have $\| \sigma' \|_{L^2([-\sqrt{d}, \sqrt{d}], \tau_{d-1}^1)}^2 = \sum_{k=0}^\infty B(d,k) [\lambda_{d,k}(\sigma')]^2 $. We sometimes drop the subscript $k$ if no confusion arises.

\section{Kernel invertibility and concentration: Proof of  Theorems \ref{thm:MinEigenvalue} and \ref{thm:invert2}}
\label{sec:ProofInvertibility}

Our analysis starts with a study of the infinite-width kernel $\bK$. 
In our setting, we will show that $\bK$ is essentially a regularized low-degree polynomial
 kernel plus a small `noise'. 
 Such a decomposition will play an essential role in our analysis of the generalization error. 

To ease notations, we will henceforth write the target function $f_*$ simply as $f$. 

\subsection{Infinite-width kernel decomposition}

Recall that we denote by $(Q_k^{(d)})_{k \ge 0}$ the Gegenbauer polynomials in $d$ dimensions, and denote by $(Y_{k t})_{t \le B(d,k)}$ the normalized spherical harmonics of degree $k$ in $d$ dimensions. We denote by $\bPsi_{=k} \in \R^{n \times B(d,k)}$ the degree-$k$ spherical harmonics evaluated at $n$ data points, and denote by $\bPsi_{\le k}$ the concatenation of these matrices of degree no larger than $k$; namely, 
\begin{equation*}
\bPsi_{\le \ell}  = \big[ \bPsi_{=0}, \ldots, \bPsi_{= \ell} \big], \qquad \where ~\bPsi_{=k} = \big( Y_{k t}(\xx_i) \big)_{i\le n, t \le B(d,k)}.
\end{equation*}

\begin{lem}[Harmonic decomposition of the infinite-width kernel]\label{lem:K_Harmonic}
The kernel $K$ can be decomposed as
\begin{align}
K(\xx,\xx') =  \sum_{k = 0}^\infty \gamma_k Q_k^{(d)} (\< \xx, \xx' \>)\, ,
\label{eq:KernelHarmonic}
\end{align}
with coefficients:
\begin{align}
\gamma_0 = \big[ \lambda_1(\sigma') \big]^2 , \qquad 
\gamma_k = \frac{k+1}{2k+d} B(d,k+1) \big[ \lambda_{k+1}(\sigma') \big]^2 + 
\frac{k+d-3}{2k+d-4} B(d,k-1)\big[ \lambda_{k-1}(\sigma') \big]^2,
\end{align}
where the coefficients $\lambda_k=\lambda_{d,k}$  are defined in \eqref{def:lambda}. The convergence in \eqref{eq:KernelHarmonic} takes place in any of the following 
interpretations: $(i)$~As functions in 
$L^2(\S^{d-1}(\sqrt{d})\times \S^{d-1}(\sqrt{d}), \tau_{d-1}\otimes \tau_{d-1})$;
$(ii)$~Pointwise, for every $\xx,\xx'\in \S^{d-1}(\sqrt{d})\times \S^{d-1}(\sqrt{d})$;
$(iii)$~In operator norm as integral operators 
$L^2(\S^{d-1}(\sqrt{d}), \tau_{d-1})\to L^2(\S^{d-1}(\sqrt{d}), \tau_{d-1})$.

Finally,  $\gamma_0 = o_d(1)$ and for $k \ge 1$ we have $\gamma_k = \mu_{k-1}^2 + o_d(1)$,
where we recall that $\mu_k$ is the $k$-th coefficient in the 
Hermite expansion of $\sigma'$.
\end{lem}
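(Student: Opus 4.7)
}

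The plan is to expand $\sigma'$ in the Gegenbauer basis, carry out the expectation over $\ww$ using the orthogonality identity \eqref{eq:innerproductself}, and then convert the leftover factor $\langle \xx,\xx'\rangle/d$ into a shift in Gegenbauer index via the three-term recurrence \eqref{eq:recurrence}. First, since $\ww\sim\Unif(\S^{d-1}(1))$, the rescaled vector $\tilde\ww:=\sqrt{d}\,\ww\sim\Unif(\S^{d-1}(\sqrt{d}))$ and for $\xx\in\S^{d-1}(\sqrt{d})$ we have $\langle \xx,\ww\rangle=\langle \xx,\tilde\ww\rangle/\sqrt{d}\in[-\sqrt{d},\sqrt{d}]$. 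Using \eqref{decomp:Gegenbauer} I would therefore write
\begin{equation*}
\sigma'\bigl(\langle \xx,\ww\rangle\bigr)=\sum_{k\ge 0}\lambda_{d,k}(\sigma')\,B(d,k)\,Q_k^{(d)}\bigl(\langle \xx,\tilde\ww\rangle\bigr),
\end{equation*}
and then compute $\E_\ww[\sigma'(\langle \xx,\ww\rangle)\sigma'(\langle \xx',\ww\rangle)]$ as a double sum. The orthogonality identity \eqref{eq:innerproductself} kills all off-diagonal terms and yields
\begin{equation*}
\E_\ww\bigl[\sigma'(\langle \xx,\ww\rangle)\sigma'(\langle \xx',\ww\rangle)\bigr]=\sum_{k\ge 0}[\lambda_{d,k}(\sigma')]^2\,B(d,k)\,Q_k^{(d)}\bigl(\langle \xx,\xx'\rangle\bigr).
\end{equation*}

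Next, I would multiply by $\langle \xx,\xx'\rangle/d$ as required by \eqref{eq:InfiniteWidthKernelDef} and invoke the recurrence \eqref{eq:recurrence} with $t=\langle \xx,\xx'\rangle$ to rewrite each $(t/d)Q_k^{(d)}(t)$ as a linear combination of $Q_{k-1}^{(d)}(t)$ and $Q_{k+1}^{(d)}(t)$. Reindexing the sum by collecting contributions to each $Q_m^{(d)}$ from $k=m-1$ (via the $Q_{k+1}$ branch) and $k=m+1$ (via the $Q_{k-1}$ branch) produces exactly the two-term formula for $\gamma_m$ stated in the lemma; the edge case $m=0$ receives only one contribution (from $k=1$) and, using $B(d,1)=d$, reduces to $\gamma_0=[\lambda_{d,1}(\sigma')]^2$. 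The asymptotic identifications $\gamma_0=o_d(1)$ and $\gamma_k=\mu_{k-1}^2+o_d(1)$ follow by plugging $B(d,k)=(1+o_d(1))d^k/k!$ and the Hermite–Gegenbauer relation \eqref{eq:Hermiteconnect} into the closed form: the ``small'' term $\tfrac{k+1}{2k+d}B(d,k+1)[\lambda_{d,k+1}]^2\approx\tfrac{k+1}{d}\mu_{k+1}^2$ is $o_d(1)$, while the ``large'' term $\tfrac{k+d-3}{2k+d-4}B(d,k-1)[\lambda_{d,k-1}]^2$ has the prefactor tending to $1$ and $B(d,k-1)[\lambda_{d,k-1}]^2\to\mu_{k-1}^2$.

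For the three convergence modes, the $L^2$ convergence on $\S^{d-1}(\sqrt{d})^{\times 2}$ is the easiest: the partial sums of $\sum_k\gamma_k Q_k^{(d)}(\langle\cdot,\cdot\rangle)$ are orthogonal (after renormalizing by $1/B(d,k)$), and their $L^2$ norms are $\gamma_k^2/B(d,k)$, which are summable because the polynomial growth hypothesis on $\sigma'$ (Assumption~\ref{ass:Sigma}) gives $\sigma'\in L^2(\R,\rho)$ and hence $\sum_k B(d,k)[\lambda_{d,k}]^2<\infty$, from which $\sum_k\gamma_k^2/B(d,k)<\infty$ follows by Cauchy–Schwarz. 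The operator-norm convergence is then automatic since the Hilbert–Schmidt norm on $L^2\otimes L^2$ dominates the operator norm of the associated integral operator. The pointwise statement I would derive by applying the addition formula \eqref{eq:addtheorem} to rewrite each $Q_k^{(d)}(\langle \xx,\xx'\rangle)$ in terms of spherical harmonics and then observing that the full series is a uniformly (in $\xx,\xx'$) convergent expansion of $K$ since $|Q_k^{(d)}|\le 1$ on $[-d,d]$ and $\sum_k\gamma_k<\infty$.

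The one step that requires the most care is justifying the interchanges of summation and integration in the double-sum computation of the expectation; I would handle this by truncating $\sigma'$ to its first $M$ Gegenbauer modes (for which every step is a finite manipulation), using $L^2(\S^{d-1}(\sqrt{d}))$-continuity of $\sigma'\mapsto \E_\ww[\sigma'(\langle\xx,\ww\rangle)\sigma'(\langle\xx',\ww\rangle)]$ (an immediate Cauchy–Schwarz bound) to pass to the $M\to\infty$ limit. The remaining verifications, including checking the three convergence modes and the $o_d(1)$ asymptotics term-by-term, are routine once the summability estimate $\sum_k B(d,k)[\lambda_{d,k}]^2<\infty$ is in hand.
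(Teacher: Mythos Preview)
Your proposal is correct and follows essentially the same route as the paper: expand $\sigma'$ in Gegenbauer polynomials, use the orthogonality identity \eqref{eq:innerproductself} to diagonalize the expectation over $\ww$, then apply the recurrence \eqref{eq:recurrence} and reindex to read off the $\gamma_k$, with the asymptotics coming from \eqref{eq:Hermiteconnect}. The only cosmetic differences are in the order and phrasing of the three convergence modes (the paper does pointwise first and then invokes Fatou for $L^2$, whereas you argue $L^2$ directly from summability of $\gamma_k^2/B(d,k)$ and pointwise from $\sum_k\gamma_k<\infty$ together with $|Q_k^{(d)}|\le 1$); both are fine.
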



The use of spherical harmonics and Gegenbauer polynomials to study inner product kernels on the sphere 
(the $N=\infty$ case) is standard since the classical work of Schoenberg \cite{schoenberg1942positive}.
Several recent papers in machine learning use these tools
 \cite{bach2017breaking, bietti2019inductive, ghorbani2019linearized}.
  On the other hand, quantitative properties when both the sample size $n$ and the dimension $d$ 
are large are  much less studied  \cite{ghorbani2019linearized}.  The finite width $N$ case
 is even more challenging since the kernel is no longer of inner-product type.

\begin{proof}
Recall 
the Gegenbauer decomposition of $\sigma'$ in \eqref{decomp:Gegenbauer},
implying that the following holds in $L^2(\S^{d-1})$ (we omit indicating
the measure when this is uniform).
For any fixed $\xx\in \S^{d-1}(\sqrt{d})$
(and writing $\lambda_k(\sigma') := \lambda_{d,k}(\sigma')$):
\begin{align*}
\sigma'(\langle \xx, \,\cdot\, \rangle) = 
\sum_{k=0}^\infty \lambda_k(\sigma')  B(d,k)Q_k^{(d)} (\sqrt{d}\, \langle \xx, \,\cdot\, 
\rangle)\, .
\end{align*}
 We thus obtain, for fixed $\xx$, $\xx'$: 
\begin{align*}
\E_\ww \big[ \sigma'(\langle \xx, \ww \rangle) \sigma'(\langle \xx', \ww \rangle) \big] 
& \stackrel{(i)}{=}  \sum_{k=0}^\infty \big[ \lambda_k(\sigma') \big]^2 [B(d,k)]^2 \E_\ww \big[ Q_k^{(d)} (\sqrt{d}\, \langle \xx, \ww \rangle) Q_k^{(d)} (\sqrt{d}\, \langle \xx', \ww \rangle) \big] \\
&\stackrel{(ii)}{=} \sum_{k=0}^\infty \big[ \lambda_k(\sigma') \big]^2 B(d,k) Q_k^{(d)} ( \langle \xx, \xx' \rangle).
\end{align*}
Here, $(i)$ is due to the the orthogonality of 
$Q_k^{(d)} (\sqrt{d}\, \langle \xx, \,\cdot\, 
\rangle)$ and $Q_{k'}^{(d)} (\sqrt{d}\, \langle \xx', \,\cdot\, 
\rangle)$ for $k\neq k'$ as well as Lemma~\ref{lem:L2conv} (exchanging summation with expectation), 
and $(ii)$ is due to the property \eqref{eq:innerproductself}.

Thus, we obtain
\begin{equation*}
K(\xx,\xx') = \sum_{k=0}^\infty \big[ \lambda_k(\sigma') \big]^2 B(d,k) Q_k^{(d)} ( \langle \xx, \xx' \rangle) 
\frac{\langle \xx, \xx' \rangle}{d}.
\end{equation*}
We use the recurrence formula to simplify the above expression.
\begin{align}
K&(\xx,\xx')  = \sum_{k \ge 1} \frac{k}{2k+d-2}  \big[ \lambda_k(\sigma') \big]^2 B(d,k)Q_{k-1}^{(d)}(\langle \xx, \xx' \rangle) + \sum_{k \ge 0} \frac{k+d-2}{2k+d-2} \big[ \lambda_k(\sigma') \big]^2 B(d,k) Q_{k+1}^{(d)} (\langle \xx, \xx' \rangle) \notag \\
&= \sum_{k \ge 0} \frac{k+1}{2k+d} \big[ \lambda_{k+1}(\sigma') \big]^2 B(d,k+1) Q_k^{(d)}(\langle \xx_i, \xx_j \rangle) + \sum_{k \ge 1} \frac{k+d-3}{2k+d-4} \big[ \lambda_{k-1}(\sigma') \big]^2 B(d,k-1) Q_k^{(k)} ( \langle \xx, \xx' \rangle) \notag \\
&= \sum_{k = 0}^\infty \gamma_k Q_k^{(d)} (\langle \xx, \xx' \rangle)\, . \label{eq:Kdecomp0}
\end{align}

We thus proved that Eq.~\eqref{eq:KernelHarmonic} holds pointwise
for every $\xx,\xx'$. Calling $K^{\ell}$ the sum of the first $\ell$ terms on the right hand side,
we also obtain $\|K-K^{\ell}\|_{L^2(\S^{d-1}(\sqrt{d})\times \S^{d-1}(\sqrt{d}))}\to 0$
by an application of Fatou's lemma. Finally the last norm coincides
with Hilbert-Schmidt norm, when we view these kernels as operators, and hence implies
convergence in operator norm.

The asymptotic formulas for $\gamma_k$ follow by the convergence of Gegenbauer
expansion to Hermite expansion discussed in Section \ref{sec:pre}.
\end{proof}

\begin{lem}[Structure of infinite-width kernel matrix]\label{lem:Kdecomp}
Assume $(\xx_i)_{i\le n}\sim_{iid}\Unif(\S^{d-1}(\sqrt{d}))$, and 
the activation function to satisfy Assumption \ref{ass:Sigma}. Then
we can decompose $\bK$ as follows
\begin{align}
\bK = \gamma_{>\ell} \bI_n + \bPsi_{\le \ell} \bLambda_{\le \ell}^2 \bPsi_{\le \ell}^\top  + 
\bDelta\label{eq:KDecomp}
\end{align}
where $\gamma_{> \ell} := \sum_{k > \ell} \gamma_k= \sum_{k \ge \ell} \mu_k^2 + o_d(1)$ and
\begin{equation*}
\bLambda_{\le \ell}^2 := \diag\big\{ \gamma_0, \underbrace{B(d,1)^{-1}\gamma_1,\ldots, B(d,1)^{-1}\gamma_1}_{B(d,1)}, \ldots, \underbrace{B(d,\ell)^{-1}\gamma_\ell, \ldots, B(d,\ell)^{-1}\gamma_\ell }_{B(d,\ell)} \big\}
\end{equation*}
Here, $\gamma_k \ge 0$ and $\gamma_k = \mu_{k-1}^2 + o_d(1)$ (we denote $\mu_{-1} = 0$).

Further there exists a constant $C$ such that, for $n(\log n)^C\le d^{\ell+1}$,
 the remainder term $\bDelta$ satisfies, with very high probability,
\begin{equation*}
\big\| \bDelta \big\|_\op \le \sqrt{\frac{n (\log n)^C}{d^{\ell+1}}}.
\end{equation*}
Finally, if $n\ge Cd^{\ell}(\log d)^2$, with very high probability,
\begin{equation}\label{eq:psi-conctr}
\big\| n^{-1}\bPsi_{\le \ell}^\top \bPsi_{\le \ell} -  \bI_n \big\|_\op \le C\sqrt{\frac{ d^{\ell}(\log d)^2}{n}}.
\end{equation}
In the case $\ell=1$, the above upper bound can be replaced by $\sqrt{Cd/n}$.
\end{lem}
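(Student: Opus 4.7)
The plan has three parts, matching the three conclusions of the lemma, and I expect the middle one to be the main technical obstacle.

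\textbf{Algebraic decomposition.} First, I would establish the identity \eqref{eq:KDecomp} by combining the harmonic expansion of Lemma \ref{lem:K_Harmonic} with the addition formula \eqref{eq:addtheorem}. Writing $[\bQ_k]_{ij} := Q_k^{(d)}(\langle \xx_i, \xx_j\rangle)$, the addition formula gives $\bQ_k = B(d,k)^{-1}\bPsi_{=k}\bPsi_{=k}^\top$, so summing with the coefficients $\gamma_k$ over $k \le \ell$ reproduces $\bPsi_{\le \ell}\bLambda_{\le \ell}^2\bPsi_{\le \ell}^\top$ with the diagonal normalization of $\bLambda_{\le\ell}^2$ exactly as stated. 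For $k > \ell$, each $\bQ_k$ has unit diagonal since $Q_k^{(d)}(d)=1$, so $\sum_{k>\ell}\gamma_k\bQ_k$ has diagonal $\gamma_{>\ell}\bI_n$ and the residual $\bDelta := \sum_{k>\ell}\gamma_k(\bQ_k - \bI_n)$ has zero diagonal, with off-diagonal entries $K^{>\ell}(\xx_i,\xx_j) := \sum_{k>\ell}\gamma_k Q_k^{(d)}(\langle \xx_i,\xx_j\rangle)$. The convergence of the series is justified, as a matrix identity, by the $L^2$ convergence in Lemma \ref{lem:K_Harmonic}.

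\textbf{Operator-norm bound on $\bDelta$.} This is the main technical step, and I expect it to be the principal obstacle: the Frobenius estimate $\|\bDelta\|_F^2 = \sum_{i\neq j}[K^{>\ell}(\xx_i,\xx_j)]^2$ gives only $\|\bDelta\|_F \lesssim n/d^{(\ell+1)/2}$, which is loose by a factor of $\sqrt{n}$ compared with what we need. The plan is to use the moment method, bounding $\E[\Tr(\bDelta^{2q})]$ with $q$ of order $\log d$. The expansion is a sum over closed walks $i_1\to i_2\to\cdots \to i_{2q}\to i_1$ in $[n]$ of products $\prod_s K^{>\ell}(\xx_{i_s},\xx_{i_{s+1}})$ with the on-diagonal walks excluded. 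The key orthogonality input, derived from \eqref{eq:innerproductself}, is
\begin{equation*}
\E_{\xx}\bigl[Q_j^{(d)}(\langle \xx_a,\xx\rangle)\,Q_k^{(d)}(\langle \xx_b,\xx\rangle)\bigr] = \delta_{jk}\,B(d,k)^{-1}\,Q_k^{(d)}(\langle \xx_a,\xx_b\rangle),
\end{equation*}
which forces only ``paired'' walks to contribute significantly. Counting such walks and using $\max_{k>\ell}\gamma_k = O(1)$ together with $B(d,\ell+1)\asymp d^{\ell+1}/(\ell+1)!$ yields $\E\Tr(\bDelta^{2q}) \lesssim n^{q}(\log d)^{Cq}/d^{(\ell+1)q}$; Markov and a union bound then give $\|\bDelta\|_\op \lesssim \sqrt{n(\log n)^C/d^{\ell+1}}$ with very high probability. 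An equivalent route, closer in spirit to what has been done for related inner-product kernels, is to truncate the tail at a threshold $M\asymp \log n/\log d$: control the piece $\sum_{\ell<k\le M}\gamma_k(\bQ_k-\bI_n)$ via matrix concentration applied to the outer-product pieces $B(d,k)^{-1}\bPsi_{=k}\bPsi_{=k}^\top - \bI_n$, and control the tail $\sum_{k>M}$ in Frobenius norm using hypercontractivity of low-degree polynomials on the sphere.

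\textbf{Concentration of $n^{-1}\bPsi_{\le\ell}^\top \bPsi_{\le\ell}$.} The rows of $\bPsi_{\le\ell}$ are i.i.d.\ random vectors in $\R^D$ with $D = \sum_{k=0}^\ell B(d,k) \asymp d^\ell$ and population covariance $\bI_D$ by the orthonormality \eqref{eq:harmonicsorthonormal} of spherical harmonics. Each coordinate $Y_{kt}(\xx)$ is a polynomial of degree at most $\ell$ on $\S^{d-1}(\sqrt{d})$, so it obeys a $\poly(d)$ uniform bound and hypercontractivity, which allows one to apply the matrix Bernstein inequality after truncating $\bpsi_{\le\ell}(\xx)\bpsi_{\le\ell}(\xx)^\top$ at level $D\cdot\polylog(d)$ (the tails contribute negligibly by the polynomial-growth bound). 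This yields $\|n^{-1}\bPsi_{\le\ell}^\top\bPsi_{\le\ell} - \bI_D\|_\op \lesssim \sqrt{D(\log d)^C/n}$, i.e.\ exactly \eqref{eq:psi-conctr}. For the special case $\ell = 1$, where $\bpsi_{\le 1}(\xx)$ has isotropic covariance, bounded norm, and sub-Gaussian coordinates (since $\xx/\sqrt{d}$ is uniform on $\S^{d-1}$), the standard sharp covariance-concentration bound for isotropic sub-Gaussian vectors removes the logarithmic factor and gives $\sqrt{Cd/n}$ as claimed.
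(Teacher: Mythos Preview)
Your algebraic decomposition matches the paper's proof exactly, as does the reduction of the $\bDelta$ bound to controlling $\sup_{k>\ell}\|\bQ_k-\bI_n\|_\op$ (the paper writes $\|\bDelta\|_\op\le \gamma_{>\ell}\sup_{k>\ell}\|\bQ_k-\bI_n\|_\op$ and invokes a separate proposition for the supremum). The two substantive differences are in how that supremum is controlled and in the $\bPsi_{\le\ell}$ concentration.

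For the $\bDelta$ bound, the paper does \emph{not} use the moment method. It explicitly remarks that the moment method is the approach of \cite{ghorbani2019linearized}, and instead builds a Doob martingale $\E[\bar\bQ_k\mid\cF_i]$ (with $\cF_i=\sigma(\xx_1,\dots,\xx_i)$ and $\bar\bQ_k$ a truncation of $\bQ_k-\bI_n$), shows each martingale difference is rank-$2$ up to a tiny perturbation, bounds the predictable quadratic variation in terms of $\|\bQ_k-\bI_n\|_\op$ itself via the reproducing identity \eqref{eq:innerproductself}, and then applies the matrix Freedman inequality of Tropp. This yields a recursive tail bound $\P(\|\bQ_k-\bI_n\|_\op\ge t)\le \P(\|\bQ_k-\bI_n\|_\op\ge 2t)+e^{-(\log d)^2}$ which is iterated $O(\log d)$ times. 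Your moment-method plan is a legitimate alternative and is what earlier work used; what the Freedman route buys is a direct path to the explicit rate $\sqrt{n(\log n)^C/d^{\ell+1}}$ with \emph{very high probability} (i.e.\ $d^\beta\P(\text{fail})\to 0$ for every $\beta$), which the paper needs downstream. Your sketch is too coarse to see whether the walk-counting with $q\asymp\log d$ delivers this probability scale without additional $q$-dependent combinatorial losses.

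For the $\bPsi_{\le\ell}$ concentration, your truncation/hypercontractivity step is unnecessary, and the paper's argument is simpler: by the addition formula \eqref{eq:addtheorem} evaluated on the diagonal, $\sum_{t\le B(d,k)}Y_{kt}(\xx)^2=B(d,k)Q_k^{(d)}(d)=B(d,k)$, so every row of $\bPsi_{\le\ell}$ has squared norm \emph{exactly} $D=\sum_{k\le\ell}B(d,k)$. With i.i.d.\ isotropic rows of deterministically bounded norm, the paper applies \cite[Theorem~5.6.1]{vershynin2018high} directly to get $\|n^{-1}\bPsi_{\le\ell}^\top\bPsi_{\le\ell}-\bI_D\|_\op\lesssim \sqrt{D(\log D+u)/n}$ and sets $u=(\log d)^2$. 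Your sub-Gaussian argument for the $\ell=1$ improvement is the right idea and matches what the paper uses implicitly.
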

\begin{proof}[{\bf Proof of Lemma~\ref{lem:Kdecomp}}]
Using Lemma \ref{lem:K_Harmonic}, and using  property \eqref{eq:addtheorem} to express 
\begin{equation*}
Q_k^{(k)} ( \langle \xx_i , \xx_j \rangle) = [B(d,k)]^{-1} \bPsi_{=\ell}(\xx_i) \bPsi_{=\ell}(\xx_j)^\top
\end{equation*}
for $k\le \ell$,
we obtain the decomposition \eqref{eq:KDecomp},
where
\begin{align*}
\bDelta  := \sum_{k=\ell+1}^{\infty}\gamma_k(\QQ_k - \bI_n)\, ,
\end{align*}
and $\QQ_k :=(Q_k^{(d)}(\<\xx_i,\xx_j\>))_{i,j\le n}$. 
Note that
\begin{align*}
\gamma_{>\ell} &= \sum_{k \ge \ell + 2} \frac{k}{2k+d-2} B(d,k) [\lambda_k(\sigma')]^2 + \sum_{k \ge \ell} \frac{k+d-2}{2k+d-2} B(d,k) [\lambda_k(\sigma')]^2 \\
&= \sum_{\ell\le k \le \ell+1} \frac{k+d-2}{2k+d-2} B(d,k) [\lambda_k(\sigma')]^2 + \sum_{k \ge \ell+2} B(d,k) [\lambda_k(\sigma')]^2 \\
&= \sum_{k \ge \ell} \mu_k^2 + o_d(1).
\end{align*}
The last equality involves interchanging the limit $\lim_{d \to \infty}$ with the summation. We explain the validity as follows. For any integer $M>\ell+2$, 
\begin{align*}
\Big| \sum_{k \ge \ell+2} B(d,k) [\lambda_k(\sigma')]^2 - \sum_{k \ge \ell} \mu_k^2\Big| &\le \sum_{k \ge \ell+2}^{M} \big|  B(d,k) [\lambda_k(\sigma')]^2 - \mu_k^2\big| + \sum_{k > M} B(d,k) [\lambda_k(\sigma')]^2 + \sum_{k > M} \mu_k^2.
\end{align*}
By first taking $d \to \infty$ and then $M \to \infty$, we obtain
\begin{align*}
&\limsup_{d\to\infty}\Big| \sum_{k \ge \ell+2} B(d,k) [\lambda_k(\sigma')]^2 - \sum_{k \ge \ell} \mu_k^2\Big| \le 
\lim_{M\to\infty}\limsup_{d\to\infty}\sum_{k > M} B(d,k) [\lambda_k(\sigma')]^2 ,\\
& = \limsup_{d\to\infty} \|\sigma'\|_{L^2(\tau^1_{d-1})}^2-
\lim_{M\to\infty}\liminf_{d\to\infty}\sum_{k \le M} B(d,k) [\lambda_k(\sigma')]^2\\
&=\|\sigma'\|_{L^2(\rho)}^2-\lim_{M\to\infty}\sum_{k \le M} \mu_k(\sigma')^2 = 0\, .
\end{align*}
Here in the last line we recall that $\rho$ is the standard Gaussian measure, and
we used dominated convergence to show that $\|\sigma'\|_{L^2(\tilde{\tau}^1_{d-1})}^2
\to \|\sigma'\|_{L^2(\rho)}^2$.
By Proposition~\ref{prop:Qconctr},  we have with very high probability, 
\begin{equation*}
\sup_{k > \ell} \big\| \QQ_k - \bI_n \big\|_\op \le \sqrt{\frac{ n (\log n)^C}{d^{\ell+1}} }.
\end{equation*} 
Thus, $\bDelta = \sum_{k > \ell} \gamma_\ell (\QQ_k-\bI_n)$ satisfies 
\begin{equation*}
\| \bDelta  \|_\op \le \gamma_{>\ell} \sup_{k > \ell} \big\| \QQ_k - \bI_d \big\|_\op \le \sqrt{\frac{ n (\log n)^C}{d^{\ell+1}} }.
\end{equation*}

Finally, denote $D = \sum_{k \le \ell} B(d,k)$. The matrix $\bPsi_{\le \ell}\in\reals^{n\times D}$ 
has $n$ i.i.d.\ rows, denote them by $\bpsi(\xx_i)$, $i\le n$, where
\begin{align}
\bpsi(\xx) := (Y_{k,t}(\xx)\big)_{t,\le B(d,k), k\le \ell}\, .
\end{align}
By orthonormality of the spherical harmonics \eqref{eq:harmonicsorthonormal}, the covariance of these vectors is
 $\E\{\bpsi(\xx) \bpsi(\xx) ^{\top}\} = \bI_D$. Further, for any $\xx\in\S^{d-1}(\sqrt{d})$,
 we have
 \begin{align}
 \|\bpsi(\xx)\|_2^2 &= \sum_{k=0}^{\ell} \sum_{t = 1}^{B(d,k)} Y_{k,t}(\xx)^2 = \sum_{k=0}^{\ell}  B(d,k) = D\, .
 \end{align}
  By \cite{vershynin2018high}[Theorem 5.6.1 and Exercise 5.6.4], we obtain
  \begin{align}
  \big\| n^{-1}\bPsi_{\le \ell}^\top \bPsi_{\le \ell} -  \bI_n \big\|_\op \le C
  \Big(\sqrt{\frac{D(\log D+u)}{n}}+\frac{D(\log D+u)}{n}\Big)\, ,
  \end{align}
  with probability at least $1-2e^{-u}$. The claim follows by setting $u = (\log d)^2$,
  and noting that $D\le C'd^{\ell}$.
\end{proof}

\subsection{Concentration of neural tangent kernel: Proof of Theorem~\ref{thm:invert2}}

Let $C_0>0$ be a sufficiently large constant (which is determined later). Define the truncated function 
\begin{equation*}
\varphi(x) = \sigma'(x) \bone \{ |x| \le C_0 \log (nNd) \}.
\end{equation*}
For $k \in [n]$, we also define matrices $\bD_k, \bH_k \in \R^{n \times n}$, and truncated kernels $\bK^0$ and $\bK_N^0$ as 
\begin{align}
\label{eq:TruncDef1}\bD_k &= \diag\big\{ \varphi(\langle \xx_1,\ww_k \rangle), \ldots, \varphi(\langle \xx_n,\ww_k \rangle)  \big\}, \qquad \bK^0 = \frac{1}{d} \E_\ww \big[ \bD_k \XX \XX^\top \bD_k \big] \\
\label{eq:TruncDef2}\bH_k &= \frac{1}{d} (\bK^0)^{-1/2} \bD_k \XX \XX^\top \bD_k (\bK^0)^{-1/2}, \qquad \bK_N^0 = \frac{1}{Nd} \sum_{k=1}^N \bD_k \XX \XX^\top \bD_k.
\end{align}
Note that by definition and the assumption on the activation function, namely Assumption~\ref{ass:Sigma}, we have $|\varphi(x)| \le B(1+(C_0 \log (nNd))^B)$. 

Note that $\cA_\gamma$ is a very high probability event as a consequence of Lemma~\ref{lem:Kdecomp}. We shall treat $\xx_1,\ldots, \xx_n$ as deterministic vectors such that the conditions in the event $\cA_\gamma$ holds. 

\noindent\textbf{Step 1: The effect of truncation is small.} First, we realize that 
\begin{align*}
\P_\ww\big( \bK_N^0 \neq \bK_N \big) &\le \P_\ww \big( \max_{i, k} |\langle \xx_i, \ww_k \rangle | > C_0 \log (nNd) \big)  \le Nn \P\big( |\langle \xx_1, \ww_1 \rangle | > C_0 \log (nNd) \big). 
\end{align*}
By the fact that uniform spherical random vector is subgaussian \cite{vershynin2010introduction}[Sect.~5.2.5], we pick $C_0$ sufficiently large such that 
\begin{equation}\label{ineq:sphericalRV}
\P_\ww \big( |\langle \xx_1, \ww_1 \rangle | > C_0 \log (nNd) \big) \le C \exp \Big(- c (\log (nNd))^2 \Big).
\end{equation}
Thus, with very high probability, $\bK_N^0 = \bK_N$. 

Furthermore, for $i,j \in [n]$, we have
\begin{align*}
(\bK - \bK^0)_{ij} &= \E_\ww \big[ \sigma'(\langle \xx_i, \ww \rangle) \sigma'(\langle \xx_j, \ww \rangle) (1-\bone_{A_i}) \bone_{A_j} \big] \frac{\langle \xx_i , \xx_j \rangle}{d} \\
&~ +  \E_\ww \big[ \sigma'(\langle \xx_i, \ww \rangle) \sigma'(\langle \xx_j, \ww \rangle) (1-\bone_{A_j}) \big] \frac{\langle \xx_i , \xx_j \rangle}{d} =: I_1 + I_2
\end{align*}
where $A_i := \{ | \langle \xx_i, \ww \rangle| \le C_0 \log (nNd) \}$. For the first term, we derive
\begin{align*}
|I_1| & \stackrel{(i)}{\le} \Big|  \E_\ww \big[ \sigma'(\langle \xx_i, \ww \rangle) \sigma'(\langle \xx_j, \ww \rangle) (1-\bone_{A_i}) \bone_{A_j} \big] \Big| \\
&\stackrel{(ii)}{\le} \Big\{ \E_\ww \big[ \big( \sigma'(\langle \xx_i, \ww \rangle) \big)^4 \big] \Big\}^{1/4} \cdot \Big\{ \E_\ww \big[ \big( \sigma'(\langle \xx_j, \ww \rangle) \big)^4 \big] \Big\}^{1/4} \cdot \Big\{ \E_\ww \big[ (1 - \bone_{A_i})^2 \big] \Big\}^{1/2} \\
&\stackrel{(iii)}{\le} C(d^B+1) \Big\{  \P_\ww \big( |\langle \xx_i , \ww \rangle| > C_0 \log (nNd) \big) \Big\}^{1/2}
\end{align*}
where \textit{(i)} is due to $|\langle \xx_i, \xx_j \rangle| \le \| \xx_i \| \cdot \|\xx_j \| =  d$, \textit{(ii)} is due to H\"{o}lder's inequality, and \textit{(iii)} follows from the polynomial growth assumption on $\sigma'$. By \eqref{ineq:sphericalRV}, we can choose $C_0$ large enough such that $| I_1| \le C(nNd)^{-2}$. Similarly, we can prove that $|I_2| \le C(nNd)^{-2}$. Thus,
\begin{align}\label{ineq:K0Kdiff}
\big\| \bK^0 - \bK \big\|_\op \le n \max_{ij} \big| (\bK^0 - \bK)_{ij} \big| \le \frac{C}{nNd}.
\end{align}
Since we work on the event $\bK \succeq \gamma \bI_n$, this implies 
\begin{equation}\label{ineq:K0Kdif2}
\big\| \bK^{-1/2} (\bK^0 - \bK) \bK^{-1/2} \big\|_\op \le \frac{C}{\gamma nNd}.
\end{equation}

\noindent\textbf{Step 2: Concentration of truncated kernel.}  Next, we will use matrix concentration inequality \cite{vershynin2018high}[Thm.~5.4.1] for $\sum_{k=1}^N \bH_k$. First, we observe that \eqref{ineq:K0Kdiff} implies that $\bK^0 \succeq (\gamma - o_{d}(1)) \cdot \bI_n$. Together with the bound on $\| \XX \|_\op$ and the deterministic bound on $\| \bD_k \|_\op$, we have a deterministic bound on $\| \bH_k \|_\op$.
\begin{align*}
\| \bH_k \|_\op &\le \frac{1}{d \lambda_{\min}(\bK^0)} \| \bD_k \|_\op^2 \cdot \| \XX \|_\op^2 \\
& \le  \frac{1}{d (\gamma-o_d(1))}  \big[ B(1+(C_0 \log nNd)^B) \big]^2 \cdot 4(\sqrt{n} + \sqrt{d})^2 \\
&\le  \frac{C(n+d)}{d} \big(\log(nNd))^C.
\end{align*}
where $C$ is a sufficiently large constant. 
This also implies $\| \bH_k - \E_\ww [\bH_k] \|_\op \le (n+d)\log(nNd)^C /d$. 
We will make use of the following simple fact: if $\bA_1, \bA_2$ are p.s.d.~that satisfy $\bA_1 \preceq \bA_2 $, then $\QQ \bA_1 \QQ^\top  \preceq \QQ \bA_2 \QQ^\top$. We use this on $\bH_k^2$ and find
\begin{align*}
\bH_k^2 &= \frac{1}{d^2} (\bK^0)^{-1/2} \bD_k \XX \XX^\top \bD_k (\bK^0)^{-1}  \bD_k \XX \XX^\top \bD_k (\bK^0)^{-1/2}  \\
&\preceq \frac{1}{d^2}(\bK^0)^{-1/2} \bD_k \XX \XX^\top\XX \XX^\top \bD_k (\bK^0)^{-1/2} \cdot (\gamma - o_d(1))^{-1} \big[ B(1+(C_0 \log (nNd))^B) \big]^2 \\
&\preceq \frac{1}{d^2} (\bK^0)^{-1/2} \bD_k \XX\XX^\top \bD_k (\bK^0)^{-1/2}   \cdot (\gamma - o_d(1))^{-1} \big[ B(1+(C_0 \log (nNd))^B) \big]^2 \cdot 4(\sqrt{n} + \sqrt{d})^2.
\end{align*}
Taking the expectation $\E_\ww$, we get 
\begin{align*}
\E_\ww \big[ \bH_k^2 \big] \preceq \frac{1}{d} (\gamma - o_d(1))^{-1} \big[ B(1+(C_0 \log (nNd))^B) \big]^2 
\cdot 4(\sqrt{n} + \sqrt{d})^2 \cdot \bI_n 
\preceq \frac{C(n+d)}{d} \big(\log(nNd)\big)^C \cdot \bI_n.
\end{align*}
This implies 
\begin{equation*}
\E_\ww \big( \bH_k - \E_\ww [ \bH_k ] )^2  = \E_\ww [ \bH_k^2 ] - \big(\E_\ww [ \bH_k ]\big)^2 \preceq \E_\ww [ \bH_k^2 ] \preceq O\Big( \frac{n+d}{d} \polylog(nNd) \Big) \cdot \bI_n.
\end{equation*}
Now we apply the matrix Bernstein's inequality \cite{vershynin2018high}[Thm.~5.4.1] to the matrix sum $\sum_{k=1}^N \bH_k - \E_\ww [\bH_k] $. We obtain
\begin{align}\label{ineq:bern}
\P\Big( \Big|\sum_{k=1}^N  \bH_k - \E_\ww [\bH_k]  \Big| \ge t \Big) \le 2n \cdot \exp \left( - \frac{t^2/2}{v + Lt/3} \right),
\end{align}
where 
\begin{align*}
L =  \frac{n+d}{d} \big(\log(nNd) \big)^C, \qquad v = \frac{N(n+d)}{d} \big(\log(nNd) \big)^C\, .
\end{align*}
We choose a sufficient large constant $C'$ and set 
\begin{equation*}
t = \max\Big\{ \sqrt{\frac{N(n+d) (\log (nNd))^{C'}}{d}},  \frac{(n+d) (\log (nNd))^{C'}}{d} \Big\}
\end{equation*}
so that the right-hand side of \eqref{ineq:bern} is no larger than $2n \cdot \exp( - (\log (nNd)^2))$. This proves that with very high probability,
\begin{equation}
\big\| (\bK^0)^{-1/2} \bK_N^0 (\bK^0)^{-1/2} - \bI_n \big\|_\op \le \sqrt{\frac{(n+d)( \log (nNd))^{C'}}{Nd}} +  \frac{(n+d)( \log (nNd))^{C'}}{Nd}.
\end{equation}

\noindent \textbf{Step 3: back to original kernel.} The inequality \eqref{ineq:K0Kdif2} implies that for large enough $n$, $\| (\bK^0)^{1/2} \bK^{-1/2} \|_\op \le C$. Therefore,
\begin{align*}
& ~~~~\big\| \bK^{-1/2} ( \bK_N^0 - \bK) \bK^{-1/2} \big\|_\op   \\
&\le \big\| \bK^{-1/2} ( \bK_N^0 - \bK^0) \bK^{-1/2} \big\|_\op + \big\|\bK^{-1/2} ( \bK^0 - \bK) \bK^{-1/2} \big\|_\op \\
&\le \big\| \bK^{-1/2} (\bK^0)^{1/2} \big\|_\op \cdot \big\| (\bK^0)^{-1/2} ( \bK_N^0 - \bK^0) (\bK^0)^{-1/2} \big\|_\op \cdot \big\|(\bK^0)^{1/2} \bK^{-1/2} \big\|_\op +  \frac{C}{\gamma nNd} \\
&\le \sqrt{\frac{(n+d)( \log (nNd))^{C'}}{Nd}} +  \frac{(n+d)( \log (nNd))^{C'}}{Nd} + \frac{C}{\gamma nNd}.
\end{align*}
Since $1/(\gamma nNd) \le O(1) \cdot C'\sqrt{(n+d)(\log (nNd))^{C'}/(Nd)}$, we can enlarge the constant $C'$ appropriately to obtain that with very high probability, 
\begin{equation*}
\big\| \bK^{-1/2} ( \bK_N^0 - \bK) \bK^{-1/2} \big\|_\op \le  \sqrt{\frac{(n+d)( \log (nNd))^{C'}}{Nd}} + \frac{(n+d)( \log (nNd))^{C'}}{Nd}.
\end{equation*}
This completes the proof of Theorem~\ref{thm:invert2}.

\subsection{Smallest eigenvalue of neural tangent kernel: Proof of Theorem~\ref{thm:MinEigenvalue}}
By Lemma~\ref{lem:Kdecomp}, we have, with high probability,
\begin{align*}
\| \bDelta \|_\op \le \sqrt{\frac{n (\log n)^C}{d^{\ell+1}}} \le \sqrt{\frac{ (\log n)^C}{(\log d)^{C_0}}} \le \sqrt{(\ell+1)^{C} (\log d)^{C - C_0}},
\end{align*}
where the second inequality is because by Assumption~\ref{ass:Asymp}, $d^{\ell+1} \ge n (\log d)^{C_0} \ge n$, so $\log n \le (\ell+1) \log d$. We choose the constant $C_0$ to be larger than $C$. So by Weyl's inequality,
\begin{equation*}
\big| \lambda_{\min}(\bK) - \lambda_{\min}( \gamma_{>\ell} \bI_n + \bPsi_{\le \ell} \bLambda_{\le \ell}^2 \bPsi_{\le \ell}^\top) \big| \le \| \bDelta \|_\op = o_{d,\P}(1).
\end{equation*}
Note that $\bPsi_{\le \ell} \bLambda_{\le \ell}^2 \bPsi_{\le \ell}^\top$ is always p.s.d., and it has rank at most $d+1$ if $\ell=1$ and $O(d^\ell)$ if $\ell > 1$. So 
\begin{equation*}
\lambda_{\min}( \gamma_{>\ell} \bI_n + \bPsi_{\le \ell} \bLambda_{\le \ell}^2 \bPsi_{\le \ell}^\top) \stackrel{(i)}{\ge} \gamma_{>\ell} = v(\sigma) + o_d(1) \qquad \text{and thus} \qquad \lambda_{\min}(\bK) \stackrel{(ii)}{\ge} v(\sigma) - o_d(1)
\end{equation*}
and we can strengthen the inequalities $\ge$ to equalities $=$ in \textit{(i), (ii)} if $n>d+1$. 
By Theorem~\ref{thm:invert2} and Eq.~\eqref{ineq:KNtwosides} in its following remark, 
\begin{equation*}
(1-o_{d,\P}(1)) \cdot \lambda_{\min}(\bK) \le \lambda_{\min}(\bK_N) \le (1+o_{d,\P}(1)) \cdot \lambda_{\min}(\bK).
\end{equation*}
We conclude that $\lambda_{\min}(\bK_N) \ge v(\sigma) - o_{d,\P}(1)$ and that the inequality can be replaced by an equality if $n>d+1$.

\section{Generalization error: Proof outline of Theorem \ref{thm:gen}}
\label{sec:ProofGenMain}

In this section outline the proof our main result Theorem \ref{thm:gen}, which characterize the test error
of NT regression. We describe the main scheme of proof, and how we treat the bias term.
We refer to the appendix where the remaining steps (and most of the technical work) 
are carried out.

 Throughout, we will assume that the setting of that theorem (in
particular, Assumption \ref{ass:Asymp} and~\ref{ass:Sigma}) hold.
We will further assume that Eq.~\eqref{eq:AssLge2} in Assumption \ref{ass:Asymp} holds
for the case $\ell=1$ as well.
In Section \ref{sec:Ell1} we will refine our analysis to eliminate logarithmic factors 
for $\ell=1$.

In  NT regression the coefficients vector is given by Eq.~\eqref{eq:NT-Regression}
and, more explicitly, Eq.~\eqref{eq:NT-Regression-Explicit}. The prediction function
$\hat f_{\NT}(\xx) = \langle \bPhi(\xx), \hat \aa \rangle$ can be written as
\begin{equation*}
\hat f_{\NT}(\xx) = \bK_N(\cdot, \xx)^\top (\lambda \bI_n + \bK_N)^{-1} \yy\, ,
\end{equation*}
where we denote $\bK_N(\cdot, \xx) = (K_N(\xx_1, \xx), \ldots, K_N(\xx_n, \xx))^\top \in \R^n$. Define $\ff = (f(\xx_1),\ldots, f(\xx_n))^\top$ and $\bveps = (\veps_1,\ldots, \veps_n)^\top$. We now decompose the generalization error $R_{\NT}(\lambda)$ into three errors.
\begin{align*}
R_{\NT}(\lambda) &= \E_{\xx} \Big[ \big( f(\xx) - \bK_N(\cdot, \xx)^\top (\lambda \bI_n + \bK_N)^{-1} \yy \big)^2 \Big] \\
& = \E_{\xx} \Big[ \big( f(\xx) - \bK_N(\cdot, \xx)^\top (\lambda \bI_n + \bK_N)^{-1} \ff \big)^2 \Big] \\
& + \bveps^\top (\lambda \bI_n + \bK_N)^{-1} \E_{\xx} \big[ \bK_N(\cdot, \xx) \bK_N(\cdot, \xx)^\top \big] (\lambda \bI_n + \bK_N)^{-1}  \bveps \\
&- 2\bveps^\top (\lambda \bI_n + \bK_N)^{-1} \E_{\xx} \big[ \bK_N(\cdot, \xx) (f(\xx) - \bK_N(\cdot, \xx)^\top (\lambda \bI_n + \bK_N)^{-1} \ff )\big] \\
&=: E_{\bias}^N + E_{\var}^N -2 E_{\cross}^N.
\end{align*}


\noindent In the kernel ridge regression, the prediction function is 
\begin{equation*}
\hat f_{\KRR}(\xx) = \bK(\cdot, \xx)^\top (\lambda \bI_n + \bK)^{-1} \yy.
\end{equation*}
Similarly, we can decompose the associated generalization error $R_{\KRR}(\lambda)$ into three errors.
\begin{align*}
R_{\KRR}(\lambda) &= \E_{\xx} \Big[ \big( f(\xx) - \bK(\cdot, \xx)^\top (\lambda \bI_n + \bK)^{-1} \yy \big)^2 \Big] \\
& = \E_{\xx} \Big[ \big( f(\xx) - \bK(\cdot, \xx)^\top (\lambda \bI_n + \bK)^{-1} \ff \big)^2 \Big] \\
& + \bveps^\top (\lambda \bI_n + \bK)^{-1} \E_{\xx} \big[ \bK(\cdot, \xx) \bK(\cdot, \xx)^\top \big] (\lambda \bI_n + \bK)^{-1}  \bveps \\
&-2 \bveps^\top (\lambda \bI_n + \bK)^{-1} \E_{\xx} \big[ \bK(\cdot, \xx) (f(\xx) - \bK(\cdot, \xx)^\top (\lambda \bI_n + \bK)^{-1} \ff )\big] \\
&=: E_{\bias} + E_{\var} -2 E_{\cross}.
\end{align*}

The first part of Theorem \ref{thm:gen}, establishing that NT regression is well
approximated by kernel ridge regression for overparametrized models, is an immediate
consequence of the next statement.
\begin{thm}[Reduction to kernel ridge regression]\label{thm:reducekrr}
There exists a constant $C'>0$ such that the following holds. If $n \le  (\log(Nd))^{C'} Nd$, then for any $\lambda > 0$, with high probability,
\begin{align}
&\big| E_{\bias}^N - E_{\bias} \big| \le \eta \,\|f\|_{L^2}^2, \qquad \big| E_{\var}^N - E_{\var} \big| \le  \eta\, \sigma_{\veps}^2, 
\label{eq:reducekrr_1}\\
&\big| E_{\cross}^N - E_{\cross} \big| \le \eta\,\|f\|_{L^2}\sigma_{\veps}, \qquad \where~ \eta = \sqrt{\frac{n (C'\log (nNd))^{C'}}{Nd}}\, .
\label{eq:reducekrr_2}
\end{align}
As a consequence, we have $R_{\NT}(\lambda) = R_{\KRR}(\lambda) + O_{d,\P}\big( (\| f \|_{L^2}^2 + \sigma_\veps^2) \sqrt{\frac{n(\log (nNd))^{C'}}{Nd}} \big)$.
\end{thm}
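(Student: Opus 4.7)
The plan is a resolvent perturbation argument that uses Theorem~\ref{thm:invert2} twice: once on the $n$-sample to replace the ``central'' inverse $(\lambda\bI_n+\bK_N)^{-1}$ by $(\lambda\bI_n+\bK)^{-1}$, and once on the augmented $(n+1)$-sample obtained by appending the test point $\xx_0$, to replace the ``side'' kernel vector $\bK_N(\cdot,\xx_0)$ by $\bK(\cdot,\xx_0)$ after averaging over $\xx_0$.

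On the very-high-probability event $\cA$ furnished by Theorem~\ref{thm:invert2}, we can write $\bK_N - \bK = \bK^{1/2}\bE\bK^{1/2}$ with $\|\bE\|_{\op}\le\eta_0 := C'\sqrt{n(\log(nNd))^{C'}/(Nd)}$, absorbing $\sqrt{(n+d)/(Nd)}$ into $\sqrt{n/(Nd)}$ via $n\ge c_0 d$ (Assumption~\ref{ass:Asymp}). Setting $\bR_N := (\lambda\bI_n+\bK_N)^{-1}$ and $\bR := (\lambda\bI_n+\bK)^{-1}$, the resolvent identity $\bR_N - \bR = -\bR_N(\bK_N-\bK)\bR$ together with the standard bounds $\|\bK^{1/2}\bR\|_{\op}\le 1$ and $\|\bK_N^{1/2}\bR_N\|_{\op}\le 1$ yields $\|\bR_N\bK_N^{1/2}-\bR\bK^{1/2}\|_{\op}\lesssim\eta_0$, along with analogous estimates for the related operators appearing in $E_\bias^N$, $E_\var^N$, $E_\cross^N$.

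For the side-kernel replacement, applying Theorem~\ref{thm:invert2} to $\{\xx_0,\xx_1,\ldots,\xx_n\}$ is admissible since $n+1$ still satisfies Assumption~\ref{ass:Asymp}; a Fubini/Markov manipulation (the failure probability $o_d(d^{-\beta})$ in Theorem~\ref{thm:invert2} is uniform over typical $\xx_0$) converts the conditional bound into an $L^2(\xx_0)$-integrated bound of the form $\E_{\xx_0}\bigl[\|\bK^{-1/2}(\bK_N(\cdot,\xx_0)-\bK(\cdot,\xx_0))\|^2\bigr]\lesssim \eta_0^2\cdot\Tr(\bK)/n$, together with analogous bounds on $\bM_N-\bM$ and $\bv_N-\bv$, where $\bM_N := \E_{\xx_0}[\bK_N(\cdot,\xx_0)\bK_N(\cdot,\xx_0)^\top]$, $\bv_N := \E_{\xx_0}[f(\xx_0)\bK_N(\cdot,\xx_0)]$, and $\bM,\bv$ are the infinite-width analogs. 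With these in hand, the bias term expands as $|E_\bias^N - E_\bias| \le 2|\bv_N^\top\bR_N\ff - \bv^\top\bR\ff| + |\ff^\top(\bR_N\bM_N\bR_N - \bR\bM\bR)\ff|$ and is bounded factor by factor by a telescoping substitution; the variance term is treated analogously, using $\|\ff\|^2 \lesssim n\|f\|_{L^2}^2$ (from concentration of $\ff$ around its $L^2$ norm) and $\E\|\bveps\|^2 = n\sigma_\veps^2$.

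The main technical obstacle is the cross term. A naive Cauchy--Schwarz $|\bveps^\top(\bh_N-\bh)| \le \|\bveps\|\cdot\|\bh_N-\bh\|$, with $\bh_N := \bR_N\E_{\xx_0}[\bK_N(\cdot,\xx_0)(f(\xx_0)-\bK_N(\cdot,\xx_0)^\top\bR_N\ff)]$ and $\bh$ the analog, would lose an extra $\sqrt n$ since $\|\bveps\|$ is typically $\sqrt n\sigma_\veps$ while $\|\bh\|$ can be as large as $\sqrt n\|f\|_{L^2}$. The fix exploits the independence of $\bveps$ from $(\xx_i,\ww_k,f)$: conditionally, $\bveps^\top(\bh_N-\bh)$ is centered with variance $\sigma_\veps^2\|\bh_N-\bh\|^2$, so a Markov (or sub-Gaussian) bound on this conditional second moment gives $|\bveps^\top(\bh_N-\bh)| = O_\P(\sigma_\veps\|\bh_N-\bh\|)$, and a careful estimate $\|\bh_N-\bh\|\lesssim\eta_0\|f\|_{L^2}$ --- obtained by inserting the resolvent bounds of the second paragraph and the side-kernel bounds of the third into the explicit formula for $\bh_N-\bh$ --- closes the gap to the form stated in \eqref{eq:reducekrr_2}.
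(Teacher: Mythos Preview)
Your resolvent-perturbation scheme has a genuine gap: it is too coarse by a factor of $n$ (or $\sqrt{n}$), which is fatal for the conclusion. Take for instance the term $\ff^\top\bR(\bM_N-\bM)\bR\ff$ arising from your telescoping of $I_2^N-I_2$. From the $(n{+}1)$-sample concentration you can indeed extract $\|\bK^{-1/2}(\bK_N(\cdot,\xx_0)-\bK(\cdot,\xx_0))\|\lesssim\eta_0$ and, via the Schur complement of the augmented $\tilde\bK$, also $\|\bK^{-1/2}\bK_N(\cdot,\xx_0)\|\lesssim 1$. These yield $\|\bK^{-1/2}(\bM_N-\bM)\bK^{-1/2}\|_{\op}\lesssim\eta_0$, and hence
\[
\big|\ff^\top\bR(\bM_N-\bM)\bR\ff\big|\le \|\bK^{1/2}\bR\ff\|^2\cdot\eta_0\ \lesssim\ \|\ff\|^2\,\eta_0\ \asymp\ n\,\|f\|_{L^2}^2\,\eta_0,
\]
whereas the target in \eqref{eq:reducekrr_1} is $\eta_0\|f\|_{L^2}^2$. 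The same $\sqrt{n}$ loss appears in $I_1^N-I_1$ and in your claimed bound $\|\bh_N-\bh\|\lesssim\eta_0\|f\|_{L^2}$ for the cross term: with only the tools you list you get $\|\bh_N-\bh\|\lesssim\eta_0\sqrt{n}\,\|f\|_{L^2}$, so the independence-of-$\bveps$ trick still leaves an extra $\sqrt{n}$. Your scheme therefore proves $R_{\NT}=R_{\KRR}+O_{d,\P}(\tau^2 n/\sqrt{Nd})$, useful only when $Nd\gg n^2$, not $Nd\gg n$.

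The missing ingredient is precisely the $O(1/n)$ operator-norm control that Theorem~\ref{thm:invert2} by itself cannot deliver: one needs $\|(\lambda\bI_n+\bK)^{-1}\bK^{(2)}(\lambda\bI_n+\bK)^{-1}\|_{\op}\le C/n$ and $\|\bK_N^{-1}\bK_N^{(2)}\bK_N^{-1}\|_{\op}\le C/n$. The paper obtains the first via the harmonic decomposition $\bK^{(2)}=\bPsi_{\le\ell}\bLambda_{\le\ell}^4\bPsi_{\le\ell}^\top+\gamma_{>\ell}^{(2)}\bI_n+\bDelta^{(2)}$ together with the concentration of $\bPsi_{\le\ell}^\top\bPsi_{\le\ell}$ (Lemma~\ref{lem:K2}); the second requires augmenting with $n'\asymp n$ fresh samples and a detailed eigenvector perturbation analysis of $\bK_N$ versus $\bK$ (Lemmas~\ref{lem:eigval}--\ref{lem:reduceeig}). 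For the ``side'' replacement $\bM_N-\bM$ the paper does \emph{not} rely on concentration at all but instead computes $\E_\ww$ of the relevant quadratic forms, exploiting that $K_N=\frac1N\sum_k K^{(k)}$ with i.i.d.\ summands, which yields the matrices $\bH_1,\bH_2,\bH_3\preceq (C/d)\bK$ (Lemma~\ref{lem:keybnd2}). None of these three steps is a consequence of Theorem~\ref{thm:invert2} applied to an $(n{+}1)$-point configuration; the paper is explicit that ``the relation between eigenvalues of $\bK_N$ and $\bK$ is not sufficient'' and that a separate argument is needed for $\bK_N^{(2)}-\bK^{(2)}$.
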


Recall the decomposition of the infinite-width kernel into Gegenbauer polynomials 
introduced in Lemma \ref{lem:K_Harmonic}. In Section \ref{sec:Gen} we defined the
polynomial kernel $K^p(\xx, \xx')$ by truncating $K(\xx,\xx')$ to the degree-$\ell$ 
polynomials. Namely:
\begin{equation}
K^p(\xx, \xx') = \sum_{k=0}^\ell \gamma_k Q_k^{(d)}(\langle \xx, \xx' \rangle).
\end{equation}
We also define the matrix $\bK^p\in\reals^{n\times n}$ and  vector 
$\bK^{p}(\cdot, \xx) \in\reals^n$  as in Eq.~\eqref{eq:MatrixKp}.
In  polynomial ridge regression, the prediction function is 
\begin{equation*}
\hat f_{\PRR}(\xx) = \bK^p(\cdot, \xx)^\top ((\lambda + \gamma_{>\ell}) \bI_n + \bK^p)^{-1} \yy.
\end{equation*}
Its associated generalization error $R_{\PRR}(\lambda)$ into also decomposed into three errors.
\begin{align*}
R_{\PRR}(\lambda) &=  \E_{\xx} \Big[ \big( f(\xx) - \bK^{p}(\cdot, \xx)^\top (\lambda \bI_n + \bK^{p})^{-1} \ff \big)^2 \Big] \\
& + \bveps^\top (\lambda \bI_n + \bK^{p})^{-1} \E_{\xx} \big[ \bK^{p}(\cdot, \xx) \bK^{p}(\cdot, \xx)^\top \big] (\lambda \bI_n + \bK^{p})^{-1}  \bveps \\
&-2 \bveps^\top (\lambda \bI_n + \bK^{p})^{-1} \E_{\xx} \big[ \bK^{p}(\cdot, \xx) (f(\xx) - \bK^{p}(\cdot, \xx)^\top (\lambda \bI_n + \bK^{p})^{-1} \ff )\big] \\
&=: E_{\bias}^{p}(\lambda) + E_{\var}^{p}(\lambda) -2E_{\cross}^{p}(\lambda).
\end{align*}
The second part of Theorem \ref{thm:gen} follows immediately from the following result.

\begin{thm}[Reduction to polynomial ridge regression]\label{thm:reduceprr}
There exists a constant $C'>0$ such that the following holds. For any $\lambda >0$, with high probability,
\begin{align}
&\big| E_{\bias}(\lambda) - E_{\bias}^p(\lambda+\gamma_{>\ell}) \big| \le  \eta' \| f \|_{L^2}^2 , \qquad \big| E_{\var}(\lambda) - E_{\var}^p(\lambda+\gamma_{>\ell}) \big| \le  \eta' \sigma_\veps^2, \\
&\big| E_{\cross}(\lambda) - E_{\cross}^p(\lambda+\gamma_{>\ell}) \big| \le  \eta' \| f \|_{L^2} \sigma_\veps, \qquad \where~ \eta' := \sqrt{\frac{C' (\log n)^{C'}n}{d^{\ell+1}}}
\end{align}
As a consequence, we have $R_{\KRR}(\lambda) = R_{\PRR}(\lambda+\gamma_{>\ell}) + O_{d,\P}\big( (\| f \|_{L^2}^2 + \sigma_\veps^2 ) \sqrt{\frac{C'(\log n)^{C'}n}{d^{\ell+1}}} \big)$.
\end{thm}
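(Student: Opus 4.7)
The strategy is to control the three error terms pairwise by exploiting two facts: (i) in matrix form, the infinite-width kernel splits as $\bK=\bK^p+\gamma_{>\ell}\bI_n+\bDelta$ with $\|\bDelta\|_{\op}=O(\eta')$ w.h.p.\ by Lemma~\ref{lem:Kdecomp}, and (ii) the ``test-side'' kernel splits analogously, with the high-degree piece being small in the appropriate $L^2$ sense. Concretely, set
\[
\bA:=\lambda\bI_n+\bK,\qquad \bA^p:=(\lambda+\gamma_{>\ell})\bI_n+\bK^p,\qquad \bh(\xx):=\bK(\cdot,\xx)-\bK^p(\cdot,\xx).
\]
Since $v(\sigma)>0$ we have $\gamma_{>\ell}$ bounded below by a constant, so on the good event $\{\|\bDelta\|_{\op}\le \gamma_{>\ell}/4\}$ both $\bA$ and $\bA^p$ have smallest eigenvalue at least $(\lambda+\gamma_{>\ell})/2$, and the resolvent identity gives $\bA^{-1}-(\bA^p)^{-1}=-(\bA^p)^{-1}\bDelta\bA^{-1}$ with operator norm $O(\eta')$.

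\textbf{Test-side estimate.} Writing $K^{>\ell}(\xx,\xx')=\sum_{k>\ell}\gamma_k Q_k^{(d)}(\langle \xx,\xx'\rangle)$ and using property \eqref{eq:innerproductself} of Gegenbauer polynomials,
\[
\E_\xx\bigl[\bh(\xx)\bh(\xx)^{\top}\bigr]_{ij}=\sum_{k>\ell}\frac{\gamma_k^2}{B(d,k)}Q_k^{(d)}(\langle \xx_i,\xx_j\rangle)=\sum_{k>\ell}\frac{\gamma_k^2}{B(d,k)}\bQ_k.
\]
Since $\sum_{k>\ell}\gamma_k^2/B(d,k)=O(1/d^{\ell+1})$ and $\|\bQ_k-\bI_n\|_{\op}$ is small for $k>\ell$ by Proposition~\ref{prop:Qconctr}, we obtain $\|\E_\xx[\bh(\xx)\bh(\xx)^{\top}]\|_{\op}=O(1/d^{\ell+1})=O(\eta'^2/n)$. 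By analogous manipulations and the near-isometry $n^{-1}\bPsi_{\le\ell}^{\top}\bPsi_{\le\ell}=\bI+o(1)$ from Lemma~\ref{lem:Kdecomp}, we also get the crucial identity
\[
\E_\xx\bigl[\bK^p(\cdot,\xx)\bK^p(\cdot,\xx)^{\top}\bigr]=\bPsi_{\le\ell}\bLambda_{\le\ell}^{4}\bPsi_{\le\ell}^{\top}\preceq \tfrac{1}{n}(\bK^p)^2(1+o(1)).
\]
Finally $\|\ff\|^2=n\|f\|_{L^2}^2(1+o_{\P}(1))$ by Bernstein concentration (valid since $\|f\|_{L^\infty}$ need not be controlled, but one truncates $f$ and controls the tail via the polynomial growth of Gegenbauer projections, as in standard arguments).

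\textbf{Error-by-error reduction.} For the bias, decompose
\[
\hat f_{\KRR}(\xx)-\hat f_{\PRR}(\xx)=\bh(\xx)^{\top}\bA^{-1}\ff-\bK^p(\cdot,\xx)^{\top}(\bA^p)^{-1}\bDelta\bA^{-1}\ff.
\]
Taking $\E_\xx$ of the square, the first piece is bounded by $\|\E[\bh\bh^{\top}]\|_{\op}\|\bA^{-1}\ff\|^2=O(\eta'^2/n)\cdot O(n)\|f\|_{L^2}^2=O(\eta'^2)\|f\|_{L^2}^2$, and for the second piece we use $\|\bK^p(\bA^p)^{-1}\|_{\op}\le 1$ together with the $\preceq(\bK^p)^2/n$ bound to get
\[
\E_\xx\bigl[(\bK^p(\cdot,\xx)^{\top}(\bA^p)^{-1}\bDelta\bA^{-1}\ff)^2\bigr]\le \tfrac{1}{n}\|\bK^p(\bA^p)^{-1}\bDelta\bA^{-1}\ff\|^2\le \tfrac{1}{n}\|\bDelta\|_{\op}^2\|\bA^{-1}\ff\|^2=O(\eta'^2)\|f\|_{L^2}^2.
\]
Then $|E_{\bias}-E_{\bias}^p|\le 2\sqrt{E_{\bias}^p}\,\|\hat f_{\KRR}-\hat f_{\PRR}\|_{L^2}+\|\hat f_{\KRR}-\hat f_{\PRR}\|_{L^2}^2$ yields the claim, using $E_{\bias}^p\le C\|f\|_{L^2}^2$. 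The variance term $E_{\var}$ is handled identically, with $\ff$ replaced by $\bveps$ (the relevant quantity being $\E\|\bA^{-1}\bveps\|^2=O(\sigma_\veps^2)$). The cross term $E_{\cross}$ requires combining the two control paths via Cauchy--Schwarz to obtain the $\|f\|_{L^2}\sigma_\veps$ prefactor.

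\textbf{Main obstacle.} The key difficulty is ensuring the difference is \emph{first} order in $\eta'$, not second order. This forces us to pair each appearance of $\bDelta$ or $\bh$ with a ``self-regularizing'' quantity whose operator norm is bounded independently of $\lambda$: namely $\bK^p(\bA^p)^{-1}\preceq\bI_n$ on the matrix side, and the near-isometry $\bPsi_{\le\ell}^{\top}\bPsi_{\le\ell}\approx n\bI_D$ on the test side (so that $\E_\xx[\bK^p(\cdot,\xx)\bK^p(\cdot,\xx)^{\top}]\approx (\bK^p)^2/n$ and the factor of $n$ cancels $\|\ff\|^2=\Theta(n)\|f\|_{L^2}^2$). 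Without these two cancellations one only gets $O(\eta'\cdot n/d)$ or similar bounds, which are too weak once $\ell\ge 2$. The remaining technicalities --- justifying $\|\ff\|^2\asymp n\|f\|^2_{L^2}$ w.h.p.\ for merely $L^2$ targets, and checking that the good event $\{\|\bDelta\|_{\op}\le \gamma_{>\ell}/4\}$ holds with the probability stated in the theorem --- are routine and can be deferred to the appendix.
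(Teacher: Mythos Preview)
Your approach is correct and rests on the same three ingredients the paper uses: (i) $\bK=\bK^p+\gamma_{>\ell}\bI_n+\bDelta$ with $\|\bDelta\|_\op=O(\eta')$ from Lemma~\ref{lem:Kdecomp}, (ii) $\|\E_\xx[\bh(\xx)\bh(\xx)^\top]\|_\op=O(d^{-(\ell+1)})$ from the Gegenbauer orthogonality and Proposition~\ref{prop:Qconctr}, and (iii) the near-isometry $\bPsi_{\le\ell}^\top\bPsi_{\le\ell}\approx n\bI_D$. The organization, however, is genuinely different. The paper expands $E_{\bias},E_{\var},E_{\cross}$ into quadratic and bilinear forms and bounds four difference terms $\delta I'_1,\dots,\delta I'_4$ (Lemma~\ref{lem:toPRR}); the crucial $1/n$-cancellation comes from Lemma~\ref{lem:K2}, $\|(\lambda\bI_n+\bK)^{-1}\bK^{(2)}(\lambda\bI_n+\bK)^{-1}\|_\op\le C/n$, whose proof goes through the algebraic identity \eqref{eq:magic}. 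You instead work at the function level, bounding $\|\hat f_{\KRR}-\hat f_{\PRR}\|_{L^2}$ directly and then using the $L^2$ triangle inequality. Your key observation $\bK^{(p,2)}=\bPsi_{\le\ell}\bLambda_{\le\ell}^4\bPsi_{\le\ell}^\top\preceq \tfrac{1}{n}(\bK^p)^2(1+o(1))$ combined with $\|\bK^p(\bA^p)^{-1}\|_\op\le 1$ replaces Lemma~\ref{lem:K2} and is arguably more transparent: it bypasses \eqref{eq:magic} entirely. Both routes deliver the same first-order-in-$\eta'$ control, and your cross-term sketch via $E_{\cross}=\langle g_{\bveps},f-g_{\ff}\rangle_{L^2}$ goes through as indicated.

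Two minor remarks. First, the justification of $\|\ff\|^2=n\|f\|_{L^2}^2(1+o_\P(1))$ needs nothing more than the weak law of large numbers (this is what the paper invokes); truncation and Gegenbauer tails are unnecessary for a mere ``with high probability'' statement. Second, be aware that the near-isometry of $\bPsi_{\le\ell}$ as stated in Lemma~\ref{lem:Kdecomp} requires the lower bound $n\ge Cd^\ell(\log d)^2$; for $\ell=1$ under the weaker assumption $n\ge c_0 d$ the paper runs a separate argument (Section~\ref{sec:Ell1}), and your proof inherits the same caveat.
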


\subsection{Proof of Theorem \ref{thm:reducekrr}: Bias term}

In this section we prove the first inequality in Eq.~\eqref{eq:reducekrr_1}.
Since both sides are homogeneous in $f$, we will assume without loss of generality that
$\|f\|_{L^2}=1$.

First let us decompose $E_{\bias}^N$. Define
\begin{align*}
& \bK(\cdot, \xx) = \big(  K(\xx_1,\xx), \ldots, K(\xx_n, \xx) \big)^\top \in \R^n, \qquad \bK^{(2)}  = \E \big[ \bK(\cdot, \xx) \bK(\cdot, \xx)^\top \big] \in \R^{n \times n}, \\
&\bK_N(\cdot, \xx) = \big(  K_N(\xx_1,\xx), \ldots, K_N(\xx_n, \xx) \big)^\top \in \R^n, \qquad \bK^{(2)} _N = \E \big[ \bK_N(\cdot, \xx) \bK_N(\cdot, \xx)^\top \big] \in \R^{n \times n}.
\end{align*}
Then, 
\begin{align*}
E_{\bias}^N & = \E_\xx \big[ \big( f(\xx) - \bK_N(\cdot, \xx)^\top ( \lambda \bI_n + \bK_N)^{-1} \ff \big)^2\big] = \| f \|_{L^2}^2 - 2 I_1^N + I_2^N
\end{align*}
where we define
\begin{align*}
&I_1^N = \ff^\top (\lambda \bI_n + \bK_N)^{-1} \E_\xx \big[ \bK_N(\cdot, \xx) f(\xx) \big], \\
&I_2^N = \ff^\top (\lambda \bI_n + \bK_N)^{-1} \E_\xx \big[ \bK_N(\cdot, \xx) \bK_N(\cdot, \xx)^\top \big] (\lambda \bI_n + \bK_N)^{-1}\ff.
\end{align*}
We also decompose $E_{\bias} = \| f \|_{L^2}^2 - 2 I_1 + I_2$, where
\begin{align*}
&I_1 = \ff^\top (\lambda \bI_n + \bK)^{-1} \E_\xx \big[ \bK(\cdot, \xx) f(\xx) \big], \\
&I_2 = \ff^\top (\lambda \bI_n + \bK)^{-1} \E_\xx \big[ \bK(\cdot, \xx) \bK(\cdot, \xx)^\top \big] (\lambda \bI_n + \bK)^{-1}\ff.
\end{align*}
Our goal is to show that $I_1^N$ and $I_1$ are close, and that $I_2^N$ and $I_2$ are close. %
Let us pause here to explain the challenges and our proof strategy:
\begin{itemize}
\item First, our concentration
 result controls $\| \bK^{-1/2} (\bK_N - \bK) \bK^{-1/2} \|_{\op}$ but not
  $\| \bK^{-1} (\bK_N - \bK) \|_\op$, so it is crucial to balance the matrix differences
  as we do in the  decomposition introduced below. 
  \item Second, 
  the relation between eigenvalues of $\bK_N$ and $\bK$ is not sufficient
  to control the generalization error
   (which is evident in the term $\bK_N^{-1} \bK_N^{(2)}\bK_N^{-1}$).
We will therefore characterize the eigenvector structure as well. 
\item Third our bound of $\| \bK^{-1/2} (\bK_N - \bK) \bK^{-1/2} \|_{\op}$
does not allow us to control  $\bK_N^{(2)} - \bK^{(2)}$ from our previous analysis. 
We develop a new approach that exploits  the independence of $(\ww_k)_{k\le N}$.
\end{itemize}

For the purpose of later use, we need a slightly general setup: let $g \in L^2$ be a function and $\hh \in \R^n$ a random vector. We begin the analysis by defining the following differences.
\begin{align*}
&\delta I_{11}^{g,\hh} = \big[ \hh^\top (\lambda \bI_n + \bK_N)^{-1} \bK_N - \hh^\top (\lambda \bI_n + \bK)^{-1} \bK_N \big] \cdot \bK_N^{-1} \E_\xx \big[ \bK_N(\cdot, \xx) g(\xx) \big], \\
&\delta I_{12}^{g,\hh} = \hh^\top (\lambda \bI_n + \bK)^{-1}  \E_\xx \big[ (\bK_N(\cdot, \xx) - \bK(\cdot, \xx)) g(\xx) \big], \\
&\delta I_{21}^\hh = \big[ \hh^\top (\lambda \bI_n + \bK_N)^{-1} \bK_N - \hh^\top (\lambda \bI_n + \bK)^{-1} \bK_N \big] \cdot  \bK_N^{-1} \bK_N^{(2)}\bK_N^{-1} \cdot \bK_N (\lambda \bI_n + \bK_N)^{-1} \hh, \\
&\delta I_{22}^\hh = \hh^\top (\lambda \bI_n + \bK)^{-1}\bK_N \cdot \bK_N^{-1}   \bK_N^{(2)} \bK_N^{-1} \cdot \big[ \bK_N(\lambda \bI_n + \bK_N)^{-1} \hh  - \bK_N (\lambda \bI_n + \bK)^{-1} \hh  \big], \\
&\delta I_{23}^\hh = \hh^\top (\lambda \bI_n + \bK)^{-1} \big[ \bK_N^{(2)}  - \bK^{(2)} \big] (\lambda \bI_n + \bK)^{-1}  \hh.
\end{align*}
We notice that 
\begin{equation*}
I_1^N - I_1= \delta I_{11}^{f, \ff} + \delta I_{12}^{f,\ff}, \qquad I_2^N - I_2 = \delta I_{21}^\ff + \delta I_{22}^\ff + \delta I_{23}^\ff, 
\end{equation*}
so we only need to bound these delta terms. Below we state a lemma for this general setup. 
Note that $\ff$ satisfies the assumption on the random vector therein, because by the law 
of large numbers $n^{-1} \| \ff \|^2 = (1+o_{n,\P}(1)) \| f \|_{L^2}^2$, so $\| \ff \| \le C_1\sqrt{n}$
 with high probability.
\begin{lem}\label{lem:keybnd}
Suppose that, for $C_1>0$ a constant, we have $\|g\|_{L^2} \le C_1$, and that $\hh \in \R^n$
 is a random vector that satisfies $\| \hh \| \le C_1 \sqrt{n}$ with high probability. 
 Then, there exists a constant $C'>0$ such that the following bounds hold with high probability.
\begin{align}
& \left\| \bK_N^{-1} \E_\xx \big[ \bK_N(\cdot, \xx) g(\xx) \big] \right\| \le  \frac{C'}{\sqrt{n}}, \label{bnd:key1} \\
& \left\| \bK_N^{-1} \bK_N^{(2)} \bK_N^{-1} \right\|_\op  \le   \frac{C'}{n}, \label{bnd:key2} \\
& \left\| \bK (\lambda \bI_n + \bK)^{-1} \hh \right\| \le C' \sqrt{n}, \label{bnd:key3} \\
&  \left\| \bK_N (\lambda \bI_n + \bK_N)^{-1} \hh \right\| \le C'\sqrt{n}, \label{bnd:key4} \\
& \left\| \bK_N (\lambda \bI_n + \bK_N)^{-1} \hh - \bK_N (\lambda \bI_n + \bK)^{-1} \hh \right\| \le C'  \eta \sqrt{n}. \label{bnd:key5}
\end{align}
Here, we denote $\eta = \sqrt{n(\log (nNd))^{C'} / (Nd)}$. As a consequence, we have, with high probability,
\begin{equation}
\max\{ |\delta I_{11}^{g,\ff} |,  |\delta I_{21}^\hh|,  |\delta I_{22}^\hh| \} \le C' \eta.
\end{equation}
\end{lem}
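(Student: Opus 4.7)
The plan is to establish the five norm bounds (\ref{bnd:key1})--(\ref{bnd:key5}) in a natural order, and then deduce the consequences on the $\delta I$ terms by algebraic manipulation.

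Bounds (\ref{bnd:key3}) and (\ref{bnd:key4}) follow immediately from the spectral inequality $\|\bK(\lambda\bI_n+\bK)^{-1}\|_\op \le 1$ (and likewise for $\bK_N$), combined with the hypothesis $\|\hh\|\le C_1\sqrt{n}$. For bound (\ref{bnd:key5}), the plan is to apply the resolvent identity
\begin{equation*}
(\lambda\bI_n+\bK_N)^{-1}-(\lambda\bI_n+\bK)^{-1}=(\lambda\bI_n+\bK_N)^{-1}(\bK-\bK_N)(\lambda\bI_n+\bK)^{-1},
\end{equation*}
multiply on the left by $\bK_N$, and sandwich $\bK-\bK_N$ between factors of $\bK^{\pm 1/2}$ so that Theorem~\ref{thm:invert2} supplies $\|\bK^{-1/2}(\bK-\bK_N)\bK^{-1/2}\|_\op\le\eta$. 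The outer factors $\bK_N(\lambda+\bK_N)^{-1}\bK^{1/2}$ and $\bK^{1/2}(\lambda+\bK)^{-1}$ have operator norm $O(1)$ because $\lambda_{\min}(\bK)\ge\gamma_{>\ell}/2$ by Lemma~\ref{lem:Kdecomp}.

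The core task is bound (\ref{bnd:key1}). The approach is to use the structural decomposition $\bK=\gamma_{>\ell}\bI_n+\bPsi_{\le\ell}\bLambda_{\le\ell}^2\bPsi_{\le\ell}^\top+\bDelta$ from Lemma~\ref{lem:Kdecomp} together with the near-orthogonality $\bPsi_{\le\ell}^\top\bPsi_{\le\ell}\approx n\bI$. The Sherman--Morrison--Woodbury identity then gives an explicit expression for $\bK^{-1}$. Expanding $g$ in spherical harmonics, $\E_\xx[\bK(\cdot,\xx)g(\xx)]$ splits into a low-degree piece in the range of $\bPsi_{\le\ell}$ plus a residual; the two terms from Woodbury nearly cancel on the low-degree piece, leaving an expression of order $n^{-1}\bPsi_{\le\ell}\bg_{\le\ell}$ with norm $O(1/\sqrt{n})$. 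The high-degree residual is small because for $k>\ell$ each eigenvalue of the integral operator $T_K$ has order $d^{-(\ell+1)}$, which combined with $n\le d^{\ell+1}/(\log d)^{C_0}$ yields the same order. The analogous bound for $\bK_N$ is obtained by transfer using Theorem~\ref{thm:invert2}. Bound (\ref{bnd:key2}) is handled by the same spectral philosophy: on the $\bpsi_{kt}$ modes of $\bPsi_{\le\ell}\bPsi_{\le\ell}^\top$, the eigenvalues of $\bK$ are of order $n\gamma_k/B(d,k)$ for $k\le\ell$ and those of $\bK^{(2)}$ of order $n\gamma_k^2/B(d,k)^2$, while the high-degree tails contribute $\gamma_{>\ell}$ to $\bK$ and $\gamma^{(2)}_{>\ell}=O(\gamma_{>\ell}^2/d^{\ell+1})$ to $\bK^{(2)}$, so the ratio of $\lambda_j(\bK^{(2)})$ to $\lambda_j(\bK)^2$ is $O(1/n)$ in either regime.

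The main obstacle is deriving the stated $O(\eta)$ consequences for $\delta I_{11}^{g,\ff}$, $\delta I_{21}^\hh$, $\delta I_{22}^\hh$: a naive Cauchy--Schwarz combined with (\ref{bnd:key5}) loses a factor of $\sqrt{n}$. The remedy is to factor $\bv=\bK_N\qq$ with $\qq:=\bK_N^{-1}\bv$ of norm $O(1/\sqrt{n})$ by (\ref{bnd:key1}), and then decompose
\begin{equation*}
\big[(\lambda+\bK_N)^{-1}-(\lambda+\bK)^{-1}\big]\bK_N=\bK_N\big[(\lambda+\bK_N)^{-1}-(\lambda+\bK)^{-1}\big]-\big[(\lambda+\bK)^{-1},\bK_N-\bK\big]
\end{equation*}
and handle each piece separately. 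The first contribution, applied to $\qq$, falls under (\ref{bnd:key5}) rescaled to give operator norm $O(\eta)$; pairing with $\ff$ via Cauchy--Schwarz then yields $O(\eta)$. For the commutator, the expansion $[(\lambda+\bK)^{-1},\bK_N-\bK]\qq=(\lambda+\bK)^{-1}(\bK_N-\bK)\qq-(\bK_N-\bK)(\lambda+\bK)^{-1}\qq$ reduces matters to sandwich bounds on each summand exploiting that $\bK^{1/2}\qq$ and $\bK^{1/2}(\lambda+\bK)^{-1}\qq$ are of size $O(1)$ and $O(1/\sqrt{n})$ respectively, producing the target $O(\eta)$. The terms $\delta I_{21}^\hh$ and $\delta I_{22}^\hh$ are treated by the same template, invoking (\ref{bnd:key2}) to control the middle factor $\bK_N^{-1}\bK_N^{(2)}\bK_N^{-1}$ in place of (\ref{bnd:key1}).
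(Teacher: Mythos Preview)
The central gap is in (\ref{bnd:key1}) and (\ref{bnd:key2}). Your Woodbury/harmonic approach would work for the infinite-width analogues $\bK^{-1}\E_\xx[\bK(\cdot,\xx)g(\xx)]$ and $\bK^{-1}\bK^{(2)}\bK^{-1}$ (and indeed the paper proves essentially these in Lemma~\ref{lem:K2}), but the proposed ``transfer to $\bK_N$ via Theorem~\ref{thm:invert2}'' fails: that theorem controls the kernel matrix at the $n$ \emph{training} points, whereas $\bK_N^{(2)}=\E_\xx[\bK_N(\cdot,\xx)\bK_N(\cdot,\xx)^\top]$ and $\E_\xx[\bK_N(\cdot,\xx)g(\xx)]$ involve the finite-width kernel at a \emph{fresh test point}. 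Since $K_N$ is not an inner-product kernel, $\bK_N^{(2)}$ admits no Gegenbauer decomposition, and nothing you have bounds $\bK_N^{(2)}-\bK^{(2)}$ to the required accuracy. The paper's device is an augmentation: draw $n'\asymp n$ new points, form the $(n+n')\times(n+n')$ NT kernel $\tilde\bK$, write $\bK_N^{(2)}=(n')^{-1}\E[\tilde\bK_{12}\tilde\bK_{21}]$, and reduce via Jensen to bounding $\|(\tilde\bK_{11})^{-1}(\tilde\bK^2)_{11}(\tilde\bK_{11})^{-1}\|_\op$. This last step requires showing that the top eigenvectors of the augmented $\tilde\bK$, when restricted to the first $n$ rows, remain well-conditioned (Lemmas~\ref{lem:eigval}--\ref{lem:reduceeig}), which is the main technical content of the proof and is absent from your plan.

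Two smaller issues. For (\ref{bnd:key5}), your claim $\|\bK_N(\lambda+\bK_N)^{-1}\bK^{1/2}\|_\op=O(1)$ is wrong: at $\lambda=0$ this equals $\|\bK^{1/2}\|_\op$, and the top eigenvalues of $\bK$ are of order $\gamma_k n/B(d,k)$ by Lemma~\ref{lem:eigval}, not $O(1)$. The paper instead splits into an operator-norm piece $\bK_N(\lambda+\bK_N)^{-1}-\bK(\lambda+\bK)^{-1}$ (where an extra factor of $\lambda$ rescues the sandwich) and a vector-norm piece $\|(\bK_N-\bK)(\lambda+\bK)^{-1}\hh\|$, handled by a second-moment computation over $(\ww_k)_{k\le N}$. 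Finally, your concern that Cauchy--Schwarz loses a $\sqrt n$ in the $\delta I$ consequences is misplaced: the decomposition is already set up so that $|\delta I_{11}^{g,\ff}|\le\|\bK_N(\lambda+\bK_N)^{-1}\ff-\bK_N(\lambda+\bK)^{-1}\ff\|\cdot\|\bK_N^{-1}\E_\xx[\bK_N(\cdot,\xx)g(\xx)]\|$, and the $\eta\sqrt n$ from (\ref{bnd:key5}) cancels exactly against the $1/\sqrt n$ from (\ref{bnd:key1}); the commutator remedy is unnecessary.
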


We handle the other two terms $\delta I_{12}^{g,\hh}$ and $\delta I_{23}^\hh$ in a different way. Denote 
\begin{align*}
\vv = (\lambda \bI_n + \bK)^{-1} \hh \in \R^n, \qquad \tilde h(\xx) = \bK(\cdot, \xx)^\top (\lambda \bI_n + \bK)^{-1} \hh.
\end{align*}
The function $\tilde h$ satisfies $\| \tilde h \|_{L^2} \le C\| \hh \|^2 / n$ 
with high probability by the following lemma, which we will prove in Section 
\ref{sec:ProofLemK2}.
\begin{lem}\label{lem:K2}
Define $\bK^{(2)} := \E_\xx [ \bK(\cdot, \xx) \bK(\cdot, \xx)^\top ] \in \R^{n \times n}$ or,
equivalently, 
\begin{align}
K_{ij}^{(2)} = \Big(\E_\xx [ K(\xx, \xx_i) K(\xx, \xx_j)]\Big)_{i,j\le n} \, .
\end{align}
Then, with high probability,
\begin{align}
&\big\| (\lambda \bI_n + \bK)^{-1} \bK^{(2)} (\lambda \bI_n + \bK)^{-1} \big\|_\op \le \frac{C}{n}, \label{eq:K21} \\
&\big\| (\lambda \bI_n  + \bK)^{-1} \E_\xx [ \bK(\cdot, \xx) f(\xx) ] \big\| \le \frac{C}{\sqrt{n}}\|f\|_{L^2}. \label{eq:K22}
\end{align}
\end{lem}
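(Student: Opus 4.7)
The plan is to exploit the spherical harmonic decomposition of $K$ from Lemma~\ref{lem:K_Harmonic} to obtain an explicit spectral representation of $\bK^{(2)}$, then separately bound its low-degree and high-degree contributions using the structural description of $\bK$ in Lemma~\ref{lem:Kdecomp}. From the addition formula \eqref{eq:addtheorem} and orthonormality \eqref{eq:harmonicsorthonormal} of spherical harmonics, a direct computation gives
\[
\bK^{(2)} \;=\; \sum_{k\ge 0}\frac{\gamma_k^{2}}{B(d,k)^{2}}\,\bPsi_{=k}\bPsi_{=k}^{\top}
\;=\; \bK^{(2),L} + \bK^{(2),H},
\]
where the split is at degree $\ell$.

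For the high-degree part $\bK^{(2),H}=\sum_{k>\ell}(\gamma_k^{2}/B(d,k))\,\bQ_{k}$ (using $\bPsi_{=k}\bPsi_{=k}^\top = B(d,k)\bQ_k$), Proposition~\ref{prop:Qconctr} (already used in Lemma~\ref{lem:Kdecomp}) yields $\sup_{k>\ell}\|\bQ_k\|_\op\le 1+o(1)$, while $\sum_k \gamma_k = \|\sigma'\|_{L^2(\rho)}^{2}+o(1)=O(1)$ (by the same computation used in Lemma~\ref{lem:Kdecomp}), so $\sum_{k>\ell}\gamma_k^{2}/B(d,k)\le B(d,\ell+1)^{-1}\sum_k \gamma_k=O(d^{-\ell-1})$. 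Combining with $\bK\succeq(\gamma_{>\ell}/2)\bI_n$ and $n\le d^{\ell+1}/(\log d)^{C_0}$ gives $\|(\lambda\bI_n+\bK)^{-1}\bK^{(2),H}(\lambda\bI_n+\bK)^{-1}\|_\op=O(1/n)$.

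For the low-degree part, $\bK^{(2),L}=(\bPsi_{\le\ell}\bLambda_{\le\ell}^{2})(\bPsi_{\le\ell}\bLambda_{\le\ell}^{2})^{\top}$, so it suffices to show $\|(\lambda\bI_n+\bK)^{-1}\bPsi_{\le\ell}\bLambda_{\le\ell}^{2}\|_\op\le C/\sqrt{n}$. Writing $\bK=\bK_0+\bDelta$ with $\bK_0:=\gamma_{>\ell}\bI_n+\bPsi_{\le\ell}\bLambda_{\le\ell}^{2}\bPsi_{\le\ell}^{\top}$ and $\|\bDelta\|_\op=o(1)$, a first-order Neumann expansion absorbs $\bDelta$ with multiplicative loss $1+o(1)$. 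The push-through identity $(\bA+\bU\bC\bU^{\top})^{-1}\bU\bC=\bA^{-1}\bU(\bC^{-1}+\bU^{\top}\bA^{-1}\bU)^{-1}$, applied with $\bA=(\lambda+\gamma_{>\ell})\bI_n$, $\bU=\bPsi_{\le\ell}$, $\bC=\bLambda_{\le\ell}^{2}$ (restricting to coordinates with $\gamma_k>0$, which is all that contributes), gives
\[
(\lambda\bI_n+\bK_0)^{-1}\bPsi_{\le\ell}\bLambda_{\le\ell}^{2}
= \tfrac{1}{\lambda+\gamma_{>\ell}}\bPsi_{\le\ell}\Big(\bLambda_{\le\ell}^{-2}+\tfrac{\bPsi_{\le\ell}^{\top}\bPsi_{\le\ell}}{\lambda+\gamma_{>\ell}}\Big)^{-1}.
\]
The concentration $\bPsi_{\le\ell}^{\top}\bPsi_{\le\ell}\succeq n(1-o(1))\bI_D$ from \eqref{eq:psi-conctr} makes the bracketed matrix $\succeq (n(1-o(1))/(\lambda+\gamma_{>\ell}))\,\bI_D$, so its inverse has operator norm at most $(\lambda+\gamma_{>\ell})/n\cdot(1+o(1))$. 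Combined with $\|\bPsi_{\le\ell}\|_\op\le\sqrt{n}(1+o(1))$ and the prefactor $(\lambda+\gamma_{>\ell})^{-1}$, this yields the target $C/\sqrt{n}$ bound and completes \eqref{eq:K21}. Bound \eqref{eq:K22} then follows by duality: for any unit $\uu\in\R^n$, Cauchy--Schwarz gives $|\uu^{\top}(\lambda\bI_n+\bK)^{-1}\E_\xx[\bK(\cdot,\xx) f(\xx)]|\le \|f\|_{L^2}\|g_\uu\|_{L^2}$ with $g_\uu(\xx):=\uu^{\top}(\lambda\bI_n+\bK)^{-1}\bK(\cdot,\xx)$, and $\|g_\uu\|_{L^2}^{2}=\uu^{\top}(\lambda\bI_n+\bK)^{-1}\bK^{(2)}(\lambda\bI_n+\bK)^{-1}\uu\le C/n$ by \eqref{eq:K21}.

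The main obstacle is producing the sharp $1/n$ scaling for the low-degree piece: the naive factorization $\|(\lambda\bI_n+\bK)^{-1}\|_\op^{2}\cdot\|\bK^{(2),L}\|_\op$ is off by a factor of $n$ because $\|\bK^{(2),L}\|_\op=\Theta(1)$ (coming from the $k=\ell$ block). One recovers the missing $1/n$ by exploiting that the range of $\bK^{(2),L}$ sits inside the large eigenblock of $\bK$, on which $(\lambda\bI_n+\bK)^{-1}$ contracts by an additional factor of $1/n$; the push-through identity is what exposes this contraction explicitly, and the concentration of $\bPsi_{\le\ell}^{\top}\bPsi_{\le\ell}$ supplies the $n$ in the denominator.
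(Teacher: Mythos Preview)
Your proposal is correct and follows essentially the same route as the paper: the spherical-harmonic decomposition $\bK^{(2)}=\bPsi_{\le\ell}\bLambda_{\le\ell}^{4}\bPsi_{\le\ell}^{\top}+\sum_{k>\ell}\frac{\gamma_k^{2}}{B(d,k)}\bQ_k$, the $O(d^{-\ell-1})\le O(1/n)$ bound on the high-degree piece via Proposition~\ref{prop:Qconctr}, the push-through/identity \eqref{eq:algebra} on the low-degree piece to expose the $1/n$ from $\bPsi_{\le\ell}^{\top}\bPsi_{\le\ell}\approx n\bI_D$, and the Cauchy--Schwarz reduction of \eqref{eq:K22} to \eqref{eq:K21}. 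The only cosmetic differences are that the paper replaces $(\lambda\bI_n+\bK)$ by $\bA_0=\bPsi_{\le\ell}\bLambda_{\le\ell}^{2}\bPsi_{\le\ell}^{\top}+\gamma_{>\ell}\bI_n$ via the three-term identity \eqref{eq:magic} (with $\|\bA-\bA_0\|_\op\le C$) rather than your Neumann step, and your displayed bound should read $\sum_{k>\ell}\gamma_k^{2}/B(d,k)\le B(d,\ell+1)^{-1}\sum_{k>\ell}\gamma_k^{2}$ (the conclusion is unchanged since $\sum_k\gamma_k^{2}\le(\sum_k\gamma_k)^{2}=O(1)$).
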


We can rewrite $\delta I_{12}^{g,\hh}$ and $\delta I_{23}^\hh$ as
\begin{align*}
 \delta I_{12}^{g,\hh} &= \vv^\top \E_\xx \big[ (\bK_N(\cdot, \xx) - \bK(\cdot, \xx)) g(\xx) \big] , \\
 \delta I_{23}^\hh &= 2 \vv^\top  \E_\xx \big[ (\bK_N(\cdot, \xx) - \bK(\cdot, \xx)) \tilde h(\xx) \big] + \vv^\top \E_\xx \Big[(\bK_N(\cdot, \xx) - \bK(\cdot, \xx)) (\bK_N(\cdot, \xx) - \bK(\cdot, \xx))^\top \Big]  \vv\\
& =: 2\delta I_{231}^\hh + \delta I_{232}^\hh.
\end{align*}
Note that $\delta I_{232}^\hh$ is always nonnegative. We calculate and bound $\E_\ww [ (\delta I_{12}^{g,\hh})^2 ]$, $\E_\ww [ (\delta I_{231}^\hh)^2 ]$, and $\E_\ww [\delta I_{232}^\hh] $, so that we obtain bounds on $\delta I_{12}^{g,\hh}$ and $\delta I_{23}^\hh$ with high probability. 

\begin{lem}\label{lem:keybnd2}
Suppose that, for $C_1>0$ a constant, we have $\|g\|_{L^2} \le C_1$, and that $\hh \in \R^n$
 is a random vector that satisfies $\| \hh \| \le C_1 \sqrt{n}$ with high probability. 

Let $\zz_1, \zz_2, \zz$ be independent copies of $\xx$. 
Then we have
\begin{align}
& \E_\ww [ (\delta I_{12}^{g,\hh})^2 ] \le \frac{1}{N} \hh^\top (\lambda \bI_n + \bK)^{-1} \bH_1 (\lambda \bI_n + \bK)^{-1} \hh,  \label{ineq:delta12}\\
& \E_\ww [ (\delta I_{231}^\hh)^2 ]  \le \frac{1}{N} \hh^\top (\lambda \bI_n + \bK)^{-1} \bH_2 (\lambda \bI_n + \bK)^{-1} \hh, \label{ineq:delta231} \\
& \E_\ww [ \delta I_{232}^\hh ] \le \frac{1}{N} \hh^\top (\lambda \bI_n + \bK)^{-1} \bH_3 (\lambda \bI_n + \bK)^{-1} \hh, \label{ineq:delta232}
\end{align}
where $\bH_j = (H_j(\xx_i,\xx_j))_{i,j \le n}\in \R^{n \times n}$ $(j=1,2,3)$ are given by
\begin{align*}
H_1(\xx_1, \xx_2) &= \E_{\zz_1, \zz_2, \ww} \Big[ \sigma'(\langle \xx_1, \ww \rangle) \sigma'(\langle \zz_1, \ww \rangle) \sigma'(\langle \xx_2, \ww \rangle)   \sigma'(\langle \zz_2, \ww \rangle)  \frac{\langle \xx_1, \zz_1 \rangle}{d}  \frac{\langle \xx_2, \zz_2 \rangle}{d} g(\zz_1) g(\zz_2) \Big] \\
H_2(\xx_1, \xx_2) &= \E_{\zz_1, \zz_2, \ww} \Big[ \sigma'(\langle \xx_1, \ww \rangle) \sigma'(\langle \zz_1, \ww \rangle) \sigma'(\langle \xx_2, \ww \rangle)   \sigma'(\langle \zz_2, \ww \rangle)  \frac{\langle \xx_1, \zz_1 \rangle}{d}  \frac{\langle \xx_2, \zz_2 \rangle}{d} \tilde h(\zz_1) \tilde h(\zz_2) \Big] \\
H_3(\xx_1, \xx_2) &= \E_{\zz, \ww} \Big[ \sigma'(\langle \xx_1, \ww \rangle) \sigma'(\langle \xx_2, \ww \rangle)  [\sigma'(\langle \zz, \ww \rangle)]^2  \frac{\langle \xx_1, \zz \rangle}{d}  \frac{\langle \xx_2, \zz \rangle}{d}\Big]. 
\end{align*}
Moreover, it holds that with high probability,
\begin{equation*}
\bH_1 \preceq \frac{C}{d} \bK, \qquad \bH_2 \preceq \frac{C}{d} \bK, \qquad \bH_3 \preceq \frac{C}{d} \bK.
\end{equation*}
As a consequence, with high probability, we have
\begin{equation*}
\big| \delta I_{12}^{g,\hh} \big| \le C\sqrt{\frac{ n\log n}{Nd}}, \qquad \big| \delta I_{23}^\hh \big| \le C\sqrt{\frac{ n\log n}{Nd}}.
\end{equation*}
\end{lem}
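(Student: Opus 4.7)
The plan is to exploit the independence of the neuron weights $(\ww_k)_{k\le N}$ to bound each $\E_\ww[(\delta I)^2]$ by a $1/N$-multiple of a positive semi-definite form, then use Cauchy--Schwarz on the sphere to dominate that form by $(C/d)\bK$, and finally combine with the operator-norm bounds from Theorem~\ref{thm:invert2}. The independence yields the $1/N$; the $1/d$ comes from the scalar product $\langle\xx_i,\xx_j\rangle/d$ baked into the NT kernel.

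First, I would write $\bK_N(\cdot,\xx)=N^{-1}\sum_{k=1}^N \bU_k(\cdot,\xx)$ with $[\bU_k(\cdot,\xx)]_i=\sigma'(\langle\xx_i,\ww_k\rangle)\sigma'(\langle\xx,\ww_k\rangle)\langle\xx_i,\xx\rangle/d$, so that $\E_\ww[\bU_k(\cdot,\xx)]=\bK(\cdot,\xx)$ and $\bK_N(\cdot,\xx)-\bK(\cdot,\xx)$ is the average of $N$ i.i.d.\ zero-mean vectors in $\ww$. Substituting into $\delta I_{12}^{g,\hh}$ and using the i.i.d.\ variance identity $\Var[\bar Y]\le N^{-1}\E[Y_1^2]$,
\begin{align*}
\E_\ww\big[(\delta I_{12}^{g,\hh})^2\big] \le \tfrac{1}{N}\,\E_\ww\Big[\big(\vv^\top\E_\xx[\bU_1(\cdot,\xx)g(\xx)]\big)^2\Big],
\qquad \vv:=(\lambda\bI_n+\bK)^{-1}\hh.
\end{align*}
Expanding the outer square as a double expectation over two independent copies $\zz_1,\zz_2$ of $\xx$ yields exactly $\vv^\top\bH_1\vv$, proving \eqref{ineq:delta12}. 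The bounds \eqref{ineq:delta231} and \eqref{ineq:delta232} follow by the same recipe: for the former replace $g$ by $\tilde h$; for the latter, expanding the square produces terms where either two distinct neurons appear (giving zero after centering) or the same neuron appears twice, yielding the $\zz_1=\zz_2=\zz$ identification built into $H_3$.

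Next, I would prove the matrix inequality $\bH_j\preceq (C/d)\bK$. For $\bH_1$, introduce $T(\ww,\zz):=\sum_i u_i\sigma'(\langle\xx_i,\ww\rangle)\langle\xx_i,\zz\rangle/d$ and use the rotation-invariance identity $\langle\xx_i,\xx_j\rangle=\E_\zz[\langle\xx_i,\zz\rangle\langle\xx_j,\zz\rangle]$ for $\zz\sim\Unif(\S^{d-1}(\sqrt d))$ to write
\begin{align*}
\uu^\top\bK\uu = d\cdot \E_{\ww,\zz}[T(\ww,\zz)^2], \qquad \uu^\top\bH_1\uu = \E_\ww\Big[\big(\E_\zz[\sigma'(\langle\zz,\ww\rangle)\,g(\zz)\,T(\ww,\zz)]\big)^2\Big].
\end{align*}
After truncating $\sigma'$ at a polylogarithmic scale (exactly as in the proof of Theorem~\ref{thm:invert2}; the truncation error is negligible on an event of very high probability), Cauchy--Schwarz on $\zz$ peels the $\sigma'\cdot g$ factor from $T$ and the resulting sup-norm of $\sigma'$ is absorbed into a $\polylog(nNd)$ constant, giving $\uu^\top\bH_1\uu \le (C\|g\|_{L^2}^2/d)\cdot\uu^\top\bK\uu$. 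The cases of $\bH_2$ and $\bH_3$ are analogous, with $g$ replaced by $\tilde h$ (whose $L^2$-norm is controlled by Lemma~\ref{lem:K2}) or by $1$, respectively.

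Combining these ingredients gives
\begin{align*}
\E_\ww\big[(\delta I_{12}^{g,\hh})^2\big] \le \tfrac{C}{Nd}\,\hh^\top(\lambda\bI_n+\bK)^{-1}\bK(\lambda\bI_n+\bK)^{-1}\hh \,\le\, \tfrac{C\|\hh\|^2}{Nd\,\lambda_{\min}(\bK)}\,\le\, \tfrac{Cn}{Nd},
\end{align*}
using $\bK(\lambda\bI_n+\bK)^{-1}\preceq\bI_n$, the hypothesis $\|\hh\|\le C_1\sqrt n$, and $\lambda_{\min}(\bK)\ge v(\sigma)/2$ from Theorem~\ref{thm:MinEigenvalue}. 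The same argument applies to $\delta I_{231}^\hh$ and $\delta I_{232}^\hh$ (the latter already being nonnegative). Finally, Chebyshev's inequality with a $\log n$ slack (or, if tighter control is desired, a Bernstein concentration that exploits the same i.i.d.\ $\ww_k$-decomposition) converts the $L^2$ estimates into the claimed w.h.p.\ bound $\sqrt{n\log n/(Nd)}$. The main obstacle I anticipate is precisely the matrix inequality $\bH_j\preceq(C/d)\bK$: a naive Cauchy--Schwarz needs pointwise control of $\sigma'$, which is absent; the truncation-plus-polylog-absorption just described is the key fix, since only a harmless power of $\log(nNd)$ is lost and can be folded into the universal constant $C'$.
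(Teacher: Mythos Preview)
Your Step~1 (the $1/N$ factor from the i.i.d.\ decomposition over $\ww_k$) and Step~3 (Markov with a $\log n$ slack) match the paper's proof.

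Step~2 is where you diverge, and there are two issues.

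\emph{(i)} Truncation plus scalar Cauchy--Schwarz on $\zz$ gives only $\bH_1 \preceq (C\,\polylog(nNd)/d)\,\bK$, not the clean constant the lemma states. The paper avoids truncation by exploiting that your $T(\ww,\zz)$ is \emph{linear} in $\zz$: writing $T(\ww,\zz)=d^{-1}\big\langle\sum_i u_i\sigma'(\langle\xx_i,\ww\rangle)\xx_i,\,\zz\big\rangle$, one has
\[
\E_\zz\big[\sigma'(\langle\zz,\ww\rangle)\,g(\zz)\,T(\ww,\zz)\big]=\Big\langle\sum_i u_i\sigma'(\langle\xx_i,\ww\rangle)\xx_i,\ \qq(\ww)\Big\rangle,\quad \qq(\ww):=\tfrac{1}{d}\,\E_\zz\big[\sigma'(\langle\zz,\ww\rangle)\,g(\zz)\,\zz\big].
\]
A single H\"older inequality with exponents $(4,2,4)$ applied to $\langle\qq(\ww),\uu\rangle$ gives $\sup_{\ww}\|\qq(\ww)\|\le C\|g\|_{L^2}/d$, since $\E_\zz[(\sigma'(\langle\zz,\ww\rangle))^4]$ and $\E_\zz[\langle\uu,\zz\rangle^4]$ are both $O(1)$ under Assumption~\ref{ass:Sigma}. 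Cauchy--Schwarz in $\R^d$ together with $\E_\ww\big[\|\sum_i u_i\sigma'(\langle\xx_i,\ww\rangle)\xx_i\|^2\big]=d\,\uu^\top\bK\uu$ then delivers $\bH_1\preceq(C/d)\bK$ with a genuine constant. No pointwise control of $\sigma'$ is needed.

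\emph{(ii)} Your claim that $\bH_3$ is ``analogous with $g$ replaced by $1$'' is not right: $H_3$ has a \emph{single} $\zz$ and the factor $[\sigma'(\langle\zz,\ww\rangle)]^2$, so it does not reduce to the two-copy structure of $H_1$. The paper instead defines the matrix $\QQ(\ww):=d^{-1}\E_\zz[(\sigma'(\langle\zz,\ww\rangle))^2\zz\zz^\top]$, bounds $\|\QQ(\ww)\|_\op\le C/d$ by the same H\"older-with-fourth-moments argument, and concludes via $\Tr(\bA\bB)\le\|\bB\|_\op\Tr(\bA)$ for p.s.d.\ matrices applied to the rank-one $\bA=\big(\sum_i u_i\sigma'(\langle\xx_i,\ww\rangle)\xx_i\big)\big(\sum_i u_i\sigma'(\langle\xx_i,\ww\rangle)\xx_i\big)^\top$.

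Your polylog-lossy version would still be enough for the downstream Theorem~\ref{thm:reducekrr}, whose bound already carries $(\log(nNd))^{C'}$; but to prove the lemma exactly as stated you need the H\"older argument in place of truncation.
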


In Lemma~\ref{lem:keybnd} and~\ref{lem:keybnd2}, we set $\hh = \ff$ and $g = f$. By the conclusions of these two lemmas, we will obtain the bound on the bias term in Theorem~\ref{thm:reducekrr}. We defer the proofs of the two lemmas to the appendix.

\section{Conclusions}\label{sec:discuss}

We  studied the neural tangent model associated to two-layer neural networks, 
focusing on its generalization properties in a regime in which the sample size $n$,
the dimension $d$, and the number of neurons $N$ are all large and polynomially related
($c_0d\le n\le d^{1/c_0}$ for some small constant $c_0>0$),
while in the overparametrized regime $Nd\gg n$.
We assumed an isotropic model for the data $(\xx_i)_{i\le n}$,
and noisy label $(y_i)_{i\le n}$ with a general target $y_i=f_*(\xx_i)+\eps_i$
(with $\eps_i$ independent noise). 

As a fundamental technical step, we obtained a  characterization of the 
eigenstructure of the empirical NT kernel $\bK_N$
in the overparametrized regime.
In particular for non-polynomial activations, we showed that, 
the minimum eigenvalue of $\bK_N$  is bounded away from zero, as soon as  $Nd/(\log Nd)^C\ge n$. 
Further, for 
$d^{\ell}(\log d)^{C_0}\le n\le d^{\ell+1}/(\log d)^{C_0}$, $\ell\in\naturals$, 
$\lambda_{\min}(\bK_N)$  concentrates around a value that is explicitly given in terms of
the Hermite decomposition of the activation. 

An immediate corollary is that, as soon as $Nd\ge n(\log d)^C$,
the neural network can exactly interpolate arbitrary labels.

Our most important result is a  characterization of the test error of minimum
$\ell_2$-norm NT regression. We prove that, for $Nd/(\log Nd)^C\ge n$
the test error is close to the one of the $N=\infty$
limit (i.e.\ of kernel ridgeless regression with respect to the expected kernel).
The latter is in turn well approximated by polynomial regression 
with respect degree-$\ell$ polynomials as long as
$d^{\ell}(\log d)^{C_0}\le n\le d^{\ell+1}/(\log d)^{C_0}$.

Our analysis offers several insight to statistical practice:
\begin{enumerate}
\item NT models provide an effective randomized approximation to kernel methods.
Indeed their statistical properties are analogous to the one of more standard random features 
methods \cite{rahimi2008random}, when we compare them at fixed number of parameters
\cite{mei2021generalization}. However,
the computational complexity at prediction time of NT models is smaller than the one of random feature
models if we keep the same number of parameters.
\item The additional error due to the finite number of hidden units is roughly
bounded by $\sqrt{n/(Nd)}$.
\item In high dimensions, the nonlinearity in the activation function produces an effective 
`self-induced' regularization (diagonal term in the kernel) and as a consequence interpolation
can be optimal.
\end{enumerate}

Finally, our characterization of generalization error applies
to two-layer neural networks (under specific initializations) as long
as the neural tangent theory provides a good approximation for their behavior.
In Section \ref{sec:ConnectionOptimization} we discussed sufficient conditions for this to be the case,
but we do not expect these conditions to be sharp.

\section*{Acknowledgements}
We thank Behrooz Ghorbani and Song Mei for helpful discussion.
This work was supported by NSF through award DMS-2031883 and from the Simons Foundation
through Award 814639 for the Collaboration on the Theoretical Foundations of Deep Learning.
We also acknowledge NSF grants CCF-1714305,
IIS-1741162, and the ONR grant N00014-18-1-2729.

\newpage
\appendix

\section{Additional numerical experiment}

The setup of the experiment is similar to the second experiment in Section~\ref{sec:numerical}: we fit a linear model $y_i = \langle \xx_i, \bbeta_* \rangle + \veps_i$ where covariates satisfy $\xx_i\sim\Unif(\S^{d-1}(\sqrt{d}))$ and the noises satisfy
$\veps_i\sim\normal(0,\sigma^2_{\veps})$, $\sigma_{\veps}=0.5$. We fix $N = 800$ and $d=500$, and vary the sample size $n \in \{100,200,\ldots,900,1000,1500,\ldots,4000\}$.

 Figure \ref{fig:sim3} illustrates the relation between the NT,  2-layer NN, and polynomial ridge regression. We train a two-layer neural network with ReLU activations using the initialization strategy of \cite{chizat2019lazy}. Namely, we draw $(a_k^0)_{k \le N} \sim_{\iid} \cN(0,1)$, $(\ww_k^0)_{k \le N} \sim_{\iid} \cN(\bzero, \bI_d)$ and set $\tilde a_k^0 = - a_k^0$, $\tilde \ww_k^0 = \ww_k^0$. We use Adam optimizer to train the two-layer neural network $f(\xx) =\sum_{k=1}^Na_k \sigma(\<\ww_k,\xx\>) + \sum_{\ell=1}^N\tilde a_k\sigma(\<\tilde \ww_k,\xx\>) $ with parameters $(a_k, \tilde a_k, \ww_\ell, \tilde \ww_\ell)_{k \le N}$ initialized by $(a_k^0, \tilde a_k^0, \ww_k^0, \tilde \ww_k^0)_{k \le N}$. This guarantees that the output is zero at initialization and the parameter scale is much larger than the default, so that we are in the lazy training regime. We compare its generalization error with the one of PRR, and with the theoretical prediction of Remark~\ref{rmk:explicit}. The agreement is excellent.

\begin{figure}[H]
\centering
\includegraphics[scale=0.8]{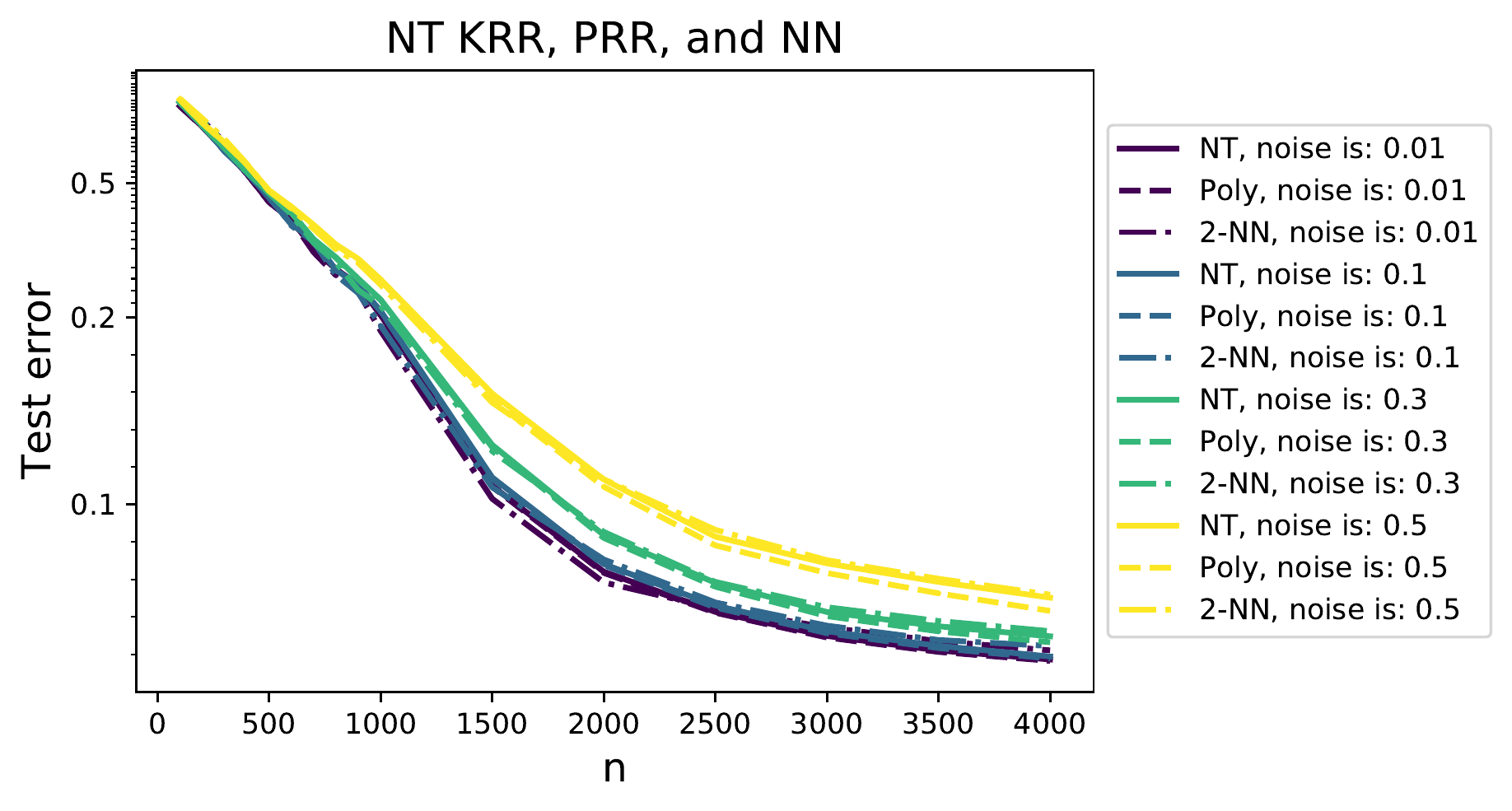}
\caption{Test/generalization errors of NT, Poly, and two-layer neural network (2-NN). We fix $N = 800, d=500$ and vary $n$. For each noise level $\sigma_\veps \in \{0.01, 0.1, 0.3, 0.5\}$ (which corresponds to one color), we plot three curves that represent the test errors for NT, Poly, and 2-NN. All curves of the same color behave similarly.}
\label{fig:sim3}

\end{figure}

\section{Generalization error: Proof of Theorem \ref{thm:gen}}

This section finishes the proof of  our main result,  Theorem \ref{thm:gen}, 
which was initiated in Section \ref{sec:ProofGenMain}. We use definitions and notations 
introduced in that section.

\subsection{Proof of Theorem~\ref{thm:reducekrr}: Variance term and cross term}

By homogeneity we can and will assume, without loss of generality, $\|f\|_{L^2}=
\sigma_{\veps}=1$.

We handle $E_{\var}^N$ and $E_{\cross}^N$ in a way similar to that of $E_{\bias}^N$ (mostly following the same proof with $\hh$ set to $\bveps$ instead of $\ff$). First we observe that 
\begin{align*}
&~~~ \bveps^\top (\lambda \bI_n + \bK_N)^{-1} \bK_N^{(2)} (\lambda \bI_n + \bK_N)^{-1} \bveps - \bveps^\top (\lambda \bI_n + \bK)^{-1} \bK^{(2)} (\lambda \bI_n + \bK)^{-1} \bveps \\
&= \big[ \bveps^\top (\lambda \bI_n + \bK_N)^{-1} \bK_N - \bveps^\top (\lambda \bI_n + \bK)^{-1} \bK_N \big] \cdot  \bK_N^{-1} \bK_N^{(2)}\bK_N^{-1} \cdot \bK_N (\lambda \bI_n + \bK_N)^{-1} \bveps \\
& + \bveps^\top (\lambda \bI_n + \bK)^{-1}\bK_N \cdot \bK_N^{-1}   \bK_N^{(2)} \bK_N^{-1} \cdot \big[ \bK_N(\lambda \bI_n + \bK_N)^{-1} \bveps  - \bK_N (\lambda \bI_n + \bK)^{-1} \bveps  \big] \\
& + \bveps^\top (\lambda \bI_n + \bK)^{-1} \big[ \bK_N^{(2)}  - \bK^{(2)} \big] (\lambda \bI_n + \bK)^{-1} \bveps \\
&=: \delta J_1 + \delta J_2 + \delta  J_3.
\end{align*}
Note that $n^{-1}\| \bveps \|^2 = (1+o_{d,\P}(1)) \sigma_\veps^2$ by the law of large numbers, and
therefore $\| \bveps \| \le C_1 \sqrt{n}$ with high probability.
Applying Lemma~\ref{lem:keybnd} with $\hh$ set to $\bveps$, we obtain the following holds with high probability. 
\begin{align}
& \left\| \bK (\lambda \bI_n + \bK)^{-1} \bveps \right\| \le C\sqrt{n}, \label{bnd:key21}\\
&  \left\| \bK_N (\lambda \bI_n + \bK_N)^{-1} \bveps \right\| \le C\sqrt{n}, \label{bnd:key22}\\
& \left\| \bK_N (\lambda \bI_n + \bK_N)^{-1} \bveps - \bK_N (\lambda \bI_n + \bK)^{-1} \bveps \right\| \le C \eta \sqrt{n}. \label{bnd:key23}
\end{align}
This implies that $|\delta J_1| \le C\eta$ and $|\delta J_2| \le C\eta$ w.h.p. Moreover, denoting
\begin{align*}
\vv = (\lambda \bI_n + \bK)^{-1} \bveps \in \R^n, \qquad \tilde h(\xx) = \bK(\cdot, \xx)^\top (\lambda \bI_n + \bK)^{-1} \bveps,
\end{align*}
we express $\delta J_{3}$ as
\begin{align*}
 \delta J_{3} &= 2 \vv^\top \E_\xx \big[ (\bK_N(\cdot, \xx) - \bK(\cdot, \xx)) \tilde h(\xx) \big] + \vv^\top \E_\xx \Big[(\bK_N(\cdot, \xx) - \bK(\cdot, \xx)) (\bK_N(\cdot, \xx) - \bK(\cdot, \xx))^\top \Big]  \vv\\
& =: 2\delta J_{31} + \delta J_{32}.
\end{align*}
Applying Lemma~\ref{lem:keybnd2} in which we set $\hh = \bveps$, we obtain w.h.p.,
\begin{align*}
& \E_\ww [ (\delta J_{31})^2 ]  \le \frac{1}{N} \bveps^\top (\lambda \bI_n + \bK)^{-1} \bH_2 (\lambda \bI_n + \bK)^{-1} \bveps \le \frac{Cn}{Nd}  \\
& \E_\ww [ \delta J_{32} ] \le \frac{1}{N} \bveps^\top (\lambda \bI_n + \bK)^{-1} \bH_3 (\lambda \bI_n + \bK)^{-1} \bveps \le  \frac{Cn}{Nd}.
\end{align*}
This implies that $| \delta J_{3} | \le C\eta$ w.h.p.~as well. This proves the bound on the variance term in Theorem~\ref{thm:reducekrr}. 

Now for the cross term, we observe that 
\begin{align*}
E_{\cross}^N - E_{\cross} & = \Big[\bveps^\top (\lambda \bI_n + \bK_N)^{-1} \E_{\xx} \big[ \bK_N(\cdot, \xx) f(\xx) \big] - \bveps^\top (\lambda \bI_n + \bK)^{-1} \E_{\xx} \big[ \bK(\cdot, \xx) f(\xx) \big]\Big] \\
&- \Big[\bveps^\top (\lambda \bI_n + \bK_N)^{-1} \bK_N^{(2)} (\lambda \bI_n + \bK_N)^{-1} \ff - \bveps^\top (\lambda \bI_n + \bK)^{-1} \bK^{(2)} (\lambda \bI_n + \bK)^{-1} \ff\Big] \\
&=: \tilde{\delta J}_1 - \tilde{\delta J}_2.
\end{align*}
For the first term $\tilde{\delta J}_1$, we further decompose:
\begin{align*}
\tilde{\delta J}_1  &= \big[ \bveps^\top (\lambda \bI_n + \bK_N)^{-1} \bK_N - \bveps^\top (\lambda \bI_n + \bK)^{-1} \bK_N \big] \cdot \bK_N^{-1} \E_\xx \big[ \bK_N(\cdot, \xx) f(\xx) \big], \\
 &+ \bveps^\top (\lambda \bI_n + \bK)^{-1}  \E_\xx \big[ (\bK_N(\cdot, \xx) - \bK(\cdot, \xx)) f(\xx) \big] =: \tilde{\delta J}_{11} + \tilde{\delta J}_{12}.
\end{align*}
From Eqs.~\eqref{bnd:key1} and \eqref{bnd:key5} in Lemma \ref{lem:keybnd} (setting $g = f$ and $\hh = \bveps$), we have $|\tilde{\delta J}_{11}| \le  C\eta$ w.h.p. By Lemma~\ref{lem:keybnd2} Eq.~\eqref{ineq:delta12} (setting $\hh$ to $\bveps$), w.h.p.,
\begin{equation*}
\E_\ww [ (\tilde{\delta J}_{12})^2 ] \le \frac{1}{N} \bveps^\top (\lambda \bI_n + \bK)^{-1} \bH_1 (\lambda \bI_n + \bK)^{-1} \bveps \le \frac{Cn}{Nd},
\end{equation*}
so with high probability $|\tilde{\delta J}_{12}| \le C\eta$. Thus we proved $|\tilde{\delta J}_{1}| \le C\eta$.

Finally, we further decompose $\tilde{\delta J}_2$ as follows.
\begin{align*}
&\tilde{\delta J}_2 = \big[ \bveps^\top (\lambda \bI_n + \bK_N)^{-1} \bK_N - \bveps^\top (\lambda \bI_n + \bK)^{-1} \bK_N \big] \cdot  \bK_N^{-1} \bK_N^{(2)}\bK_N^{-1} \cdot \bK_N (\lambda \bI_n + \bK_N)^{-1} \ff, \\
& + \bveps^\top (\lambda \bI_n + \bK)^{-1}\bK_N \cdot \bK_N^{-1}   \bK_N^{(2)} \bK_N^{-1} \cdot \big[ \bK_N(\lambda \bI_n + \bK_N)^{-1} \ff - \bK_N (\lambda \bI_n + \bK)^{-1} \ff  \big], \\
& + \bveps^\top (\lambda \bI_n + \bK)^{-1} \big[ \bK_N^{(2)}  - \bK^{(2)} \big] (\lambda \bI_n + \bK)^{-1} \ff \\
&=: \tilde{\delta J}_{21} + \tilde{\delta J}_{22} + \tilde{\delta J}_{23}.
\end{align*}
By Lemma~\ref{lem:keybnd} (in which we set $\hh$ to $\ff$) and Eqs.~\eqref{bnd:key21}--\eqref{bnd:key23}, we have $|\tilde{\delta J}_{21}| \le C\eta$ and $|\tilde{\delta J}_{22}| \le C\eta$ w.h.p.~Moreover, denoting
\begin{align*}
&\vv_1 = (\lambda \bI_n + \bK)^{-1} \ff, \qquad \tilde h_1(\xx) = \bK(\cdot, \xx)^\top (\lambda \bI_n + \bK)^{-1} \ff \\
&\vv_2 =  (\lambda \bI_n + \bK)^{-1} \bveps , \qquad \tilde h_2(\xx) = \bK(\cdot, \xx)^\top (\lambda \bI_n + \bK)^{-1} \bveps,
\end{align*}
we express $\tilde{\delta J}_{23}$ as
\begin{align*}
 \tilde{\delta J}_{23} &=  \vv_2^\top \E_\xx \big[ (\bK_N(\cdot, \xx) - \bK(\cdot, \xx)) \tilde h_1(\xx) \big] + \vv_1^\top \E_\xx \big[ (\bK_N(\cdot, \xx) - \bK(\cdot, \xx)) \tilde h_2(\xx) \big] \\
&+ \vv_2^\top \E_\xx \Big[(\bK_N(\cdot, \xx) - \bK(\cdot, \xx)) (\bK_N(\cdot, \xx) - \bK(\cdot, \xx))^\top \Big]  \vv_1\\
& =:  \tilde{\delta J}_{231} + \tilde{\delta J}_{231}' + \tilde{\delta J}_{232}.
\end{align*}
In Lemma~\ref{lem:keybnd2} Eq.~\eqref{ineq:delta12}, we set $g=\tilde h_1, \hh = \bveps$ to obtain $\E_\ww\big[ (\tilde{\delta J}_{231} )^2\big] \le Cn/(Nd)$; and we set $g=\tilde h_2, \hh = \ff$ to obtain $\E_\ww\big[ (\tilde{\delta J}_{231}' )^2\big] \le Cn/(Nd)$. For $\tilde{\delta J}_{232}$, we use
\begin{align*}
\tilde{\delta J}_{232} &= (\vv_1+\vv_2)^\top  \E_\xx \Big[(\bK_N(\cdot, \xx) - \bK(\cdot, \xx)) (\bK_N(\cdot, \xx) - \bK(\cdot, \xx))^\top \Big](\vv_1+\vv_2) \\
&- \vv_1^\top  \E_\xx \Big[(\bK_N(\cdot, \xx) - \bK(\cdot, \xx)) (\bK_N(\cdot, \xx) - \bK(\cdot, \xx))^\top \Big] \vv_1 \\
&- \vv_2^\top  \E_\xx \Big[(\bK_N(\cdot, \xx) - \bK(\cdot, \xx)) (\bK_N(\cdot, \xx) - \bK(\cdot, \xx))^\top \Big] \vv_2 \\
&\le (\vv_1+\vv_2)^\top  \E_\xx \Big[(\bK_N(\cdot, \xx) - \bK(\cdot, \xx)) (\bK_N(\cdot, \xx) - \bK(\cdot, \xx))^\top \Big](\vv_1+\vv_2).
\end{align*}
Note that $\| \ff + \bveps \| \le C \sqrt{n}$ w.h.p., so applying Lemma~\ref{lem:keybnd2} Eq.~\eqref{ineq:delta232} with $\hh = \ff + \bveps$ leads to $\E_\ww \big[ \tilde{\delta J}_{232} \big] \le Cn/(Nd)$ w.h.p. This proves $|\tilde{\delta J}_{23}| \le Cn/(Nd)$ w.h.p.~and therefore $|\tilde{\delta J}_2| \le C \eta$ w.h.p. Hence, we have proved the bound on the cross term in Theorem~\ref{thm:reducekrr}.

\subsection{Proof of Lemma~\ref{lem:keybnd}}\label{sec:proofkeybnd}

Throughout this subsection, given a sequence of random variables $\xi_d$,
we write $\xi_d = \breve{o}_{d,\P}(1)$ if and only if for any $\veps > 0$ and 
$\beta > 0$, we have $\lim_{d \to 0} d^\beta \P(|\xi_d| > \veps) = 0$. We also assume that 
\begin{align*}
\frac{d^{\ell+1}}{(\log n)^C} \ge n \ge  d^\ell  (\log n)^C, \qquad n \le \frac{Nd}{(\log(Nd))^{C}}
\end{align*}
for a sufficiently large $C>0$ such that we can use the following inequalities  by 
Lemma~\ref{lem:Kdecomp} and Theorem~\ref{thm:invert2}:
\begin{align}\label{eq:quickref}
\big\| \bDelta \big\|_\op = \breve{o}_{d,\P}(1), \qquad \big\| n^{-1} \bPsi_{\le \ell}^\top \bPsi_{\le \ell} - \bI_n \big\|_\op = \breve{o}_{d,\P}(1), \qquad \big\| \bK^{-1/2} (\bK_N - \bK) \bK^{-1/2} \big\|_\op = \breve{o}_{d,\P}(1).
\end{align}
In Section \ref{sec:Ell1} we will refine our analysis to eliminate logarithmic factors 
for $\ell=1$.

We start with the easiest bounds in Lemma~\ref{lem:keybnd}. 

\begin{proof}[{\bf Proof of Lemma~\ref{lem:keybnd}, Eqs.~\eqref{bnd:key3}--\eqref{bnd:key5}}]

The first two bounds follow from the observation
\begin{align*}
& \bK (\lambda \bI_n + \bK)^{-1} \hh = \hh - \lambda (\lambda \bI_n + \bK)^{-1} \hh, \\
&\bK_N (\lambda \bI_n + \bK_N)^{-1} \hh = \hh - \lambda (\lambda \bI_n + \bK_N)^{-1} \hh
\end{align*}
From the invertibility of $\bK$ and $\bK_N$, with high probability,
\begin{align*}
&\left\| \lambda (\lambda \bI_n + \bK)^{-1} \right\|_\op \le \frac{\lambda }{c+\lambda}, \qquad \left\| \lambda (\lambda \bI_n + \bK_N)^{-1} \right\|_\op \le \frac{\lambda }{c+\lambda}.
\end{align*}
We conclude that Eq.~\eqref{bnd:key3} and \eqref{bnd:key4} hold with high probability. 

Next, we will show that w.h.p.,
\begin{align}
& \left\| \bK_N (\lambda \bI_n + \bK_N)^{-1} - \bK (\lambda \bI_n + \bK)^{-1}  \right\|_\op \le C\sqrt{\eta}, \label{ineq:easy1} \\
&\left\| (\bK_N - \bK)(\lambda \bI_n + \bK)^{-1} \hh \right\| \le C \sqrt{\eta n}. \label{ineq:easy2}
\end{align}
Once these are proved, then, w.h.p.,
\begin{align*}
 \left\| \bK_N (\lambda \bI_n + \bK_N)^{-1} \hh - \bK_N (\lambda \bI_n + \bK)^{-1} \hh \right\|  &\le \left\| \bK_N (\lambda \bI_n + \bK_N)^{-1} - \bK (\lambda \bI_n + \bK)^{-1}  \right\|_\op \cdot \| \hh \| \\
+ \left\| (\bK_N - \bK)(\lambda \bI_n + \bK)^{-1} \hh \right\| & \le C \sqrt{\eta n}.
\end{align*}
In order to show \eqref{ineq:easy1}, we observe that
\begin{align*}
&~~~\bK_N (\lambda \bI_n + \bK_N)^{-1} - \bK (\lambda \bI_n + \bK)^{-1} \\
&= -\lambda (\lambda \bI_n + \bK_N)^{-1} + \lambda (\lambda \bI_n + \bK)^{-1} \\
&= \lambda (\lambda \bI_n + \bK_N)^{-1} (\bK_N - \bK) (\lambda \bI_n + \bK)^{-1} \\
&= \lambda (\lambda \bI_n + \bK_N)^{-1}  \bK_N^{1/2} \bK_N^{-1/2} \bK^{1/2} \bK^{-1/2}(\bK_N - \bK) \bK^{-1/2} \bK^{1/2}(\lambda \bI_n + \bK)^{-1} .
\end{align*}
From Theorem~\ref{thm:invert2}, we know that $\| \bK^{-1/2} (\bK_N - \bK) \bK^{-1/2} \|_\op \le \sqrt{\eta}$ w.h.p. It also implies 
\begin{equation*}
\|\bK_N^{-1/2} \bK^{1/2} \|_\op^2 = \| \bK^{1/2} \bK_N^{-1} \bK^{1/2} \big\|_\op = \| (\bK^{-1/2} \bK_N \bK^{-1/2})^{-1} \big\|_\op \le (1-\sqrt{\eta})^{-1}.
\end{equation*}
Also, we notice that
\begin{align*}
&\left\| \lambda^{1/2} \bK^{1/2} (\lambda \bI_n + \bK)^{-1} \right\|_\op \le \max_i\frac{\lambda^{1/2} [\lambda_i(\bK)]^{1/2}}{\lambda + \lambda_i(\bK)} \le \frac{1}{2}, \\
& \left\| \lambda^{1/2} (\lambda \bI_n + \bK_N)^{-1} \bK_N^{1/2} \right\|_\op \le \max_i\frac{\lambda^{1/2}[\lambda_i(\bK_N)]^{1/2}}{\lambda + \lambda_i(\bK_N)} \le \frac{1}{2}, 
\end{align*}
These bounds then lead to the desired bound in \eqref{ineq:easy1}. In order to show \eqref{ineq:easy2}, we recall the notation $\vv = (\lambda \bI_n + \bK)^{-1} \hh$. We also write $\bK_N = N^{-1} \sum_{k \le N} \bK^{(k)}$ where $(\bK^{(k)})_{k \le N}$ are independent conditional on $(\xx_i)_{i\le n}$ and $\E_\ww [\bK^{(k)} ] = \bK$. We calculate
\begin{align}
\E_\ww \big[ \big\| (\bK_N - \bK) \vv \big\|^2 \big] &= \E_\ww \langle \vv, (\bK_N - \bK)^2 \vv \rangle = \frac{1}{N^2} \sum_{k, k' \le N}  \E_\ww \langle \vv, (\bK^{(k)} - \bK)(\bK^{(k')} - \bK) \vv \rangle \\
&= \frac{1}{N^2} \sum_{k \le N} \E_\ww \langle \vv, (\bK^{(k)} - \bK)^2 \vv \rangle \\
&\stackrel{(i)}{\le} \frac{1}{N^2} \sum_{k \le N} \E_\ww \langle \vv, (\bK^{(k)})^2 \vv \rangle
\label{eq:SquareDecoupling}
\end{align}
where \textit{(i)} is due to 
\begin{equation*}
\E_\ww \big[ (\bK^{(k)} - \bK)^2 \big] = \E_\ww \big[ (\bK^{(k)})^2 - \bK^{(k)} \bK - \bK \bK^{(k)} + \bK^2\big] = \E_\ww \big[ (\bK^{(k)})^2 \big] - \bK^2 \preceq \E_\ww \big[ (\bK^{(k)})^2 \big].
\end{equation*}
Recall the notations in the proof of Theorem~\ref{thm:invert2},
cf. Eqs.~\eqref{eq:TruncDef1}, \eqref{eq:TruncDef2}.

 We denote the indicator variable 
$\xi = \bone\{ |\langle \xx_i, \ww_k \rangle| \le C_0 \log (nNd), \forall~i,k \}$. 
If $C_0$ is sufficiently large, we have 
$\E_{\ww}\{\| \bK^{(k)} (1-\xi) \|_\op^2\} \le C/(nNd)^2$.
 Therefore
\begin{align*}
&\bK^{(k)} = \bK^{(k)} \xi + \bK^{(k)} (1-\xi) =d^{-1} \bD_k \XX \XX^\top \bD_k \xi + \bK^{(k)} (1-\xi), \qquad \text{and thus} \\
&\vv^\top (\bK^{(k)})^2 \vv = \vv^\top (\bK^{(k)})^2 \vv \xi + \vv^\top (\bK^{(k)} )^2\vv (1-\xi) \le \vv^\top (d^{-1} \bD_k \XX \XX^\top \bD_k)^2 \vv + \frac{C \| \vv \|^2}{n^2N^2d^2}.
\end{align*}
Now we work on the event 
 $\| \XX \|_\op \le 2(\sqrt{n} + \sqrt{d})$, which happens with high probability.
Continuing from Eq.~\eqref{eq:SquareDecoupling}, we get:
\begin{align*}
\E_\ww \big[ \big\| (\bK_N - \bK) \vv \big\|^2 \big]  & \le \frac{1}{N^2 d^2}  \sum_{k \le N} \E_\ww \Big[ \vv^\top  \bD_k \XX \XX^\top \bD_k^2 \XX \XX^\top \bD_k \vv \Big]  +  \frac{C \| \vv \|^2}{n^2N^2d^2} \\
&\le \frac{n (\log (nNd))^C}{N^2 d^2}  \sum_{k \le N}  \E_\ww \Big[ \vv^\top \bD_k \XX \XX^\top \bD_k \vv \Big] +  \frac{C \| \vv \|^2}{n^2N^2d^2} \\
& = \frac{n (\log (nNd))^C}{N d} \vv^\top \bK \vv+  \frac{C \| \vv \|^2}{n^2N^2d^2}.
\end{align*} 
Finally,  since $\bK \preceq \lambda \bI_n + \bK$, w.h.p.,
\begin{align*}
& \vv^\top \bK \vv \le \hh^\top (\lambda \bI_n + \bK)^{-1} (\lambda \bI_n + \bK) (\lambda \bI_n + \bK)^{-1} \hh \le \frac{1}{c + \lambda} \| \hh \|^2 \le  Cn, \\
& \| \vv \|^2  \le \frac{1}{c+\lambda} \| \hh \|^2 \le Cn.
\end{align*}
We have shown that 
\begin{equation*}
\E_\ww \big[ \big\| (\bK_N - \bK) \vv \big\|^2 \big]  \le \frac{n^2 (\log (nNd))^C}{Nd}.
\end{equation*}
By Markov's inequality, we then obtain $\| (\bK_N - \bK )\vv \| \le \sqrt{\eta n}$ w.h.p.,
as claimed in Eq.~\eqref{ineq:easy2}. 
\end{proof}

Before proving the more difficult inequalities 
\eqref{bnd:key1}, \eqref{bnd:key2}, we establish some useful properties of the 
eigenstructure of $\bK_N$. For convenience, we write 
$\bD_1 = \bD_2 \cdot (1 + \breve{o}_{d,\P}(1))$ if $\bD_1, \bD_2$ are diagonal matrices and 
$\max_i \big| (\bD_1)_{ii} / (\bD_2)_{ii} - 1 \big| = \breve{o}_{d,\P}(1)$; we also write 
$\bD_1 = \bD_2 + \breve{o}_{d,\P}(1)$ if $\bD_1, \bD_2$ are diagonal matrices and 
$\max_i \big| (\bD_1)_{ii} - (\bD_2)_{ii} \big| = \breve{o}_{d,\P}(1)$. Denote 
$D= \sum_{k \le \ell} B(d,k)$.

\begin{lem}[Kernel eigenvalue structure]\label{lem:eigval}
The eigen-decomposition of $\bK - \gamma_{>\ell} \bI_n$ and $\bK_N - \gamma_{>\ell} \bI_n $ 
can be written in the following form:
\begin{align}
&\bK - \gamma_{>\ell} \bI_n = \bU \bD \bU^\top + \bDelta^{(\res)}, \label{eq:Keigdecomp}\\
&\bK_N - \gamma_{>\ell} \bI_n = \bU_N \bD_N \bU_N^\top + \bDelta^{(\res)}_N,
\label{eq:KNeigdecomp}
\end{align}
where $\bD, \bD_N\in\reals^{D\times D}$ are diagonal matrices that contain $D$ eigenvalues of $\bK, \bK_N$ respectively, 
columns of $\bU, \bU_N \in \R^{n \times D}$ are the corresponding eigenvectors 
and $\bDelta^{(\res)}, \bDelta_N^{(\res)}$ correspond to the other eigenvectors 
(in particular $\bDelta^{(\res)}\bU=\bDelta_N^{(\res)}\bU_N=\bzero$).

The eigenvalues in $\bD, \bD_N$ have the following structure.
\begin{align}
&\bD = \diag \Big(\gamma_0 n, \underbrace{\frac{\gamma_1 n}{d}, \ldots, \frac{\gamma_1 n}{d}}_{d}, \ldots, \underbrace{\frac{\gamma_\ell (\ell !)n}{d^\ell}, \ldots, \frac{\gamma_\ell (\ell !)n}{d^\ell}}_{B(d,\ell)}\Big) \cdot (1 + \breve{o}_{d,\P}(1)) + \breve{o}_{d,\P}(1), \label{eq:Keig}\\
&\bD_N = \diag \Big(\gamma_0 n, \underbrace{\frac{\gamma_1 n}{d}, \ldots, \frac{\gamma_1 n}{d}}_{d}, \ldots, \underbrace{\frac{\gamma_\ell (\ell !)n}{d^\ell}, \ldots, \frac{\gamma_\ell (\ell !)n}{d^\ell}}_{B(d,\ell)}\Big) \cdot (1 + \breve{o}_{d,\P}(1)) + \breve{o}_{d,\P}(1). \label{eq:KNeig}
\end{align}
Moreover, the remaining components satisfy $\| \bDelta^{(\res)} \|_\op = \breve{o}_{d,\P}(1)$ and $\| \bDelta^{(\res)}_N \|_\op = \breve{o}_{d,\P}(1)$. \end{lem}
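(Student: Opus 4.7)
The plan is to reduce both eigen-decompositions to the rank-$D$ signal matrix $\bPsi_{\le \ell}\bLambda_{\le \ell}^2\bPsi_{\le \ell}^\top$, using Lemma~\ref{lem:Kdecomp} for the additive decomposition of $\bK$ and the relative perturbation bound of Theorem~\ref{thm:invert2} to pass from $\bK$ to $\bK_N$.

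First I would analyze the spectrum of the signal matrix. By Lemma~\ref{lem:Kdecomp}, $\bK - \gamma_{>\ell}\bI_n = \bPsi_{\le \ell}\bLambda_{\le \ell}^2\bPsi_{\le \ell}^\top + \bDelta$ with $\|\bDelta\|_\op = \breve{o}_{d,\P}(1)$, and the first term has rank at most $D$. Its nonzero eigenvalues coincide with those of the $D\times D$ matrix $\bLambda_{\le \ell}\bPsi_{\le \ell}^\top\bPsi_{\le \ell}\bLambda_{\le \ell}$. Eq.~\eqref{eq:psi-conctr} gives $\bPsi_{\le \ell}^\top\bPsi_{\le \ell} = n(\bI_D + \bE_\Psi)$ with $\|\bE_\Psi\|_\op = \breve{o}_{d,\P}(1)$, and $\bLambda_{\le \ell}(\bI_D+\bE_\Psi)\bLambda_{\le \ell}$ is isospectral to $(\bI_D+\bE_\Psi)^{1/2}\bLambda_{\le \ell}^2(\bI_D+\bE_\Psi)^{1/2}$ (both equal $\bR^\top\bR$ and $\bR\bR^\top$ for $\bR = (\bI_D+\bE_\Psi)^{1/2}\bLambda_{\le \ell}$). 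Ostrowski's theorem then gives
$$\lambda_i(\bLambda_{\le \ell}\bPsi_{\le \ell}^\top\bPsi_{\le \ell}\bLambda_{\le \ell}) = n\,\lambda_i(\bLambda_{\le \ell}^2)\cdot(1+\breve{o}_{d,\P}(1)),$$
so the nonzero eigenvalues of $\bPsi_{\le \ell}\bLambda_{\le \ell}^2\bPsi_{\le \ell}^\top$ reproduce the values in \eqref{eq:Keig} after substituting $B(d,k)=(1+o_d(1))d^k/k!$ and the entries of $\bLambda_{\le \ell}^2$. Weyl's inequality on $\bDelta$ then shows the bottom $n-D$ eigenvalues of $\bK-\gamma_{>\ell}\bI_n$ lie in $[-\breve{o}_{d,\P}(1),\breve{o}_{d,\P}(1)]$, while the top $D$ are perturbed by at most $\|\bDelta\|_\op$, producing \eqref{eq:Keigdecomp} with $\|\bDelta^{(\res)}\|_\op = \breve{o}_{d,\P}(1)$; defining $\bU\bD\bU^\top$ as the top-$D$ part of the spectral decomposition gives the claimed $\bD$.

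Next I would transfer the result to $\bK_N$. Theorem~\ref{thm:invert2}, under the hypothesis $n\le Nd/(\log Nd)^C$, gives $\bK^{-1/2}\bK_N\bK^{-1/2}=\bI_n+\bE'$ with $\|\bE'\|_\op = \breve{o}_{d,\P}(1)$. Writing $\bK_N = \bK^{1/2}(\bI_n+\bE')\bK^{1/2}$, which is isospectral to $(\bI_n+\bE')^{1/2}\bK(\bI_n+\bE')^{1/2}$, Ostrowski's theorem yields $\lambda_i(\bK_N) = \lambda_i(\bK)(1+\breve{o}_{d,\P}(1))$ for every $i$. For $i\le D$, the eigenvalues $\lambda_i(\bK)-\gamma_{>\ell}$ are at least of order $n\gamma_\ell(\ell!)/d^\ell$, which exceeds $\gamma_{>\ell}=O(1)$ by a factor $(\log d)^{C_0}$ thanks to Assumption~\ref{ass:Asymp}; hence the multiplicative correction on $\lambda_i(\bK)$ transfers to $\lambda_i(\bK_N)-\gamma_{>\ell}=(\lambda_i(\bK)-\gamma_{>\ell})(1+\breve{o}_{d,\P}(1))$. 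For $i>D$, both $\lambda_i(\bK)$ and $\lambda_i(\bK_N)$ equal $\gamma_{>\ell}+\breve{o}_{d,\P}(1)$, yielding the residual $\bDelta_N^{(\res)}$ in \eqref{eq:KNeigdecomp}; the top-$D$ eigen-decomposition of $\bK_N-\gamma_{>\ell}\bI_n$ produces $\bU_N$ and $\bD_N$.

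The main obstacle is the relative versus absolute scale of perturbations. The natural absolute bound $\|\bK_N-\bK\|_\op$ implied by Theorem~\ref{thm:invert2} is of order $\breve{o}_{d,\P}(n)$, which vastly exceeds the smallest nontrivial eigenvalue, of order $n/d^\ell$, of $\bK-\gamma_{>\ell}\bI_n$, rendering Weyl useless for transferring structure from $\bK$ to $\bK_N$. Working instead with the \emph{relative} bound $\|\bK^{-1/2}\bK_N\bK^{-1/2}-\bI_n\|_\op = \breve{o}_{d,\P}(1)$ together with Ostrowski's theorem sidesteps this issue. The same consideration dictates use of the $\bR^\top\bR$--$\bR\bR^\top$ isospectrality (followed by Ostrowski) rather than Weyl in the first step, since $\|\bE_\Psi\|_\op$ scaled by $n$ is likewise too large for a direct Weyl argument on the signal matrix.
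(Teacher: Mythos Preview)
Your proposal is correct and follows essentially the same route as the paper: decompose $\bK-\gamma_{>\ell}\bI_n$ via Lemma~\ref{lem:Kdecomp}, pin down the spectrum of the rank-$D$ signal $\bPsi_{\le\ell}\bLambda_{\le\ell}^2\bPsi_{\le\ell}^\top$ by a multiplicative perturbation argument based on $n^{-1}\bPsi_{\le\ell}^\top\bPsi_{\le\ell}\approx\bI_D$, then apply Weyl for the additive $\bDelta$, and finally transfer to $\bK_N$ using the relative bound from Theorem~\ref{thm:invert2}. The only cosmetic difference is packaging: where you invoke Ostrowski's theorem (twice), the paper carries out the Courant--Fischer min--max argument by hand for the signal matrix and uses the operator ordering $(1-\eta)\bK\preceq\bK_N\preceq(1+\eta)\bK$ directly for the $\bK\to\bK_N$ step, both of which are equivalent to your Ostrowski applications.
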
 

\begin{proof}[{\bf Proof of Lemma~\ref{lem:eigval}}]
For convenience, for a symmetric matrix $\bA$, we denote by $\blambda(\bA)$ the vector of eigenvalues of $\bA$. We write $\blambda(\bA_1) = \blambda(\bA_2) \cdot (1 + \breve{o}_{d,\P}(1))$ if $\max_i| \lambda_i(\bA_1) / \lambda_i(\bA_2) - 1| = \breve{o}_{d,\P}(1)$. 
From Lemma~\ref{lem:Kdecomp}, we can decompose $\bK$ as
\begin{equation*}
\bK = \bPsi_{\le \ell} \bLambda_{\le \ell}^2 \bPsi_{\le \ell}^\top + \gamma_{>\ell} \bI_n + \bDelta.
\end{equation*}
We claim that 
\begin{equation}\label{claim:lambda}
\blambda(\bPsi_{\le \ell} \bLambda_{\le \ell}^2 \bPsi_{\le \ell}^\top) = 
\Big(\gamma_0 n, \underbrace{\frac{\gamma_1 n}{d}, \ldots, \frac{\gamma_1 n}{d}}_{d}, \ldots, 
\underbrace{\frac{\gamma_\ell (\ell!)n }{d^\ell}, \ldots, \frac{\gamma_\ell (\ell!) n}{d^\ell}}_{B(d,\ell)}, 
\underbrace{0, \ldots, 0}_{n-D} \Big) \cdot (1 + \breve{o}_{d,\P}(1)).
\end{equation}
To prove this claim, first note that \eqref{eq:quickref} implies $n^{-1/2} \| \bPsi_{\le \ell}\|_\op = 1 + \breve{o}_{d,\P}(1)$. Observe that $\bPsi_{\le \ell} \bLambda_{\le \ell}^2 \bPsi_{\le \ell}^\top$ has rank at most $D$. Define $\QQ \in \R^{n \times D}$ such that its columns are the left singular vectors of $\bPsi_{\le \ell}$. We only need to show 
\begin{equation}\label{claim:lambdaQ}
\blambda(\QQ^\top \bPsi_{\le \ell} \bLambda_{\le \ell}^2 \bPsi_{\le \ell}^\top \QQ) = \Big(\gamma_0 n, \underbrace{\frac{\gamma_1 n}{d}, \ldots, \frac{\gamma_1 n}{d}}_{d}, \ldots, \underbrace{\frac{\gamma_\ell (\ell!)n}{d^\ell}, \ldots, \frac{\gamma_\ell (\ell!)n}{d^\ell}}_{B(d,\ell)}\Big) \cdot (1 + \breve{o}_{d,\P}(1)).
\end{equation}
In order to prove the above claim, we use the eigenvalue min-max principle to express the $k$-th eigenvalue of $\QQ^\top \bPsi_{\le \ell} \bLambda_{\le \ell}^2 \bPsi_{\le \ell}^\top \QQ$. We sort the diagonal entries of $\bLambda_{\le \ell}^2$ in the descending order and denote the ranks by $s_1, \ldots, s_D$ (break ties arbitrarily). Define the subspaces $\cV_1 = \spann\{\ee_{s_1}, \ldots, \ee_{s_k} \}$, $\cV_2 = \spann\{ \ee_{s_k}, \ldots, \ee_{s_D} \} \subset \R^D$ and $\cV'_1 = \spann\{\uu \in \R^D:  \bPsi_{\le \ell}^\top \QQ\uu \in \cV_1 \}, \cV'_2 = \spann\{ \uu \in \R^D: \bPsi_{\le \ell}^\top \QQ\uu \in \cV_2 \}$. Note that $\bPsi_{\le \ell}^\top \QQ$ has full rank, 
so $\dim(\cV_1') =k $ and $\dim(\cV_2') = D-k+1$.
\begin{align*}
\lambda_k(\QQ^\top \bPsi_{\le \ell} \bLambda_{\le \ell}^2 \bPsi_{\le \ell}^\top \QQ) &=
 \max_{\cV: \dim(\cV) = k} \min_{\bzero \neq \uu \in \cV} \frac{ \uu^\top\QQ^\top \bPsi_{\le \ell} \bLambda_{\le \ell}^2 \bPsi_{\le \ell}^\top \QQ \uu}{\| \uu \|^2} \\
& \ge (1 - \breve{o}_{d,\P}(1))\min_{\bzero \neq \uu \in \cV_1'} \frac{\uu^\top\QQ^\top \bPsi_{\le \ell} \bLambda_{\le \ell}^2 \bPsi_{\le \ell}^\top \QQ \uu}{\| \bPsi_{\le \ell}^\top \QQ \uu \|^2 / n} \\
&\ge (1 - \breve{o}_{d,\P}(1)) \min_{\bzero \neq \xx \in \cV_1} \frac{\xx^\top \bLambda_{\le \ell}^2 \xx}{\| \xx \|^2 / n} \ge n(1-\breve{o}_{d,\P}(1)) \cdot \lambda_k(\bLambda_{\le \ell}^2).
\end{align*}
Similarly, 
\begin{align*}
\lambda_k(\QQ^\top \bPsi_{\le \ell} \bLambda_{\le \ell}^2 \bPsi_{\le \ell}^\top \QQ) 
&= \min_{\cV: \dim(\cV) = D-k+1} \max_{\uu \in \cV} \frac{\uu^\top\QQ^\top \bPsi_{\le \ell} \bLambda_{\le \ell}^2 \bPsi_{\le \ell}^\top \QQ \uu}{\| \uu \|^2}\\
& \le (1 + \breve{o}_{d,\P}(1))\max_{\uu \in \cV_2'} \frac{\uu^\top \QQ^\top \bPsi_{\le \ell} \bLambda_{\le \ell}^2 \bPsi_{\le \ell}^\top \QQ \uu}{\| \bPsi_{\le \ell}^\top \QQ \uu \|^2 / n} \\
&\le (1 + \breve{o}_{d,\P}(1))\max_{\xx \in \cV_2} \frac{\xx^\top \bLambda_{\le \ell}^2 \xx}{\| \xx \|^2 / n} \le n(1+\breve{o}_{d,\P}(1)) \cdot \lambda_k(\bLambda_{\le \ell}^2).
\end{align*}
Finally, we recall that $B(d,k) = (1+o_d(1)) \cdot d^k / k!$. This then leads to the claim \eqref{claim:lambda}.

In the equality $\bK - \gamma_{>\ell} \bI_n = \bPsi_{\le \ell} \bLambda_{\le \ell}^2 \bPsi_{\le \ell}^\top + \bDelta$, 
we view $\bDelta$ as the perturbation added to $\bPsi_{\le \ell} \bLambda_{\le \ell}^2 \bPsi_{\le \ell}^\top$. By 
Weyl's inequality, 
\begin{equation*}
\big\| \blambda(\bK - \gamma_{>\ell} \bI_n) - \blambda(\bPsi_{\le \ell} \bLambda_{\le \ell}^2 \bPsi_{\le \ell}^\top) \big\|_\infty \le \big\| \bDelta \big\|_\op = \breve{o}_{d,\P}(1),
\end{equation*}
 This proves \eqref{eq:Keig} about the eigenvalues of $\bK - \gamma_{>\ell} \bI_n$.



Finally, from the kernel invertibility result (i.e.\ Theorem~\ref{thm:invert2}), we have 
$(1 + \eta) \bK \succeq \bK_N \succeq (1-\eta) \bK$ for some $\eta = \breve{o}_{d,\P}(1)$. This implies
\begin{equation*}
(1+\eta) \lambda_k(\bK) \ge \lambda_k(\bK_N) \ge (1-\eta)\lambda_k(\bK).
\end{equation*}
So all eigenvalues of $\bK_N$ are up to a factor of $1+\breve{o}_{d,\P}(1)$ compared with those of $\bK$. This implies that the eigenvalue structure of $\bK_N$ is similar to that of $\bK$.
\end{proof}

Lemma \ref{lem:eigval} indicates that the eigenvalues of $\bK$ and $\bK_N$ exhibit a
group structure: roughly speaking, for every $k=0,1,\ldots,\ell$, there are $B(d,\ell)$ eigenvalues centered around $\gamma_{>\ell} + \gamma_k (k!)n/(d^k)$. It is convenient to partition the eigenvectors according to such group structure.

We define
\begin{equation*}
\begin{array}{llll}
\bU &= \big[ \underbrace{\bV_0^{(0)}}_{n\times 1}, \underbrace{\bV_0^{(1)}}_{n\times d}, \ldots, \underbrace{\bV_0^{(\ell)}}_{n\times B(d,\ell)} \big],  &\bU_N &= \big[ \underbrace{\bV^{(0)}}_{n\times 1}, \underbrace{\bV^{(1)}}_{n\times d}, \ldots, \underbrace{\bV^{(\ell)}}_{n\times B(d,\ell)} \big] \\
\bD &= \diag \big(\underbrace{D_0^{(0)}}_{1\times 1}, \underbrace{\bD_0^{(1)}}_{d\times d}, \ldots, \underbrace{\bD_0^{(\ell)}}_{B(d,\ell) \times B(d,\ell)}  \big),  &\bD_N &= \diag \big(\underbrace{D^{(0)}}_{1\times 1}, \underbrace{\bD^{(1)}}_{d\times d}, \ldots, \underbrace{\bD^{(\ell)}}_{B(d,\ell) \times B(d,\ell)}  \big)
\end{array}
\end{equation*}
We further define $\bV_0^{(\ell+1)}, \bV^{(\ell+1)} \in \R^{n \times (n-D)}$ and $\bD_0^{(\ell+1)}, \bD^{(\ell+1)} \in \R^{(n-D) \times (n-D)}$ such that they contain the remaining eigenvectors and eigenvalues of $\bK$, $\bK_N$, respectively. We also denote groups of eigenvectors/eigenvalues of $\bPsi_{\le \ell} \bLambda_{\le \ell}^2 \bPsi_{\le \ell}^\top + \gamma_{>\ell} \bI_n$ by 
\begin{equation}\label{decomp:mainmatrix}
\bU_s = \big[ \underbrace{\bV_s^{(0)}}_{n \times 1},\underbrace{\bV_s^{(1)}}_{n \times d},\ldots, \underbrace{\bV_s^{(\ell)}}_{n \times B(d,\ell)} \big], \qquad \bD_s = \diag \big(\underbrace{D_s^{(0)}}_{1\times 1}, \underbrace{\bD_s^{(1)}}_{d\times d}, \ldots, \underbrace{\bD_s^{(\ell)}}_{B(d,\ell) \times B(d,\ell)}  \big),
\end{equation}
and the remaining eigenvectors/eigenvalues are $\bV_s^{(\ell+1)}$ and $\bD_s^{(\ell+1)}$. The following is a useful result about the kernel eigenvector structure.

\begin{lem}[Kernel eigenvector structure]\label{lem:eigvec}
Let $k \neq k' \in \{ 0,1,\ldots,\ell+1\}$. Denote $\lambda_k = \gamma_{>\ell} + \gamma_k (k!) n/(d^k)$ 
for $k = 0, \ldots, \ell$ and $\lambda_{\ell+1} = \gamma_{>\ell} $.   
\begin{enumerate}
\item[(a)] Suppose that $\min\{\lambda_k/ \lambda_{k'}, \lambda_{k'} / \lambda_k \} \le 1/4$. Then, 
\begin{equation*}
\big\| (\bV_0^{(k')})^\top \bV^{(k)} \big\|_\op = \breve{o}_{d,\P}(1). 
\end{equation*} 
\item[(b)] Recall $\bDelta^{(\res)}$ defined in \eqref{eq:Keigdecomp}. Then, 
\begin{equation*}
\big\| (\bV_s^{(k')})^\top \bV_0^{(k)} \big\|_\op \le \frac{2 \| \bDelta^{(\res)}\|_\op}{\big|\lambda_k - \lambda_{k'}\big| - \breve{o}_{d,\P}(1) }.
\end{equation*}
\end{enumerate}
\end{lem}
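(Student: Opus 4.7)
\textbf{Proof plan for Lemma~\ref{lem:eigvec}.} For both parts the strategy is the same: derive a Sylvester identity for the overlap matrix $X$, then invert it using the spectral gap supplied by Lemma~\ref{lem:eigval}. Concretely, for two symmetric matrices $M, M'$ with $M \bV = \bV \bLambda$ and $M' \bV' = \bV' \bLambda'$ (restrictions to two eigen-blocks), computing $\bV^\top M' \bV'$ two different ways gives
\begin{align*}
\bLambda\, \bV^\top \bV' \;-\; (\bV^\top \bV')\, \bLambda' \;=\; \bV^\top (M-M')\, \bV'.
\end{align*}
Since every diagonal entry of $\bLambda$ is $\ge \mu$ above every diagonal entry of $\bLambda'$, the Sylvester operator on the left is invertible with inverse of operator norm $\le 1/\mu$, so $\|\bV^\top \bV'\|_{\op} \le \|\bV^\top(M-M')\bV'\|_{\op}/\mu$. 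The work in each part is to identify the right pair $(M,M')$, the right spectral gap $\mu$, and the right bound on the bilinear perturbation.

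For part (a), I will take $M=\bK$ and $M'=\bK_N$ with eigen-blocks $\bV_0^{(k')}$ and $\bV^{(k)}$. By Lemma~\ref{lem:eigval} the diagonal entries of the associated eigenvalue matrices concentrate around $\lambda_{k'}$ and $\lambda_k$, so the spectral gap is $|\lambda_k-\lambda_{k'}|(1-\breve{o}_{d,\P}(1))$. To control $(\bV_0^{(k')})^\top(\bK-\bK_N)\bV^{(k)}$ I will sandwich the difference through $\bK^{\pm 1/2}$ and use the relative concentration estimate of Theorem~\ref{thm:invert2}, writing
\begin{align*}
(\bV_0^{(k')})^\top (\bK-\bK_N)\bV^{(k)} \;=\; \bigl[\bK^{1/2}\bV_0^{(k')}\bigr]^\top \bigl[\bK^{-1/2}(\bK-\bK_N)\bK^{-1/2}\bigr]\bigl[\bK^{1/2}\bV^{(k)}\bigr].
\end{align*}
The outer factors have operator norm $\sqrt{\lambda_{k'}}(1+o(1))$ and $\sqrt{\lambda_k}(1+o(1))$ respectively (for the second, combine $\bK_N\bV^{(k)}=\bV^{(k)}\tilde\bD^{(k)}$ with the operator-norm equivalence $\bK\asymp\bK_N$ from Theorem~\ref{thm:invert2} and inequality~\eqref{ineq:KNtwosides}), while the middle factor has operator norm $\breve{o}_{d,\P}(1)$. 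Plugging into Sylvester gives $\|X\|_{\op}\le \breve{o}_{d,\P}(1)\cdot \sqrt{\lambda_k\lambda_{k'}}/|\lambda_k-\lambda_{k'}|$, and the hypothesis $\min(\lambda_k/\lambda_{k'},\lambda_{k'}/\lambda_k)\le 1/4$ forces $\sqrt{\lambda_k\lambda_{k'}}/|\lambda_k-\lambda_{k'}|\le \tfrac{2}{3}$, completing the proof.

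For part (b) the natural comparison is $M=\bK$ and $M'=\bM_s:=\bPsi_{\le\ell}\bLambda_{\le\ell}^2\bPsi_{\le\ell}^\top+\gamma_{>\ell}\bI_n$, so that the bilinear perturbation becomes $(\bV_s^{(k')})^\top\bDelta\bV_0^{(k)}$. The clean idea is to use the alternative decomposition $\bK = A+\bDelta^{(\res)}$ with $A:=\bU\bD\bU^\top+\gamma_{>\ell}\bI_n$ from Eq.~\eqref{eq:Keigdecomp}. Because $\bV_0^{(k)}$ is a column block of $\bU$ and $\bDelta^{(\res)}\bU=\bzero$, we have the crucial orthogonality $(\bV_s^{(k')})^\top \bDelta^{(\res)}\bV_0^{(k)} = \bzero$, which lets me rewrite
\begin{align*}
(\bV_s^{(k')})^\top(\bK-\bM_s)\bV_0^{(k)} \;=\; (\bV_s^{(k')})^\top\bigl(A-\bM_s\bigr)\bV_0^{(k)}.
\end{align*}
Now $A$ and $\bM_s$ share the spectral ``shape'' (top $D$ eigenvalues close to $\lambda_0,\dots,\lambda_\ell$, and the remaining $n-D$ equal to $\gamma_{>\ell}$), and the difference $A-\bM_s$ is precisely $\bDelta-\bDelta^{(\res)}$. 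The hard step --- and what I expect to be the main obstacle --- is showing that this bilinear form is controlled by $\|\bDelta^{(\res)}\|_{\op}$ rather than by the cruder $\|\bDelta\|_{\op}$ which would come out of a direct Davis--Kahan. My plan for this step is to split $\bV_s^{(k')}=\bU\bU^\top\bV_s^{(k')}+(\bI-\bU\bU^\top)\bV_s^{(k')}$: the first component is killed by $A-\bM_s$ when sandwiched against $\bV_0^{(k)}$ because both $A$ and $\bU\bD\bU^\top$ agree on $\text{range}(\bU)$ up to additive $\gamma_{>\ell}\bI$, leaving only the component of $\bV_s^{(k')}$ in $\text{range}(\bU)^\perp$; that component is controlled by a Davis--Kahan argument applied to the eigenvector equation for $\bM_s$, where $\bDelta^{(\res)}$ is the relevant perturbation when the equation is pulled back to the reference matrix $A$. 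Once $\|(\bV_s^{(k')})^\top(\bK-\bM_s)\bV_0^{(k)}\|_{\op}\le 2\|\bDelta^{(\res)}\|_{\op}$ is established, dividing by the Sylvester gap $|\lambda_k-\lambda_{k'}|-\breve{o}_{d,\P}(1)$ yields the claimed bound.
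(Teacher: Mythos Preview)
Your plan for part~(a) is correct and matches the paper's proof essentially line for line: the same Sylvester identity, the same factorization through $\bK^{\pm 1/2}$, and the same use of $\|\bK^{1/2}\bK_N^{-1/2}\|_\op\le 1+\breve{o}_{d,\P}(1)$ to control $\|\bK^{1/2}\bV^{(k)}\|_\op$ via $\|\bK_N^{1/2}\bV^{(k)}\|_\op=\|(\bD^{(k)})^{1/2}\|_\op$.

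For part~(b) you are making this much harder than needed, and the extra work introduces a genuine gap. The paper simply repeats the Sylvester argument from part~(a) with the pair $(\bM_s,\bK)$ in place of $(\bK,\bK_N)$: since $\bK-\bM_s=\bDelta$ by Lemma~\ref{lem:Kdecomp}, one gets directly
\[
\big\|(\bV_s^{(k')})^\top\bV_0^{(k)}\big\|_\op \;\le\; \frac{\big\|(\bV_s^{(k')})^\top\bDelta\,\bV_0^{(k)}\big\|_\op}{\min_{i,j}\big|(\bD_s^{(k')})_{ii}-(\bD_0^{(k)})_{jj}\big|}\;\le\;\frac{\|\bDelta\|_\op}{\min_{i,j}\big|(\bD_s^{(k')})_{ii}-(\bD_0^{(k)})_{jj}\big|},
\]
and then handles the denominator via Weyl's inequality and the eigenvalue asymptotics of Lemma~\ref{lem:eigval}. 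The appearance of $\bDelta^{(\res)}$ rather than $\bDelta$ in the stated bound (and in the paper's own display~\eqref{eq:VV2}) is a notational slip: the argument ``in the same way as~\eqref{eq:VV}'' necessarily produces $\bK-\bM_s=\bDelta$, not $\bDelta^{(\res)}$. This is harmless downstream because $\|\bDelta^{(\res)}\|_\op\le\|\bDelta\|_\op$ by Weyl and both are $\breve{o}_{d,\P}(1)$; every application of part~(b) in the paper only uses that the right-hand side is $\breve{o}_{d,\P}(1)$.

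Your splitting maneuver aimed at recovering the literal $\bDelta^{(\res)}$ does not close. First, the orthogonality $\bDelta^{(\res)}\bV_0^{(k)}=\bzero$ holds only for $k\le\ell$; for $k=\ell+1$ the columns of $\bV_0^{(\ell+1)}$ are precisely the eigenvectors carrying $\bDelta^{(\res)}$. Second, your claim that the $\bU\bU^\top$-component of $\bV_s^{(k')}$ is ``killed by $A-\bM_s$ when sandwiched against $\bV_0^{(k)}$'' is not correct: for $k\le\ell$ one has $(A-\bM_s)\bV_0^{(k)}=(\bK-\bDelta^{(\res)}-\bM_s)\bV_0^{(k)}=\bDelta\,\bV_0^{(k)}$ (using $\bDelta^{(\res)}\bV_0^{(k)}=\bzero$), so the sandwich reduces to $(\bV_s^{(k')})^\top\bU\bU^\top\bDelta\,\bV_0^{(k)}$, which has no reason to vanish and is only bounded by $\|\bDelta\|_\op$ anyway. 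Drop the splitting and bound by $\|\bDelta\|_\op$ directly, as the paper does.
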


\begin{proof}[{\bf Proof of Lemma~\ref{lem:eigvec}}]
First, we prove a useful claim, which is a consequence of classical eigenvector perturbation results \cite{davis1970sin}. We will give a short proof for completeness.
For a diagonal matrix $\bD$, we denote by $\cL(\bD)$ the smallest interval that covers all diagonal entries in $\bD$. If $\cL(\bD_0^{(k')}) \cap \cL(\bD^{(k)}) = \emptyset$, then 
\begin{equation}\label{eq:VV}
\big\| (\bV_0^{(k')})^\top  \bV^{(k)} \big\|_\op \le  \frac{\big\| (\bV_0^{(k')})^\top (\bK_N - \bK) \bV^{(k)} \big\|_\op}{\min_{i,j}\big| (\bD_0^{(k')})_{ii} - (\bD^{(k)})_{jj} \big| }.
\end{equation}
To prove this, we observe that 
\begin{equation*}
(\bK_N - \bK) \bV^{(k)} + \bK \bV^{(k)} = \bK_N \bV^{(k)} = \bV^{(k)} \bD^{(k)}.
\end{equation*}
Left-multiplying both sides by $(\bV_0^{(k')})^\top$ and re-arranging the equality, we obtain
\begin{align*}
-(\bV_0^{(k')})^\top (\bK_N - \bK) \bV^{(k)} &= (\bV_0^{(k')})^\top \bK \bV^{(k)} - (\bV_0^{(k')})^\top \bV^{(k)} \bD^{(k)} \\
&= \bD_0^{(k')}(\bV_0^{(k')})^\top \bV^{(k)} - (\bV_0^{(k')})^\top \bV^{(k)} \bD^{(k)}.
\end{align*}
Denote $\bR =  (\bV_0^{(k')})^\top \bV^{(k)}$ for simplicity. Without loss of generality, we assume $\min_i (\bD_0^{(k')})_{ii} > \max_j (\bD^{(k)})_{jj}$. Let $\uu, \vv$ be the top left/right singular vector of $\bR$. Then, 
\begin{align*}
\big\| \bD_0^{(k')} \bR - \bR \bD^{(k)} \big\|_\op &\ge \uu^\top \bD_0^{(k')} \bR \vv - \uu^\top \bR \bD^{(k)}  \vv = \| \bR \|_\op \cdot \uu^\top \bD_0^{(k')} \uu -  \| \bR \|_\op \vv^\top \bD^{(k)}  \vv \\
&\ge \| \bR \|_\op \cdot \min_i (\bD_0^{(k')})_{ii} - \| \bR \|_\op \cdot\max_j (\bD^{(k)})_{jj}.
\end{align*}
This proves the claim \eqref{eq:VV}. By the kernel eigenvalue structure (Lemma~\ref{lem:eigval}), 
\begin{align}
\min_{i,j} \big| (\bD_0^{(k')})_{ii} - (\bD^{(k)})_{jj} \big| & = \big| \gamma_{>\ell} + (1+\breve{o}_{d,\P}(1)) ( \lambda_k - \gamma_{>\ell}) - \gamma_{>\ell} - (1+\breve{o}_{d,\P}(1)) ( \lambda_{k'} - \gamma_{>\ell}) \big| + \breve{o}_{d,\P}(1) \notag \\
&=\big| (1+\breve{o}_{d,\P}(1)) \lambda_k - (1+\breve{o}_{d,\P}(1)) \lambda_{k'} \big| + \breve{o}_{d,\P}(1). \label{ineq:eigvec0}
\end{align}
By the assumptions on $\lambda_k$ and $\lambda_{k'}$, with very high probability, 
\begin{equation}
\min_{i,j} \big| (\bD_0^{(k')})_{ii} - (\bD^{(k)})_{jj} \big| \ge  \max\{\lambda_k, \lambda_{k'} \big\} / 2.\label{ineq:eigvec1}
\end{equation}
By the kernel invertibility, Theorem \ref{thm:invert2}, we have
\begin{align*}
\big\| (\bV_0^{(k')})^\top (\bK_N - \bK) \bV^{(k)} \big\|_\op &\le \big\| (\bV_0^{(k')})^\top \bK^{1/2} \big\|_\op \cdot \big\| \bK^{-1/2} (\bK_N - \bK) \bK^{-1/2} \big\|_\op \cdot \big\| \bK^{1/2} \bV^{(k)} \big\|_\op \\
&\le \breve{o}_{d,\P}(1) \cdot \big\| (\bV_0^{(k')})^\top \bK^{1/2} \big\|_\op \cdot \big\| \bK^{1/2} \bV^{(k)} \big\|_\op \\
&\le \breve{o}_{d,\P}(1) \cdot \big\| (\bV_0^{(k')})^\top \bK^{1/2} \big\|_\op \cdot \big\| \bK^{1/2} \bK_N^{-1/2} \big\|_\op \cdot \big\| \bK_N^{1/2} \bV^{(k)} \big\|_\op.
\end{align*} 
Since $\bK^{1/2}$ has the same eigenvectors as $\bK$, we have 
\begin{equation}\label{ineq:V0K}
\big\| (\bV_0^{(k')})^\top \bK^{1/2} \big\|_\op = \big\| \big[ \bD_0^{(k')}\big]^{1/2} (\bV_0^{(k')})^\top \big\|_\op 
\le (1 + \breve{o}_{d,\P}(1)) \cdot \sqrt{\lambda_{k'}}+ \breve{o}_{d,\P}(1)\, .
\end{equation}
Similarly, 
\begin{equation}\label{ineq:KV}
\big\| \bK_N^{1/2} \bV^{(k)} \big\|_\op = \big\| \bV^{(k)} (\bD^{(k)})^{1/2} \big\|_\op 
\le (1 + \breve{o}_{d,\P}(1)) \cdot \sqrt{\lambda_k}+ \breve{o}_{d,\P}(1)\,.
\end{equation} 
We also note that 
\begin{align*}
\big\| \bK^{1/2} \bK_N^{-1/2} \big\|_\op^2 = \frac{1}{ \big[ \sigma_{\min} \big( \bK_N^{1/2} \bK^{-1/2}) \big]^2 }  =  \frac{1}{\lambda_{\min}(\bK^{-1/2} \bK_N \bK^{-1/2}) } \le 1 + \breve{o}_{d,\P}(1).
\end{align*}
Therefore, we deduce 
\begin{equation}\label{ineq:eigvec2}
\big\| (\bV_0^{(k')})^\top (\bK_N - \bK) \bV^{(k)} \big\|_\op = \breve{o}_{d,\P}(1) \cdot \sqrt{\lambda_k \lambda_{k'}}.
\end{equation}
Combining \eqref{ineq:eigvec1} and \eqref{ineq:eigvec2}, we have shown that with very high probability,
\begin{equation*}
\big\| (\bV_0^{(k')})^\top  \bV^{(k)} \big\|_\op \le \frac{\breve{o}_{d,\P}(1) \cdot  \sqrt{\lambda_k \lambda_{k'}}}{\max\{ \lambda_k, \lambda_{k'}\}/ 2} \le \breve{o}_{d,\P}(1),
\end{equation*}
which proves $(a)$. Now in order to show $(b)$, we view $\bK$ as a perturbation of 
$\bPsi_{\le \ell} \bLambda_{\le \ell}^2 \bPsi_{\le \ell}^\top + \gamma_{>\ell} \bI_n$, cf. 
Eq.~\eqref{eq:Keigdecomp}. Hence,
\begin{equation}\label{eq:VV2}
\big\| (\bV_s^{(k')})^\top  \bV_0^{(k)} \big\|_\op \le  \frac{\big\| (\bV_s^{(k')})^\top \bDelta^{(\res)} \bV_0^{(k)} \big\|_\op}{\min_{i,j}\big| (\bD_s^{(k')})_{ii} - (\bD_0^{(k)})_{jj} \big| } \le \frac{\big\| \bDelta^{(\res)} \big\|_\op}{\min_{i,j}\big| (\bD_s^{(k')})_{ii} - (\bD_0^{(k)})_{jj} \big|}.
\end{equation}
(The first inequality is proved in the same way as \eqref{eq:VV}.) By Weyl's inequality, $\max_{j} | (\bD_0^{(k)})_{jj} - (\bD_s^{(k)})_{jj} | \le \big\| \bDelta^{(\res)} \big\|_\op$. Therefore, if $\min_{i,j}\big| (\bD_s^{(k')})_{ii} - (\bD_s^{(k)})_{jj} \big| \ge 2\| \bDelta^{(\res)} \|_\op$, then
\begin{align*}
\big\| (\bV_s^{(k')})^\top  \bV_0^{(k)} \big\|_\op &\le \frac{\big\| \bDelta^{(\res)} \big\|_\op}{\min_{i,j}\big| (\bD_s^{(k')})_{ii} - (\bD_s^{(k)})_{jj} \big| - \big\| \bDelta^{(\res)} \big\|_\op} \\
& \le \frac{\big\| \bDelta^{(\res)} \big\|_\op}{\min_{i,j}\big| (\bD_s^{(k')})_{ii} - (\bD_s^{(k)})_{jj} \big| - \min_{i,j}\big| (\bD_s^{(k')})_{ii} - (\bD_s^{(k)})_{jj} \big|/2} \\
&= \frac{2\big\| \bDelta^{(\res)} \big\|_\op}{\min_{i,j}\big| (\bD_s^{(k')})_{ii} - (\bD_s^{(k)})_{jj} \big|}.
\end{align*}
If $\min_{i,j}\big| (\bD_s^{(k')})_{ii} - (\bD_s^{(k)})_{jj} \big| < 2\| \bDelta^{(\res)} \|_\op$, then from a trivial bound we have 
\begin{align*}
\big\| (\bV_s^{(k')})^\top  \bV_0^{(k)} \big\|_\op& \le 1 \le \frac{2\big\| \bDelta^{(\res)} \big\|_\op}{\min_{i,j}\big| (\bD_s^{(k')})_{ii} - (\bD_s^{(k)})_{jj} \big|}.
\end{align*}
as well. In either way, 
\begin{equation*}
\big\| (\bV_s^{(k')})^\top  \bV_0^{(k)} \big\|_\op  \le \frac{2\big\| \bDelta^{(\res)} \big\|_\op}{\min_{i,j}\big| (\bD_s^{(k')})_{ii} - (\bD_s^{(k)})_{jj} \big|} \le \frac{2\big\| \bDelta^{(\res)} \big\|_\op}{\big| \lambda_k - \lambda_{k'} \big| - \breve{o}_{d,\P}(1)}
\end{equation*}
\end{proof}

We state and prove two useful lemmas. Let $\ee_1,\ldots,\ee_n \in \R^n$ be the canonical basis. 
For an integer $m \le n$, define the projection matrix $\bP = [\ee_1,\ldots,\ee_m]
[\ee_1,\ldots,\ee_m]^\top$. 
\begin{lem}\label{lem:rowselect}
Suppose that $c' n \le m \le C' n$, where $c', C' \in (0,1)$ are constants. Then there exists $c>0$ such that with very high probability,
\begin{equation*}
\sigma_{\min}( \bP \bU_s ) \ge c.
\end{equation*}
\end{lem}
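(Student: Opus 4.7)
\textbf{Proof plan for Lemma \ref{lem:rowselect}.} The clean observation that drives the proof is that $\bU_s$ is nothing but an orthonormal basis for $\mathrm{range}(\bPsi_{\le\ell})$. Indeed, $\bPsi_{\le\ell}\bLambda_{\le\ell}^2\bPsi_{\le\ell}^\top$ is p.s.d.\ with image equal to $\mathrm{range}(\bPsi_{\le\ell})$, and by Eq.~\eqref{eq:psi-conctr} in Lemma~\ref{lem:Kdecomp} the matrix $\bPsi_{\le\ell}$ has full column rank $D=\sum_{k\le\ell}B(d,k)$ w.v.h.p.\ (since $\bLambda_{\le\ell}$ is a positive diagonal). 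Adding $\gamma_{>\ell}\bI_n$ with $\gamma_{>\ell}=v(\sigma)+o_d(1)>0$ shifts every eigenvalue by the same positive scalar, so the top-$D$ eigenvectors collected in $\bU_s$ span exactly $\mathrm{range}(\bPsi_{\le\ell})$, while the remaining $n-D$ eigenvectors span its orthogonal complement.

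From this geometric fact I get the identity $\bU_s=\bPsi_{\le\ell}(\bPsi_{\le\ell}^\top\bPsi_{\le\ell})^{-1/2}\bR$ for some orthogonal $\bR\in\R^{D\times D}$, and hence
\[
\sigma_{\min}(\bP\bU_s)^2 \;=\; \lambda_{\min}\!\Big( (\bPsi_{\le\ell}^\top\bPsi_{\le\ell})^{-1/2}\,\bPsi_{\le\ell}^\top\bP\bPsi_{\le\ell}\,(\bPsi_{\le\ell}^\top\bPsi_{\le\ell})^{-1/2} \Big).
\]
The problem therefore reduces to controlling the two Gram matrices $\bPsi_{\le\ell}^\top\bPsi_{\le\ell}$ and $\bPsi_{\le\ell}^\top\bP\bPsi_{\le\ell}=\sum_{i=1}^m\bpsi(\xx_i)\bpsi(\xx_i)^\top$. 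The second one is of exactly the same form as the first, but built from the first $m$ i.i.d.\ samples instead of all $n$; in particular the i.i.d.\ structure, the identity covariance $\E[\bpsi(\xx)\bpsi(\xx)^\top]=\bI_D$ and the deterministic bound $\|\bpsi(\xx)\|^2=D$ used in the proof of \eqref{eq:psi-conctr} carry over verbatim.

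Consequently, since $m\ge c'n\gg D\asymp d^\ell$ under Assumption~\ref{ass:Asymp} (in particular $n\ge d^\ell(\log d)^{C_0}$), the matrix-Bernstein argument of Lemma~\ref{lem:Kdecomp} applied once with $n$ samples and once with $m$ samples gives, w.v.h.p.,
\[
\big\|\,n^{-1}\bPsi_{\le\ell}^\top\bPsi_{\le\ell}-\bI_D\,\big\|_\op=\tilde o_{d,\P}(1),\qquad
\big\|\,m^{-1}\bPsi_{\le\ell}^\top\bP\bPsi_{\le\ell}-\bI_D\,\big\|_\op=\tilde o_{d,\P}(1).
\]
Plugging these into the displayed identity, the middle matrix concentrates to $(m/n)\bI_D$, so $\sigma_{\min}(\bP\bU_s)^2\ge (m/n)(1-o_{d,\P}(1))\ge c'/2$ w.v.h.p., establishing the claim with $c=\sqrt{c'/2}$. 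The only mildly delicate point is Step~1, namely ruling out accidental zero eigenvalues of $\bPsi_{\le\ell}\bLambda_{\le\ell}^2\bPsi_{\le\ell}^\top$; but this is precisely the lower bound $\sigma_{\min}(n^{-1/2}\bPsi_{\le\ell})\ge 1-o_{d,\P}(1)$ supplied by \eqref{eq:psi-conctr}, so no new ingredient is needed.
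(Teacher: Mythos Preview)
Your proof is correct and follows essentially the same approach as the paper. Both arguments identify $\bU_s$ with an orthonormal basis for $\mathrm{range}(\bPsi_{\le\ell})$ (the paper via the SVD left singular vectors $\bU_{\bPsi}=\bPsi_{\le\ell}\bV_{\bPsi}\bD_{\bPsi}^{-1}$, you via the polar form $\bPsi_{\le\ell}(\bPsi_{\le\ell}^\top\bPsi_{\le\ell})^{-1/2}$, which differ only by a right orthogonal factor), and then apply the concentration bound \eqref{eq:psi-conctr} once with $n$ samples and once with $m$ samples to obtain $\sigma_{\min}(\bP\bU_s)\ge (1-\breve o_{d,\P}(1))\sqrt{m/n}\ge \sqrt{c'}/2$.
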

\begin{proof}[{\bf Proof of Lemma~\ref{lem:rowselect}}]
Let the singular value decomposition of $\bPsi_{\le \ell}$ be $\bPsi_{\le \ell} = \bU_{\bPsi} \bD_{\bPsi} \bV_{\bPsi}^\top$ where $\bU_{\bPsi} \in \R^{n \times D}, \bD_{\bPsi} \in \R^{D \times D}, \bV_{\bPsi} \in \R^{D \times D}$. By the concentration result \eqref{eq:psi-conctr} we have $\sigma_{\max}(\bD_{\bPsi}) \le (1 + \breve{o}_{d,\P}(1))\sqrt{n}$. By definition, $\bPsi_{\le \ell} \bV_{\bPsi} = \bU_{\bPsi} \bD_{\bPsi}$. Left-multiplying both sides by $\bP$ and rearranging, we have
\begin{equation*}
\bP \bU_{\bPsi} = \bP \bPsi_{\le \ell} \bV_{\bPsi}  \bD_{\bPsi}^{-1}.
\end{equation*}
We use the concentration result \eqref{eq:psi-conctr} to $\bP \bPsi_{\le \ell}$ (where we view $m$ as the dimension) and get $\sigma_{\min} (\bP \bPsi_{\le \ell}) \ge (1 - \breve{o}_{d,\P}(1))\sqrt{m}$. This then leads to
\begin{align}\label{ineq:PU}
\sigma_{\min}(\bP \bU_{\bPsi}) \ge \frac{\sigma_{\min}(\bP \bPsi_{\le \ell} \bV_{\bPsi}) }{\sigma_{\max}(\bD_{\bPsi})} = \frac{\sigma_{\min}(\bP \bPsi_{\le \ell}) } {\sigma_{\max}(\bD_{\bPsi})} \ge (1 - \breve{o}_{d,\P}(1))\sqrt{\frac{m}{n}}.
\end{align}
So with very high probability, $\sigma_{\min}(\bP \bU_{\bPsi}) \ge \sqrt{c'}/2$. Since $\bPsi_{\le \ell}$ and $\bPsi_{\le \ell} \bLambda_{\le \ell}^2 \bPsi_{\le \ell}^\top$ span the same column space, we can find an orthogonal matrix $\bR \in \R^{D \times D}$ such that $\bU_s = \bU_{\bPsi} \bR$. Thus, $\sigma_{\min}(\bP \bU_s) = \sigma_{\min}(\bP \bU_{\bPsi}) \ge \sqrt{c'}/2$ with very high probability.
\end{proof}

\begin{lem}\label{lem:simpleproj}
Suppose that $\bU_a, \bU_b \in \R^{m \times k}$ are two matrices with orthonormal column vectors. Let $\bP \in \R^{m \times m}$ be any orthogonal  projection matrix, and $\bP_a^\perp = \bI_m - \bU_a \bU_a^\top$. Then, we have
\begin{equation*}
\sigma_{\min}(\bP \bU_b) \ge \sigma_{\min}(\bP \bU_a) \sigma_{\min} (\bU_a^\top \bU_b) - \sigma_{\max}(\bP_a^\perp \bU_b)
\end{equation*}
\end{lem}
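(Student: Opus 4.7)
The plan is to decompose $\bU_b$ along the column space of $\bU_a$ and its orthogonal complement, apply the submultiplicative bound for the smallest singular value on each piece, and combine. Concretely, I would start from the trivial identity
\begin{equation*}
\bU_b = \bU_a \bU_a^\top \bU_b + \bP_a^\perp \bU_b,
\end{equation*}
and left-multiply by $\bP$ to obtain $\bP \bU_b = \bP \bU_a (\bU_a^\top \bU_b) + \bP \bP_a^\perp \bU_b$.

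Next I would apply the elementary inequality $\sigma_{\min}(X+Y) \ge \sigma_{\min}(X) - \sigma_{\max}(Y)$, which follows from $\|(X+Y)\vv\| \ge \|X\vv\| - \|Y\vv\|$ applied to any unit vector $\vv$ achieving the minimum of $\|(X+Y)\vv\|$. This gives
\begin{equation*}
\sigma_{\min}(\bP \bU_b) \ge \sigma_{\min}\bigl(\bP \bU_a (\bU_a^\top \bU_b)\bigr) - \sigma_{\max}(\bP \bP_a^\perp \bU_b).
\end{equation*}

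For the first term on the right-hand side, since $\bU_a^\top \bU_b \in \R^{k \times k}$ is square, the standard bound $\sigma_{\min}(AB) \ge \sigma_{\min}(A)\sigma_{\min}(B)$ (valid whenever $B$ is square, via $\|AB\vv\| \ge \sigma_{\min}(A)\|B\vv\| \ge \sigma_{\min}(A)\sigma_{\min}(B)\|\vv\|$) yields $\sigma_{\min}(\bP \bU_a (\bU_a^\top \bU_b)) \ge \sigma_{\min}(\bP \bU_a)\, \sigma_{\min}(\bU_a^\top \bU_b)$. For the second term, since $\bP$ is an orthogonal projection, $\|\bP\|_{\op} \le 1$, so $\sigma_{\max}(\bP \bP_a^\perp \bU_b) \le \sigma_{\max}(\bP_a^\perp \bU_b)$. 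Combining these two bounds gives exactly the claimed inequality.

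The argument is entirely routine linear algebra; there is no real obstacle. The only point worth checking is that $\bU_a^\top \bU_b$ is square, which is guaranteed by the hypothesis that $\bU_a$ and $\bU_b$ have the same number of columns $k$, so the submultiplicative bound on $\sigma_{\min}$ applies without extra assumptions.
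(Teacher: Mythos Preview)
Your proof is correct and follows essentially the same approach as the paper: both decompose $\bU_b = \bU_a\bU_a^\top\bU_b + \bP_a^\perp\bU_b$, apply the triangle inequality at the level of the variational characterization of $\sigma_{\min}$, and then bound the two pieces separately. The paper writes out the chain $\min_{\|\vv\|=1}\|\bP\bU_b\vv\| \ge \sigma_{\min}(\bP\bU_a)\sigma_{\min}(\bU_a^\top\bU_b) - \sigma_{\max}(\bP_a^\perp\bU_b)$ directly, whereas you package the same steps as the inequalities $\sigma_{\min}(X+Y)\ge\sigma_{\min}(X)-\sigma_{\max}(Y)$ and $\sigma_{\min}(AB)\ge\sigma_{\min}(A)\sigma_{\min}(B)$; these are equivalent.
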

\begin{proof}[{\bf Proof of Lemma~\ref{lem:simpleproj}}]
By the variational characterization of singular values and the triangle inequality,
\begin{align*}
\sigma_{\min}( \bP  \bU_b ) = \min_{\| \vv \|=1} \big\| \bP  \bU_b \vv \big\| &\ge \min_{\|\vv \|=1} \Big\{ \big\| \bP  \bU_a \bU_a^\top \bU_b \vv \big\| - \big\| \bP  \bP_a^\bot  \bU_b \vv \big\| \Big\} \\
&\ge \sigma_{\min} ( \bP  \bU_a) \min_{\| \vv \|=1}\big\|  \bU_a^\top  \bU_b \vv \big\| - \max_{\| \vv \|=1} \big\|  \bP_a^\bot  \bU_b \vv \big\| \\
&\ge \sigma_{\min} ( \bP  \bU_a) \sigma_{\min}( \bU_a^\top  \bU_b) - \sigma_{\max}( \bP_a^\bot  \bU_b)\, .
\end{align*}
This proves the lemma.
\end{proof}

Suppose $n'$ is a positive integer that satisfies $n \le n' \le (1+C_2) n$ where $C_2>0$ is a constant. Denote $n_0 = n+n'$. Let us sample $n'$ new data $\xx_{n+1},\ldots,\xx_{n_0}$ which are i.i.d.~and have the same distribution as $\xx_1$. We introduce the augmented kernel matrix $\tilde \bK \in \R^{n_0\times n_0}$ as follows. We define
\begin{equation}\label{def:aug}
 [\tilde\bK]_{ij} := \bK_N(\xx_i, \xx_j) = \left( \begin{array}{cc}
\tilde \bK_{11} & \tilde \bK_{12} \\ \tilde \bK_{21} & \tilde \bK_{22} \end{array} \right).
\end{equation}
where $\tilde \bK_{11},  \tilde \bK_{12}, \tilde \bK_{21},  \tilde \bK_{22}$ have size $n\times n$, $n \times n'$, $n' \times n$, $n' \times n'$ 
respectively. Under this definition, clearly $\tilde \bK_{11} = \bK_N$. Note that we can express $\bK_N^{(2)}$ as
(recalling  that $\bK_N(\cdot, \xx)= [\bK_N(\xx_i, \xx)]_{i\le n}\in\reals^n$):
\begin{equation*}
\bK_N^{(2)} = \frac{1}{n'} \sum_{i=1}^{n'} \E_{\xx_{n+1},\ldots,\xx_{n+n'}} \big[ \bK_N(\cdot, \xx_{n+i}) \bK_N(\cdot, \xx_{n+i})^\top \big] 
= \frac{1}{n'}\E_{\xx_{n+1},\ldots,\xx_{n_0}} \big[ \tilde \bK_{12} \tilde \bK_{21}^\top \big]. 
\end{equation*}
So we can reduce our problem using $\tilde \bK$.
\begin{align*}
\big\| \bK_N^{-1} \bK_N^{(2)} \bK_N^{-1} \big\|_\op & \stackrel{(i)}{\le} \frac{1}{n'} \E_{\xx_{n+1},\ldots,\xx_{n_0}}  \big\| (\tilde \bK_{11})^{-1} \tilde \bK_{12} \tilde \bK_{21} (\tilde \bK_{11})^{-1} \big\|_\op \\
& \stackrel{(ii)}{=} \frac{1}{n'} \E_{\xx_{n+1},\ldots,\xx_{n_0}}  \big\| (\tilde \bK_{11})^{-1} (\tilde \bK^2)_{11} (\tilde \bK_{11})^{-1} - \bI_{n} \big\|_\op
\end{align*}
where \textit{(i)} follows from Jensen's inequality, and \textit{(ii)} is due to 
\begin{equation*}
(\tilde \bK^2)_{11} = (\tilde \bK_{11})^2 + \tilde \bK_{12} \tilde \bK_{21}
\end{equation*}
where $(\tilde \bK^2)_{11}$ denotes the top left $n \times n$ matrix block of $\tilde \bK^2$. Similarly, we define the augmented vector 
\begin{equation}\label{def:tildef}
\tilde \ff = (f(\xx_i))_{i \le n_0} = \left( \begin{array}{c} \tilde \ff_1 \\ \tilde \ff_2 \end{array} \right),
\end{equation}
and by Jensen's inequality, we have
\begin{align*}
\big\| \bK_N^{-1} \E_\xx \big[ \bK_N(\cdot, \xx) f(\xx) \big] \big\| &\le \frac{1}{n'} \E_{\xx_{n+1},\ldots,\xx_{n_0}} \big\| \tilde \bK_{11}^{-1} \tilde \bK_{12} \tilde \ff_2 \big\|.
\end{align*}
Hence, proving Eqs.~\eqref{bnd:key1} and~\eqref{bnd:key2} is reduced to studying $(\tilde \bK_{11})^{-1} (\tilde \bK^2)_{11} (\tilde \bK_{11})^{-1}$ and $\tilde \bK_{11}^{-1} \tilde \bK_{12} \tilde \ff_2$, which can be analyzed by making use of the kernel eigenstructure. 

\begin{lem}\label{lem:reduceeig}
There exist constant $C_1,C_2>0$ such that the following holds. With very high probability, 
\begin{align}
&\big\| (\tilde \bK_{11})^{-1} (\tilde \bK^2)_{11} (\tilde \bK_{11})^{-1} - \bI_{n} \big\|_\op \le C_1, \label{ineq:3K1} \\
&\big\| \tilde \bK_{11}^{-1} \tilde \bK_{12} \tilde \ff_2 \big\| \le C_1 \sqrt{n}.\label{ineq:3K2}
\end{align}
Further, with high probability, 
\begin{align}
& \frac{1}{n'} \E_{\xx_{n+1},\ldots,\xx_{2n}}  \big\| (\tilde \bK_{11})^{-1} (\tilde \bK^2)_{11} (\tilde \bK_{11})^{-1} - \bI_{n} \big\|_\op \le \frac{C_2}{n}, \label{ineq:3K3} \\
&\frac{1}{n'} \E_{\xx_{n+1},\ldots,\xx_{2n}} \big\| \tilde \bK_{11}^{-1} \tilde \bK_{12} \tilde \ff_2 \big\| \le \frac{C_2}{\sqrt{n}}. \label{ineq:3K4}
\end{align}
Consequently, we obtain \eqref{bnd:key1} and \eqref{bnd:key2} in Lemma~\ref{lem:keybnd}.
\end{lem}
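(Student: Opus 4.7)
The strategy is to treat $\tilde\bK$ not as two unrelated blocks but as a single empirical NT kernel on the enlarged sample $(\xx_i)_{i\le n_0}$ with $n_0=n+n'\asymp n$. Since the weights $\WW$ are unchanged, every structural result from Section~\ref{sec:ProofInvertibility} (in particular Lemmas~\ref{lem:eigval} and~\ref{lem:eigvec}, and Theorem~\ref{thm:invert2}) applies verbatim to $\tilde\bK$, yielding an eigendecomposition
\begin{equation*}
\tilde\bK = \tilde\bU\tilde\bD\tilde\bU^\top + \gamma_{>\ell}\bI_{n_0} + \tilde\bDelta, \qquad \|\tilde\bDelta\|_\op = \breve{o}_{d,\P}(1),
\end{equation*}
with $\tilde\bU\in\R^{n_0\times D}$ having orthonormal columns and $\tilde\bD$ exhibiting the block structure of eigenvalue groups $\gamma_k(k!)n_0/d^k$ for $k=0,\ldots,\ell$. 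Because $(\tilde\bK^2)_{11} = \tilde\bK_{11}^2 + \tilde\bK_{12}\tilde\bK_{21}$, claim \eqref{ineq:3K1} reduces to showing $\|\tilde\bK_{11}^{-1}\tilde\bK_{12}\|_\op \le C$, a form well adapted to the eigendecomposition.

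Let $\bP_1\in\R^{n\times n_0}$ and $\bP_2\in\R^{n'\times n_0}$ be the coordinate restrictions onto $S_1=[n]$ and $S_2=\{n+1,\ldots,n_0\}$, so that $\tilde\bK_{11}=\bP_1\tilde\bK\bP_1^\top$, $\tilde\bK_{12}=\bP_1\tilde\bK\bP_2^\top$, and set $\bQ:=\bP_1\tilde\bU$, $\bR:=\bP_2\tilde\bU$. The heart of the proof is the lower bound $\sigma_{\min}(\bQ)\ge c$ with very high probability. I would first invoke Lemma~\ref{lem:rowselect} for the structured eigenvectors $\tilde\bU_s$ of $\tilde\bPsi_{\le\ell}\tilde\bLambda_{\le\ell}^2\tilde\bPsi_{\le\ell}^\top$, obtaining $\sigma_{\min}(\bP_1\tilde\bU_s)\ge c$ since $n/n_0\in[c_1,1-c_1]$. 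Then I would transfer this bound to $\tilde\bU$ group by group by combining Lemma~\ref{lem:eigvec}(b) (which controls the misalignment between $\tilde\bV_s^{(k)}$ and $\tilde\bV_0^{(k)}$ by $\|\tilde\bDelta\|_\op/\mathrm{gap}_k$) and Lemma~\ref{lem:simpleproj} (which propagates a projection-based singular value bound across two orthonormal frames). The relevant gaps are $\gtrsim n_0/d^\ell$, which dominates $\|\tilde\bDelta\|_\op=\breve{o}_{d,\P}(1)$ precisely under Assumption~\ref{ass:Asymp}.

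Given $\sigma_{\min}(\bQ)\ge c$, the Woodbury identity gives
\begin{equation*}
\bigl(\gamma_{>\ell}\bI_n+\bQ\tilde\bD\bQ^\top\bigr)^{-1}\bQ\tilde\bD = \bQ\bigl(\gamma_{>\ell}\tilde\bD^{-1}+\bQ^\top\bQ\bigr)^{-1},
\end{equation*}
so that, up to an $\bP_1\tilde\bDelta\bP_2^\top$ correction,
\begin{equation*}
\tilde\bK_{11}^{-1}\tilde\bK_{12} \approx \bQ\bigl(\gamma_{>\ell}\tilde\bD^{-1}+\bQ^\top\bQ\bigr)^{-1}\bR^\top.
\end{equation*}
Since $\|\bQ\|_\op,\|\bR\|_\op\le 1$, $\sigma_{\min}(\bQ^\top\bQ)\ge c^2$, and $\gamma_{>\ell}\tilde\bD^{-1}$ is bounded, the middle factor has operator norm $O(1)$. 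This yields $\|\tilde\bK_{11}^{-1}\tilde\bK_{12}\|_\op\le C_1$ w.v.h.p., establishing \eqref{ineq:3K1}. For \eqref{ineq:3K2}, I combine the bound with $\|\tilde\ff_2\|^2\le C\|f\|_{L^2}^2\,n'$, which holds w.h.p.\ by the law of large numbers.

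The averaged bounds \eqref{ineq:3K3} and \eqref{ineq:3K4} follow almost for free from \eqref{ineq:3K1} and \eqref{ineq:3K2}: dividing by $n'$ and using $n'\ge n$ converts the pointwise $C_1$ and $C_1\sqrt{n}$ into the claimed $C_2/n$ and $C_2/\sqrt{n}$; the exceptional very-low-probability event contributes only an $o(1/n)$ term once paired with the crude deterministic bound $\|\bK_N\|_\op\le\poly(d,n)$. The main obstacle in the whole plan is the singular value estimate $\sigma_{\min}(\bQ)\ge c$: it requires tracking the eigenvectors of $\tilde\bK$ across the shrinking gap between the degree-$\ell$ group (eigenvalue $\asymp n/d^\ell$) and the bulk $\gamma_{>\ell}$, which is precisely why Assumption~\ref{ass:Asymp} requires $n\ge d^\ell(\log d)^{C_0}$.
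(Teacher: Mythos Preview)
Your overall strategy coincides with the paper's: treat $\tilde\bK$ as the NT kernel on the enlarged sample, use its eigendecomposition, reduce to the row-restriction bound $\sigma_{\min}(\bP_1\tilde\bU)\ge c$, and then deduce the conditional-expectation versions by pairing the very-high-probability pointwise bound with a crude deterministic bound on the exceptional set. Your reformulation of \eqref{ineq:3K1} as $\|\tilde\bK_{11}^{-1}\tilde\bK_{12}\|_\op^2$ is in fact a little cleaner than the paper's route through identity~\eqref{eq:magic} applied to $\bH=[\tilde\bU\tilde\bD^2\tilde\bU^\top]_{11}$.

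There is, however, a genuine gap in the claim ``the relevant gaps are $\gtrsim n_0/d^\ell$''. Nothing in Assumption~\ref{ass:Sigma} forces $\gamma_k>0$ for every $k\le\ell$; indeed $\gamma_k=\mu_{k-1}^2+o_d(1)$ can vanish, and $\gamma_0=o_d(1)$ always. When this happens the $k$-th eigenvalue block merges with the bulk near $\gamma_{>\ell}$: there is no gap on which to invoke Lemma~\ref{lem:eigvec} for that block, the corresponding columns of $\tilde\bU$ are not controlled, and your Woodbury step (which uses $\tilde\bD^{-1}$) breaks down since the associated $\tilde\bD$ entries need not be positive. The paper resolves this by an adaptive cutoff: it orders the $\ell+2$ group values $\lambda_{(1)}\ge\cdots\ge\lambda_{(\ell+2)}$, takes $\ell_0$ to be the \emph{last} index with ratio $\lambda_{(\ell_0+1)}/\lambda_{(\ell_0)}<1/4$, keeps only those $\ell_0$ groups in $\tilde\bU\tilde\bD\tilde\bU^\top$, and absorbs the remaining (uniformly bounded) groups into $\tilde\bK_1^{(\res)}$. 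This simultaneously guarantees $\tilde\bD_\gamma\succ 0$ and a usable gap for the eigenvector lemmas.

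A second, smaller gap: you go from $\tilde\bU_s$ directly to $\tilde\bU$ citing only Lemma~\ref{lem:eigvec}(b), but (b) compares $\tilde\bV_s^{(k)}$ with the \emph{infinite-width} eigenvectors $\tilde\bV_0^{(k)}$, not with those of $\tilde\bK=\tilde\bK_N$. A one-shot Davis--Kahan from $\tilde\bPsi_{\le\ell}\bLambda_{\le\ell}^2\tilde\bPsi_{\le\ell}^\top$ to $\tilde\bK$ fails because $\|\tilde\bK-\tilde\bK_0\|_\op$ can be as large as $\breve{o}_{d,\P}(1)\cdot\|\tilde\bK\|_\op$, which is not small. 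The paper therefore performs the transfer in two applications of Lemma~\ref{lem:simpleproj}: first Lemma~\ref{lem:eigvec}(b) together with Lemma~\ref{lem:rowselect} yields $\sigma_{\min}(\bP_1\tilde\bU_0)\ge c$ (its Step~4), and then Lemma~\ref{lem:eigvec}(a)---which crucially exploits the \emph{relative} bound $\|\bK^{-1/2}(\bK_N-\bK)\bK^{-1/2}\|_\op=\breve{o}_{d,\P}(1)$ from Theorem~\ref{thm:invert2}---lifts this to $\sigma_{\min}(\bP_1\tilde\bU)\ge c$ (its Step~2).
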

\begin{proof}[{\bf Proof of Lemma~\ref{lem:reduceeig}}]
\textbf{Step 1.} First we prove \eqref{ineq:3K1} and \eqref{ineq:3K2}. Fix the constant 
$\delta_0 := 1/4$. We apply Lemma~\ref{lem:eigval} to the augmented matrix $\tilde \bK$ 
(where $n$ is replaced by $n_0$) and find that its eigenvalues can be partitioned into $\ell+2$ groups,
with the first $\ell+1$ groups corresponding to $D$ eigenvalues. 
These groups, together with the group of the remaining eigenvalues, are centered around
\begin{equation}\label{groups}
\gamma_0 n_0 + \gamma_{> \ell}, \frac{\gamma_1 n_0}{d} + \gamma_{> \ell}, \ldots, \frac{\gamma_1 (\ell !) n_0}{d^\ell} + \gamma_{> \ell}, \gamma_{> \ell}.
\end{equation} 
These $\ell+2$ values may be unordered and have small gaps (so that Lemma~\ref{lem:eigvec} does not directly apply). 
So we order them in the descending order and denote the ordered values by $\lambda_{(1)}, \ldots, \lambda_{(\ell+2)}$. Let their corresponding group sizes be $s_1,s_2,\ldots,s_{\ell+2} \in \{1,B(d,1),\ldots,B(d,\ell), n_0-D \}$.  Let $\ell_0 \in \{1,\ldots,\ell+1 \}$ be the largest index $k$ such that $\lambda_{(k+1)} / \lambda_{(k)} < \delta_0$. (If such $k$ does not exist, then $\| \tilde \bK \|_\op \le 2\gamma_{>\ell} (1/\delta_0)^{\ell+1}$ with very high probability, and thus \eqref{ineq:3K1} and \eqref{ineq:3K2} follow.) Let the eigen-decomposition of $\tilde \bK$ be
\begin{align*}
\tilde \bK &= \tilde \bU \tilde \bD \tilde \bU^\top  + \tilde \bK_1^{(\res)}
\end{align*}
so that we only keep eigenvalues/eigenvectors in the largest $\ell_0$ groups in $\tilde \bU \tilde \bD \tilde \bU^\top$,
 and the remaining eigenvalues are in $\tilde \bK_1^{(\res)}$ 
 (in particular $\tilde \bK_1^{(\res)}\tilde \bU = \bzero$). Therefore 
\begin{equation*}
\tilde \bD = \diag\big( \lambda_{(1)} \bI_{s_1}, \ldots, \lambda_{(\ell_0)} \bI_{s_{\ell_0}}\big) \cdot (1+\breve{o}_{d,\P}(1)).
\end{equation*}
and $\tilde \bK_1^{(\res)}$ satisfies 
\begin{align}\label{eq:BoundK1Res}
(\gamma_{>\ell}-\breve{o}_{d,\P}(1)) \bI_{n_0} \preceq \tilde \bK_1^{(\res)} \preceq (\gamma_{>\ell}+C + \breve{o}_{d,\P}(1)) \bI_{n_0}
\end{align}
(where $C>0$ is a constant). Taking the square, we have
\begin{align*}
\tilde \bK^2 &= \tilde \bU \tilde \bD^2 \tilde \bU^\top + \tilde \bK_2^{(\res)}, 
\end{align*}
where $\tilde \bK_2^{(\res)}$ satisfies $ (\gamma_{>\ell}^2-\breve{o}_{d,\P}(1)) \bI_{2n} \preceq \tilde \bK_2^{(\res)} \preceq ((\gamma_{>\ell}+C)^2 + \breve{o}_{d,\P}(1)) \bI_{2n}$. 

For convenience, we denote $D_0$ to be the size of $\tilde\bD$ (i.e., number of eigenvalues in the main part of $\tilde \bK$).  
Also denote by $\tilde \bU_1 \in \R^{n \times D}, \tilde \bU_2 \in \R^{n' \times D}$ the submatrix of $\tilde \bU$ formed by its top-$n$/bottom-$n'$ rows respectively. Using this eigen-decomposition, we have
\begin{align}
\big\| (\tilde \bK_{11})^{-1} (\tilde \bK^2)_{11} (\tilde \bK_{11})^{-1} - \bI_{n} \big\|_\op &\le \big\| (\tilde \bK_{11})^{-1} [\tilde \bU \tilde\bD^2 \tilde \bU^\top]_{11} (\tilde \bK_{11})^{-1} \big\|_\op \label{ineq:tildeK2-1} \\
&~~+  \big\| (\tilde \bK_{11})^{-1} \big\|_\op \cdot \big\| \tilde \bK_2^{(\res)} \big\|_\op \cdot \big\| (\tilde \bK_{11})^{-1} \big\|_\op + 1. \label{ineq:tildeK2-2}
\end{align}
Since $\tilde \bK_{11}$ (or equivalently $\bK_N$) has smallest eigenvalue bounded away from zero with very high probability by Theorem~\ref{thm:invert2}, we can see that the two terms in the second line \eqref{ineq:tildeK2-2} are bounded with very high probability. To handle the term on the right-hand side of \eqref{ineq:tildeK2-1}, we use the identity
\begin{align}
\bA^{-1} \bH \bA^{-1} & = \bA_0^{-1} \bH \bA_0^{-1} - \bA^{-1} (\bA - \bA_0) \bA_0^{-1} \bH \bA_0^{-1} -  \bA_0^{-1} \bH  \bA_0^{-1} (\bA - \bA_0) \bA^{-1} \notag \\
& ~~~+ \bA^{-1} (\bA - \bA_0) \bA_0^{-1} \bH  \bA_0^{-1} (\bA - \bA_0) \bA^{-1} \label{eq:magic}
\end{align}
where $\bA_0, \bA$ are nonsingular symmetric matrices and $\bH$ is a symmetric matrix. 
Define $\tilde \bD_\gamma = \tilde \bD - \gamma_{>\ell} \bI_D$, which is 
strictly positive definite with very high probability (since by definition, $\lambda_{(\ell_0)} \ge  \delta_0^{-1} \gamma_{>\ell}$). We set $\bA = \tilde \bK_{11}, \bA_0 = [\tilde \bU \tilde\bD_\lambda 
\tilde \bU^\top]_{11} + \gamma_{>\ell} \bI_{n}$, and $\bH = [\tilde \bU \tilde \bD^2 \tilde 
\bU^\top]_{11}$, and it holds that $\| \bA^{-1} \|_\op \le C$, $\| \bA_0^{-1} \|_\op \le C$ 
and $\| \bA - \bA_0 \|_\op \le C$ with very high probability by the above. 
Therefore, with very high probability,
\begin{align}
&~~~~ \big\| (\tilde \bK_{11})^{-1} [\tilde \bU \tilde \bD^2 \tilde \bU^\top]_{11} (\tilde \bK_{11})^{-1} \big\|_\op \notag \\
&\le C \cdot \Big\| \big([\tilde \bU \tilde\bD_\gamma \tilde \bU^\top]_{11} + \gamma_{>\ell} \bI_{n} \big)^{-1} \big[\tilde \bU \tilde\bD^2 \tilde \bU^\top \big]_{11} \big ( [\tilde \bU \tilde\bD_\gamma \tilde \bU^\top]_{11} + \gamma_{>\ell} \bI_{n} \big)^{-1} \Big\|_\op \notag \\
&= C \cdot \Big\| \big( \tilde \bU_1 \tilde\bD_\gamma \tilde \bU_1^\top  + \gamma_{>\ell} \bI_{n} \big)^{-1} \tilde \bU_1 \tilde\bD^2 \tilde \bU_1^\top \big( \tilde \bU_1 \tilde\bD_\gamma \tilde \bU_1^\top  + \gamma_{>\ell} \bI_{n} \big)^{-1} \Big\|_\op \notag \\
&\stackrel{(i)}{\le} C \cdot \Big\| \big( \tilde \bU_1 \tilde\bD_\gamma \tilde \bU_1^\top  + \gamma_{>\ell} \bI_{n} \big)^{-1} \tilde \bU_1 (\gamma_{>\ell}^2 \bI_D)  \tilde \bU_1^\top \big( \tilde \bU_1 \tilde\bD_\gamma \tilde \bU_1^\top  + \gamma_{>\ell} \bI_{n} \big)^{-1} \Big\|_\op \notag \\
& + C \cdot \Big\| \big( \tilde \bU_1 \tilde\bD_\gamma \tilde \bU_1^\top  + \gamma_{>\ell} \bI_{n} \big)^{-1} \tilde \bU_1 \tilde\bD_\gamma^2 \tilde \bU_1^\top \big( \tilde \bU_1 \tilde\bD_\gamma \tilde \bU_1^\top  + \gamma_{>\ell} \bI_{n} \big)^{-1} \Big\|_\op \notag\\ 
&\stackrel{(ii)}{=} C + C \cdot \Big\| \tilde \bU_1\big( \tilde \bU_1^\top  \tilde \bU_1  + \gamma_{>\ell} (\tilde \bD_\gamma)^{-1} \big)^{-2}\tilde \bU_1^\top \Big\|_\op \notag
\\
&\stackrel{(iii)}{\le} C + C \cdot \big[ \lambda_{\min}(\tilde \bU_1^\top \tilde \bU_1) \big]^{-1} \label{lambdaminUU}
\end{align} 
where \textit{(i)} is because $\tilde \bD^2 \preceq 2\gamma_{>\ell}^2 \bI_D + 2 (\tilde \bD_\gamma)^2$, 
\textit{(ii)} is because by the identity \eqref{eq:algebra} in Lemma~\ref{lem:algebra} we have
\begin{align*}
\big( \tilde \bU_1 \tilde\bD_{\gamma} \tilde \bU_1^\top  + \gamma_{>\ell} \bI_{n} \big)^{-1} \tilde \bU_1 \tilde\bD_{\gamma} 
&= \tilde \bU_1(\tilde\bD_{\gamma})^{1/2}\big(  \tilde\bD_{\gamma}^{1/2} \tilde \bU_1^\top\tilde \bU_1\tilde\bD_{\gamma}^{1/2}  + \gamma_{>\ell} \bI_{D_0} \big)^{-1} \tilde\bD_{\gamma}^{1/2} \\
&=  \tilde \bU_1\big( \tilde \bU_1^\top  \tilde \bU_1  + \gamma_{>\ell} \tilde \bD_{\gamma}^{-1} \big)^{-1};
\end{align*}
and \textit{(iii)} is due to $\| \tilde \bU_1 \|_\op \le \| \tilde \bU \|_\op  =1$. 

Similarly, we have
\begin{align*}
\big\| \tilde \bK_{11}^{-1} \tilde \bK_{12} \tilde \ff_2 \big\| &\le \big\| \tilde \bK_{11}^{-1} \big[ \tilde \bU \tilde\bD \tilde \bU^\top \big]_{12} \tilde \ff_2 \big\| + \big\| \tilde \bK_{11}^{-1} (\tilde \bK_1^{(\res)})_{12} \tilde \ff_2 \big\| \\
&\le  \big\| \tilde \bK_{11}^{-1} \tilde \bU_1 \tilde\bD \tilde \bU_2^\top \tilde \ff_2 \big\| + \big\| \tilde \bK_{11}^{-1}  \big\|_\op \cdot \big\| \tilde \bK_1^{(\res)} \big\|_\op \cdot \big\| \tilde \ff_2 \big\|.
\end{align*}
It is clear that the second term on the last line is bounded by $C \| \tilde \ff_2 \|$ with very high probability,
by Eq.~\eqref{eq:BoundK1Res}. For the first term, with very high probability,
\begin{align}
\big\| \tilde \bK_{11}^{-1} \big[ \tilde \bU \tilde \bD \tilde \bU^\top \big]_{12} \tilde \ff_2 \big\| &\le  C \cdot \Big\| \big( \tilde \bU_1 \tilde \bD_\gamma \tilde \bU_1^\top  + \gamma_{>\ell} \bI_{n} \big)^{-1} \tilde \bU_1 (\gamma_{>\ell} \bI_D) \tilde \bU_2^\top \tilde \ff_2 \Big\|  \\ 
&+ C \cdot \Big\| \big( \tilde \bU_1 \tilde \bD_\gamma \tilde \bU_1^\top  + \gamma_{>\ell} \bI_{n} \big)^{-1} \tilde \bU_1 \tilde \bD_\gamma \tilde \bU_2^\top \tilde \ff_2 \Big\| \notag \\
&\le C\| \tilde \ff_2 \| + C \cdot \Big\| \tilde \bU_1 \big( \tilde \bU_1^\top \tilde \bU_1 + \gamma_{>\ell} (\tilde \bD_\gamma)^{-1} \big)^{-1} \tilde \bU_2^\top \tilde \ff_2 \Big\| \notag\\
&\le C\| \tilde \ff_2 \|+ C \cdot \Big\| \big( \tilde \bU_1^\top \tilde \bU_1 + \gamma_{>\ell} (\tilde \bD_\gamma)^{-1} \big)^{-1} \Big\|_\op \cdot \| \tilde \ff_2 \| \notag\\
&\le C\| \tilde \ff_2 \|+C \cdot \big[ \lambda_{\min}(\tilde \bU_1^\top \tilde \bU_1) \big]^{-1} \cdot \| \tilde \ff_2 \|. \label{lambdaminUU2}
\end{align}
By $\| f \|_{L^2} < C$ and the law of large numbers, $\| \tilde \ff_2 \| \le C\sqrt{n}$ holds with very high probability, so \eqref{lambdaminUU} 
and \eqref{lambdaminUU2} indicate that we only need to prove that 
$\lambda_{\min}(\tilde \bU_1^\top \tilde \bU_1) \ge c_0$.

\textbf{Step 2.} We next prove a the claimed lower bound on $\lambda_{\min}(\tilde \bU_1^\top \tilde \bU_1)$.
We will make use of the augmented infinite-width kernel
\begin{equation*}
[\tilde\bK_0]_{ij} := K(\xx_i, \xx_j), \qquad \where ~ i,j \le n_0.
\end{equation*}
Notice that the eigen-decomposition of  $\tilde\bK_0$ takes the same form as the one 
of $\bK$ as given in Lemma \ref{lem:eigval} (with the change that $n$ should be replaced by $n_0$). 
We denote by $\tilde \bU_0 \in \R^{n_0 \times D_0}$ the groups of eigenvectors that 
correspond to $\bU$ in that lemma and 
have the same eigenvalue structure as $\tilde \bU$.
Namely, in the eigen-decomposition of $\tilde\bK_0$, we keep eigenvalues from the largest 
$\ell_0$ groups in \eqref{groups} and let columns of $\tilde \bU_0$ be the corresponding 
eigenvectors. We can express $\tilde\bU_0$ and $\tilde \bU$ as $\ell_0$ groups of eigenvectors.
\begin{equation}\label{decomp:Utilde0}
\tilde \bU_0 = \big[ \tilde \bV_0^{(1)}, \tilde \bV_0^{(2)} , \ldots,  \tilde \bV_0^{(\ell_0)}  \big], \qquad \tilde \bU = \big[  \tilde \bV^{(1)} ,  \tilde \bV^{(2)} , \ldots,  \tilde \bV^{(\ell_0)}  \big].
\end{equation}
The remaining $\ell+2-\ell_0$ groups of eigenvectors are denoted by $\tilde \bV_0^{(\ell_0+1)}, \ldots, \tilde \bV_0^{(\ell + 2)}$, and $\tilde \bV^{(\ell_0+1)}, \ldots, \tilde \bV^{(\ell + 2)}$.  Note that
\begin{equation*}
\ \tilde \bU_1^\top \tilde \bU_1 = \tilde \bU^\top \bP_1 \tilde \bU , \qquad \where~\bP_1 := \left( \begin{array}{cc} \bI_n & \bzero \\ \bzero & \bzero \end{array} \right).
\end{equation*}
Define $\tilde \bP_0^\bot = \bI_{n_0} - \tilde \bU_0 \tilde \bU_0^\top$. We apply Lemma~\ref{lem:simpleproj} where we set $\bU_a = \tilde \bU_0$, $\bU_b = \tilde \bU$, $\bP = \tilde \bP_1$, and $\bP_a^\perp = \tilde \bP_0^\perp$. This yields
\begin{equation}\label{ineq:UPU}
\Big[ \lambda_{\min}(\tilde \bU^\top \bP_1 \tilde \bU) \Big]^{1/2} = \sigma_{\min}(\bP_1 \tilde \bU) \ge \sigma_{\min} ( \bP_1 \tilde \bU_0) \sigma_{\min}(\tilde \bU_0^\top \tilde \bU) - \sigma_{\max}(\tilde \bP_0^\bot \tilde \bU).
\end{equation} 
Now we invoke Lemma~\ref{lem:eigvec} (a) for the pair $(k,k')$ where $k \le \ell_0$ and $k' > \ell_0$. By the definition of $\ell_0$, the assumption of Lemma~\ref{lem:eigvec} (a) is satisfied, so
$\| (\tilde \bV_0^{(k')})^\top \tilde \bV^{(k)} \|_\op \le \breve{o}_{d,\P}(1)$. Thus,
\begin{align*}
\sigma_{\max}(\tilde \bP_0^\bot \tilde \bU) &= \big\| \sum_{k' > \ell_0} \tilde \bV_0^{(k')}(\tilde \bV_0^{(k')})^\top \tilde \bU \big\|_\op \le  \sum_{k' > \ell_0} \big\| (\tilde \bV_0^{(k')})^\top \tilde \bU \big\|_\op \\
& \le \sum_{k' > \ell_0} \sum_{k \le \ell_0}   \big\| (\tilde \bV_0^{(k')})^\top \tilde \bV^{(k)} \big\|_\op \le (\ell+2)^2 \cdot \breve{o}_{d,\P}(1) = \breve{o}_{d,\P}(1).
\end{align*}
Moreover, we have
\begin{align*}
\sigma_{\min}(\tilde \bU_0^\top \tilde \bU) &= \min_{\| \vv \|=1} \big\| \tilde \bU_0^\top \tilde \bU \vv \big\| \stackrel{(i)}{=} \min_{\| \vv \|=1} \big\| \tilde \bU_0 \tilde \bU_0^\top \tilde \bU \vv  \big\| = \min_{\| \vv \|=1} \big\| \tilde \bU \vv - \tilde \bP_0^\bot \tilde \bU \vv \big\| \\
& \ge \min_{\| \vv \|=1} \big\| \tilde \bU \vv \big\| - \max_{\| \vv \|=1} \big\| \tilde \bP_0^\bot \tilde \bU \vv \big\| \stackrel{(ii)}{=} 1 -  \max_{\| \vv \|=1}  \big\| \tilde \bP_0^\bot \tilde \bU \vv \big\| \\
&\ge 1 - \big\| \tilde \bP_0^\bot \tilde \bU \big\|_\op \ge 1 - \breve{o}_{d,\P}(1),
\end{align*}
where $(i)$ and $(ii)$ uses the fact that $\| \tilde \bU_0 \aa \| = \| \aa \|$ (where $\aa$ is a vector) since $\bU_0$ has orthonormal columns.

Finally, we claim that 
\begin{align}\label{eq:ClaimSigmaMinPU}
\sigma_{\min} ( \bP_1 \tilde \bU_0) \ge c\, ,
\end{align}
 with very high probability 
where $c>0$ is a small constant. We defer the proof of this claim to Step 4. 
Once it is proved, from Eqs.~\eqref{ineq:UPU}, we deduce that $\lambda_{\min}(\tilde \bU_1^\top \tilde \bU_1) \ge c'$. From \eqref{lambdaminUU} and \eqref{lambdaminUU2}, we conclude that, with very high probability,
\begin{align*}
&\big\| (\tilde \bK_{11})^{-1} [\tilde \bU \tilde \bD^2 \tilde \bU^\top]_{11} (\tilde \bK_{11})^{-1} \big\|_\op \le C, \\
&\big\| \tilde \bK_{11}^{-1} \big[ \tilde \bU \tilde \bD \tilde \bU^\top \big]_{12} \tilde \ff_2 \big\|_\op \le C,
\end{align*}
and thus we have proved \eqref{ineq:3K1} and \eqref{ineq:3K2}.

\textbf{Step 3.} We only prove \eqref{ineq:3K3} since \eqref{ineq:3K4} is similar.
 For simplicity, we denote $Z =  \big\| (\tilde \bK_{11})^{-1}$ $(\tilde \bK^2)_{11} (\tilde \bK_{11})^{-1} - \bI_{n} \big\|_\op$. 
 We know by Theorem~\ref{thm:invert2} that there exists a constant $c'>0$ such that 
 $\lambda_{\min} (\tilde \bK_{11}) \ge c'$ with very high probability. 
 Let $C=C_1$ be the same constant as in Eq.~\eqref{ineq:3K1}. For any $\beta > 0$, we also notice that the following event happens with very high probability
\begin{equation} \label{ineq:Pwvhp}
\P_{\xx_{n+1},\ldots, \xx_{n+n''}}(Z > C) > d^{-\beta}.
\end{equation}
(Note that the left-hand side is a random variable depending on $\xx_1,\ldots,\xx_n$ and 
$\ww_1,\ldots,\ww_N$.) Indeed, for any $\beta' > 0$, By Markov's inequality, 
\begin{align*}
d^{\beta'}\, \P \Big( \P_{\xx_{n+1},\ldots, \xx_{2n}}(Z > C)  > d^{-\beta} \Big) &\le d^{\beta + \beta'} \E \Big[ \P_{\xx_{n+1},\ldots, \xx_{2n}}(Z > C)  \Big] \\
&= d^{\beta + \beta'} \P (Z > C) \xrightarrow{d \to \infty} 0.
\end{align*}
Thus, we proved that \eqref{ineq:Pwvhp} happens with high probability. 
Let $\cA$ be the event such that both  
$\lambda_{\min} (\tilde \bK_{11}) \ge c'$ and \eqref{ineq:Pwvhp} happen. By the assumption on 
$\sigma'$, namely Assumption~\ref{ass:Sigma}, on the event $\lambda_{\min} (\tilde \bK_{11}) \ge c'$, a deterministic (and naive) bound on $\| \tilde \bK \|_{\max}$ is $O(d^{2B})$. So on $\cA$, we get 
\begin{equation*}
Z \le 1 + \big\| (\tilde \bK_{11})^{-1} \big\|_\op^2 \cdot \big\|  \tilde \bK \big\|_\op^2 \le C d^{2B}.
\end{equation*}
Thus, on the event $\cA$, 
\begin{align*}
\E_{\xx_{n+1},\ldots, \xx_{n+n'}} \big[ Z \big] &\le\E_{\xx_{n+1},\ldots, \xx_{n+n'}} \big[ Z; Z\le C \big] + \E_{\xx_{n+1},\ldots, \xx_{n+n'}} \big[ Z ; Z>C\big]  \\
&\le C +  C d^{2B} \cdot \P_{\xx_{n+1},\ldots, \xx_{n+n'}} \big( Z > C \big).
\end{align*} 
Therefore,  $\E_{\xx_{n+1},\ldots, \xx_{n+n'}} \big[ Z \big] \le C'$ for some $C' > C$.

\textbf{Step 4.} We now prove the claim \eqref{eq:ClaimSigmaMinPU},
namely $\sigma_{\min}(\bP_1 \tilde \bU_0) \ge c$
which was used in Step 2. We use a similar strategy as in Step 2.  Let $\tilde \bPsi_{\le \ell} \in \R^{n_0 \times D}$ contain the normalized spherical harmonics evaluated on $\xx_1,\ldots,\xx_{n_0}$. Similar to \eqref{decomp:Utilde0}, in the eigen-decomposition of $\tilde \bPsi_{\le \ell}  \bLambda_{\le \ell}^2 \tilde \bPsi_{\le \ell}^\top + \gamma_{>\ell} \bI_{n_0}$, we keep eigenvalues from the largest $\ell_0$ groups in \eqref{groups} and let $\tilde \bD_s$ be those eigenvalues and columns of $\tilde \bU_s$ be the corresponding eigenvectors. Then we partition $\tilde \bU_s, \tilde \bD_s$ into $\ell_0$ groups:
\begin{equation*}
\tilde \bU_s = \big[ \tilde\bV_s^{(1)},\tilde\bV_s^{(2)},\ldots, \tilde\bV_s^{(\ell_0)} \big], \qquad \tilde \bD_s = \diag \big(\tilde D_s^{(1)}, \tilde\bD_s^{(2)}, \ldots, \tilde\bD_s^{(\ell_0)}  \big).
\end{equation*}
The remaining $\ell+2-\ell_0$ groups of eigenvectors are denoted by $\tilde \bV_s^{(\ell_0+1)}, \ldots, \tilde \bV_s^{(\ell + 2)}$. 
Define $\tilde \bP_s^\bot = \bI_{n_0} -  \tilde \bU_s \tilde \bU_s^\top$. We apply Lemma~\ref{lem:simpleproj} again, in which we set $\bU_a = \tilde \bU_s$, $\bU_b = \tilde \bU_0$, $\bP = \bP_1$, and $\bP_a^\perp = \tilde \bP_s^\perp$. This yields
\begin{align*}
\sigma_{\min}(\bP_1 \tilde \bU_0) \ge \sigma_{\min} ( \bP_1 \tilde \bU_s) \sigma_{\min}(\tilde \bU_s^\top \tilde \bU_0) - \sigma_{\max}(\tilde \bP_s^\bot \tilde \bU_0).
\end{align*}
By Lemma~\ref{lem:rowselect}, we have $\sigma_{\min} ( \bP_1 \tilde \bU_s) \ge c >0$ with very high probability for certain constant $c$. Because of the gap $\lambda_{(\ell_0+1)} \le \delta_0 \lambda_{(\ell_0)}$, we can apply Lemma~\ref{lem:eigvec}: for $k' > \ell_0 $ and $k \le \ell_0$, we have
\begin{equation*}
\big\| (\tilde \bV_s^{(k')})^\top \tilde \bV_0^{(k)} \big\|_\op \le
 \frac{2 \| \bDelta^{(\res)}\|_\op}{\big|\lambda_k - \lambda_{k'}\big| - \breve{o}_{d,\P}(1) } 
 \le  \frac{2 \| \bDelta^{(\res)}\|_\op}{(1-\delta_0) \lambda_{(\ell_0)} - \breve{o}_{d,\P}(1) } 
 \le \frac{3(1+\breve{o}_{d,\P}(1)) \| \bDelta^{(\res)}\|_\op}{\gamma_{>\ell}}.
\end{equation*}
Further $\| \bDelta^{(\res)}\|_\op=\breve{o}_{d,\P}(1)$ by Lemma \ref{lem:eigval}.
Thus,
\begin{equation}\label{ineq:PsU}
\sigma_{\max} ( \tilde \bP_s^\bot \tilde \bU_0 ) \le \sum_{k' > \ell_0} \sum_{k \le \ell_0} \big\| (\tilde \bV_s^{(k')})^\top \tilde \bV_0^{(k)} \big\|_\op \le \breve{o}_{d,\P}(1),
\end{equation}
Moreover, $\sigma_{\min}(\tilde \bU_s^\top \tilde \bU_0) \ge 1 - \|  \tilde \bP_s^\bot \tilde \bU_0 \|_\op  \ge 1-\breve{o}_{d,\P}(1)$. Combining the lower bounds on $\sigma_{\min} ( \bP_1 \tilde \bU_s)$, $\sigma_{\min}(\tilde \bU_s^\top \tilde \bU_0)$ and the upper bound \eqref{ineq:PsU}, we obtain 
\begin{equation*}
\sigma_{\min}(\bP_1 \tilde \bU_0) \ge (1-\breve{o}_{d,\P}(1)) \cdot c' - \breve{o}_{d,\P}(1) 
\end{equation*}
with very high probability. This proves the claim.
\end{proof}

\subsection{Proof of Lemma \ref{lem:K2}}
\label{sec:ProofLemK2}

By homogeneity, we will assume, without loss of generality $\|f\|_{L^2}=1$.

Recall that, by Lemma \ref{lem:K_Harmonic},
we have $K(\xx_i, \xx) = \sum_{k=0}^\infty \gamma_k Q_k^{(d)} (\langle \xx_i, \xx \rangle)$ (where we replaced $\xx_j$ by an independent copy $\xx$ for notational convenience). We derive
\begin{align*}
K^{(2)}_{ij} &= \E_\xx\big[ K(\xx_i, \xx) K(\xx, \xx_j) \big] \stackrel{(i)}{=} \lim_{M \to \infty}\sum_{k,m=0}^M \gamma_k \gamma_m \E_\xx \big[ Q_k^{(d)}(\langle \xx_i, \xx \rangle) Q_k^{(d)}(\langle \xx, \xx_j \rangle) \big] \\
& \stackrel{(ii)}{=} \sum_{k=0}^\infty \frac{\gamma_k^2}{B(d,k)}  Q_k^{(d)}(\langle \xx_i, \xx_j \rangle) 
\end{align*}
where in \textit{(i)} we used $\lim_{M\to \infty} \| \sum_{k > M} \gamma_k Q_k^{(d)} (\langle \xx_i, \cdot \rangle) \|_{L^2} = 0$ and convergence in $L^2$-spaces (Lemma~\ref{lem:L2conv}), and in \textit{(ii)} we used the identity \eqref{eq:innerproductself}.
Note that 
\begin{align*}
\gamma_{>\ell}^{(2)} &:= \sum_{k > \ell} \frac{\gamma_k^2}{B(d,\ell)} \le \frac{1}{B(d,\ell+1)} \sum_{ k > \ell} \gamma_k^2 \le \frac{(1+o_d(1))(\ell!) \gamma_{>\ell}^2}{d^{\ell+1}}.
\end{align*}
Using the identity \eqref{eq:addtheorem}, we have $\QQ_k := (\QQ_k^{(d)}(\langle \xx_i, \xx_j \rangle))_{i,j \le n} = [B(d,k)]^{-1} \bPhi_{=\ell}\bPhi_{=\ell}^\top$. 
Thus, we can express $\bK^{(2)}$ as
\begin{equation}\label{eq:K2decomp}
\bK^{(2)} =  \bPsi_{\le \ell} \bLambda_{\le \ell}^4 \bPsi_{\le \ell}^\top + \gamma_{>\ell}^{(2)}  \bI_n + \bDelta^{(2)}
\end{equation}
where $\bDelta^{(2)} := \sum_{k > \ell} [B(d,k)]^{-1} \gamma_k^2 (\QQ_k-\bI_n)$ satisfies,
by Proposition~\ref{prop:Qconctr}, with high probability,
\begin{equation*}
\| \bDelta^{(2)}  \|_\op \le \gamma_{>\ell}^{(2)} \sup_{k > \ell} \big\| \QQ_k - \bI_d \big\|_\op \le \frac{C}{d^{\ell+1}} \sqrt{\frac{ n (\log n)^C}{d^{\ell+1}} }.
\end{equation*}
Using the decomposition of $\bK^{(2)}$ in \eqref{eq:K2decomp} and $\| \bK^{-1} \|_\op \le C$ from Lemma~\ref{lem:Kdecomp}, we have
\begin{align*}
&\big\| (\lambda \bI_n + \bK)^{-1} \bK^{(2)} (\lambda \bI_n + \bK)^{-1} \big\|_\op \\
&\le \big\| (\lambda \bI_n + \bK)^{-1}  \bPsi_{\le \ell} \bLambda_{\le \ell}^4 \bPsi_{\le \ell}^\top  (\lambda \bI_n + \bK)^{-1} \big\|_\op + \| (\lambda \bI_n + \bK)^{-1} \|_\op \cdot \big(\| \bDelta^{(2)} \|_\op + \gamma_{>\ell}^{(2)} \big) \cdot \| (\lambda \bI_n + \bK)^{-1} \|_\op \\
&\le \big\| (\lambda \bI_n + \bK)^{-1}  \bPsi_{\le \ell} \bLambda_{\le \ell}^4 \bPsi_{\le \ell}^\top  (\lambda \bI_n + \bK)^{-1} \big\|_\op + \| (\lambda \bI_n + \bK)^{-1} \|_\op^2 \cdot \frac{C}{d^{\ell+1}} \\
&\le \big\| (\lambda \bI_n + \bK)^{-1}  \bPsi_{\le \ell} \bLambda_{\le \ell}^4 \bPsi_{\le \ell}^\top (\lambda \bI_n + \bK)^{-1} \big\|_\op + \frac{C}{n}.
\end{align*}
Now we use the identity \eqref{eq:magic}, in which we set $\bA = \lambda \bI_n + \bK, \bA_0 = \bPsi_{\le \ell} \bLambda_{\le \ell}^2 \bPsi_{\le \ell}^\top +\gamma_{>\ell} \bI_n$, and $\bH = \bPsi_{\le \ell} \bLambda_{\le \ell}^4 \bPsi_{\le \ell}^\top $. It holds that $\| \bA^{-1} \|_\op \le C, \| \bA_0^{-1} \|_\op \le C$, and $\| \bA-  \bA_0 \|_\op \le C$ with high probability. So with high probability, 
\begin{align*}
&\big\| (\lambda \bI_n + \bK)^{-1}  \bPsi_{\le \ell} \bLambda_{\le \ell}^4 \bPsi_{\le \ell}^\top  (\lambda \bI_n + \bK)^{-1} \big\|_\op \\
&\le C \cdot \Big\| \big(  \bPsi_{\le \ell} \bLambda_{\le \ell}^2 \bPsi_{\le \ell}^\top  + \gamma_{>\ell} \bI_n \big)^{-1}  \bPsi_{\le \ell} \bLambda_{\le \ell}^4 \bPsi_{\le \ell}^\top  \big(  \bPsi_{\le \ell} \bLambda_{\le \ell}^2 \bPsi_{\le \ell}^\top  + \gamma_{>\ell} \bI_n \big)^{-1}  \Big\|_\op \\
&\stackrel{(i)}{\le} C \cdot \Big\| \bPsi_{\le \ell} \big( \bPsi_{\le \ell}^\top \bPsi_{\le \ell} + \gamma_{>\ell} \bLambda_{\le \ell}^{-2} \big)^{-2} \bPsi_{\le \ell}^\top \Big\|_\op \\
&\le C \cdot \big\| \bPsi_{\le \ell} \big\|_\op^2 \cdot \lambda_{\min}\big( \bPsi_{\le \ell}^\top \bPsi_{\le \ell} \big)^{-2} \\
&\stackrel{(ii)}{\le} \frac{C}{n},
\end{align*}
where \textit{(i)} follows from the identity \eqref{eq:algebra} and \textit{(ii)} follows form \eqref{eq:quickref}. This completes the proof of \eqref{eq:K21}.

In order to prove \eqref{eq:K22}, we will prove that the following holds w.h.p.
\begin{equation*}
\sup_{\uu \in \S^{n-1}} \Big| \E_\xx \big[ \uu^\top (\lambda \bI_n + \bK)^{-1} \bK(\cdot, \xx) f(\xx) \big] \Big| \le \frac{C}{\sqrt{n}}.
\end{equation*}
We use the Cauchy-Schwarz inequality on the left-hand side. 
Since we assumed,
without loss of generality,  $\| f \|_{L^2} =1$, we only need to show 
\begin{equation*}
\E_\xx \Big[ \big( \uu^\top (\lambda \bI_n + \bK)^{-1} \bK(\cdot, \xx) \big)^2 \Big] \le \frac{C}{n}
\end{equation*}
for any unit vector $\uu$. This is immediate from \eqref{eq:K21} since
\begin{align*}
& \sup_{\| \uu \|=1} \E_\xx \Big[ \big( \uu^\top (\lambda \bI_n + \bK)^{-1} \bK(\cdot, \xx) \big)^2 \Big] \\
&= \sup_{\| \uu \|=1} \uu^\top \E_\xx \Big[ (\lambda \bI_n + \bK)^{-1} \bK(\cdot, \xx) \bK(\cdot, \xx)^\top(\lambda \bI_n + \bK)^{-1}  \Big] \uu \\
&= \big\| (\lambda \bI_n + \bK)^{-1}  \bK^{(2)} (\lambda \bI_n + \bK)^{-1} \big\|_\op.
\end{align*}
The proof is now complete.

\subsection{Proof of Lemma~\ref{lem:keybnd2}}

\begin{proof}[{\bf Proof of Lemma~\ref{lem:keybnd2}}]
\textbf{Step 1: Proving \eqref{ineq:delta12}--\eqref{ineq:delta232}}. Define, for $k \le N$,
\begin{equation*}
K^{(k)}(\xx_1, \xx_2) = \sigma'(\langle \xx_1, \ww_k \rangle) \sigma'(\langle \xx_2, \ww_k \rangle) \frac{\langle \xx_1, \xx_2 \rangle}{d}
\end{equation*}
so we have $\bK_N = N^{-1} \sum_{k \le N} \bK^{(k)}$. We derive
\begin{align*}
\E_{\ww,\zz_1,\zz_2} \big[ (\delta I_{12}^{g, \hh})^2 \big] & = \frac{1}{N^2} \sum_{k,k' \le N} \E_{\ww,\zz_1,\zz_2} \Big[ \vv^\top \big( (\bK^{(k)}(\cdot, \zz_1) - \bK(\cdot, \zz_1)) (\bK^{(k')}(\zz_2, \cdot) - \bK(\zz_2, \cdot)) g(\zz_1) g(\zz_2) \big) \vv \Big] \\
&= \frac{1}{N^2} \sum_{k \le N} \vv^\top \E_{\ww,\zz_1,\zz_2}\Big[  \big(\bK^{(k)}(\cdot, \zz_1) - \bK(\cdot, \zz_1))\big(\bK^{(k)}(\zz_2, \cdot) - \bK(\zz_2, \cdot)) g(\zz_1) g(\zz_2)  \Big] \vv \\
&= \frac{1}{N^2} \sum_{k \le N} \vv^\top \E_{\ww,\zz_1,\zz_2} \Big[ \big( \bK^{(k)}(\cdot, \zz_1) \bK^{(k)}(\zz_2, \cdot) - \bK(\cdot, \zz_1) \bK(\zz_2, \cdot) \big) g(\zz_1) g(\zz_2)  \Big] \vv
\end{align*}
where we used independence of $\bK^{(k)}$ (conditional on $(\xx_i)_{i \le n}$) and $\E_\ww [\bK^{(k)}] = \bK$. Note that 
\begin{align*}
\vv^\top \E_{\ww,\zz_1,\zz_2} \big[  \bK(\cdot, \zz_1) \bK(\zz_2, \cdot) g(\zz_1) g(\zz_2) \big] \vv = \big(\vv^\top \E_\zz \big[ \bK(\cdot, \zz) g(\zz) \big] \big)^2 \ge 0,
\end{align*}
and that 
\begin{align*}
\E_{\ww,\zz_1,\zz_2} \Big[ \bK^{(k)}(\cdot, \zz_1) \bK^{(k)}(\zz_2, \cdot)  g(\zz_1) g(\zz_2)  \Big] = \bH_1.
\end{align*}
This proves the the bound  on $\E_{\ww,\zz_1,\zz_2} \big[ (\delta I_{12}^{g, \hh})^2 \big]$ in \eqref{ineq:delta12}. A similar bound, namely \eqref{ineq:delta231}, holds for $\E_{\ww,\zz_1,\zz_2} \big[ (\delta I_{231}^\hh)^2 \big]$, in which we replace $g$ with $\tilde h$.

Next, we derive
\begin{align*}
\E_{\ww} \big[ \delta I_{232}^\hh \big] &= \frac{1}{N^2} \sum_{k,k' \le N} \vv^\top \E_{\ww, \xx} \Big[ \big( \bK^{(k)}(\cdot, \xx) - \bK(\cdot, \xx) \big) \big(\bK^{(k')}(\xx, \cdot) - \bK(\xx, \cdot) \big)  \Big] \vv \\
&= \frac{1}{N^2} \sum_{k \le N}\vv^\top \E_{\ww, \xx}  \Big[ \big( \bK^{(k)}(\cdot, \xx) - \bK(\cdot, \xx) \big) \big(\bK^{(k)}(\xx, \cdot) - \bK(\xx, \cdot)\big) \Big] \vv \\
&= \frac{1}{N^2} \sum_{k \le N}\vv^\top \E_{\ww, \xx}  \Big[ \bK^{(k)}(\cdot, \xx) \bK^{(k)}(\xx, \cdot)  - \bK(\cdot, \xx)\bK(\xx, \cdot) \Big] \vv.
\end{align*}
Note that $\bK(\cdot, \xx)\bK(\xx, \cdot)$ is p.s.d., and that 
\begin{equation*}
\E_{\ww, \xx}  \Big[ \bK^{(k)}(\cdot, \xx) \bK^{(k)}(\xx, \cdot)  \Big] = \bH_3.
\end{equation*}
This proves the bound on $\E_{\ww} \big[ \delta I_{232}^\hh \big]$ in \eqref{ineq:delta232}.

\textbf{Step 2: proving the bounds on $\bH_1,\bH_2,\bH_3$}. We define the function $\qq(\ww) \in \R^d$ as follows.
\begin{equation*}
\qq(\ww) = \frac{1}{d} \E_\zz \big[\sigma'(\langle \zz, \ww \rangle) g(\zz) \zz \big].
\end{equation*}
We observe that, for any unit vector $\uu \in \R^d$, 
\begin{equation*}
\big|\langle \qq(\ww), \uu \rangle\big| \le 
\frac{1}{d} \Big\{ \E_\zz \big[ (\sigma'(\langle \zz, \ww \rangle) )^4 \big] \Big\}^{1/4} \cdot 
\Big\{ \E_\zz \big[g(\zz)^2\big] \Big\}^{1/2} \cdot \Big\{ \E_\zz \big[\langle \uu, \zz \rangle^4 \big]\Big\}^{1/4}
 \le \frac{C}{d} \| g\|_{L^2}^2\, ,
\end{equation*}
where we used H\"{o}lder's inequality and 
\begin{align}
&\lim_{d \to \infty} \E_\zz \big[ (\sigma'(\langle \zz, \ww \rangle) )^4] = \lim_{d \to \infty} \E_\zz \big[ (\sigma'(z_1) )^4] = \E_{G\sim \cN(0,1)}\big[ (\sigma'(G) )^4], \label{eq:sphericalmoments1} \\
& \lim_{d \to \infty}  \E_\zz \big[\langle \uu, \zz \rangle^4 \big] = \lim_{d \to \infty}  \E_\zz \big[z_1^4 \big] = \E_{G\sim \cN(0,1)}\big[ G^4]. \label{eq:sphericalmoments2}
\end{align}
Since $\| g \|_{L^2}$ is independent of $\ww$, it follows that 
\begin{equation*}
\sup_{\| \ww \|=1} \big\| \qq(\ww) \big\| = \sup_{\|\ww\|=\|\uu\|=1} \big|\langle \qq(\ww), \uu \rangle\big| \le \frac{C}{d} \| g \|_{L^2}^2.
\end{equation*}
Using this function, we express $\bH_1$ as
\begin{align*}
H_1(\xx_1,\xx_2) = \E_\ww \Big[ \sigma'(\langle \xx_1, \ww\rangle)\sigma'(\langle \xx_2, \ww\rangle) \langle \qq(\ww), \xx_1 \rangle \langle \qq(\ww), \xx_2 \rangle \Big].
\end{align*}
Let $\balpha=(\alpha_1,\ldots,\alpha_n)^\top$ be any vector, then
\begin{align*}
\balpha^\top \bH_1 \balpha &=  \E_\ww \Big[ \qq(\ww)^\top  \Big( \sum_{i,j \le n} \alpha_i \alpha_j \sigma'(\langle \xx_i, \ww\rangle)\sigma'(\langle \xx_j, \ww\rangle) \xx_i \xx_j^\top  \Big) \qq(\ww) \Big] \\
&\le \E_\ww \Big[ \big\| \qq(\ww) \big\|^2 \cdot \Big\| \sum_{i,j \le n} \alpha_i \alpha_j \sigma'(\langle \xx_1, \ww\rangle)\sigma'(\langle \xx_2, \ww\rangle) \xx_i \xx_j^\top   \Big\|_\op \Big] \\
&\le \frac{C\| g \|_{L^2}^2 }{d} \E_\ww \Big[ \Tr \Big( \sum_{i,j \le n} \alpha_i \alpha_j \sigma'(\langle \xx_1, \ww\rangle)\sigma'(\langle \xx_2, \ww\rangle) \xx_i \xx_j^\top\Big) \Big] \\
&= C\| g \|_{L^2}^2\sum_{i,j \le n} \alpha_i \alpha_j \E_\ww \Big[\sigma'(\langle \xx_i, \ww \rangle) \sigma'(\langle \xx_j,\ww \rangle) \frac{\langle \xx_i, \xx_j \rangle}{d} \Big] \\
&=C \| g \|_{L^2}^2 \cdot \balpha^\top \bK \balpha
\end{align*}
where we used $\| \bA \|_\op \le \Tr(\bA)$ for a p.s.d.~matrix $\bA$. This shows $\bH_1 \preceq Cd^{-1} \| g \|_{L^2}^2 \bK$. The proof for $\bH_2$ is similar.

We also define $\QQ(\ww) \in \R^{n \times n}$:
\begin{align*}
\QQ(\ww) = \E_\ww \Big[ \big(\sigma'(\langle \ww, \zz \rangle)\big)^2 \frac{\zz \zz^\top}{d} \Big].
\end{align*}
By definition, $\QQ(\ww)$ is p.s.d.~for all $\ww$. For any $\ww$ and unit vector $\uu$, 
\begin{align*}
\uu^\top \QQ(\ww) \uu &= \frac{1}{d} \E_\zz \Big[ \big(\sigma'(\langle \ww, \zz \rangle) \big)^2 \langle \uu, \zz \rangle^2 \Big] \\
&\le \frac{1}{d} \Big\{ \E_\zz \Big[ \big(\sigma'(\langle \ww, \zz \rangle) \big)^4 \Big] \Big\}^{1/2} \cdot \Big\{ \E_\ww \Big[ \langle \uu, \zz \rangle^4 \Big] \Big \}^{1/2} \le \frac{C}{d}
\end{align*}
where we used the Cauchy-Schwarz inequality and \eqref{eq:sphericalmoments1}--\eqref{eq:sphericalmoments2}. This implies 
\begin{equation*}
\sup_{\| \ww \|=1} \big\| \QQ(\ww) \big\|_\op \le \frac{C}{d}.
\end{equation*}
For any vector $\balpha=(\alpha_1,\ldots,\alpha_n)^\top$, we derive
\begin{align*}
\balpha^\top \bH_3  \balpha &= \frac{1}{d} \E_\ww \Big[ \sum_{i,j \le n} \alpha_i \alpha_j \sigma'(\langle\xx_i, \ww\rangle) \sigma'(\langle\xx_j, \ww\rangle) \xx_i^\top \QQ(\ww) \xx_j  \Big] \\
&= \frac{1}{d} \E_\ww \Big[ \Tr \Big( \sum_{i,j \le n} \alpha_i \alpha_j \sigma'(\langle\xx_i, \ww\rangle) \sigma'(\langle\xx_j, \ww\rangle)\xx_j \xx_i^\top \QQ(\ww) \Big) \Big] \\
&\le \frac{1}{d} \E_\ww \Big[ \big\| \QQ(\ww) \big\|_\op \cdot \Tr \Big( \sum_{i,j \le n} \alpha_i \alpha_j \sigma'(\langle\xx_i, \ww\rangle) \sigma'(\langle\xx_j, \ww\rangle) \xx_j \xx_i^\top \Big) \Big] \\
&\le \frac{C}{d} \E_\ww \Big[ \sum_{i,j \le n} \alpha_i \alpha_j \sigma'(\langle\xx_i, \ww\rangle) \sigma'(\langle\xx_j, \ww\rangle)\frac{\langle \xx_i, \xx_j \rangle}{d} \Big] \\
&= \frac{C}{d} \balpha^\top \bK \balpha,
\end{align*}
where we used $\Tr(\bA_1 \bA_2) \le \| \bA_1 \|_\op \Tr(\bA_2)$ for p.s.d.~matrices $\bA_1, \bA_2$. This proves $\bH_3 \preceq Cd^{-1} \bK$.

\textbf{Step 3: Proving the ``as a consequence'' part}. Now we derive bounds on $\delta I_{12}^{g, \hh}$ 
and $\delta I_{23}^{\hh}$. By assumption, $\| g \|_{L^2}$ is bounded by a constant. 

Also, by assumption  $\| \hh \|^2 / n \le C$ with high probability. Therefore
\begin{align*}
\E_{\ww}\big[(\delta I_{12}^{g, \hh})^2 \big] &\le \frac{1}{Nd} \hh^\top (\lambda \bI_n + \bK)^{-1} \bK (\lambda \bI_n + \bK)^{-1} \hh \le \frac{1}{Nd} \hh^\top (\lambda \bI_n + \bK)^{-1} \hh \\
&\le \frac{1}{cNd} \| \hh \|^2 \le \frac{Cn}{Nd}.
\end{align*}
Similar bounds hold for $\E_{\ww}\big[(\delta I_{231}^\hh)^2 \big]$ and $\E_{\ww}\big[ \delta I_{232}^\hh \big]$. Note that $\delta I_{232}^\hh$ is always nonnegative. By Markov's inequality, we have w.h.p.,
\begin{equation*}
|\delta I_{12}^{g,\hh} | \le \sqrt{\frac{Cn \log n}{Nd}}, \qquad |\delta I_{231}^\hh | \le \sqrt{\frac{Cn \log n}{Nd}}, \qquad |\delta I_{232}^\hh | \le \frac{Cn \log n}{Nd}.
\end{equation*}
Combining the last two bounds then leads to the bound on $|\delta I_{23}^\hh|$.
\end{proof}

\subsection{Proof of Theorem~\ref{thm:reduceprr}}

By homogeneity we can and will assume, without loss of generality, $\|f\|_{L^2}=\sigma_{\veps}=1$. It is convenient to introduce the following matrix $\bK^{(p,2)} \in\reals^{n\times n}$
\begin{equation}\label{def:Kp2}
\bK^{(p,2)} = \big(\E_\xx[ K^p(\xx_i, \xx)K^p(\xx, \xx_j) ] \big)_{i,j\le n}\, .
\end{equation}
We further define $\obK^p := \bK^p+\gamma_{>\ell}\id_n$ or, equivalently,
\begin{align}
\bar K^p_{ij} &=  \sum_{k=0}^\ell \gamma_k  Q_k^{(d)}(\langle \xx_i, \xx_j \rangle) + \gamma_{>\ell}\delta_{ij}\, .
\end{align}
We will use the following lemma. 
\begin{lem}\label{lem:toPRR}
Suppose that $C_1>0$ is a constant and that $\hh_1, \hh_2 \in \R^n$ are  random vectors that satisfy $\max\{\| \hh_1 \|, \| \hh_2 \|\} \le C_1 \sqrt{n}$ with high probability. Then, there exists a constant $C'>0$ such that the following bounds hold with high probability.
\begin{align}
&\big| \hh_1^\top (\lambda \bI_n + \bK)^{-1} \bK^{(2)} (\lambda \bI_n + \bK)^{-1} \hh_2 - \hh_1^\top (\lambda \bI_n + \obK^{p})^{-1} \bK^{(p,2)} (\lambda \bI_n + \obK^{p})^{-1} \hh_2 \big| \le \sqrt{\frac{C'(\log n)^{C'} n}{d^{\ell+1}}},  \label{ineq:toPRR1}\\
&\big| \hh_1^\top (\lambda \bI_n + \bK)^{-1} \E_\xx \big[ \bK(\cdot, \xx) f(\xx) \big] - \hh_1^\top (\lambda \bI_n + \obK^p)^{-1} \E_\xx \big[ \bK^p(\cdot, \xx) f(\xx) \big] \big| \le \sqrt{\frac{C'(\log n)^{C'} n}{d^{\ell+1}}}\|f\|_{L^2} \label{ineq:toPRR2}
\end{align}
\end{lem}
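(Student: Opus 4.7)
The plan is to identify the matrix-level differences between the pairs $(\bK, \obK^p)$ and $(\bK^{(2)}, \bK^{(p,2)})$, and then bound the right-hand side of both inequalities by a sequence of perturbation estimates anchored on the identity \eqref{eq:magic}. First I would observe that by the addition theorem \eqref{eq:addtheorem}, the polynomial kernel matrix can be rewritten as $\bK^p = \bPsi_{\le\ell}\bLambda_{\le\ell}^2\bPsi_{\le\ell}^\top$, so $\obK^p = \bPsi_{\le\ell}\bLambda_{\le\ell}^2\bPsi_{\le\ell}^\top + \gamma_{>\ell}\id_n$ is precisely the ``main part'' appearing in Lemma~\ref{lem:Kdecomp}. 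Consequently $\bK - \obK^p = \bDelta$, with $\|\bDelta\|_\op \le \sqrt{C(\log n)^C n / d^{\ell+1}}$ w.h.p. Repeating the Gegenbauer computation of Lemma~\ref{lem:K2} on the truncated kernel $K^p$ gives $\bK^{(p,2)} = \bPsi_{\le\ell}\bLambda_{\le\ell}^4\bPsi_{\le\ell}^\top$, and therefore $\bK^{(2)} - \bK^{(p,2)} = \gamma_{>\ell}^{(2)}\id_n + \bDelta^{(2)}$ with operator norm $O(d^{-(\ell+1)})$, which is negligible compared to the target rate.

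For \eqref{ineq:toPRR1}, set $\bA = \lambda\id_n + \bK$, $\bA_0 = \lambda\id_n + \obK^p$, $\bH = \bK^{(2)}$, $\bH' = \bK^{(p,2)}$, and split the difference as
\[
\hh_1^\top\bigl(\bA^{-1}\bH\bA^{-1} - \bA_0^{-1}\bH\bA_0^{-1}\bigr)\hh_2 + \hh_1^\top \bA_0^{-1}(\bH - \bH')\bA_0^{-1}\hh_2.
\]
Applying \eqref{eq:magic} with these choices and using $\bA - \bA_0 = \bDelta$ reduces the first piece to three terms each involving the factor $\bDelta$. The needed operator-norm bounds are: $\|\bA^{-1}\|_\op, \|\bA_0^{-1}\|_\op \le C$ (from Theorem~\ref{thm:MinEigenvalue} and the trivial bound $\obK^p \succeq \gamma_{>\ell}\id_n$), and $\|\bA_0^{-1}\bH'\bA_0^{-1}\|_\op \le C/n$, proved in direct analogy to Lemma~\ref{lem:K2} by invoking the identity~\eqref{eq:algebra} together with $\sigma_{\min}(\bPsi_{\le\ell}^\top\bPsi_{\le\ell}) \gtrsim n$ from~\eqref{eq:psi-conctr}. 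Combined with $\|\hh_j\| \le C\sqrt{n}$ this yields a contribution of order $\sqrt{n} \cdot \|\bDelta\|_\op \cdot (C/n) \cdot \sqrt{n} = O(\|\bDelta\|_\op)$, matching the target $\eta'$. The residual term $\hh_1^\top\bA_0^{-1}(\bH-\bH')\bA_0^{-1}\hh_2$ is bounded by $Cn\|\bH-\bH'\|_\op = O(n/d^{\ell+1})$, which is even smaller than $\eta'$ under Assumption~\ref{ass:Asymp}.

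For \eqref{ineq:toPRR2}, let $\bv = \E_\xx[\bK(\cdot,\xx)f(\xx)]$ and $\bv' = \E_\xx[\bK^p(\cdot,\xx)f(\xx)]$ and decompose
\[
\hh_1^\top \bA^{-1}\bv - \hh_1^\top \bA_0^{-1}\bv' = -\hh_1^\top \bA^{-1}\bDelta \bA_0^{-1}\bv' + \hh_1^\top \bA^{-1}(\bv-\bv').
\]
The first term is handled as in the previous paragraph, using $\|\bA_0^{-1}\bv'\| \le C\|f\|_{L^2}/\sqrt{n}$ (the $\obK^p$-analog of Lemma~\ref{lem:K2}, Eq.~\eqref{eq:K22}, which is proved by the same identity~\eqref{eq:algebra} argument since $\bv'$ already lies in the column span of $\bPsi_{\le\ell}$). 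The second term requires bounding $\|\bv - \bv'\|$; expanding $\bv_i - \bv'_i = \sum_{k>\ell}\frac{\gamma_k}{B(d,k)}\sum_s Y_{ks}(\xx_i)\langle f,Y_{ks}\rangle_{L^2}$ via the addition theorem and using Parseval together with $\sum_{k>\ell}\gamma_k^2/B(d,k)^2 = O(d^{-(\ell+1)})$ gives $\|\bv-\bv'\|/\sqrt n \le C\|f\|_{L^2}/d^{(\ell+1)/2}$, so the second term contributes $O(\sqrt{n/d^{\ell+1}}\,\|f\|_{L^2})$, again below the target rate.

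The main obstacle I anticipate is not a single hard step but rather the need to track the relative scales of $\bA, \bA_0, \bH, \bH', \bv, \bv'$ carefully: each of $\bH, \bH'$ is small in operator norm ($O(1)$) but must be ``divided by $n$'' by two factors of $\bA_0^{-1}$ via the algebraic identities in order for the bound to close. The perturbation estimate on $\bv-\bv'$ via harmonic expansion is the most technically delicate piece, and borrowing its proof from Lemma~\ref{lem:K2} (which exploits the same Gegenbauer/addition-theorem structure) should make it routine.
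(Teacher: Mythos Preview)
Your overall decomposition and use of identity \eqref{eq:magic} match the paper's proof exactly, and your treatment of \eqref{ineq:toPRR1} is correct (the paper appeals directly to Lemma~\ref{lem:K2} with $\bK$ rather than proving the $\obK^p$-analog, but either route works).

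There is, however, a genuine gap in your bound on the second term of \eqref{ineq:toPRR2}. You propose to bound $\|\bv-\bv'\|$ directly and then use $|\hh_1^\top\bA^{-1}(\bv-\bv')|\le\|\hh_1\|\,\|\bA^{-1}\|_\op\,\|\bv-\bv'\|$. Your entrywise Cauchy--Schwarz (via the addition theorem) gives the deterministic bound $|(\bv-\bv')_i|\le C\|f\|_{L^2}/d^{(\ell+1)/2}$, hence $\|\bv-\bv'\|\le C\sqrt{n}\,\|f\|_{L^2}/d^{(\ell+1)/2}$. Combined with $\|\hh_1\|\le C\sqrt n$ and $\|\bA^{-1}\|_\op\le C$ this yields $Cn/d^{(\ell+1)/2}$, which is a factor $\sqrt n$ larger than the target $\sqrt{n/d^{\ell+1}}$ and does not close under Assumption~\ref{ass:Asymp}.

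The paper avoids this loss by applying Cauchy--Schwarz \emph{inside} the expectation over $\xx$ rather than entrywise: writing
\[
\big|\hh_1^\top\bA_0^{-1}(\bv-\bv')\big|^2
\;\le\; \hh_1^\top\bA_0^{-1}\,\E_\xx\!\big[(\bK-\bK^p)(\cdot,\xx)(\bK-\bK^p)(\cdot,\xx)^\top\big]\,\bA_0^{-1}\hh_1\cdot\|f\|_{L^2}^2,
\]
and then using the Gegenbauer identity $\E_\xx[(\bK-\bK^p)(\cdot,\xx)(\bK-\bK^p)(\cdot,\xx)^\top]=\sum_{k>\ell}\tfrac{\gamma_k^2}{B(d,k)}\QQ_k$, whose operator norm is $O(d^{-(\ell+1)})$ by Proposition~\ref{prop:Qconctr}. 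This places the small factor $d^{-(\ell+1)}$ at the matrix level, so the quadratic form picks up only one power of $n$ (from $\|\hh_1\|^2$) rather than two, giving $|\delta I_3'|^2\le Cn/d^{\ell+1}$ as required. Your harmonic expansion is the right ingredient, but it needs to be packaged as an operator-norm bound on this second-moment matrix rather than as a pointwise bound on $\bv-\bv'$.
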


We first show that Theorem~\ref{thm:reduceprr} follows from this lemma. After that, we will prove Lemma~\ref{lem:toPRR}.
\begin{proof}[{\bf Proof of Theorem~\ref{thm:reduceprr}}]
First we observe that 
\begin{align*}
E_{\bias} - E_{\bias}^p &= -2\Big( \ff^\top (\lambda \bI_n + \bK)^{-1} \E_\xx \big[ \bK(\cdot, \xx) f(\xx) \big] - \ff^\top (\lambda \bI_n + \obK^p)^{-1} \E_\xx \big[ \bK^p(\cdot, \xx) f(\xx) \big] \Big) \\
&+ \Big(\ff^\top (\lambda \bI_n + \bK)^{-1} \bK^{(2)} (\lambda \bI_n + \bK)^{-1} \ff - \ff^\top (\lambda \bI_n + \obK^{p})^{-1} \bK^{(p,2)} (\lambda \bI_n + \obK^{p})^{-1} \ff\Big).
\end{align*}
In Lemma~\ref{lem:toPRR} Eqs.~\eqref{ineq:toPRR1} and \eqref{ineq:toPRR2}, we set $\hh_1 = \hh_2 = \ff$, which yields $|E_{\bias} - E_{\bias}^p| \le  \eta'$. For the variance term, we apply Lemma~\ref{lem:toPRR} Eq.~\eqref{ineq:toPRR1} with $\hh_1 = \hh_2 = \bveps$, which yields $|E_{\var} - E_{\var}^p| \le  \eta'$. 

For the cross term, we observe that 
\begin{align*}
E_{\cross} - E_{\cross}^p &= \Big( \bveps^\top (\lambda \bI_n + \bK)^{-1} \E_\xx \big[ \bK(\cdot, \xx) f(\xx) \big] - \bveps^\top (\lambda \bI_n + \obK^p)^{-1} \E_\xx \big[ \bK^p(\cdot, \xx) f(\xx) \big] \Big) \\
&- \Big(\bveps^\top (\lambda \bI_n + \bK)^{-1} \bK^{(2)} (\lambda \bI_n + \bK)^{-1} \ff - \bveps^\top (\lambda \bI_n + \obK^{p})^{-1} \bK^{(p,2)} (\lambda \bI_n + \obK^{p})^{-1} \ff\Big).
\end{align*}
We apply Lemma~\ref{lem:toPRR} with $\hh_1 = \bveps$ and $\hh_2 = \ff$. This leads to $|E_{\cross} - E_{\cross}^p| \le  \eta'$, which completes the proof. 
\end{proof}

\begin{proof}[{\bf Proof of Lemma~\ref{lem:toPRR}}]
Define differences
\begin{align*}
\delta I_1' &=  \hh_1^\top (\lambda \bI_n + \obK^{p})^{-1} \big(\bK^{(2)}- \bK^{(p,2)}) \big (\lambda \bI_n + \obK^{p})^{-1} \hh_2 \\
\delta I_2' &= \hh_1^\top (\lambda \bI_n + \bK)^{-1}  \bK^{(2)} \big (\lambda \bI_n + \bK)^{-1} \hh_2 - \hh_1^\top (\lambda \bI_n + \obK^{p})^{-1} \bK^{(2)} \big (\lambda \bI_n + \obK^{p})^{-1} \hh_2, \\
\delta I_3' &= \hh_1^\top (\lambda \bI_n + \obK^p)^{-1} \E_\xx \big[ (\bK(\cdot, \xx) - \bK^p(\cdot, \xx) ) f(\xx) \big], \\
\delta I_4' &= \hh_1^\top \big[ (\lambda \bI_n + \bK)^{-1} - (\lambda \bI_n + \obK^p)^{-1} \big] \E_\xx \big[ \bK(\cdot, \xx) f(\xx) \big].
\end{align*}
We have
\begin{align*}
& \hh_1^\top (\lambda \bI_n + \bK)^{-1} \bK^{(2)} (\lambda \bI_n + \bK)^{-1} \hh_2 - \hh_1^\top (\lambda \bI_n + \obK^{p})^{-1} \bK^{(p,2)} (\lambda \bI_n + \obK^{p})^{-1} \hh_2  = \delta I_1' + \delta I_2' , \\
 & \hh_1^\top (\lambda \bI_n + \bK)^{-1} \E_\xx \big[ \bK(\cdot, \xx) f(\xx) \big] - \hh_1^\top (\lambda \bI_n + \obK^p)^{-1} \E_\xx \big[ \bK^p(\cdot, \xx) f(\xx) \big] = \delta I_3' + \delta I_4'.
\end{align*}
We observe that 
\begin{align*}
\bar K^p_{ij} &=  \sum_{k=0}^\ell \gamma_k  Q_k^{(d)}(\langle \xx_i, \xx_j \rangle) + \gamma_{>\ell} \delta_{ij}, \\
K^{(p,2)}_{ij} & = \sum_{k,m=0}^\ell \gamma_k \gamma_m \E_\xx \big[ Q_k^{(d)}(\langle \xx_i, \xx \rangle)Q_k^{(d)}(\langle \xx, \xx_j \rangle) \big] = \sum_{k=0}^\ell \frac{\gamma_k^2}{B(d,k)}Q_k^{(d)}(\langle \xx_i, \xx_j \rangle) .
\end{align*}
It follows that
\begin{align}
&\bK - \obK^p = \sum_{k>\ell} \gamma_k(\QQ_k - \bI_n), \qquad \bK^{(2)} - \bK^{(p,2)} = \sum_{k > \ell} \frac{\gamma_k^2}{B(d,k)}\QQ_k,  \label{ineq:KKpdiff}\\
& \E_\xx \big[ (\bK(\cdot, \xx) - \bK^p(\cdot, \xx)) (\bK(\cdot, \xx) - \bK^p(\cdot, \xx))^\top   \big] = \sum_{k > \ell} \frac{\gamma_k^2}{B(d,k)}  \QQ_k.\label{ineq:KKpdiff2}
\end{align}
Therefore, with high probability,
\begin{align}
\big\| \bK  - \obK^p \big\|_\op &\le \Big\| \sum_{k > \ell} \gamma_k ( \QQ_k - \bI_n) \Big\|_\op \le \gamma_{>\ell} \sup_{k > \ell} \big\| \QQ_k - \bI_n \big\|_\op \stackrel{(i)}{\le} \sqrt{\frac{C(\log n)^C n}{d^{\ell+1}}}, \label{ineq:KKp} \\
\big\| \bK^{(2)} - \bK^{(p,2)} \big\|_\op & = \Big\| \sum_{k>\ell} \frac{\gamma_k^2}{B(d,k)} \QQ_k \Big\|_\op \le \sum_{k>\ell}\gamma_k^2 \cdot \frac{C }{d^{\ell+1}} \sup_{k > \ell} \big\|  \QQ_k \big\|_\op \\
&\stackrel{(ii)}{\le} \gamma_{>\ell}^2  \cdot \frac{C }{d^{\ell+1}} \cdot (1 + o_{d,\P}(1)) \le \frac{C }{d^{\ell+1}}, \label{ineq:K2Kp2}
\end{align}
where in \emph{(i), (ii)} we used Proposition~\ref{prop:Qconctr}. Also, note that $\bar \bK^p \succeq \gamma_{>\ell} \bI_n$, so we must have $\| (\lambda \bI_n + \bar \bK^{p})^{-1} \|_\op \le \gamma_{>\ell}^{-1}$. Thus, w.h.p.,
\begin{equation*}
\big| \delta I_1' \big| \le \| \hh_1 \|\cdot \|\hh_2\| \cdot \big\| (\lambda \bI_n + \bar \bK^{p})^{-1}  \big\|_\op^2 \cdot \big\| \bK^{(2)} - \bK^{(p,2)} \big\|_\op \le \frac{Cn}{d^{\ell+1}}.
\end{equation*}
In the identity \eqref{eq:magic}, we set $\bA_0 = \lambda \bI_n + \obK^p$, $\bA = \lambda \bI_n + \bK$, and $\bH = \bK^{(2)}$. It holds that w.h.p., 
$\| \bA_0^{-1} \|_\op ,\| \bA^{-1} \|_\op \le C$, $\| \bA - \bA_0\|_\op \le \sqrt{C(\log n)^C n / d^{\ell+1}}$. Thus, w.h.p.,
\begin{align*}
\big| \delta I_2'  \big| &\le C\| \hh_1 \|\cdot \|\hh_2\| \cdot  \sqrt{\frac{C(\log n)^C n}{d^{\ell+1}}} \cdot \big\| (\lambda \bI_n + \bK)^{-1}  \bK^{(2)} \big (\lambda \bI_n + \bK)^{-1} \big\|_\op \le Cn \cdot \sqrt{\frac{(\log n)^C n}{d^{\ell+1}}} \cdot \frac{C}{n} \\
&\le \sqrt{\frac{C(\log n)^C n}{d^{\ell+1}}}
\end{align*}
where we used Lemma~\ref{lem:K2} Eq.~\eqref{eq:K21}. Combining the upper bounds on $|\delta I_1'|$ and $|\delta I_2'|$, we arrive at the first inequality in the lemma.

Next, we bound $|\delta I_3'|$. With high probability,
\begin{align*}
| \delta I_3' |^2 &\stackrel{(i)}{\le} \E_\xx \Big[ \big( \hh_1^\top (\lambda \bI_n + \bar \bK^p)^{-1} \big( \bK(\cdot, \xx) - \bK^p(\cdot, \xx) \big)^2  \Big] \cdot \E_\xx \big[ (f(\xx))^2 \big] \\
&\le \hh_1^\top (\lambda \bI_n + \bK^p)^{-1}  \E_\xx \big[ \big( \bK(\cdot, \xx) - \bK^p(\cdot, \xx) \big) \big( \bK(\cdot, \xx) - \bK^p(\cdot, \xx) \big)^\top \big] (\lambda \bI_n + \bK^p)^{-1} \hh_1 \cdot \| f \|_{L^2}^2 \\
&\stackrel{(ii)}{\le} \sum_{k > \ell} \frac{C\gamma_k^2}{B(d,k)} \cdot \hh_1^\top (\lambda \bI_n + \bar \bK^p)^{-1} \QQ_k (\lambda \bI_n + \bar \bK^p)^{-1} \hh_1  \\
&\le \frac{C}{d^{\ell+1}} \cdot  \sum_{k > \ell} \gamma_k^2 \cdot \| \hh_1\|^2 \cdot \| (\lambda \bI_n + \bar \bK^p \big)^{-1} \|^2_{\op} \cdot \| \QQ_k \|_\op \\
&\stackrel{(iii)}{\le} \frac{Cn}{d^{\ell+1}} \cdot \gamma_{>\ell}^2 \cdot  \sup_{k > \ell} \| \QQ_k \|_\op \stackrel{(iv)}{\le} \frac{Cn}{d^{\ell+1}}
\end{align*}
where \textit{(i)} follows from the Cauchy-Schwarz inequality, \textit{(ii)} follows 
from \eqref{ineq:KKpdiff2}, \textit{(iii)} is because $\| \hh \| \le C \sqrt{n}$ w.h.p.~by
 assumption and $\| (\lambda\bI_n + \bar \bK^p)^{-1} \|_\op \le \gamma_{>\ell}^{-1}$, and \textit{(iv)} 
 follows from Proposition~\ref{prop:Qconctr}. Finally,
\begin{align*}
| \delta I_4' | &\le \big\| \hh_1^\top (\lambda \bI_n + \bar \bK^p)^{-1} \big\| \cdot \| \bar \bK^p - \bK \|_\op \cdot \big\| (\lambda \bI_n + \bK)^{-1} \E_\xx \big[\bK(\cdot,\xx) f(\xx) \big] \big\|_\op \\
&\stackrel{(i)}{\le} C \sqrt{n} \cdot   \sqrt{\frac{C(\log n)^C n}{d^{\ell+1}}} \cdot \frac{C}{\sqrt{n}}  \stackrel{(ii)}{\le} \sqrt{\frac{C(\log n)^C n}{d^{\ell+1}}}
\end{align*}
where in \textit{(i)} we used  $\| (\lambda \bI_n + \bar \bK^{p})^{-1} \|_\op \le \gamma_{>\ell}^{-1}$, Eq.~\eqref{ineq:KKp}, and Lemma~\ref{lem:K2}~Eq.~\eqref{eq:K22}. Combining the upper bounds on $|\delta I_3'|$ and $|\delta I_4'|$, we arrive at the second inequality of this lemma.
\end{proof}

\section{Generalization error: improved analysis for $\ell = 1$}
\label{sec:Ell1}

We will show that for the case $\ell = 1$, we can relax the condition 
\begin{equation*}
C_0 (\log d)^{C_0} d  \le n \le \frac{d^2}{C_0(\log d)^{C_0}} \qquad \text{to} \qquad c_0 d  \le n \le \frac{d^2}{C_0(\log d)^{C_0}}.
\end{equation*}
The proof of the generalization error under this relaxed condition follows mostly the proof  in Section~\ref{sec:proofkeybnd}, with modifications we show in this subsection.

Throughout, we suppose that $\ell = 1$ and correspondingly $D = d+1$. The matrix $\bPsi_{\le \ell} \in \R^{n \times D}$ is given by 
\begin{equation*}
\bPsi_{\le \ell} = \big( \bone_n, \XX \big) .
\end{equation*}
An important difference under the relaxed condition is that 
$n^{-1} \bPsi_{\le \ell}^\top \bPsi_{\le \ell}$ does not necessarily converge to $\bI_D$ (cf.~Eq.~\ref{eq:psi-conctr}). (In fact, if $n,d$ satisfy $n/d \to \kappa$, the spectra of $n^{-1}\XX^\top \XX$ is characterized by the Marchenko-Pastur distribution.) Nevertheless, if $n \ge C d$ where $C$ is sufficiently large, we have a good control of $n^{-1} \bPsi_{\le \ell}^\top \bPsi_{\le \ell}$. 

\begin{lem}\label{lem:spectracontrol}
For any constant $\delta \in (0,0.1)$, there exists certain $C_\delta>1$ such that the following holds. If $n \ge C_\delta d$, then with very high probability,
\begin{equation}\label{eq:sepctracontrol}
\Big\| \frac{1}{n} \bPsi_{\le \ell}^\top \bPsi_{\le \ell} - \bI_{D} \Big\|_\op \le \delta.
\end{equation}
\end{lem}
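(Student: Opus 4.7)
Since $\ell=1$, the matrix $\bPsi_{\le 1}=(\bone_n,\XX)$ has size $n\times(d+1)$, so the claim concerns a $(d+1)\times(d+1)$ block matrix. Write
\begin{equation*}
\frac{1}{n}\bPsi_{\le 1}^\top \bPsi_{\le 1}-\bI_{d+1}
=\begin{pmatrix} 0 & \tfrac{1}{n}\bone_n^\top \XX \\[2pt] \tfrac{1}{n}\XX^\top\bone_n & \tfrac{1}{n}\XX^\top\XX-\bI_d \end{pmatrix}.
\end{equation*}
The strategy is to control the two nontrivial blocks separately and combine them via the trivial operator-norm inequality for $2\times 2$ block matrices. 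Because the rows $\xx_i$ are uniform on $\S^{d-1}(\sqrt{d})$, they are isotropic ($\E[\xx_i\xx_i^\top]=\bI_d$) and subgaussian with an absolute constant (standard fact, e.g.\ \cite{vershynin2018high}, Sect.~3.4). This is the only input that will be used; importantly $\|\xx_i\|^2=d$ holds deterministically.

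First I would bound the lower-right block. This is exactly the setup of \cite{vershynin2018high}[Thm.~5.6.1 and Exercise~5.6.4]: for i.i.d.\ isotropic sub-gaussian rows with sub-gaussian constant $K=O(1)$, one has
\begin{equation*}
\Big\|\tfrac{1}{n}\XX^\top\XX-\bI_d\Big\|_\op \le C\Big(\sqrt{\tfrac{d+u}{n}}+\tfrac{d+u}{n}\Big)
\end{equation*}
with probability at least $1-2e^{-u}$. Taking $u=(\log d)^2$ gives the bound with very high probability; its size is $O(\sqrt{d/n})$ for $n\gtrsim d$. Next I would bound the off-diagonal vector $\frac{1}{n}\XX^\top\bone_n=\frac{1}{n}\sum_{i=1}^n \xx_i$. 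Since the $\xx_i$ are i.i.d., mean zero, and sub-gaussian with constant $O(1)$, $\sum_i\xx_i$ is sub-gaussian in $\R^d$; by a standard $\veps$-net argument on $\S^{d-1}$,
\begin{equation*}
\Big\|\tfrac{1}{n}\XX^\top\bone_n\Big\| \le C\sqrt{\tfrac{d+u}{n}}
\end{equation*}
with probability at least $1-2e^{-u}$, and again choosing $u=(\log d)^2$ makes this a very-high-probability statement of order $O(\sqrt{d/n})$.

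To finish, I would combine the two bounds. Writing the block matrix above as $\bM$, the elementary inequality
\begin{equation*}
\|\bM\|_\op\le \Big\|\tfrac{1}{n}\XX^\top\XX-\bI_d\Big\|_\op+2\,\Big\|\tfrac{1}{n}\XX^\top\bone_n\Big\|
\end{equation*}
(obtained by viewing $\bM$ as the sum of the lower-right block padded with zeros plus a rank-$2$ symmetric piece built from the off-diagonal vector) gives $\|\bM\|_\op\le C'\sqrt{d/n}$ with very high probability. Choosing $C_\delta\ge(C'/\delta)^2$ yields $\|\bM\|_\op\le\delta$, as required.

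The main obstacle is essentially just bookkeeping: the concentration inputs are completely standard, but in contrast to Eq.~\eqref{eq:psi-conctr}, where we could afford to make the error $o(1)$ by choosing $n\gg d(\log d)^2$, here we only require $n\ge C_\delta d$ and consequently cannot drive the error to zero, merely below a prescribed $\delta$. This is exactly the trade-off encoded in the choice of $C_\delta$, and it is why the result takes the ``bounded ratio'' form rather than an asymptotic one.
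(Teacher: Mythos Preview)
Your approach is correct and matches the paper's intent: Lemma~\ref{lem:spectracontrol} is stated there without a separate proof, being the $\ell=1$ specialization of Eq.~\eqref{eq:psi-conctr} in Lemma~\ref{lem:Kdecomp} (where the paper already remarks that for $\ell=1$ the bound sharpens to $\sqrt{Cd/n}$, precisely because the rows are sub-gaussian rather than merely bounded). One correction: Thm.~5.6.1 and Ex.~5.6.4 of \cite{vershynin2018high} are the \emph{bounded-row} statements carrying a $\log D$ factor (these are what the paper invokes for general $\ell$); the log-free sub-gaussian version you actually need is Thm.~4.6.1 of the same book, and since the rows $(1,\xx_i)$ of $\bPsi_{\le 1}$ are themselves isotropic and sub-gaussian with $O(1)$ constant, you could apply it directly to $\bPsi_{\le 1}$ and skip the block decomposition altogether.
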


Fix the constant $\delta = 0.01$. Let $C_\delta > 1$ be the constant in Lemma~\ref{lem:spectracontrol}. First, we establish some useful results under the condition 
\begin{align}\label{cond:improved}
C_\delta d \le n  \le \frac{d^2}{C (\log d)^C}, \qquad n \le \frac{Nd}{(\log(Nd))^{C}}\, ,
\end{align}
for a sufficiently large $C>0$ such that the following inequalities hold by 
Lemma~\ref{lem:Kdecomp},~\ref{lem:spectracontrol} and Theorem~\ref{thm:invert2}: in the decomposition $\bK = \gamma_{>\ell} \bI_n + \bPsi_{\le \ell} \bLambda_{\le \ell}^2 \bPsi_{\le \ell}^\top  + \bDelta \,$, we have
\begin{align}
 &\big\| \bDelta \big\|_\op = \breve{o}_{d,\P}(1), \quad \big\| n^{-1} \bPsi_{\le \ell}^\top \bPsi_{\le \ell} - \bI_n \big\|_\op \le \delta + \breve{o}_{d,\P}(1), \quad
\big\| \bK^{-1/2} (\bK_N - \bK) \bK^{-1/2} \big\|_\op = \breve{o}_{d,\P}(1)\, . \label{eq:quickref2}
\end{align}

We strengthen Lemma~\ref{lem:eigval} for the case $\ell=1$.

\begin{lem}[Kernel eigenvalue structure: case $\ell=1$]\label{lem:eigval1}
Assume that Eq.~\eqref{cond:improved} holds and set $\ell=1$ and $D=d+1$.
Then the eigen-decomposition of $\bK - \gamma_{>\ell} \bI_n$ and $\bK_N - \gamma_{>\ell} \bI_n $ 
can be written in the following form:
\begin{align}
&\bK - \gamma_{>\ell} \bI_n = \bU \bD \bU^\top + \bDelta^{(\res)}, \label{eq:KeigdecompL1}\\
&\bK_N - \gamma_{>\ell} \bI_n = \bU_N \bD_N \bU_N^\top + \bDelta^{(\res)}_N,
\label{eq:KNeigdecompL1}
\end{align}
where $\bD, \bD_N\in\reals^{D\times D}$ are diagonal matrices that contain $D$ eigenvalues of $\bK, \bK_N$ respectively, 
columns of $\bU, \bU_N \in \R^{n \times D}$ are the corresponding eigenvectors 
and $\bDelta^{(\res)}, \bDelta_N^{(\res)}$ correspond to the other eigenvectors 
(in particular $\bDelta^{(\res)}\bU=\bDelta_N^{(\res)}\bU_N= \bzero$).

 Further defining 
\begin{equation*}
\bD^* = \diag \Big(\gamma_0 n, \underbrace{\frac{\gamma_1 n}{d}, \ldots, \frac{\gamma_1 n}{d}}_{d}\Big).
\end{equation*}
the eigenvalues have the following structure:
\begin{align}
& (1-2\delta - \breve{o}_{d,\P}(1)) \cdot \bD^* - \breve{o}_{d,\P}(1) \le \bD \le  (1+2\delta + \breve{o}_{d,\P}(1)) \cdot \bD^* + \breve{o}_{d,\P}(1), \label{eq:Keig-L1}\\
&(1-2\delta - \breve{o}_{d,\P}(1)) \cdot \bD^* - \breve{o}_{d,\P}(1) \le \bD_N \le  (1+2\delta + \breve{o}_{d,\P}(1)) \cdot \bD^* + \breve{o}_{d,\P}(1). \label{eq:KNeig-L1}
\end{align}
Here, $\le$ denotes entrywise comparisons ($\bA \le \bA'$ if and only if  $A_{ij} \le A'_{ij}$ for all $i,j$). Moreover, the remaining components satisfy $\| \bDelta^{(\res)} \|_\op = \breve{o}_{d,\P}(1)$ and $\| \bDelta^{(\res)}_N \|_\op = \breve{o}_{d,\P}(1)$. 
\end{lem}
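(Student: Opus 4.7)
The argument parallels the proof of Lemma~\ref{lem:eigval}, with the crucial difference that the bound $\|n^{-1}\bPsi_{\le\ell}^\top\bPsi_{\le\ell}-\bI_D\|_\op = \breve{o}_{d,\P}(1)$ used there is no longer available under the weaker assumption $n\ge C_\delta d$. Instead, we use the constant-factor control from Lemma~\ref{lem:spectracontrol}, which is exactly what produces the constant $2\delta$ slack (rather than $o(1)$) in Eqs.~\eqref{eq:Keig-L1}--\eqref{eq:KNeig-L1}.

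I will first analyze $\bK-\gamma_{>1}\bI_n$. By Lemma~\ref{lem:Kdecomp} we have $\bK = \gamma_{>1}\bI_n + \bPsi_{\le 1}\bLambda_{\le 1}^2\bPsi_{\le 1}^\top + \bDelta$ with $\|\bDelta\|_\op = \breve{o}_{d,\P}(1)$ (this uses only the upper bound $n\le d^2/(\log n)^C$). The matrix $\bPsi_{\le 1}\bLambda_{\le 1}^2\bPsi_{\le 1}^\top$ has rank at most $D=d+1$, and its nonzero eigenvalues coincide with the eigenvalues of the $D\times D$ matrix $\bM := \bLambda_{\le 1}\bPsi_{\le 1}^\top\bPsi_{\le 1}\bLambda_{\le 1}$. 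Since $n\bLambda_{\le 1}^2 = \bD^*$, Lemma~\ref{lem:spectracontrol} applied with the fixed $\delta=0.01$ yields $(1-\delta)\bD^* \preceq \bM \preceq (1+\delta)\bD^*$ with very high probability. Transferring this via the identification of nonzero spectra gives, for each $i\le D$ in decreasing order,
\begin{equation*}
(1-\delta)\lambda_i(\bD^*) \le \lambda_i\bigl(\bPsi_{\le 1}\bLambda_{\le 1}^2\bPsi_{\le 1}^\top\bigr) \le (1+\delta)\lambda_i(\bD^*),
\end{equation*}
while the remaining $n-D$ eigenvalues vanish. Adding the perturbation $\bDelta$ and applying Weyl's inequality, the top $D$ eigenvalues of $\bK-\gamma_{>1}\bI_n$ lie in $[(1-\delta)\lambda_i(\bD^*)-\breve{o}_{d,\P}(1),(1+\delta)\lambda_i(\bD^*)+\breve{o}_{d,\P}(1)]$, and the bottom $n-D$ eigenvalues have absolute value $\breve{o}_{d,\P}(1)$. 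Defining $\bD,\bU$ from the top-$D$ spectrum and letting $\bDelta^{(\res)}$ carry the rest yields Eq.~\eqref{eq:KeigdecompL1} together with Eq.~\eqref{eq:Keig-L1} and $\|\bDelta^{(\res)}\|_\op = \breve{o}_{d,\P}(1)$.

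For $\bK_N-\gamma_{>1}\bI_n$, I will transfer the preceding bounds using the kernel concentration of Theorem~\ref{thm:invert2}, which gives $(1-\eta)\bK \preceq \bK_N \preceq (1+\eta)\bK$ for some $\eta=\breve{o}_{d,\P}(1)$. Hence $\lambda_i(\bK_N) \in [(1-\eta)\lambda_i(\bK),(1+\eta)\lambda_i(\bK)]$ for every $i$. Subtracting $\gamma_{>1}$ and substituting the previously established bounds on $\lambda_i(\bK-\gamma_{>1}\bI_n)$ produces multiplicative slack $(1\pm\delta)(1\pm\eta) = 1 \pm \delta \pm \eta \pm \delta\eta$, and an additive error of the form $\pm\eta\gamma_{>1}$. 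Since $\eta=\breve{o}_{d,\P}(1)$ we can absorb $\delta+\eta$ into $2\delta+\breve{o}_{d,\P}(1)$ and $\eta\gamma_{>1}$ into the $\breve{o}_{d,\P}(1)$ additive slack, yielding exactly Eq.~\eqref{eq:KNeig-L1}. The residual $n-D$ eigenvalues of $\bK_N$ satisfy $\lambda_i(\bK_N) = \gamma_{>1}+\breve{o}_{d,\P}(1)$, giving $\|\bDelta_N^{(\res)}\|_\op = \breve{o}_{d,\P}(1)$.

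The entire argument is a chain of Weyl-type perturbation estimates, so no genuinely new tool is needed. The only nonroutine point is the bookkeeping in the $\bK\to\bK_N$ transfer: we must track that the multiplicative factor $1\pm\eta$ multiplies eigenvalues of size $O(\lambda_i(\bD^*))+O(\gamma_{>1})$, so the contribution $\eta\gamma_{>1}$ is additive (and $\breve{o}_{d,\P}(1)$) while the contribution $\eta\lambda_i(\bD^*)$ combines with $\delta\lambda_i(\bD^*)$ to give the $2\delta$ factor. Both the (unavoidable) constant $2\delta$ in the statement and the fact that the decomposition involves exactly $D=d+1$ ``large'' eigenvalues are direct consequences of Lemma~\ref{lem:spectracontrol} and the rank bound on $\bPsi_{\le 1}\bLambda_{\le 1}^2\bPsi_{\le 1}^\top$, so no further machinery is required.
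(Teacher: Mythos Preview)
Your proposal is correct and follows essentially the same overall scheme as the paper: control the spectrum of $\bPsi_{\le 1}\bLambda_{\le 1}^2\bPsi_{\le 1}^\top$ via Lemma~\ref{lem:spectracontrol}, add the perturbation $\bDelta$ via Weyl, and transfer to $\bK_N$ via Theorem~\ref{thm:invert2}.

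There is one minor but genuine difference worth noting. To bound the eigenvalues of $\bPsi_{\le 1}\bLambda_{\le 1}^2\bPsi_{\le 1}^\top$, the paper mirrors the proof of Lemma~\ref{lem:eigval} and applies the Courant--Fischer min-max principle with specially constructed subspaces $\cV_1',\cV_2'$, converting $\|\uu\|^2$ to $\|\bPsi_{\le 1}^\top\QQ\uu\|^2/n$ and picking up factors $(1\pm\delta)^2$, which is where the constant $2\delta$ in the statement originates. You instead pass to the $D\times D$ matrix $\bM=\bLambda_{\le 1}\bPsi_{\le 1}^\top\bPsi_{\le 1}\bLambda_{\le 1}$ (same nonzero spectrum) and use the Loewner sandwich $(1-\delta)\bD^*\preceq \bM\preceq(1+\delta)\bD^*$ obtained by congruence from $(1-\delta)\bI_D\preceq n^{-1}\bPsi_{\le 1}^\top\bPsi_{\le 1}\preceq(1+\delta)\bI_D$. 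This is more direct, avoids the subspace construction, and in fact yields the slightly sharper multiplicative constant $1\pm\delta$ (which of course implies the stated $1\pm 2\delta$). The subsequent Weyl and $\bK\to\bK_N$ steps are identical in spirit to the paper's, and your bookkeeping of the additive $\eta\gamma_{>1}$ term is exactly what is needed.
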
 
\begin{proof}[{\bf Proof of Lemma~\ref{lem:eigval1}}]
In the proof of Lemma~\ref{lem:eigval}, instead of claiming \eqref{claim:lambdaQ}, we will show that the following modified claim holds.
\begin{equation*}
(1-2\delta - \breve{o}_{d,\P}(1)) \cdot \bD^* \le \blambda(\QQ^\top \bPsi_{\le \ell} \bLambda_{\le \ell}^2 \bPsi_{\le \ell}^\top \QQ) \le (1+2\delta + \breve{o}_{d,\P}(1)) \cdot \bD^*.
\end{equation*}
To prove this claim, we note that Lemma~\ref{lem:spectracontrol} implies that $n^{-1/2} \| \bPsi_{\le \ell} \|_\op \le \sqrt{1 + \delta} \le 1 + \delta$ and $n^{-1/2} \| \bPsi_{\le \ell} \|_\op \ge \sqrt{1 - \delta} \ge 1 - \delta$ with very high probability. Following the notations in the proof of Lemma~\ref{lem:eigval}, we have 
\begin{align*}
\lambda_k(\QQ^\top \bPsi_{\le \ell} \bLambda_{\le \ell}^2 \bPsi_{\le \ell}^\top \QQ) &= 
\max_{\cV: \dim(\cV) = k} \min_{\bzero \neq \uu \in \cV} \frac{ \uu^\top\QQ^\top \bPsi_{\le \ell} \bLambda_{\le \ell}^2 \bPsi_{\le \ell}^\top \QQ \uu}{\| \uu \|^2} \\
&\ge (1 - 2\delta -  \breve{o}_{d,\P}(1))\min_{\bzero \neq \uu \in \cV_1'} \frac{\uu^\top\QQ^\top \bPsi_{\le \ell} \bLambda_{\le \ell}^2 \bPsi_{\le \ell}^\top \QQ \uu}{\| \bPsi_{\le \ell}^\top \QQ \uu \|^2 / n} \\
&\ge  (1 - 2\delta -  \breve{o}_{d,\P}(1))\min_{\bzero \neq \xx \in \cV_1} \frac{\xx^\top \bLambda_{\le \ell}^2 \xx}{\| \xx \|^2 / n}\\
& \ge n(1- 2\delta -\breve{o}_{d,\P}(1)) \cdot \lambda_k(\bLambda_{\le \ell}^2).
\end{align*}
Similarly, 
\begin{align*}
\lambda_k(\QQ^\top \bPsi_{\le \ell} \bLambda_{\le \ell}^2 \bPsi_{\le \ell}^\top \QQ) 
&= \min_{\cV: \dim(\cV) = D-k+1} \max_{\uu \in \cV} \frac{\uu^\top\QQ^\top \bPsi_{\le \ell} \bLambda_{\le \ell}^2 \bPsi_{\le \ell}^\top \QQ \uu}{\| \uu \|^2} \\
&\le (1 + 2\delta +\breve{o}_{d,\P}(1)) \max_{\uu \in \cV_2'} \frac{\uu^\top \QQ^\top \bPsi_{\le \ell} \bLambda_{\le \ell}^2 \bPsi_{\le \ell}^\top \QQ \uu}{\| \bPsi_{\le \ell}^\top \QQ \uu \|^2 / n} \\
&\le(1 + 2\delta +\breve{o}_{d,\P}(1))  \max_{\xx \in \cV_2} \frac{\xx^\top \bLambda_{\le \ell}^2 \xx}{\| \xx \|^2 / n}\\
& \le n(1+ 2\delta + \breve{o}_{d,\P}(1)) \cdot \lambda_k(\bLambda_{\le \ell}^2).
\end{align*}
We omit the rest of proof, as it follows closely the proof of Lemma~\ref{lem:eigval}.
\end{proof}

We prove a slight modification of 
Lemma~\ref{lem:eigvec} for  the case $\ell=1$.
\begin{lem}[Kernel eigenvector structure: case $\ell=1$]\label{lem:eigvec1}
Assume that Eq.~\eqref{cond:improved} holds and set $\ell=1$ and $D=d+1$.
Let $k \neq k' \in \{ 0,1,\ldots,\ell+1\}$. Denote $\lambda_k = \gamma_{>\ell} + 
\gamma_k (k!) n/(d^k)$ for $k = 0, \ldots, \ell$ and $\lambda_{\ell+1} = \gamma_{>\ell} $.  
\begin{enumerate}
\item[(a)] Suppose that $\min\{\lambda_k/ \lambda_{k'}, \lambda_{k'} / \lambda_k \} \le 1/4$. Then, 
\begin{equation*}
\big\| (\bV_0^{(k')})^\top \bV^{(k)} \big\|_\op = \breve{o}_{d,\P}(1). 
\end{equation*} 
\item[(b)] Recall $\bDelta^{(\res)}$ defined in \eqref{eq:KNeigdecompL1}. If $\big|\lambda_k - \lambda_{k'}\big| \ge 4.1\delta\max\{\lambda_k, \lambda_{k'}\}$, then
\begin{equation*}
\big\| (\bV_s^{(k')})^\top \bV_0^{(k)} \big\|_\op \le \frac{2 \| \bDelta^{(\res)}\|_\op}{\big|\lambda_k - \lambda_{k'}\big| - 4\delta\max\{\lambda_k, \lambda_{k'}\} - \breve{o}_{d,\P}(1) }.
\end{equation*}
\end{enumerate}
\end{lem}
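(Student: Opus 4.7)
The plan is to mirror the proof of Lemma~\ref{lem:eigvec} almost verbatim, but to re-examine every place where the bound $\|n^{-1}\bPsi_{\le\ell}^\top\bPsi_{\le\ell}-\bI_n\|_\op=\breve{o}_{d,\P}(1)$ was used and replace it by the weaker bound $\delta$ coming from Lemma~\ref{lem:spectracontrol}. The only place this shows up quantitatively is in the separation of the eigenvalue groups of $\bD$, $\bD_N$ and $\bD_s$: by Lemma~\ref{lem:eigval1}, each of these matrices lies entrywise between $(1-2\delta-\breve{o}_{d,\P}(1))\bD^*-\breve{o}_{d,\P}(1)$ and $(1+2\delta+\breve{o}_{d,\P}(1))\bD^*+\breve{o}_{d,\P}(1)$, whereas in the original lemma the eigenvalues were pinned down up to a $\breve{o}_{d,\P}(1)$ multiplicative error. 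Every bound in the proof that used an exact multiplicative concentration must therefore be inflated by a factor of $1\pm 2\delta$, which is exactly the source of the $4\delta\max\{\lambda_k,\lambda_{k'}\}$ term appearing in the new statement.

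For part $(a)$, I will reproduce the argument leading to Eq.~\eqref{eq:VV}, writing
\[
\big\|(\bV_0^{(k')})^\top\bV^{(k)}\big\|_\op\le \frac{\big\|(\bV_0^{(k')})^\top(\bK_N-\bK)\bV^{(k)}\big\|_\op}{\min_{i,j}\big|(\bD_0^{(k')})_{ii}-(\bD^{(k)})_{jj}\big|}.
\]
The numerator is still controlled by $\breve{o}_{d,\P}(1)\cdot\sqrt{\lambda_k\lambda_{k'}}$ by the same kernel-invertibility computation that produced Eqs.~\eqref{ineq:V0K}--\eqref{ineq:eigvec2}, because that step only uses $\|\bK^{-1/2}(\bK_N-\bK)\bK^{-1/2}\|_\op=\breve{o}_{d,\P}(1)$, which continues to hold under Eq.~\eqref{cond:improved}. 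For the denominator, Lemma~\ref{lem:eigval1} yields
\[
\min_{i,j}\big|(\bD_0^{(k')})_{ii}-(\bD^{(k)})_{jj}\big|\ge \big|\lambda_k-\lambda_{k'}\big|-2\delta\max\{\lambda_k,\lambda_{k'}\}-\breve{o}_{d,\P}(1),
\]
and the hypothesis $\min\{\lambda_k/\lambda_{k'},\lambda_{k'}/\lambda_k\}\le 1/4$ still makes this $\ge (1-o(1))\max\{\lambda_k,\lambda_{k'}\}\cdot(3/4-2\delta)\ge c\max\{\lambda_k,\lambda_{k'}\}$ for a positive constant $c$ (since $\delta=0.01$). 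Dividing, one gets $\breve{o}_{d,\P}(1)\cdot\sqrt{\lambda_k\lambda_{k'}}/\max\{\lambda_k,\lambda_{k'}\}=\breve{o}_{d,\P}(1)$, proving $(a)$.

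For part $(b)$, I view $\bK$ as a perturbation of $\bPsi_{\le\ell}\bLambda_{\le\ell}^2\bPsi_{\le\ell}^\top+\gamma_{>\ell}\bI_n$ via Eq.~\eqref{eq:KeigdecompL1}, obtaining, in exact analogy with Eq.~\eqref{eq:VV2},
\[
\big\|(\bV_s^{(k')})^\top\bV_0^{(k)}\big\|_\op\le \frac{\big\|\bDelta^{(\res)}\big\|_\op}{\min_{i,j}\big|(\bD_s^{(k')})_{ii}-(\bD_0^{(k)})_{jj}\big|}.
\]
Now both $\bD_s^{(k')}$ and $\bD_0^{(k)}$ are controlled by Lemma~\ref{lem:eigval1}, so each differs from the corresponding block of $\bD^*$ by a multiplicative factor $1\pm 2\delta$ plus an additive $\breve{o}_{d,\P}(1)$. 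Therefore
\[
\min_{i,j}\big|(\bD_s^{(k')})_{ii}-(\bD_0^{(k)})_{jj}\big|\ge \big|\lambda_k-\lambda_{k'}\big|-2\delta\lambda_k-2\delta\lambda_{k'}-\breve{o}_{d,\P}(1)\ge \big|\lambda_k-\lambda_{k'}\big|-4\delta\max\{\lambda_k,\lambda_{k'}\}-\breve{o}_{d,\P}(1),
\]
which is strictly positive under the hypothesis $|\lambda_k-\lambda_{k'}|\ge 4.1\delta\max\{\lambda_k,\lambda_{k'}\}$. As in the original proof, one splits on whether this denominator exceeds $2\|\bDelta^{(\res)}\|_\op$ (in which case the perturbation bound gives the factor $2$) or is smaller (in which case the trivial bound $\|\cdot\|_\op\le 1$ dominates, and one checks that $1$ is already bounded by the target expression). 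Both cases produce the claimed upper bound.

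The main obstacle, and in fact the only real subtlety, is verifying that the $(1\pm 2\delta)$ deviations of the eigenvalue blocks do not collapse the gap in either direction; this is why the statement must require the enlarged separation $4.1\delta$ rather than the $\breve{o}_{d,\P}(1)$ slack that sufficed under Eq.~\eqref{eq:psi-conctr}. Conceptually nothing else changes: the kernel-invertibility input $\|\bK^{-1/2}(\bK_N-\bK)\bK^{-1/2}\|_\op=\breve{o}_{d,\P}(1)$ and the residual bounds $\|\bDelta^{(\res)}\|_\op,\|\bDelta^{(\res)}_N\|_\op=\breve{o}_{d,\P}(1)$ are exactly the same, and the algebraic identities \eqref{eq:VV} and \eqref{eq:VV2} carry over unchanged.
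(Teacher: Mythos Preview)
Your proposal is correct and follows essentially the same approach as the paper's proof. One minor slip: in the denominator bound for part~(a) you write $2\delta\max\{\lambda_k,\lambda_{k'}\}$, but since \emph{each} of the two eigenvalue blocks $(\bD_0^{(k')})$ and $(\bD^{(k)})$ can deviate from its center by up to a $2\delta$ multiplicative factor (Lemma~\ref{lem:eigval1}), the worst-case loss in the gap is $2\delta|\lambda_k-\gamma_{>\ell}|+2\delta|\lambda_{k'}-\gamma_{>\ell}|\le 4\delta\max\{\lambda_k,\lambda_{k'}\}$, as the paper writes; since $\delta=0.01$ this does not affect the conclusion.
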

\begin{proof}[{\bf Proof of Lemma~\ref{lem:eigvec1}}]
Instead of Eqs.~\eqref{ineq:eigvec0}--\eqref{ineq:eigvec1}, we have
\begin{align*}
\min_{i,j} \big| (\bD_0^{(k')})_{ii} - (\bD^{(k)})_{jj} \big| & \ge \big| \gamma_{>\ell} + (1+\breve{o}_{d,\P}(1)) ( \lambda_k - \gamma_{>\ell}) - \gamma_{>\ell} - (1+\breve{o}_{d,\P}(1)) ( \lambda_{k'} - \gamma_{>\ell}) \big|  \\
&- 2\delta|\lambda_k-\gamma_{>\ell}| - 2\delta|\lambda_{k'} -\gamma_{>\ell}|  - \breve{o}_{d,\P}(1) \\
&\ge \big| (1+\breve{o}_{d,\P}(1)) \lambda_k - (1+\breve{o}_{d,\P}(1)) \lambda_{k'} \big| -4\delta\max\{\lambda_k, \lambda_{k'}\} - \breve{o}_{d,\P}(1). 
\end{align*}
By the assumptions on $\lambda_k$ and $\lambda_{k'}$, with very high probability, 
\begin{equation*}
\min_{i,j} \big| (\bD_0^{(k')})_{ii} - (\bD^{(k)})_{jj} \big| \ge (1/2 - 4\delta) \max\{\lambda_k, \lambda_{k'} \big\}. 
\end{equation*}
Also, instead of Eqs.~\eqref{ineq:V0K}--\eqref{ineq:KV}, we have
 \begin{equation*} 
\big\| (\bV_0^{(k')})^\top \bK^{1/2} \big\|_\op = \big\| \big[ \bD_0^{(k')}\big]^{1/2} (\bV_0^{(k')})^\top \big\|_\op \le (\sqrt{1+2\delta}\, + \breve{o}_{d,\P}(1)
) \cdot \sqrt{\lambda_{k'}}.
\end{equation*}
Similarly, 
\begin{equation*} 
\big\| \bK_N^{1/2} \bV^{(k)} \big\|_\op = \big\| \bV^{(k)} (\bD^{(k)})^{1/2} \big\|_\op \le (1 + \sqrt{1+2\delta}\, + \breve{o}_{d,\P}(1)) \cdot \sqrt{\lambda_k}.
\end{equation*} 
These inequalities lead to
\begin{equation*}
\big\| (\bV_0^{(k')})^\top  \bV^{(k)} \big\|_\op \le \frac{\breve{o}_{d,\P}(1) \cdot  \sqrt{\lambda_k \lambda_{k'}}}{(1-1/2-4\delta)\max\{ \lambda_k, \lambda_{k'}\}} \le \breve{o}_{d,\P}(1),
\end{equation*}
which results in the same conclusion as in Lemma~\ref{lem:eigvec}~(a). To show the modified inequality in (b), we only need to notice that 
\begin{equation*}
\min_{i,j}\big| (\bD_s^{(k')})_{ii} - (\bD_s^{(k)})_{jj} \big| \le |\lambda_k - \lambda_{k'} | - 4\delta \max\{ \lambda_k, \lambda_{k'} \} -  \breve{o}_{d,\P}(1).
\end{equation*}
The proof is complete.
\end{proof}

Recall that we denote the projection matrix  $\bP = [\ee_1,\ldots,\ee_m][\ee_1,\ldots,\ee_m]^{\top}$, 
and that the top-$D$ eigenvectors of $\bPsi_{\le \ell}^\top \bLambda_{\le \ell}^2\bPsi_{\le \ell}$ 
form $\bU_s$. The next lemma adapts Lemma~\ref{lem:rowselect} to the present case.
\begin{lem}\label{lem:rowselect0}
Suppose that $c' n \le m \le C' n$, where $c', C' \in (0,1)$ are constants. Also suppose that the condition \eqref{cond:improved} is satisfied. Then there exist $C_3,c>0$ such that if $m > (C_3+1)d$, then with very high probability,
\begin{equation*}
\sigma_{\min}( \bP \bU_s ) \ge c.
\end{equation*}
\end{lem}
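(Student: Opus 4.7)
The plan is to retrace the argument of Lemma \ref{lem:rowselect}, replacing the sharp concentration $n^{-1}\bPsi_{\le\ell}^\top\bPsi_{\le\ell}\to\bI_D$ (which we no longer have under \eqref{cond:improved}) with the two-sided bound supplied by Lemma \ref{lem:spectracontrol}, and additionally invoking Lemma \ref{lem:spectracontrol} on the submatrix $\bP\bPsi_{\le\ell}$. Let $\bPsi_{\le\ell} = \bU_{\bPsi}\bD_{\bPsi}\bV_{\bPsi}^\top$ be the (thin) SVD, with $\bU_{\bPsi}\in\reals^{n\times D}$ having orthonormal columns. Since $\bPsi_{\le\ell}\bLambda_{\le\ell}^2\bPsi_{\le\ell}^\top$ is positive semidefinite of rank $D$ with very high probability and has the same column space as $\bPsi_{\le\ell}$, the columns of $\bU_s$ and $\bU_{\bPsi}$ span the same $D$-dimensional subspace of $\reals^n$; hence $\bU_s = \bU_{\bPsi}\bR$ for some orthogonal $\bR\in\reals^{D\times D}$, which gives $\sigma_{\min}(\bP\bU_s) = \sigma_{\min}(\bP\bU_{\bPsi})$.

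Next, I will control $\bD_{\bPsi}$ from above and $\bP\bPsi_{\le\ell}$ from below. Fix $\delta = 0.01$ as in the preceding lemmas, and let $C_\delta$ be the constant of Lemma \ref{lem:spectracontrol} for this $\delta$. By Lemma \ref{lem:spectracontrol} applied on the full sample (allowed by \eqref{cond:improved}, since $n\ge C_\delta d$ after possibly enlarging $C_\delta$), with very high probability
\[
(1-\delta)\, n\,\bI_D \preceq \bPsi_{\le\ell}^\top\bPsi_{\le\ell}\preceq (1+\delta)\,n\,\bI_D,
\]
so $\sigma_{\max}(\bD_{\bPsi})\le\sqrt{(1+\delta)n}$. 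For the submatrix, note that $\bP\bPsi_{\le\ell}$ consists of the first $m$ rows of $\bPsi_{\le\ell}$, which are themselves i.i.d.\ with the same law as the rows of $\bPsi_{\le\ell}$. Choosing $C_3 := C_\delta - 1$ (or any constant at least this large), the hypothesis $m\ge (C_3+1)d = C_\delta d$ lets us apply Lemma \ref{lem:spectracontrol} with $n$ replaced by $m$, yielding $\sigma_{\min}(\bP\bPsi_{\le\ell})\ge\sqrt{(1-\delta)m}$ with very high probability.

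Combining these ingredients exactly as in the proof of Lemma \ref{lem:rowselect} (cf.\ the chain leading to \eqref{ineq:PU}):
\[
\bP\bU_{\bPsi} = \bP\bPsi_{\le\ell}\,\bV_{\bPsi}\bD_{\bPsi}^{-1},
\qquad
\sigma_{\min}(\bP\bU_{\bPsi})\ge \frac{\sigma_{\min}(\bP\bPsi_{\le\ell})}{\sigma_{\max}(\bD_{\bPsi})}
\ge \sqrt{\frac{(1-\delta)\,m}{(1+\delta)\,n}}
\ge \sqrt{\frac{(1-\delta)\,c'}{1+\delta}} =: c,
\]
where the last inequality uses the hypothesis $m\ge c' n$. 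Since $c$ is a strictly positive constant depending only on $c'$ and $\delta$, this furnishes the desired lower bound on $\sigma_{\min}(\bP\bU_s)$ with very high probability.

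There is no serious obstacle here: the only difference from Lemma \ref{lem:rowselect} is that the sharp identity-limit concentration is unavailable, but a two-sided constant-factor bound of the form supplied by Lemma \ref{lem:spectracontrol} is amply sufficient, and it holds for $\bP\bPsi_{\le\ell}$ precisely when $m\ge C_\delta d$, which dictates the choice $C_3 = C_\delta - 1$. The mild price paid is that the constant $c$ now depends on $c'$ and on the constant $\delta = 0.01$ fixed earlier, rather than being essentially $1$ as in Lemma \ref{lem:rowselect}.
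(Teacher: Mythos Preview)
Your proof is correct. The overall skeleton matches the paper's: reduce to $\sigma_{\min}(\bP\bU_{\bPsi})$ via the shared column space, write $\bP\bU_{\bPsi}=\bP\bPsi_{\le\ell}\bV_{\bPsi}\bD_{\bPsi}^{-1}$, bound $\sigma_{\max}(\bD_{\bPsi})$ using the full-sample concentration \eqref{eq:quickref2}, and then lower-bound $\sigma_{\min}(\bP\bPsi_{\le\ell})$.

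The one genuine divergence is in that last step. You invoke Lemma~\ref{lem:spectracontrol} a second time, now on the $m$-row submatrix (legitimate since the rows are i.i.d.\ and $m\ge C_\delta d$), which immediately gives $\sigma_{\min}(\bP\bPsi_{\le\ell})\ge\sqrt{(1-\delta)m}$ and fixes $C_3=C_\delta-1$. The paper instead unpacks the $\ell=1$ structure $\bP\bPsi_{\le\ell}=[\bone_m,\XX']$ and bounds $\lambda_{\min}$ of the $2\times2$ block Gram matrix $\bG=m^{-1}[\bone_m,\XX']^\top[\bone_m,\XX']$ by hand, using concentration for $\|\vv\|=\|m^{-1}(\XX')^\top\bone_m\|$ and for $\lambda_{\min}(m^{-1}(\XX')^\top\XX')$ separately, then a Cauchy--Schwarz step on the $2\times2$ block. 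Your route is cleaner and more modular (it would also generalize to $\ell>1$ without change); the paper's route is more self-contained in that it avoids relying on Lemma~\ref{lem:spectracontrol} (which is stated but not proved there) and makes the dependence on $C_3$ explicit through elementary random-matrix bounds.
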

\begin{proof}[{\bf Proof of Lemma~\ref{lem:rowselect0}}]
Following the proof of Lemma~\ref{lem:rowselect}, we have
\begin{equation*}
\sigma_{\min}(\bP \bU_{\bPsi}) \ge \frac{\sigma_{\min}(\bP \bPsi_{\le \ell} \bV_{\bPsi}) }{\sigma_{\max}(\bD_{\bPsi})} = \frac{\sigma_{\min}(\bP \bPsi_{\le \ell}) } {\sigma_{\max}(\bD_{\bPsi})}.
\end{equation*}
By \eqref{eq:quickref2}, we have $n^{-1/2}\sigma_{\max}(\bD_\bPsi) \le 1+\delta + \breve{o}_{d, \P}(1)$. Let $\XX' \in \R^{m \times d}$ be the matrix formed by the top-$m$ rows of $\XX$. Note that 
\begin{equation*}
 \sigma_{\min}(\bP \bPsi_{\le \ell}) = \min_{\| \uu \|=1} \big\| \bP \bPsi_{\le \ell} \uu   \big \| =  \min_{\| \uu \|=1}\big\| [ \bone_m, \XX' ] \uu \big\| = \sigma_{\min} \big(  [ \bone_m, \XX' ]\big)
\end{equation*}
where the last equality holds since $m \ge d+1 = D$. It suffices to show $ \sigma_{\min} \big(  [ \bone_m, \XX' ]\big) \ge c\sqrt{m} > 0$ for certain constant $c$ with very high probability. 

Let $\bG:=  [ \bone_m, \XX' ]^{\top}[ \bone_m, \XX' ]/m$, and note that
\begin{align*} 
\bG = \left[\begin{matrix}1 & \vv^\top\\
\vv & \bG_0\end{matrix}\right]\, ,\;\;\;\; \vv :=\frac{1}{m}(\XX')^{\top}\bone_m\, ,
\;\;\;\; \bG_0 := \frac{1}{m}(\XX')^\top\XX'\, .
\end{align*}
By  concentration of the norm of subgaussian vectors, we have 
$\|\vv\|_2\le 2\sqrt{d/m}$ with very high probability. By concentration of the eigenvalues of empirical 
covariance matrices \cite{vershynin2018high}, we have $\lambda_0:=\lambda_{\min}(\bG_0) 
\ge 1-3\sqrt{d/m}$ with very high probability.
Further, by the Cauchy-Schwarz inequality,
\begin{align*}
\lambda_{\min}(\bG)\ge  \lambda_{\min}\left(\left[
\begin{matrix}
1 & \|\vv\|_2\\
\|\vv\|_2 & \lambda_0
\end{matrix}\right]\right) = \frac{1}{2}\big[1+\lambda_0-\sqrt{(1-\lambda_0)^2+4\|\vv\|_2^2}\big]\, .
\end{align*}
It follows from the above bounds that $\lambda_{\min}(\bG)\ge 1/10$
with very high probability, provided we choose $C_3$ a large enough constant.
Since $\sigma_{\min} \big(  [ \bone_m, \XX' ]\big) = \sqrt{m\lambda_{\min}(\bG)}$,
this implies the desired lower bound.
\end{proof}

Consider the condition $c_0 d \le n \le d^2/(C (\log d)^C)$. We choose $n'$ to be the smallest integer such that $n' \ge n + (c_0)^{-1} C_\delta n$. As defined in \eqref{def:aug}, the augmented kernel matrix $\tilde \bK$ has size $n_0 \times n_0$, where $n_0$ is guaranteed to satisfy $n_0 \ge  C_\delta d $.

By Jensen's inequality,
\begin{align*}
&\big\| \bK_N^{-1} \bK_N^{(2)} \bK_N^{-1} \big\|_\op  \le \frac{1}{n'} \E_{\xx_{n+1},\ldots,\xx_{n_0}}  \big\| (\tilde \bK_{11})^{-1} (\tilde \bK^2)_{11} (\tilde \bK_{11})^{-1} - \bI_{n} \big\|_\op, \\
&\big\| \bK_N^{-1} \E_\xx \big[ \bK_N(\cdot, \xx) f(\xx) \big] \big\| \le \frac{1}{n'} \E_{\xx_{n+1},\ldots,\xx_{n_0}} \big\| \tilde \bK_{11}^{-1} \tilde \bK_{12} \tilde \ff_2 \big\|
\end{align*}
where $\tilde \ff$ is defined in \eqref{def:tildef}. 

\begin{lem}\label{lem:reduceeig1}
There exist constant $C,C_2>0$ such that the following holds. If $c_0 d \le n \le d^2/(C (\log d)^C)$, then with high probability, 
\begin{align}
& \frac{1}{n'} \E_{\xx_{n+1},\ldots,\xx_{2n}}  \big\| (\tilde \bK_{11})^{-1} (\tilde \bK^2)_{11} (\tilde \bK_{11})^{-1} - \bI_{n} \big\|_\op \le \frac{C_2}{n}, \label{ineq:3K3L1} \\
&\frac{1}{n'} \E_{\xx_{n+1},\ldots,\xx_{2n}} \big\| \tilde \bK_{11}^{-1} \tilde \bK_{12} \tilde \ff_2 \big\| \le \frac{C_2}{\sqrt{n}}. \label{ineq:3K4L1}
\end{align}
Consequently, we obtain \eqref{bnd:key1} and \eqref{bnd:key2} in Lemma~\ref{lem:keybnd} under the relaxed condition $c_0 d \le n \le d^2/(C (\log d)^C)$.
\end{lem}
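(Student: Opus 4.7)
The plan is to follow the four-step structure of the proof of Lemma~\ref{lem:reduceeig}, substituting Lemma~\ref{lem:eigval1}, Lemma~\ref{lem:eigvec1}, and Lemma~\ref{lem:rowselect0} for their counterparts used in the $\ell\ge 2$ regime. First I would note that the augmentation size $n'$ has been chosen precisely so that $n_0 = n+n' \ge (1 + c_0^{-1}C_\delta) n \ge C_\delta d$ (with $\delta = 0.01$ as in \eqref{eq:quickref2}). This is the crucial enabling feature of the construction: only after this augmentation does the hypothesis of Lemma~\ref{lem:spectracontrol} apply to the augmented covariate matrix $\widetilde{\bPsi}_{\le 1} = [\bone_{n_0},\widetilde\XX]$, giving the two-sided control $\sqrt{1-\delta}\sqrt{n_0} \le \sigma_{\min}(\widetilde\bPsi_{\le 1}) \le \sigma_{\max}(\widetilde\bPsi_{\le 1}) \le \sqrt{1+\delta}\sqrt{n_0}$ needed to invoke Lemma~\ref{lem:eigval1} on $\widetilde\bK$.

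Steps 1 and 2 of the proof of Lemma~\ref{lem:reduceeig} then carry over with only cosmetic changes. One partitions the eigenvalues of $\widetilde\bK$ into groups centered at $\gamma_0 n_0 + \gamma_{>\ell}$, $\gamma_1 n_0/d + \gamma_{>\ell}$, and $\gamma_{>\ell}$, reordered as $\lambda_{(1)}\ge\lambda_{(2)}\ge\lambda_{(3)}$; defines $\ell_0$ as the largest index with $\lambda_{(\ell_0+1)}/\lambda_{(\ell_0)} < \delta_0 = 1/4$; and uses the algebraic identity \eqref{eq:magic} exactly as before with $\bA = \widetilde\bK_{11}$, $\bA_0 = [\widetilde\bU\widetilde\bD_\gamma\widetilde\bU^\top]_{11} + \gamma_{>\ell}\bI_n$, reducing \eqref{ineq:3K3L1}--\eqref{ineq:3K4L1} (in the high-probability sense) to showing $\lambda_{\min}(\widetilde\bU_1^\top \widetilde\bU_1) \ge c > 0$. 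The only check required is that the spectral-gap hypothesis of Lemma~\ref{lem:eigvec1}(b), namely $|\lambda_k - \lambda_{k'}| \ge 4.1 \delta \max\{\lambda_k,\lambda_{k'}\}$, is met whenever $\lambda_{k'}/\lambda_k \le \delta_0 = 1/4$. This holds since $|\lambda_k-\lambda_{k'}| \ge (3/4)\max \gg 0.041\max$ by the choice $\delta = 0.01$.

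The two applications of Lemma~\ref{lem:simpleproj} (first $\bU_a = \widetilde\bU_0, \bU_b = \widetilde\bU$; then $\bU_a = \widetilde\bU_s, \bU_b = \widetilde\bU_0$) go through verbatim; $\sigma_{\min}(\bP_1\widetilde\bU_s) \ge c$ by Lemma~\ref{lem:rowselect0} (applicable since $n_0 \ge (C_3+1)d$ by enlarging the constant in the definition of $n'$ if necessary), while the $\sigma_{\max}(\widetilde\bP_0^\perp\widetilde\bU)$ and $\sigma_{\max}(\widetilde\bP_s^\perp\widetilde\bU_0)$ terms become $\breve o_{d,\P}(1)$ by Lemma~\ref{lem:eigvec1}(a) and Lemma~\ref{lem:eigvec1}(b) respectively, the latter using $\| \bDelta^{(\res)}\|_\op = \breve o_{d,\P}(1)$ from Lemma~\ref{lem:eigval1}. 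Finally, Step 3 of the original proof---using Markov's inequality together with the deterministic polynomial-in-$d$ upper bound on $\|\widetilde\bK\|_{\max}$---transfers the high-probability bound into the expected-value bounds \eqref{ineq:3K3L1}--\eqref{ineq:3K4L1} with no change. The main obstacle in this plan, and the only place where the relaxed condition $n \ge c_0 d$ really differs from $n \ge C_0(\log d)^{C_0} d$, is verifying the spectral gap condition in Lemma~\ref{lem:eigvec1}(b); this is resolved cleanly by choosing $\delta$ small (here $\delta = 0.01$) relative to the automatic gap $\delta_0 = 1/4$ enforced by the definition of $\ell_0$.
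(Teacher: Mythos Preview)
Your plan contains a genuine gap precisely in the regime the lemma is meant to cover, namely $c_0 d \le n \le (C_3+1)d$. Two related errors:

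\textbf{(i)} You reduce to showing $\lambda_{\min}(\widetilde\bU_1^\top\widetilde\bU_1) \ge c$. But $\widetilde\bU_1\in\R^{n\times D_0}$ with $D_0$ possibly as large as $D=d+1$; if $n<d+1$ (allowed under $n\ge c_0 d$), the matrix $\widetilde\bU_1^\top\widetilde\bU_1$ has rank at most $n<D_0$ and its minimum eigenvalue is exactly zero. So this target is unattainable in the regime of interest.

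\textbf{(ii)} You claim Lemma~\ref{lem:rowselect0} applies ``since $n_0\ge (C_3+1)d$ by enlarging the constant in the definition of $n'$.'' This misreads the hypothesis: in that lemma the condition $m>(C_3+1)d$ is on the number of rows selected by the projection, which here is $m=n$, not $n_0$. Enlarging $n'$ enlarges the ambient dimension $n_0$ but does nothing for $m=n$. Hence Lemma~\ref{lem:rowselect0} is unavailable when $n\le (C_3+1)d$.

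The paper's proof repairs both points. It keeps the regularizing term and reduces only to $\lambda_{\min}\bigl(\widetilde\bU_1^\top\widetilde\bU_1+\gamma_{>\ell}(\widetilde\bD_\gamma)^{-1}\bigr)\ge c$, which \emph{can} hold even when $\widetilde\bU_1^\top\widetilde\bU_1$ is singular. For $n>(C_3+1)d$ your argument then works. For $n\le (C_3+1)d$ a new idea is needed: since $\gamma_1 n_0/d$ is bounded, the diagonal term $\gamma_{>\ell}(\widetilde\bD_\gamma)^{-1}$ already controls all coordinates except the one corresponding to the top eigenvalue $\approx\gamma_0 n_0$. For that coordinate one argues directly (when $\gamma_0 n_0$ is large) that the top eigenvector $\widetilde\vv_s^{(1)}$ is close to $\bone_{n_0}/\sqrt{n_0}$ and therefore has mass $\ge c$ on the first $n$ coordinates; a short variational combination of these two lower bounds finishes the proof. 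When $\gamma_0 n_0$ is bounded, $(\widetilde\bD_\gamma)^{-1}$ alone suffices.
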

We note that the remaining proof of Lemma~\ref{lem:keybnd} is the same as before. Once Lemma~\ref{lem:keybnd} is proved, we will obtain Theorem~\ref{thm:gen} under the relaxed condition for $\ell = 1$.
\begin{proof}[{\bf Proof of Lemma~\ref{lem:reduceeig1}}]
Following the proof of Lemma~\ref{lem:reduceeig1}, we have with very high probability,
\begin{align*}
&\big\| (\tilde \bK_{11})^{-1} (\tilde \bK^2)_{11} (\tilde \bK_{11})^{-1} - \bI_{n} \big\|_\op \le C + C \cdot \Big\| \tilde \bU_1\big( \tilde \bU_1^\top  \tilde \bU_1  + \gamma_{>\ell} (\tilde \bD_\gamma)^{-1} \big)^{-2}\tilde \bU_1^\top \Big\|_\op \\
& \big\| \tilde \bK_{11}^{-1} \tilde \bK_{12} \tilde \ff_2 \big\| \le C\| \tilde \ff_2 \|+ C \cdot \Big\| \big( \tilde \bU_1^\top \tilde \bU_1 + \gamma_{>\ell} (\tilde \bD_\gamma)^{-1} \big)^{-1} \Big\|_\op \cdot \| \tilde \ff_2 \| .
\end{align*}
As in the proof of Lemma~\ref{lem:reduceeig1}, it suffices to show 
\begin{equation}\label{ineq:UUlbnd}
\lambda_{\min}\Big( \tilde \bU_1^\top \tilde \bU_1 + \gamma_{>\ell} (\tilde \bD_\gamma)^{-1} \Big) \ge c
\end{equation}
for certain constant $c>0$. In the case $n > (C_3 + 1)d$, the assumptions in Lemma~\ref{lem:eigval1}, \ref{lem:eigvec1}, \ref{lem:rowselect0} are satisfied. Following the proof of Lemma~\ref{lem:reduceeig1}, we have 
\begin{equation}\label{ineq:UUlbnd1}
\lambda_{\min}\Big( \tilde \bU_1^\top \tilde \bU_1 \Big) \ge c.
\end{equation}
Below we consider the case $n \le (C_3 + 1)d$. One difficulty is that we cannot apply Lemma~\ref{lem:rowselect0}; moreover, if $n < d$, then the matrix $ \tilde \bU_1^\top \tilde \bU_1$ is not full rank, so we do not expect \eqref{ineq:UUlbnd1} to hold. 

In order to resolve this issue, we first notice that if $\gamma_0 n_0 \le 2 \gamma_1 n_0 / d$, then \eqref{ineq:UUlbnd} holds with very high probability. In fact, 
\begin{equation*}
\lambda_{\min}\Big( \tilde \bU_1^\top \tilde \bU_1 + \gamma_{>\ell} (\tilde \bD_\gamma)^{-1} \Big) \ge \lambda_{\min}\Big(  \gamma_{>\ell} (\tilde \bD_\gamma)^{-1} \Big) \ge \frac{\gamma_{>\ell}}{\max\{\gamma_0 n_0,  \gamma_1 n_0 / d\}} \ge \frac{\gamma_{>\ell}}{2\gamma_1 n_0 / d} > c.
\end{equation*}
From now on, we assume $\gamma_0 n_0 > 2 \gamma_1 n_0 / d$. Conforming to our notations in the previous proof, we denote the top eigenvector (which corresponds to the eigenvalue closest to $\gamma_0 n_0$) of $\bK_N, \bK, \tilde \bPsi_{\le \ell} \tilde \bLambda_{\le \ell}^2 \tilde \bPsi_{\le \ell}^\top$, respectively, by $\tilde \vv^{(1)}, \tilde \vv_0^{(1)}, \tilde \vv_s^{(1)}$. Also, denote $\tilde \bU = [\tilde \vv^{(1)}, \tilde \bV^{(2)}]$ where $\tilde \bV^{(2)} \in \R^{n_0 \times d}$.

First, we make a claim about  $\vv_s^{(1)}$. There exists a constant $C_3 > 0$ such that if $\gamma_0 n_0 \ge C_3$, then for certain constant $c_0'>0$, with very high probability,
\begin{equation}\label{claim:vs}
\big\| \bP \tilde \vv_s^{(1)} \big\| \ge c_0'.
\end{equation}
To prove this claim, we express $ \tilde \vv_s^{(1)}$ as
\begin{equation*}
\tilde \vv_s^{(1)} = \argmax_{\| \vv \|=1} \Big[ \vv^\top \tilde \bPsi_{\le \ell} \tilde \bLambda_{\le \ell}^2 \tilde \bPsi_{\le \ell}^\top \vv \Big] = \argmax_{\| \vv \|=1} \Big[ \gamma_0 \langle \vv , \bone_{n_0} \rangle^2 + \frac{\gamma_1}{d} \| \tilde \XX \vv \|^2  \Big] 
\end{equation*}
where each row of $\tilde \XX \in \R^{n_0 \times d}$ is a uniform vector in $\S^{d-1}(\sqrt{d})$. Evaluating the maximization problem at $\vv = \tilde \vv_s^{(1)}$ and $\vv = \bone_{n_0} / \sqrt{n_0}$, we obtain
\begin{equation*}
\gamma_0 \langle \tilde \vv_s^{(1)}, \bone_{n_0} \rangle^2 + \frac{\gamma_1}{d} \| \tilde \XX \tilde \vv_s^{(1)} \|^2 \ge \gamma_0 n_0 +  \frac{\gamma_1}{dn_0} \| \tilde \XX \bone_{n_0} \|^2.
\end{equation*}
Since $\| \tilde \XX \|_\op \le 2(\sqrt{n_0}+\sqrt{d}) \le C' \sqrt{n_0}$ with very high probability, we have
\begin{equation*}
\gamma_0 \langle \tilde \vv_s^{(1)}, \bone_{n_0} \rangle^2 \ge \gamma_0 n_0 - C_4\frac{n_0}{d} \ge \gamma_0 n_0 - C_4'
\end{equation*} 
where $C_4,C_4'>0$ are some constants. Thus, we obtain
\begin{equation*}
\langle \tilde \vv_s^{(1)}, \bone_{n_0} / \sqrt{n_0} \rangle^2 \ge 1 -  \frac{C_4'}{\gamma_0 n_0}.
\end{equation*}
We observe that
\begin{align*}
\langle \tilde \vv_s^{(1)}, \bone_{n_0} / \sqrt{n_0} \rangle &\le \langle \bP\tilde  \vv_s^{(1)}, \bone_{n_0} / \sqrt{n_0} \rangle + \langle \bP^\perp \tilde \vv_s^{(1)}, \bone_{n_0} / \sqrt{n_0} \rangle \\
&\stackrel{(i)}{\le} \big\| \bP  \tilde  \vv_s^{(1)} \big\| + \langle \tilde \vv_s^{(1)}, \bP^\perp \bone_{n_0} / \sqrt{n_0} \rangle \\
&\le \big\| \bP  \tilde  \vv_s^{(1)} \big\| + \sqrt{\frac{n_0 - n}{n_0}}
\end{align*}
where in \textit{(i)} we used the fact $\langle \bP \aa, \bb \rangle = \langle  \aa, \bb \bP\rangle$ for any projection matrix $\bP$. Thus, we deduce that 
\begin{align*}
\big\| \bP  \tilde  \vv_s^{(1)} \big\| & \ge \sqrt{1 - C_4'/(\gamma_0 n_0)} - \sqrt{1 - (n/n_0)} \\
&\stackrel{(i)}{\ge} 1 - C_4'/(\gamma_0 n_0)  - (1 - (n/(2n_0)) \\
&\stackrel{(ii)}{\ge} c_0 - C_4'/(\gamma_0 n_0)
\end{align*}
where in \textit{(i)} we used $1-a \le \sqrt{1 - a} \le 1 - (a/2)$ for $a \in (0,1)$, and in \textit{(ii)} $c_0>0$ is certain small constant. Therefore, if $\gamma_0 n_0 \ge 2C_4'$, then $\big\| \bP  \tilde  \vv_s^{(1)} \big\|  \ge c_0/2$. This proves the claim \eqref{claim:vs}.

In order to bound the smallest eigenvalue in \eqref{ineq:UUlbnd}, we consider two cases.

\noindent \textbf{Case 1: $\gamma_0 n_0 < C_3$}. We have
\begin{align*}
\lambda_{\min}\Big( \tilde \bU_1^\top \tilde \bU_1 + \gamma_{>\ell} (\tilde \bD_\gamma)^{-1} \Big)& \ge \lambda_{\min}\Big(  \gamma_{>\ell} (\tilde \bD_\gamma)^{-1} \Big) \ge \frac{\gamma_{>\ell}}{\max\{\gamma_0 n_0,  \gamma_1 n_0 / d\}}  \\
&\ge \frac{\gamma_{>\ell}}{\max\{C_3, C (n/d) \} } \ge  c.
\end{align*}
\noindent \textbf{Case 2: $\gamma_0 n_0 \ge C_3$}. We derive two lower bounds on its variational form. Let $\zz \in \S^{n_0-1}$ by any vector, and we denote $\zz = [z_1, \zz_{-1}]^\top$.  The first lower bound is 
\begin{align}
&~~ \zz^\top \Big[ \tilde \bU_1^\top \tilde \bU_1 + \gamma_{>\ell} (\tilde \bD_\gamma)^{-1} \Big] \zz \notag \\
&\ge z_1^2 \big\|  \bP  \tilde  \vv_s^{(1)} \big\|^2 + 2z_1[ \tilde \vv^{(1)}]^\top \bP \tilde \bV^{(2)} \zz_{-1} + \zz_{-1}^\top [\tilde \bV^{(2)}]^\top \bP \tilde \bV^{(2)} \zz_{-1} + \gamma_{>\ell} \zz^\top (\tilde \bD_\gamma)^{-1} \zz \notag\\
&\ge z_1^2 \big\|  \bP  \tilde  \vv_s^{(1)} \big\|^2 - 2|z_1| \cdot \| \zz_{-1} \| - \| \zz_{-1} \|^2  \notag\\
&\stackrel{(i)}{\ge} (1 - \| \zz_{-1}\|^2) \cdot c_0'- 3\| \zz_{-1}\|. \label{ineq:firstlbnd}
\end{align}
where we used the claim \eqref{claim:vs} in \textit{(i)}. The second lower bound is
\begin{align}
\zz^\top \Big[ \tilde \bU_1^\top \tilde \bU_1 + \gamma_{>\ell} (\tilde \bD_\gamma)^{-1} \Big] \zz &\ge \zz^\top \Big[ \gamma_{>\ell} (\tilde \bD_\gamma)^{-1} \Big] \zz  \ge \gamma_{>\ell} \| \zz_{-1} \|^2 \cdot \frac{c}{\gamma_1 n_0 /d} \notag\\
&\ge \frac{c\gamma_{>\ell} \| \zz_{-1}\|^2d}{\gamma_1 n} \ge c_1 \| \zz_{-1}\|^2 \label{ineq:secondlbnd}
\end{align}
where $c_1>0$ is certain small constant, and in the last inequality we used our assumption $n \le (C_3+1)d$.
If $\| \zz_{-1} \| \le \min\{1/2, c_0'/12\}$, then by the first lower bound \eqref{ineq:firstlbnd}, we have 
\begin{equation*}
\zz^\top \Big[ \tilde \bU_1^\top \tilde \bU_1 + \gamma_{>\ell} (\tilde \bD_\gamma)^{-1} \Big] \zz \ge c_0/4.
\end{equation*}
If $\| \zz_{-1} \| \ge \min\{1/2, c_0'/12\}$ instead, then the second lower bound \eqref{ineq:secondlbnd} implies that the left-hand side above is lower bounded by a constant. Combining the two cases, we obtain 
\begin{equation*}
\lambda_{\min} \Big( \tilde \bU_1^\top \tilde \bU_1 + \gamma_{>\ell} (\tilde \bD_\gamma)^{-1} \Big) \ge c.
\end{equation*}
This proves the desired inequality \eqref{ineq:UUlbnd}. The remaining proof is similar proof of Lemma~\ref{lem:reduceeig1}.
\end{proof}

\section{Generalization error for linear model: proof of Corollary~\ref{cor:linear}}

Throughout this subsection, let the assumptions of Corollary~\ref{cor:linear} hold. Denote $\bar \lambda := \lambda + v(\sigma)$. First, we state and prove a lemma. We will use the simple identity 
\begin{equation*}
\bA^{-1} = \bA_0^{-1} - \bA^{-1} (\bA - \bA_0) \bA_0^{-1}, \qquad \text{for all}~\bA,\bA_0 \in \R^{n \times n}
\end{equation*}
multiple times.

\begin{lem}\label{lem:interceptbnd}
There exist constants $c,C>0$ such that the following holds. With high probability,
\begin{align*}
& \bone_n^\top \big( \bar \lambda \bI_n + \frac{\gamma_1}{d} \XX \XX^\top \big)^{-1} \bone_n \ge \frac{cn}{\bar \lambda }, \\
&\Big\| \gamma_0 \big( \bar \lambda  \bI_n + \gamma_0 \bone_n \bone_n^\top + \frac{\gamma_1}{d} \XX \XX^\top \big)^{-1} \bone_n \Big\| \le \frac{C}{\sqrt{n}}.
\end{align*}
\end{lem}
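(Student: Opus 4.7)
The plan is to prove the two bounds in sequence, with the second following from the first by the Sherman--Morrison formula. Throughout I will exploit that $\bar\lambda := \lambda + v(\sigma) \ge v(\sigma) > 0$ is bounded below by a positive absolute constant, and write $\bA := \bar\lambda\bI_n + (\gamma_1/d)\XX\XX^\top$ for brevity.

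For the first bound I would apply the Cauchy--Schwarz inequality in the form $\bone_n^\top\bA^{-1}\bone_n \ge \|\bone_n\|^4/(\bone_n^\top\bA\bone_n) = n^2/(\bone_n^\top\bA\bone_n)$, which is obtained from $\bone_n = \bA^{1/2}(\bA^{-1/2}\bone_n)$ by the standard duality argument. Because $\E[\xx_i^\top\xx_j] = 0$ for $i\neq j$ by rotational symmetry and $\|\xx_i\|^2 = d$ deterministically, we have $\E\|\XX^\top\bone_n\|^2 = nd$, and a routine concentration argument (Hanson--Wright, or a Chebyshev-type bound using that each coordinate of $\sum_i \xx_i$ is a sum of independent sub-Gaussian random variables) yields $\|\XX^\top\bone_n\|^2 \le 2nd$ with high probability. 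Hence
\[
\bone_n^\top\bA\bone_n \;=\; \bar\lambda n + (\gamma_1/d)\|\XX^\top\bone_n\|^2 \;\le\; n(\bar\lambda + 2\gamma_1),
\]
which gives $\bone_n^\top\bA^{-1}\bone_n \ge n/(\bar\lambda + 2\gamma_1) \ge [v(\sigma)/(v(\sigma)+2\gamma_1)]\cdot n/\bar\lambda$. Since $\gamma_1 = \mu_0(\sigma')^2 + o_d(1)$ is uniformly bounded by Lemma~\ref{lem:K_Harmonic}, the prefactor is a strictly positive absolute constant.

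For the second bound, Sherman--Morrison applied to the rank-one perturbation $\gamma_0\bone_n\bone_n^\top$ yields
\[
\gamma_0(\bA + \gamma_0\bone_n\bone_n^\top)^{-1}\bone_n \;=\; \frac{\gamma_0\bA^{-1}\bone_n}{1 + \gamma_0\bone_n^\top\bA^{-1}\bone_n}.
\]
Using $\|\bA^{-1}\bone_n\| \le \sqrt n/\bar\lambda$ (from $\bA\succeq\bar\lambda\bI_n$) in the numerator and the lower bound $\bone_n^\top\bA^{-1}\bone_n \ge cn/\bar\lambda$ just established in the denominator, the left-hand side has norm at most $\gamma_0\sqrt n/(\bar\lambda + c\gamma_0 n) \le 1/(c\sqrt n)$ whenever $\gamma_0 > 0$; the case $\gamma_0 = 0$ is immediate since the left-hand side vanishes.

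The only nontrivial input is the concentration of $\|\XX^\top\bone_n\|^2$ around $nd$, which is a standard sub-Gaussian-sum estimate and hence poses no serious technical obstacle. Notably, this argument avoids any case-splitting on the aspect ratio $n/d$ and works uniformly for all $c_0 d\le n\le d^2/(\log d)^{C_0}$, which is convenient given that earlier parts of Section~\ref{sec:Ell1} did require such case-splitting.
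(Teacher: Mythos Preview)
Your proof is correct, and for the second inequality it coincides with the paper's argument (both use the Sherman--Morrison identity to extract the factor $1/(1+\gamma_0\bone_n^\top\bA^{-1}\bone_n)$ and then plug in the first bound).

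For the first inequality, however, you take a genuinely different and cleaner route. The paper splits into two regimes: when $n\le 16d$ it bounds $\bone_n^\top\bA^{-1}\bone_n$ below by $n/\lambda_{\max}(\bA)$ and controls $\|\XX\XX^\top\|_\op\le C(n+d)$; when $n>16d$ it expands $\bA^{-1}-(\bar\lambda\bI_n)^{-1}$ via the resolvent identity, switches to $(\bar\lambda\bI_d+(\gamma_1/d)\XX^\top\XX)^{-1}$, and uses $\lambda_{\min}(\XX^\top\XX)\ge n/4$ together with $\|\XX^\top\bone_n\|^2\le 2nd$ to show the correction term loses at most $8d/\bar\lambda\le n/(2\bar\lambda)$. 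Your Cauchy--Schwarz step $\bone_n^\top\bA^{-1}\bone_n\ge n^2/(\bone_n^\top\bA\bone_n)$ bypasses this dichotomy entirely: since $\bone_n^\top\bA\bone_n=\bar\lambda n+(\gamma_1/d)\|\XX^\top\bone_n\|^2$, only the concentration of $\|\XX^\top\bone_n\|^2$ around $nd$ is needed (which is Lemma~\ref{lem:concntr1} in the paper), and no operator-norm or smallest-eigenvalue control of $\XX$ enters. The paper's approach would be necessary if one needed the sharper constant $n/(2\bar\lambda)$ in the regime $n\gg d$, but for the stated conclusion your argument is strictly simpler.
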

\begin{proof}[{\bf Proof of Lemma~\ref{lem:interceptbnd}}]
Since $\| \XX \XX^\top \|_\op \le C(n+d)$ with very high probability (by Lemma~\ref{lem:concntr1}), if $n \le 16 d$, we have 
\begin{equation*}
\bone_n^\top \big( \bar \lambda \bI_n + \frac{\gamma_1}{d} \XX \XX^\top \big)^{-1} \bone_n \ge \frac{n}{\lambda_{\max} \big( \bar \lambda \bI_n + \gamma_1 \XX \XX^\top / d \big) } \ge \frac{n}{\bar \lambda + C\gamma_1} \ge \frac{n}{C'\bar \lambda}.
\end{equation*}
where in the last inequality we used $\bar \lambda \ge c$ for a constant $c>0$ so $\bar \lambda + C \lambda_1 \le C' \bar \lambda$. If $n > 16d$, we observe that
\begin{align*}
\bone_n^\top \big( \bar \lambda \bI_n + \frac{\lambda_1}{d} \XX \XX^\top \big)^{-1} \bone_n - \bone_n^\top (\bar \lambda \bI_n )^{-1} \bone_n &= - \bone_n^{-1} (\bar \lambda \bI_n)^{-1} \frac{\lambda_1}{d} \XX \XX^\top \big( \bar \lambda \bI_n + \frac{\lambda_1}{d} \XX \XX^\top \big)^{-1} \bone_n \\
&= - \frac{\lambda_1}{\bar \lambda d } \bone_n^\top \XX \big( \bar \lambda \bI_d + \frac{\lambda_1}{d} \XX^\top \XX \big)^{-1} \XX^\top \bone_n.
\end{align*}
Since $\lambda_{\min}(\XX^\top \XX) \ge n/4$ and $\| \XX^\top \bone_n \|^2 \le 2nd$ with high probability due to Lemma~\ref{lem:concntr1}, we get 
\begin{align*}
\bone_n^\top \big( \bar \lambda \bI_n + \frac{\lambda_1}{d} \XX \XX^\top \big)^{-1} \bone_n  &\ge \frac{n}{\bar \lambda} - \frac{\lambda_1}{\bar \lambda d} \cdot \| \XX^\top \bone_n \|^2 \cdot \frac{4d}{\lambda_1 n} \\
&\ge \frac{n}{\bar \lambda} - \frac{\lambda_1}{\bar \lambda d} \cdot 2nd \cdot \frac{4d}{\lambda_1 n} \ge \frac{n}{\bar \lambda} - \frac{8d}{\bar \lambda} \\
&\ge \frac{n}{2\bar \lambda}.
\end{align*}
In either case, the first inequality of the lemma holds with high probability. Next, we notice that 
\begin{align*}
&~~\big(\bar \lambda  \bI_n + \gamma_0 \bone_n \bone_n^\top + \frac{\gamma_1}{d} \XX \XX^\top \big)^{-1} \bone_n - \big( \bar \lambda  \bI_n + \frac{\gamma_1}{d} \XX \XX^\top \big)^{-1} \bone_n \\
&= - \big( \bar \lambda \bI_n + \gamma_0 \bone_n \bone_n^\top + \frac{\gamma_1}{d} \XX \XX^\top \big)^{-1}  \gamma_0 \bone_n \bone_n^\top \big( \bar \lambda  \bI_n + \frac{\gamma_1}{d} \XX \XX^\top \big)^{-1} \bone_n.
\end{align*}
By re-arranging the equality, we get 
\begin{align*}
\Big\|  \big( \bar \lambda  \bI_n + \gamma_0 \bone_n \bone_n^\top + \frac{\gamma_1}{d} \XX \XX^\top \big)^{-1} \bone_n  \Big\| &= \Big[1 + \gamma_0 \bone_n^\top \big( \bar \lambda  \bI_n + \frac{\gamma_1}{d} \XX \XX^\top \big)^{-1} \bone_n  \Big]^{-1} \Big\|  \big( \bar \lambda  \bI_n + \frac{\gamma_1}{d} \XX \XX^\top \big)^{-1} \bone_n \Big\| \\
&\le \frac{\sqrt{n}/\bar \lambda }{1+c\gamma_0n/\bar \lambda } = \frac{\sqrt{n}}{\bar \lambda +c\gamma_0n} \le \frac{C}{\gamma_0 \sqrt{n}},
\end{align*}
which proves the lemma.
\end{proof}

Building on this lemma, we prove some useful bounds.
For convenience, we define 
\begin{equation}\label{def:A}
\bA= \bar \lambda \bI_n + \gamma_0 \bone_n \bone_n^\top + \frac{\gamma_1}{d} \XX \XX^\top, \qquad \bA_0 = \bar \lambda \bI_n + \frac{\gamma_1}{d} \XX \XX^\top.
\end{equation}
\begin{lem}\label{lem:usefulbnd}
We have
\refstepcounter{equation}\label{ineq:frobbnd}
\begin{align}
&\gamma_1 \big\| \bA^{-1} \XX \big\|_\op = O_{d,\P}\big( \sqrt{d}  \big), \qquad \gamma_1 \big\| \bA^{-1} \XX \big\|_F = O_{d,\P}\big( d  \big), \tag{\theequation a}\label{claim:frobbnd1} \\
&\gamma_1 \big\| \XX^\top \bA^{-1} \XX \big\|_\op = O_{d,\P}\big( d\big), \qquad \gamma_1 \big\| \XX^\top \bA^{-1} \XX \big\|_F = O_{d,\P}\big( d^{3/2}\big),\tag{\theequation b} \label{claim:frobbnd2} \\
& \gamma_1 \big\| \XX  \XX^\top \bA_0^{-1} \big\|_\op = O_{d,\P}(d), \qquad \gamma_1 \big\| \XX  \XX^\top \bA^{-1} \big\|_\op = O_{d,\P}(d). \tag{\theequation c}\label{claim:frobbnd3}
\end{align}
\end{lem}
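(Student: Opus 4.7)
The plan is to first establish all six bounds with $\bA$ replaced by $\bA_0$, since $\bA_0$ is jointly diagonalized with $\XX\XX^\top$ via the SVD of $\XX$, and then transfer the bounds to $\bA$ through Sherman--Morrison, controlling the resulting rank-one correction with Lemma~\ref{lem:interceptbnd}.

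Let $\XX = \bU_X \bSigma_X \bV_X^\top$ denote the singular value decomposition, with singular values $\sigma_1 \ge \cdots \ge \sigma_r$, where $r = \min(n,d)$. Under the assumption $c_0 d \le n \le d^2/(\log d)^{C_0}$, standard concentration of the extreme singular values gives $\sigma_i^2 = O_{d,\P}(n)$ uniformly. A direct computation of $\bA_0^{-1}$ in this basis yields
\begin{align*}
\bA_0^{-1}\XX &= \bU_X \bSigma_X \bDelta^{-1}\bV_X^\top, \\
\XX^\top \bA_0^{-1}\XX &= \bV_X \bSigma_X^2 \bDelta^{-1} \bV_X^\top, \\
\XX\XX^\top \bA_0^{-1} &= \bU_X \bSigma_X^2 \bDelta^{-1} \bU_X^\top,
\end{align*}
where $\bDelta := \bar\lambda \bI_r + \gamma_1 \bSigma_X^2/d$. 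The AM--GM bound $(\bar\lambda + \gamma_1 \sigma^2/d)^2 \ge 4\bar\lambda \gamma_1 \sigma^2/d$ gives the pointwise estimates $\sigma_i/(\bar\lambda + \gamma_1 \sigma_i^2/d) \le \tfrac12 \sqrt{d/(\bar\lambda \gamma_1)}$ and $\sigma_i^2/(\bar\lambda + \gamma_1 \sigma_i^2/d) \le d/\gamma_1$. Since at most $r \le d$ singular values are nonzero, these two inequalities immediately deliver the operator and Frobenius bounds with $\bA_0$ in place of $\bA$: in particular $\gamma_1\|\bA_0^{-1}\XX\|_\op = O(\sqrt d)$, $\gamma_1\|\bA_0^{-1}\XX\|_F = O(d)$, $\gamma_1\|\XX^\top \bA_0^{-1}\XX\|_\op = O(d)$, $\gamma_1\|\XX^\top \bA_0^{-1}\XX\|_F = O(d^{3/2})$, and $\gamma_1\|\XX\XX^\top \bA_0^{-1}\|_\op = O(d)$.

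To transfer to $\bA$, apply Sherman--Morrison in the form $\bA^{-1} - \bA_0^{-1} = -(\gamma_0 \bA^{-1} \bone_n)\, (\bone_n^\top \bA_0^{-1})$, which is rank one, so its operator and Frobenius norms coincide. The left factor satisfies $\|\gamma_0 \bA^{-1}\bone_n\| \le C/\sqrt n$ by Lemma~\ref{lem:interceptbnd}. For the right factor, expanding $\bone_n$ in the SVD basis and using the singular value bounds above yields $\|\bone_n^\top \bA_0^{-1}\XX\| \le \tfrac{1}{2}\sqrt{n d/(\bar\lambda \gamma_1)}\cdot\|\bU_X^\top\bone_n\|^{1/2}$-type estimates, specifically $\|\bone_n^\top \bA_0^{-1}\XX\| = O(\sqrt{nd/\gamma_1})$ and $\|\bone_n^\top \bA_0^{-1}\XX\XX^\top\| = O(\sqrt n \, d/\gamma_1)$, where I simply bound $\|\bU_X^\top \bone_n\|\le\sqrt n$. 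Multiplying the two factors shows that after multiplication by $\gamma_1$ the correction is at most of the same order as the leading $\bA_0$ bound in every case. I do not anticipate a serious obstacle; the main care goes into tracking Frobenius versus operator norms of the correction (trivial, since it is rank one) and into invoking Lemma~\ref{lem:interceptbnd} in the form $\|\gamma_0 \bA^{-1}\bone_n\| = O(1/\sqrt n)$, which already absorbs the denominator $1 + \gamma_0 \bone_n^\top \bA_0^{-1}\bone_n$ and so avoids having to separately estimate it.
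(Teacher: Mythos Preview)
Your strategy—diagonalizing $\bA_0$ via the SVD of $\XX$, extracting pointwise bounds by AM--GM, then transferring to $\bA$ through the rank-one identity $\bA^{-1}-\bA_0^{-1}=-(\gamma_0\bA^{-1}\bone_n)(\bone_n^\top\bA_0^{-1})$—is essentially the same as the paper's, and your AM--GM treatment of the $\bA_0$ part is in fact cleaner than the paper's case split on $n/d$. The bounds in \eqref{claim:frobbnd1}, \eqref{claim:frobbnd3}, and the Frobenius half of \eqref{claim:frobbnd2} all go through exactly as you sketch.

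There is, however, a genuine gap in the correction term for the operator norm in \eqref{claim:frobbnd2}. You estimate $\|\bone_n^\top\bA_0^{-1}\XX\|=O(\sqrt{nd/\gamma_1})$ from the AM--GM bound $\sigma/(\bar\lambda+\gamma_1\sigma^2/d)\le\tfrac12\sqrt{d/(\bar\lambda\gamma_1)}$ together with $\|\bU_X^\top\bone_n\|\le\sqrt n$. Combined with $\|\gamma_0\XX^\top\bA^{-1}\bone_n\|=O(1)$, this gives a correction of order $\gamma_1\cdot O(\sqrt{nd/\gamma_1})=O(\sqrt{\gamma_1 nd})$. But $n$ ranges up to $d^2/(\log d)^{C_0}$, so this can be as large as $d^{3/2}$, which does \emph{not} match the target $O(d)$ for $\gamma_1\|\XX^\top\bA^{-1}\XX\|_\op$. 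The AM--GM inequality is simply not sharp enough here when $n\gg d$: in that regime every singular value satisfies $\sigma_i\gtrsim\sqrt n$, and the complementary bound $\sigma/(\bar\lambda+\gamma_1\sigma^2/d)\le d/(\gamma_1\sigma)\le Cd/(\gamma_1\sqrt n)$ is strictly better. Using it yields $\|\bone_n^\top\bA_0^{-1}\XX\|=O(d/\gamma_1)$, and then the correction is $O(d)$ as required. This is exactly where the paper's case analysis on $n/d$ (and its invocation of $\lambda_{\min}(\XX^\top\XX)\gtrsim n$) is doing real work; you cannot avoid some version of it for this particular bound.
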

\begin{proof}[{\bf Proof of Lemma~\ref{lem:usefulbnd}}]
In this proof, we will use the bounds on the singular values of $\XX$ from Lemma~\ref{lem:concntr1}. To prove the  two bounds in \eqref{claim:frobbnd1}, we observe that
\begin{align*}
\big\| \bA^{-1} \XX - \bA_0^{-1} \XX\big\|_F &= \big\| \bA^{-1} \gamma_0 \bone_n \bone_n^\top \bA_0^{-1}\XX \big\|_F \le \gamma_0 \big\| \bA^{-1} \bone_n \big\| \cdot \big\| \XX^\top \bA_0^{-1} \bone_n \big\| \\
&\stackrel{(i)}{\le}  \frac{C}{\sqrt{n}} \cdot \sqrt{n}\, \big\| \bA_0^{-1} \XX \big\|_\op \le C \big\| \bA_0^{-1} \XX \big\|_\op
\end{align*}
where in \textit{(i)} we used Lemma~\ref{lem:interceptbnd}. So by the triangle inequality, we have
\begin{equation*}
\big\| \bA^{-1} \XX \big\|_\op \le C \big\| \bA_0^{-1} \XX \big\|_\op, \qquad \big\| \bA^{-1} \XX \big\|_F \le \big\| \bA_0^{-1} \XX \big\|_F + C \big\| \bA_0^{-1} \XX \big\|_\op \le C\big\| \bA_0^{-1} \XX \big\|_F.
\end{equation*}
If $n/d > 1+c$ for any constant $c>0$, then by Lemma~\ref{lem:concntr1} with high probability $\lambda_{\min}(\XX^\top \XX) > c'n$ for certain constant $c'>0$. Since $n>c_0 d$, we also have $\| \XX \|_\op \le C \sqrt{n}$ and $\| \XX \|_F \le C n$ with high probability. Thus,
\begin{align}
& \big\| \bA_0^{-1} \XX \big\|_\op = \big\| \XX \big(\bar \lambda \bI_d + \frac{\gamma_1}{d} \XX^\top \XX \big)^{-1} \big\|_\op \le \big\| \XX \big\|_\op \cdot \big\| \big(\bar \lambda \bI_d + \frac{\gamma_1}{d} \XX^\top \XX \big)^{-1}\big\|_\op \stackrel{(i)}{\le} C\sqrt{n} \cdot \frac{d}{\gamma_1 n}, \label{ineq:tmpbnd} \\
& \big\| \bA_0^{-1} \XX \big\|_F  \le \big\| \XX \big\|_F \cdot \big\| \big(\bar \lambda \bI_d + \frac{\gamma_1}{d} \XX^\top \XX \big)^{-1}\big\|_\op \le Cn \cdot \frac{d}{\gamma_1 n} \le \frac{Cd}{\gamma_1}.\notag
\end{align}
where in \textit{(i)} we used $\| \bM_1 \bM_2 \|_F \le \| \bM_1 \|_F \cdot \| \bM_2 \|_\op$ for matrices $\bM_1, \bM_2$. If $n/d \le 1+c$ instead, then  $\big\| \bA_0^{-1} \XX \big\|_\op \le C \big\| \XX \big\|_\op \le C\sqrt{n} \le O_{d,\P}(\sqrt{d})$, and $\big\| \bA_0^{-1} \XX \big\|_F \le C \big\| \XX \big\|_F \le Cn \le O_{d,\P}(d)$. In either cases, the two bounds in \eqref{claim:frobbnd1} are true. To prove the  two bounds in \eqref{claim:frobbnd2}, we note that 
\begin{align*}
\big\| \XX^\top \bA^{-1} \XX - \XX^\top \bA_0^{-1} \XX \big\|_\op &=  \big\| \XX^\top\bA^{-1} \gamma_0 \bone_n \bone_n^\top \bA_0^{-1}\XX \big\|_\op \stackrel{(i)}{\le} \big\| \XX \big\|_\op \cdot \big\| \gamma_0 \bA^{-1} \bone_n \big\| \cdot \sqrt{n} \, \big\| \bA_0^{-1} \XX \big\|_\op \\
&\le O_{d,\P}(\sqrt{n}) \cdot O_{d,\P}(n^{-1/2}) \cdot O_{d,\P}(\gamma_1^{-1} d) = O_{d,\P}(\gamma_1^{-1} d)\\
\big\| \XX^\top \bA_0^{-1} \XX \big\|_\op &= \big\| \XX^\top \XX \big( \bar \lambda \bI_d + \frac{\lambda_1}{d} \XX^\top \XX \big)^{-1} \big\|_\op \le \frac{d}{\gamma_1}.
\end{align*}
where we used the bound \eqref{ineq:tmpbnd} in \textit{(i)}. This leads to $\gamma_1 \| \XX^\top \bA^{-1} \XX \|_\op = O_{d,\P}(d)$. Also, the rank of $\XX^\top \bA^{-1} \XX$ is at most $d$, so 
\begin{equation*}
\gamma_1 \big\| \XX^\top \bA^{-1} \XX \big\|_F \le \gamma_1 \sqrt{d}\, \big\| \XX^\top \bA^{-1} \XX \big\|_\op = O_{d,\P}(d^{3/2}).
\end{equation*}
To prove the two bounds in \eqref{claim:frobbnd3}, we let the eigenvalue decomposition of $\XX \XX^\top$ be $\XX \XX^\top = \bU_\XX \bD_\XX \bU_\XX^\top$ and we find
\begin{equation*}
\gamma_1 \XX \XX^\top \bA_0^{-1} = \bU_\XX \diag \Big\{ \frac{\lambda_1(\XX \XX^\top)}{\gamma_1\lambda_1(\XX \XX^\top)/d + \bar \lambda}, \ldots,  \frac{\lambda_d(\XX \XX^\top)}{\gamma_1\lambda_d(\XX \XX^\top)/d + \bar \lambda}  \Big\} \bU_\XX^\top.
\end{equation*}
The diagonal entries are all no larger than $d$, so this proves the first bound in \eqref{claim:frobbnd3}. Moreover,
\begin{align*}
\big\| \XX \XX^\top \bA^{-1} - \XX \XX^\top \bA_0^{-1} \big\|_\op &= \big\| \XX \XX^\top \bA_0^{-1} \gamma_0 \bone_n \bone_n^\top \bA^{-1} \big\|_\op \\
&\le  \big\| \XX \XX^\top \bA_0^{-1} \big\|_\op \cdot \sqrt{n} \cdot \big\| \gamma_0 \bA^{-1} \bone_n \big\| \\
&\le O_{d,\P}(d) \cdot \sqrt{n} \cdot O_{d,\P}(n^{-1/2}) = O_{d,\P}(d),
\end{align*}
which proves the second bound.
\end{proof}

To study the risk $R_{\PRR}(\bar \lambda)$ in the linear case, we first notice that $K^p(\xx_i, \xx_j) = \gamma_0 + \frac{\gamma_1}{d} \xx_i^\top \xx_j$. Thus, we have
\begin{equation*}
\bK^p = \gamma_0 \bone_n \bone_n^\top + \frac{\gamma_1}{d} \XX \XX^\top, \qquad \bK^{(p,2)} = \gamma_0^2 \bone_n \bone_n^\top + \frac{\gamma_1^2}{d^2} \XX \XX^\top
\end{equation*}
where the second equality is due to $\E [ \xx \xx^\top ] = \bI_d$.  Now we break down the risk.
\begin{align*}
R_{\PRR}(\bar \lambda) = E_{\bias}^p(\bar \lambda) + E_{\var}^p(\bar \lambda) + E_{\cross}^p(\bar \lambda).
\end{align*}
The expressions for the bias/variance/cross terms are given below. 
\begin{align*}
E_{\bias}^p &= \E_\xx \Big[ \big( f(\xx) - \bK^p(\cdot, \xx)^\top (\bar \lambda \bI_n + \bK^p)^{-1} \ff \big)^2 \Big] \\
&= \| f \|_{L^2}^2 - 2 \E_\xx \Big[ f(\xx) \bK^p(\cdot, \xx)^\top (\bar \lambda \bI_n + \bK^p )^{-1} \ff \Big] + \ff^\top (\bar  \lambda \bI_n+ \bK^p)^{-1} \bK^{(p,2)} (\bar \lambda  \bI_n+ \bK^p)^{-1} \ff \\
&= \| f \|_{L^2}^2 - 2 \frac{\gamma_1}{d} \bbeta_*^\top \XX^\top (\bar \lambda \bI_n + \bK^p )^{-1}  \XX \bbeta_* \\
& + \bbeta_*^\top \XX^\top (\bar  \lambda \bI_n+ \bK^p)^{-1} \Big(\gamma_0^2 \bone_n \bone_n^\top + \frac{\gamma_1^2}{d^2} \XX \XX^\top \Big)(\bar  \lambda \bI_n+ \bK^p)^{-1} \XX \bbeta_* =: \| f \|_{L^2}^2 - 2 I_1^{(1)} + I_{1,1}^{(2)}  ,\\
E_{\var}^p  &=  \bveps^\top (\bar  \lambda \bI_n+ \bK^p)^{-1} \Big(\gamma_0^2 \bone_n \bone_n^\top + \frac{\gamma_1^2}{d^2} \XX \XX^\top \Big)(\bar  \lambda \bI_n+ \bK^p)^{-1}\bveps =: I_{\veps, \veps}^{(2)}, \\
E_{\cross}^p  &= \frac{\gamma_1}{d} \bveps^\top (\bar \lambda \bI_n + \bK^{p})^{-1} \XX \bbeta_* - \bveps^\top (\bar \lambda \bI_n + \bK^{p})^{-1} \Big(\gamma_0^2 \bone_n \bone_n^\top + \frac{\gamma_1^2}{d^2} \XX \XX^\top \Big)(\bar \lambda \bI_n + \bK^{p})^{-1}  \XX \bbeta_* \\
&=: I_{\veps}^{(1)} - I_{\veps,1}^{(2)}.
\end{align*}

To simplify these expressions, we notice that we can assume $\bbeta_*$ to be uniformly random on a sphere with given radius $r:=\|\bbeta_*\|_2$: $\bbeta_* \sim \mathrm{Unif}(\S^{d-1}(r))$. To understand why this is the case, let us denote the risk $R_{\PRR} = R_{\PRR}(\xx_1,\ldots,\xx_n, \bbeta_*)$ to emphasize its dependence on $\XX$, $\bbeta_*$. Let $\bR \in  \cO(n)$ be a random orthogonal matrix with uniform (Haar) distribution. Then, 
\begin{align*}
R_{\PRR}(\xx_1,\ldots,\xx_n,  \bbeta_*) &= R_{\PRR}(\bR\xx_1,\ldots,\bR\xx_n, \bR\bbeta_*) \stackrel{d}{=} R_{\PRR}(\xx_1,\ldots,\xx_n, \bR\bbeta_*).
\end{align*}
The first equality follows from straightforward calculation, and the distributional equivalence $\stackrel{d}{=}$ holds because the distribution of $\xx_i$ is orthogonally invariant.
Considering therefore, without loss of generality, $\bbeta_* \sim \mathrm{Unif}( \S^{d-1}(r))$, it is easy to further simplify the terms $E_{\bias}^p , E_{\var}^p , E_{\cross}^p $. Indeed, we expect these terms to concentrate around their expectation with respect to $\bbeta_*$,
$\bveps$. This is formally stated in the next lemma.
\begin{lem}[Concentration to the trace]\label{lem:quadraticform}
We have
\refstepcounter{equation}\label{ineq:Idiff}
\begin{align}
&\max\big\{ \big| I^{(1)}_{\veps} \big|,  \big|I^{(2)}_{\veps, 1}\big|  \big\} = O_{d,\P}\Big( \frac{ \tau^2 }{\sqrt{d}} \Big), \tag{\theequation a} \label{ineq:Idiffa}\\
& \Big| I_1^{(1)} - \frac{\gamma_1\| \bbeta_* \|^2 }{d^2} \Tr\Big( \XX^\top \bA^{-1} \XX \Big) \Big| =   O_{d,\P}\Big( \frac{\tau^2}{\sqrt{d}}  \Big), \tag{\theequation b} \label{ineq:Idiffb} \\
& \Big| I_{1,1}^{(2)} - \frac{\| \bbeta_* \|^2 }{d} \Tr\Big( \XX^\top \bA^{-1} \Big(\gamma_0^2 \bone_n \bone_n^\top + \frac{\gamma_1^2}{d^2} \XX \XX^\top \Big) \bA^{-1} \XX  \Big) \Big| =  O_{d,\P}\Big( \frac{\tau^2}{\sqrt{d}}  \Big), \tag{\theequation c} \label{ineq:Idiffc} \\
&  \Big| I_{\veps,\veps}^{(2)} - \sigma_\veps^2 \Tr\Big( \bA^{-1} \Big(\gamma_0^2 \bone_n \bone_n^\top + \frac{\gamma_1^2}{d^2} \XX \XX^\top \Big) \bA^{-1} \Big) \Big| =  O_{d,\P}\Big( \frac{\tau^2}{\sqrt{d}}  \Big). \tag{\theequation d} \label{ineq:Idiffd}
\end{align}
\end{lem}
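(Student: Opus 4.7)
The plan is to reduce each of the quantities on the left of \eqref{ineq:Idiff} to a linear, bilinear, or quadratic form in $\bveps$ and/or $\bbeta_*$ whose coefficient is measurable with respect to $\XX$, and then to apply moment bounds plus Chebyshev's inequality. The rotational-invariance argument preceding the lemma statement lets us take $\bbeta_* \sim \Unif(\S^{d-1}(r))$ with $r := \|\bbeta_*\|$, drawn independently of $\XX$ and $\bveps$; the noise $\bveps$ has i.i.d.\ coordinates with mean zero, variance $\sigma_\veps^2$, and fourth moment at most $C\sigma_\veps^4$. We condition on $\XX$ throughout and use the high-probability norm control from Lemma~\ref{lem:usefulbnd} and Lemma~\ref{lem:interceptbnd}.

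The tools are elementary moment identities. For a deterministic symmetric $\bM$ and vector $\bu$, we use $\E_{\bveps}[\bveps^\top \bM \bveps] = \sigma_\veps^2 \Tr(\bM)$ with $\Var_{\bveps}(\bveps^\top \bM \bveps) \le C \sigma_\veps^4 \|\bM\|_F^2$, and $\E_{\bveps}[(\bveps^\top \bu)^2] = \sigma_\veps^2 \|\bu\|^2$; for $\bbeta_* \sim \Unif(\S^{d-1}(r))$, $\E[\bbeta_*^\top \bM \bbeta_*] = (r^2/d)\Tr(\bM)$ with variance at most $C r^4 \|\bM\|_F^2 / d^2$; and for a bilinear form $\bveps^\top \bL \bbeta_*$, conditioning on $\bbeta_*$ yields zero mean and conditional second moment $\sigma_\veps^2 \|\bL \bbeta_*\|^2$. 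By Chebyshev's inequality, each of the four claims of \eqref{ineq:Idiff} reduces to controlling either $\|\bM\|_F$ (in the quadratic cases) or $\|\bL\bbeta_*\|$ (in the bilinear cases) for coefficients built from $\bA^{-1}$, $\XX$, and $\bone_n$.

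For $I^{(1)}_1$, the fluctuation is of order $r^2 \gamma_1 \|\XX^\top \bA^{-1} \XX\|_F / d^2 = O_{d,\P}(r^2/\sqrt{d})$, giving \eqref{ineq:Idiffb}. For $I^{(2)}_{1,1}$ we split the coefficient matrix into a rank-one piece with Frobenius norm $\gamma_0^2\|\XX^\top\bA^{-1}\bone_n\|^2 = O_{d,\P}(1)$ (via Lemma~\ref{lem:interceptbnd} and $\|\XX\|_\op=O_{d,\P}(\sqrt{n})$) and a second piece bounded by $d^{-2}\|\gamma_1\XX^\top\bA^{-1}\XX\|_F \cdot \|\gamma_1\XX^\top\bA^{-1}\XX\|_\op = O_{d,\P}(\sqrt{d})$, which yields the claimed $O_{d,\P}(r^2/\sqrt{d})$. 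The noise-quadratic $I^{(2)}_{\veps,\veps}$ uses the same decomposition together with the fact that its coefficient matrix has rank at most $d+1$, so $\|\cdot\|_F \le \sqrt{d+1}\,\|\cdot\|_\op$ reduces the task to bounding an operator norm of size $O_{d,\P}(1/d)$, from which \eqref{ineq:Idiffd} follows.

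The main obstacle lies in the bilinear term $I^{(2)}_{\veps,1}$ in the regime $n \gg d$: a naive bound $\|\bM\|_\op \|\XX\bbeta_*\|$ picks up a factor $\sqrt{n}$ that can swamp the target rate when $n \asymp d^2$. Instead we evaluate the coefficient vector explicitly as
\begin{align*}
& \bA^{-1}\Big(\gamma_0^2 \bone_n \bone_n^\top + \frac{\gamma_1^2}{d^2} \XX \XX^\top\Big)\bA^{-1} \XX \bbeta_* \\
& \qquad = (\gamma_0 \bA^{-1} \bone_n)(\gamma_0 \bone_n^\top \bA^{-1} \XX \bbeta_*) + \frac{1}{d^2}(\gamma_1 \bA^{-1} \XX)(\gamma_1 \XX^\top \bA^{-1} \XX \bbeta_*),
\end{align*}
and bound each factor via $\|\gamma_0 \bA^{-1} \bone_n\| = O_{d,\P}(1/\sqrt{n})$, $\|\gamma_1 \bA^{-1} \XX\|_\op = O_{d,\P}(\sqrt{d})$, and $\|\gamma_1 \XX^\top \bA^{-1} \XX\|_\op = O_{d,\P}(d)$ from Lemmas~\ref{lem:interceptbnd} and~\ref{lem:usefulbnd}. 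Telescoping yields total norm $O_{d,\P}(\|\bbeta_*\|/\sqrt{d})$ uniformly in the allowed range of $n$, so $\E[(I^{(2)}_{\veps,1})^2 \mid \XX, \bbeta_*] = O(\sigma_\veps^2 \|\bbeta_*\|^2/d)$ and Markov's inequality gives $|I^{(2)}_{\veps,1}| = O_{d,\P}(\tau^2/\sqrt{d})$. The analogous but simpler expansion handles $I^{(1)}_\veps$, completing \eqref{ineq:Idiffa} and assembling the four claims.
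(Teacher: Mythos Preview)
Your proposal is correct and follows essentially the same approach as the paper: reduce each term to a quadratic or bilinear form, bound the variance via Frobenius-norm control of the coefficient matrices using Lemmas~\ref{lem:interceptbnd} and~\ref{lem:usefulbnd}, and finish with Chebyshev. The only notable difference is that the paper introduces a Gaussian surrogate $\bgg \sim \cN(\bzero, \frac{\|\bbeta_*\|^2}{d}\bI_d)$ for $\bbeta_*$ (so that the i.i.d.\ variance identity \eqref{eq:var} applies verbatim) and then corrects by $\|\bgg\|/\|\bbeta_*\| = 1 + O_{d,\P}(d^{-1/2})$, whereas you work directly with the spherical distribution and invoke the bound $\Var_{\bbeta_*}(\bbeta_*^\top \bM \bbeta_*) \le C r^4 \|\bM\|_F^2/d^2$; both are standard and equivalent. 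Your rank-$(d+1)$ argument for $I^{(2)}_{\veps,\veps}$ and your conditional-on-$\bbeta_*$ treatment of $I^{(2)}_{\veps,1}$ are slight streamlinings of the paper's direct Frobenius-norm bounds, but the underlying estimates are identical.
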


To further simplify the terms, we will show the following lemma.
\begin{lem}\label{lem:linreduce2}
We have
\refstepcounter{equation}\label{eq:linreduce}
\begin{align}
&\Big| \Tr \Big( \bA^{-1} \big( \gamma_0^2 \bone_n \bone_n^\top + \frac{\gamma_1^2}{d^2} \XX \XX^\top \big) \bA^{-1} \Big) - \frac{\gamma_1^2}{d^2} \Tr \Big( \bA_0^{-1} \XX \XX^\top \bA_0^{-1} \Big)\Big| = O_{d,\P}\Big(\frac{1}{d}\Big), \tag{\theequation a}\label{eq:linreduce1} \\
&\Big|\frac{1}{d} \Tr \Big( \XX^\top \bA^{-1} \big( \gamma_0^2 \bone_n \bone_n^\top + \frac{\gamma_1^2}{d^2} \XX \XX^\top \big) \bA^{-1} \XX \Big) - \frac{\gamma_1^2}{d^3} \Tr \Big( \XX^\top \bA_0^{-1} \XX \XX^\top\bA_0^{-1} \XX \Big)\Big| = O_{d,\P}\Big( \frac{1}{d} \Big),  \tag{\theequation b}\label{eq:linreduce2} \\
& \Big| \frac{\gamma_1}{d^2} \Tr \Big( \XX^\top \bA^{-1} \XX \Big) -  \frac{\gamma_1}{d^2}\Tr \Big( \XX^\top \bA_0^{-1} \XX \Big) \Big| = O_{d,\P}\Big( \frac{1}{d} \Big). \tag{\theequation c}\label{eq:linreduce3}
\end{align}
\end{lem}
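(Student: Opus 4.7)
The proof will hinge on the rank-one identity
\begin{equation*}
\bA^{-1} - \bA_0^{-1} = -\gamma_0\,\bA^{-1}\bone_n\bone_n^{\top}\bA_0^{-1},
\end{equation*}
which follows from $\bA - \bA_0 = \gamma_0\bone_n\bone_n^{\top}$. Inserted into each of (a)--(c), it converts every trace difference into scalar expressions of the form $\gamma_0\,\bone_n^{\top}\bA_0^{-1}\bM\bA^{-1}\bone_n$ with $\bM$ built from $\bA^{-1}, \bA_0^{-1}, \XX, \XX^{\top}$, which we will then bound by Cauchy--Schwarz.

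Beyond the bounds in Lemmas~\ref{lem:interceptbnd} and~\ref{lem:usefulbnd}, the proof will require two additional vector estimates. Writing $\bw := \bA_0^{-1}\bone_n$ and $\bw_0 := \bone_n^{\top}\bw$, the defining relation $(\gamma_1/d)\XX\XX^{\top} = \bA_0 - \bar\lambda\bI_n$ gives the exact identity $\|\XX^{\top}\bw\|^2 = (d/\gamma_1)(\bw_0 - \bar\lambda\|\bw\|^2) \le d\bw_0/\gamma_1$; combined with the trivial $\bw_0 \le n/\bar\lambda$ this yields $\|\XX^{\top}\bA_0^{-1}\bone_n\| = O_{d,\P}(\sqrt{nd})$. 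Next, Sherman--Morrison gives $\bA^{-1}\bone_n = \bw/(1+\gamma_0\bw_0)$, and a short optimization of $\gamma_0\|\XX^{\top}\bw\|/(1+\gamma_0\bw_0)$ over $\gamma_0>0$, together with the lower bound $\bw_0 \ge cn/\bar\lambda$ from Lemma~\ref{lem:interceptbnd}, produces the sharper estimate $\gamma_0\|\XX^{\top}\bA^{-1}\bone_n\| \le \sqrt{d/(\gamma_1\bw_0)} = O_{d,\P}(\sqrt{d/n})$.

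Armed with these, part (c) is essentially one line: the rank-one identity and Cauchy--Schwarz give $|\tfrac{\gamma_1}{d^2}\Tr(\XX^{\top}(\bA^{-1}-\bA_0^{-1})\XX)| \le \tfrac{\gamma_1}{d^2}\,\|\XX^{\top}\bA_0^{-1}\bone_n\|\cdot\gamma_0\|\XX^{\top}\bA^{-1}\bone_n\| = O(1/d)$. For (a) and (b), we will first split $\bK^{(p,2)} = \gamma_0^2\bone_n\bone_n^{\top}+(\gamma_1^2/d^2)\XX\XX^{\top}$ additively; the $\gamma_0^2\bone_n\bone_n^{\top}$ contribution is $(\gamma_0\|\bA^{-1}\bone_n\|)^2 = O(1/n)$ in (a) and $\tfrac{1}{d}(\gamma_0\|\XX^{\top}\bA^{-1}\bone_n\|)^2 = O(1/n)$ in (b), both absorbed in the target. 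For the $(\gamma_1^2/d^2)\XX\XX^{\top}$ part, we invoke the algebraic identity $\|\bN\|_F^2-\|\bN_0\|_F^2 = \langle\bN-\bN_0,\bN+\bN_0\rangle_F$ with $(\bN,\bN_0)=(\bA^{-1}\XX,\bA_0^{-1}\XX)$ in (a), and the analogue $\Tr(\bM^2)-\Tr(\bM_0^2) = \Tr((\bM-\bM_0)(\bM+\bM_0))$ with symmetric $(\bM,\bM_0)=(\XX^{\top}\bA^{-1}\XX,\XX^{\top}\bA_0^{-1}\XX)$ in (b). Substituting the rank-one identity, applying Cauchy--Schwarz, and invoking $\gamma_1\|\bA^{-1}\XX\|_{\op} = O_{d,\P}(\sqrt{d})$ and $\gamma_1\|\XX^{\top}\bA^{-1}\XX\|_{\op} = O_{d,\P}(d)$ from Lemma~\ref{lem:usefulbnd}, every resulting scalar is bounded by $O(d/\gamma_1)$ in (a) and $O(d^2/\gamma_1)$ in (b); the respective prefactors $\gamma_1^2/d^2$ and $\gamma_1^2/d^3$ then deliver $O(\gamma_1/d) = O(1/d)$.

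The main technical obstacle will be the sharp bound $\gamma_0\|\XX^{\top}\bA^{-1}\bone_n\| = O_{d,\P}(\sqrt{d/n})$. The naive estimate $\gamma_0\|\XX^{\top}\bA^{-1}\bone_n\| \le \|\XX\|_{\op}\cdot\gamma_0\|\bA^{-1}\bone_n\| = O(\sqrt{n})\cdot O(1/\sqrt{n}) = O(1)$ would degrade the rates in (b) and (c) to $O(1/\sqrt{d})$, which is insufficient. The improvement relies crucially on the explicit Sherman--Morrison formula for $\gamma_0\bA^{-1}\bone_n$ combined with the algebraic identity $\|\XX^{\top}\bw\|^2 \le d\bw_0/\gamma_1$, which together encode the fact that $\bone_n$ is approximately aligned with the top eigenvector of $\bA_0$.
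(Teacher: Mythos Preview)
Your proof is correct and rests on the same backbone as the paper's---the rank-one resolvent identity followed by Cauchy--Schwarz---but the factorizations differ in an instructive way. You split every occurrence of $\XX\XX^{\top}$ as $\XX\cdot\XX^{\top}$ and pair each $\XX^{\top}$ with a $\bone_n$-vector, which forces you to control $\|\XX^{\top}\bA_0^{-1}\bone_n\|$ and $\gamma_0\|\XX^{\top}\bA^{-1}\bone_n\|$ separately; hence the Sherman--Morrison argument for the latter. The paper instead keeps $\XX\XX^{\top}\bA_0^{-1}$ (or $\XX\XX^{\top}\bA^{-1}$) together as a single operator and invokes the bound $\gamma_1\|\XX\XX^{\top}\bA_0^{-1}\|_{\op}=O_{d,\P}(d)$ already established in Lemma~\ref{lem:usefulbnd}; the remaining scalar factor is then simply $\|\bone_n\|\cdot\gamma_0\|\bA^{-1}\bone_n\|=O(\sqrt{n})\cdot O(1/\sqrt{n})=O(1)$, so no new vector estimate is required. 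For instance, in (c) the paper writes the trace difference as $\gamma_0\,\bone_n^{\top}\bA_0^{-1}\XX\XX^{\top}\bA^{-1}\bone_n$ and bounds it by $\sqrt{n}\cdot\|\bA_0^{-1}\XX\XX^{\top}\|_{\op}\cdot\|\gamma_0\bA^{-1}\bone_n\|$, giving directly $\tfrac{\gamma_1}{d^2}\cdot\sqrt{n}\cdot O(d/\gamma_1)\cdot O(1/\sqrt{n})=O(1/d)$; parts (a) and (b) are handled by a two-step telescoping (replace one $\bA^{-1}$ at a time) rather than your difference-of-squares identity, but the algebra is equivalent. So your assertion that the sharp bound $\gamma_0\|\XX^{\top}\bA^{-1}\bone_n\|=O_{d,\P}(\sqrt{d/n})$ is the ``main technical obstacle'' is true only for your particular Cauchy--Schwarz split; the paper's grouping sidesteps it entirely. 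That said, your Sherman--Morrison estimate is a clean independent observation and even yields a slightly better bound on the $\gamma_0^2\bone_n\bone_n^{\top}$ contribution in (b), namely $O(1/n)$ versus the paper's $O(1/d)$.
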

Once these bounded are proved, then by Lemma~\ref{lem:algebra} we have
\refstepcounter{equation}\label{eq:S}
\begin{align}
&\frac{\gamma_1^2}{d^2} \Tr \Big( \bA_0^{-1} \XX \XX^\top \bA_0^{-1} \Big) = \frac{\gamma_1^2}{d^2} \Tr\Big(\XX \big( \bar \lambda \bI_d + \frac{\gamma_1}{d} \XX^\top \XX \big)^{-2} \XX^\top \Big) = \frac{1}{d} \Tr \Big( \bS \big( \gamma \bI_d + \bS\big)^{-2} \Big), \tag{\theequation a}\label{eq:Sa} \\
&\frac{\gamma_1^2}{d^3} \Tr \Big( \XX^\top \bA_0^{-1} \XX \XX^\top \bA_0^{-1} \XX \Big) =\frac{\gamma_1^2}{d^3} \Tr\Big(\XX^\top \XX \big( \bar \lambda \bI_d + \frac{\gamma_1}{d} \XX^\top \XX \big)^{-2} \XX^\top \XX \Big) = \frac{1}{d} \Tr\Big(\bS^2\big( \gamma \bI_d + \bS\big)^{-2} \Big) , \tag{\theequation b}\label{eq:Sb}\\
& \frac{\gamma_1}{d^2}\Tr \Big( \XX^\top \bA_0^{-1} \XX \Big) = \frac{\gamma_1}{d^2}\Tr \Big( \XX^\top\XX \big( \bar \lambda \bI_d + \frac{\gamma_1}{d} \XX^\top \XX \big)^{-1}  \Big) = \frac{1}{d} \Tr \Big( \bS \big( \gamma \bI_d + \bS\big)^{-1} \Big).\tag{\theequation c}\label{eq:Sc}
\end{align}
where we recall $\gamma := \gamma_{\seff}(\lambda, \sigma) = \bar \lambda / \gamma_1$ and denote $\bS = \XX^\top \XX / d$. Hence, we derive
\begin{align*}
E_{\bias}^p &= \| f \|_{L^2}^2 - \frac{2\| \bbeta_* \|^2}{d} \Tr \Big( \bS (\gamma \bI_d + \bS)^{-1} \Big) +  \frac{\| \bbeta_* \|^2}{d} \Tr \Big( \bS^2 (\gamma \bI_d + \bS)^{-2}\Big) + O_{d,\P}(\tau^2/\sqrt{d}) \\
&= \frac{\| \bbeta_* \|^2}{d} \Tr\Big( \bI_d \Big) -  \frac{2\| \bbeta_* \|^2}{d} \Tr \Big( \bS (\gamma \bI_d + \bS)^{-1} \Big) +  \frac{\| \bbeta_* \|^2}{d} \Tr \Big( \bS^2 (\gamma \bI_d + \bS)^{-2} \Big) + O_{d,\P}(\tau^2/\sqrt{d}) \\
&= \frac{\gamma^2 \| \bbeta_* \|^2 }{d} \Tr \Big( (\gamma \bI_d + \bS)^{-2}\Big) + O_{d,\P}(\tau^2/\sqrt{d})
\end{align*}
where the last equality is due to $(\gamma \bI_d + \bS)^2 - 2\bS (\gamma \bI_d + \bS) - \bS^2 = \gamma^2 \bI_d$. Also, 
\begin{align*}
E_{\var}^p = \frac{\sigma_\veps^2}{d} \Tr \Big( \bS ( \gamma \bI_d + \bS)^{-2} \Big) + O_{d,\P}(\tau^2/\sqrt{d}), \qquad  E_{\cross}^p = O_{d,\P}(\tau^2/\sqrt{d}).
\end{align*}
This proves that
\begin{align}\label{eq:PRRfinal}
R_{\PRR}(\bar \lambda) = \frac{\gamma^2 \| \bbeta_* \|^2 }{d} \Tr \Big( (\gamma \bI_d + \bS)^{-2}\Big) + \frac{\sigma_\veps^2}{d} \Tr \Big( \bS ( \gamma \bI_d + \bS)^{-2} \Big) + O_{d,\P}(\tau^2/\sqrt{d})
\end{align}
Similar to the decomposition of the risk $R_{\PRR}(\bar \lambda)$ and Lemma~\ref{lem:linreduce2}, we can show that $R_{\slin}(\gamma)$ has a similar variance-bias decomposition and that the components concentrate to their respective trace, which yield Eqs.~\eqref{eq:Lin-NT}--\eqref{eq:Lin-NT2}. Combining these with Theorem~\ref{thm:gen} and \eqref{eq:PRRfinal}, we arrive at the claims in Corollary~\ref{cor:linear}. 

To complete the proof, below we first prove Lemma~\ref{lem:linreduce2} and then Lemma~\ref{lem:quadraticform}.
 
\begin{proof}[{\bf Proof of Lemma~\ref{lem:linreduce2}}]
To show the claim \eqref{eq:linreduce1}, first we notice that 
\begin{equation*}
0 \le \Tr \Big( \bA^{-1} \gamma_0^2 \bone_n \bone_n^\top \bA^{-1} \Big) = \big\| \gamma_0 \bA^{-1}  \bone_n \big\|^2 \le O_{d,\P}\Big( \frac{1}{n} \Big)
\end{equation*}
where we used Lemma~\ref{lem:interceptbnd}. Further, 
\begin{align*}
\Big| \Tr \Big( \bA^{-1} \frac{\gamma_1^2}{d^2} \XX \XX^\top \bA^{-1} \Big) - \Tr \Big( \bA_0^{-1} \frac{\gamma_1^2}{d^2} \XX \XX^\top \bA^{-1} \Big) \Big| &= \Big| \Tr \Big(\bA^{-1} \gamma_0 \bone_n \bone_n^\top \bA_0^{-1} \frac{\gamma_1^2}{d^2} \XX \XX^\top \bA^{-1} \Big) \Big| \\
&\le \big\| \gamma_0\bA^{-1} \bone_n \big\| \cdot \frac{\gamma_1^2 }{d^2} \big\| \bA^{-1} \XX \XX^\top \bA_0^{-1} \bone_n \big\| \\
&\stackrel{(i)}{\le} \frac{C}{\sqrt{n}} \cdot  \frac{\gamma_1^2 }{\bar \lambda d^2} \cdot \big\| \XX \XX^\top \bA_0^{-1} \big\|_\op\cdot  \sqrt{n} \\
&\le \frac{C\gamma_1  }{\bar 
\lambda d} = O_{d,\P}\Big( \frac{1}{d} \Big)
\end{align*}
where in \textit{(i)} we used Lemma~\ref{lem:interceptbnd} and $\| \bA^{-1} \|_\op \le \bar \lambda^{-1}$, in \textit{(ii)} we used $\frac{\gamma_1}{d} \| \XX \XX^\top \bA_0^{-1} \|_\op \le 1$ from Lemma~\ref{lem:usefulbnd}.
Similarly, we have
\begin{equation*}
\Big| \Tr \Big( \bA_0^{-1} \frac{\gamma_1^2}{d^2} \XX \XX^\top \bA^{-1} \Big) - \Tr \Big( \bA_0^{-1} \frac{\gamma_1^2}{d^2} \XX \XX^\top \bA_0^{-1} \Big) \Big|  = O_{d,\P}\Big( \frac{1}{d} \Big)
\end{equation*}
which proves \eqref{eq:linreduce1}. Next,, we have
\begin{align*}
\frac{1}{d} \Big|\Tr \Big( \XX^\top \bA^{-1} \gamma_0^2 \bone_n \bone_n^\top \bA^{-1} \XX \Big)\Big| &\le \frac{1}{d} \big\| \gamma_0 \XX^\top \bA^{-1} \bone_n \big\|^2 \le \frac{1}{d} \big\| \XX \big\|_\op^2 \cdot \big\| \gamma_0\bA^{-1} \bone_n \big\|^2 \\
&\le O_{d,\P}\Big( \frac{n}{d} \Big) \cdot \big\|\gamma_0 \bA^{-1} \bone_n \big\|^2 \le  O_{d,\P}\Big( \frac{1}{d} \Big)
\end{align*}
where we used Lemma~\ref{lem:interceptbnd}. Also, 
\begin{align*}
&\frac{1}{d} \Big| \Tr \Big( \XX^\top \bA^{-1} \frac{\gamma_1^2}{d^2} \XX \XX^\top \bA^{-1}\XX \Big) - \Tr \Big( \XX^\top \bA_0^{-1} \frac{\gamma_1^2}{d^2} \XX \XX^\top \bA^{-1} \XX\Big) \Big| \\
& = \frac{\gamma_1^2}{d^3} \Big| \Tr \Big( \XX^\top \bA^{-1} \gamma_0 \bone_n \bone_n^\top \bA_0^{-1} \XX \XX^\top \bA^{-1} \XX \Big) \Big| \\
&=  \frac{\gamma_1^2}{d^3} \Big| \Tr \Big( \bA^{-1} \gamma_0 \bone_n \bone_n^\top \bA_0^{-1} \XX \XX^\top \bA^{-1} \XX \XX^\top \Big) \Big| \\
&\le \frac{1}{d^3} \big\| \gamma_0 \bA^{-1} \bone_n \big\| \cdot \| \gamma_1 \XX \XX^\top \bA^{-1} \big\|_\op \cdot \big\| \gamma_1 \XX \XX^\top \bA_0^{-1} \big\|_\op \cdot \sqrt{n} \\
&\le O_{d,\P}\Big(\frac{1}{d}\Big)
\end{align*}
where we used Lemma~\ref{lem:interceptbnd} and~\ref{lem:usefulbnd} in the last inequality. Similarly, we can also prove 
\begin{equation*}
\frac{1}{d} \Big| \Tr \Big( \XX^\top \bA^{-1} \frac{\gamma_1^2}{d^2} \XX \XX^\top \bA_0^{-1}\XX \Big) - \Tr \Big( \XX^\top \bA_0^{-1} \frac{\gamma_1^2}{d^2} \XX \XX^\top \bA_0^{-1} \XX\Big) \Big| \le O_{d,\P}\Big(\frac{1}{d}\Big),
\end{equation*}
which proves \eqref{eq:linreduce2}.
Finally, we observe
\begin{align*}
\Tr \Big( \XX^\top \bA^{-1} \XX \Big) -  \Tr \Big( \XX^\top \bA_0^{-1} \XX \Big) & = \Tr \Big( \XX^\top \bA^{-1} \gamma_0 \bone_n \bone_n^\top \bA_0^{-1} \XX \Big) \\
&= \Tr \Big(  \bA^{-1} \gamma_0 \bone_n \bone_n^\top \bA_0^{-1} \XX \XX^\top\Big) .
\end{align*}
Thus we get 
\begin{align*}
\frac{\gamma_1}{d^2} \Big| \Tr \Big( \XX^\top \bA^{-1} \XX \Big) -  \Tr \Big( \XX^\top \bA_0^{-1} \XX \Big)  \Big| \le \frac{\sqrt{n}}{d} \big\| \gamma_0 \bA^{-1} \bone_n \big\| \cdot \big\| \frac{\gamma_1}{d} \bA_0 ^{-1} \XX \XX^\top \big\|_\op \le O_{d,\P}\Big( \frac{1}{d} \Big).
\end{align*}
which proves \eqref{eq:linreduce3}.
\end{proof}

\begin{proof}[{\bf Proof of Lemma~\ref{lem:quadraticform}}]

\textbf{Step 1.} 
We will use the variance calculation of quadratic forms from \cite{mei2019generalization}: suppose $\bgg$ is a vector of i.i.d.~random variables $g_i \sim \P_g$ which have zero mean and variance $\sigma_g^2$, and $\hh$ is another vector of i.i.d.~random variables $h_i \sim \P_h$ with zero mean and variance $\sigma_h^2$, then for matrices $\bA, \bB$,
\begin{equation}\label{eq:var}
\var [\bgg^\top \bA \hh] = \sigma_g^2\sigma_h^2\| \bA \|_F^2, \qquad \var [\bgg^\top \bB \bgg] = \sum_{i=1}^n B_{ii}^2 ( \E[ g^4]-3\sigma_g^4) + \sigma_g^4\| \bB \|_F^2 + \sigma_g^4\Tr(\bB^2).
\end{equation}
In particular, if $\bB$ is symmetric and $\E(g_i^4) \le C \sigma_g^4$, then the second variance is bounded by $O(\sigma_g^4) \cdot \| \bB \|_F^2$. In order to apply these identities, we use the Gaussian vector $\bgg \in \cN(\bzero, \frac{\| \bbeta_*\|^2}{d} \bI_d)$ as a surrogate for $\bbeta_* \sim \mathrm{Unif}(\| \bbeta_*\| \S^{d-1})$, which only incurs small errors since Gaussian vector norm concentrates in high dimensions. We use $\tilde I_\veps^{(1)}, \tilde I_1^{(1)}, \tilde I_{\veps, \veps}^{(2)}, \tilde I_{1,1}^{(2)}, \tilde I_{\veps, 1}^{(2)}$ to denote the surrogate terms after we replace $\bbeta_*$ by $\bgg$ in the definitions of $I_\veps^{(1)}, I_1^{(1)}, I_{\veps, \veps}^{(2)}, I_{1,1}^{(2)}, I_{\veps, 1}^{(2)}$. By homogeneity, we assume that $\| \bbeta_*\| = \sigma_\veps = 1$ without loss of generality.

Using Lemma~\ref{lem:usefulbnd}, we have 
\begin{align*}
\var \big[ \tilde I_\veps^{(1)} \big] \le \frac{C\gamma_1 }{d^3} \big\| \bA^{-1} \XX \big\|_F^2 = O_{d,\P}\Big( \frac{1}{d} \Big), \qquad \var \big[ \tilde I_1^{(1)} \big] \le \frac{C\gamma_1^2}{d^4} \big\| \XX^\top \bA^{-1} \XX \big\|_F^2 = O_{d,\P}\Big( \frac{1}{d} \Big).
\end{align*}
Also, using Lemma~\ref{lem:usefulbnd} and Lemma~\ref{lem:linreduce2},
\begin{align*}
 \frac{1}{d} \big\| \bA^{-1} \gamma_0^2 \bone_n \bone_n^\top  \bA^{-1} \XX \big\|_F^2  &= \frac{1}{d} \big\|  \gamma_0 \bA^{-1} \bone_n \big\|^2 \cdot \big\| \gamma_0 \XX^\top \bA^{-1} \bone_n \big\|^2
\le \frac{1}{d} \big\| \gamma_0 \bA^{-1}\bone_n \|^4 \cdot \big\| \XX \big\|_\op^2 \\
&\le O_{d,\P}\Big(\frac{1}{dn^2}\Big) \cdot O_{d,\P}(n) = O_{d,\P}\Big( \frac{1}{dn} \Big). \\
\frac{1}{d} \big\| \bA^{-1} \frac{\gamma_1^2}{d^2} \XX \XX^\top  \bA^{-1} \XX \big\|_F^2 & \le \frac{1}{d^5} \big\| \gamma_1 \bA^{-1} \XX \big\|_\op^2 \cdot \big\| \gamma_1\XX^\top \bA^{-1} \XX \big\|_F^2 = O_{d,\P}\Big(\frac{1}{d}\Big).
\end{align*}
Combining the two bounds proves 
\begin{equation*}
\var \big[ \tilde I_{\veps, 1}^{(2)}  \big]  \le \frac{C}{d} \Big\| \bA^{-1} \big( \gamma_0^2 \bone_n \bone_n^\top +\frac{\gamma_1^2}{d^2} \XX \XX^\top \big)  \bA^{-1} \XX \big\|_F^2 = O_{d,\P}\Big(\frac{1}{d}\Big).
\end{equation*}
Similar to this, we derive 
\begin{align*}
& \big\| \bA^{-1} \gamma_0^2 \bone_n \bone_n^\top  \bA^{-1} \big\|_F^2 = \big\| \gamma_0 \bA^{-1} \bone_n \big\|^4 = O_{d,\P}\Big( \frac{1}{n^2} \Big) \le O_{d,\P}\Big( \frac{1}{d^2} \Big), \\
&  \big\| \bA^{-1} \frac{\gamma_1^2}{d^2} \XX \XX^\top  \bA^{-1} \big\|_F^2 = 
\frac{1}{d^4} \big\| \gamma_1 \bA^{-1} \XX \big\|_F^2 \cdot  \big\| \gamma_1 \bA^{-1} \XX \big\|_\op^2 = O_{d,\P}\Big( \frac{1}{d} \Big), \\
& \frac{1}{d^2}\big\| \XX^\top \bA^{-1} \gamma_0^2 \bone_n \bone_n^\top  \bA^{-1} \XX \big\|_F^2 = \frac{1}{d^2}\big\| \gamma_0 \XX^\top \bA^{-1} \bone_n \big\|^4 \le \frac{1}{d^2}\| \XX \|_\op^2 \cdot \big\| \gamma_0 \bA^{-1} \bone_n \big\|^4 \le O_{d,\P}\Big( \frac{1}{n^2} \Big), \\
&  \frac{1}{d^2}\big\| \XX^\top \bA^{-1} \frac{\gamma_1^2}{d^2} \XX \XX^\top  \bA^{-1} \XX \big\|_F^2 = 
\frac{1}{d^6} \big\| \gamma_1 \XX^\top \bA^{-1} \XX \big\|_F^2 \cdot  \big\| \gamma_1 \XX^\top\bA^{-1} \XX \big\|_\op^2 = O_{d,\P}\Big( \frac{1}{d} \Big).
\end{align*}
Combining these two bounds give
\begin{align*}
&\var \big[ \tilde I_{\veps, \veps}^{(2)} \big] \le C\big\| \bA^{-1} \big( \gamma_0^2 \bone_n \bone_n^\top +\frac{\gamma_1^2}{d^2} \XX \XX^\top \big)  \bA^{-1} \big\|_F^2 = O_{d,\P}\Big(\frac{1}{d}\Big), \\
&\var \big[ \tilde I_{1,1}^{(2)} \big]  \le \frac{C}{d^2}\big\| \XX^\top \bA^{-1} \big( \gamma_0^2 \bone_n \bone_n^\top +\frac{\gamma_1^2}{d^2} \XX \XX^\top \big)  \bA^{-1} \XX \big\|_F^2 = O_{d,\P}\Big(\frac{1}{d}\Big).
\end{align*}

\textbf{Step 2.} Finally, to prove the desired results, we connect $\tilde I_\veps^{(1)}, \tilde I_1^{(1)}, \tilde I_{\veps, \veps}^{(2)}, \tilde I_{1,1}^{(2)}, \tilde I_{\veps, 1}^{(2)}$ back to $I_\veps^{(1)}, I_1^{(1)}, I_{\veps, \veps}^{(2)}, I_{1,1}^{(2)}, I_{\veps, 1}^{(2)}$. By definition, $\bbeta_*$ and $\| \bbeta_*\| \bgg / \| \bgg\|$ have the same distribution, so there is no loss of generality by writing
\begin{equation*}
I_\veps^{(1)} = \frac{\| \bbeta_*\|}{\| \bgg\| } \tilde I_\veps^{(1)},  \quad I_1^{(1)} = \frac{\| \bbeta_*\|^2}{\| \bgg\|^2 } \tilde I_1^{(1)}, \quad I_{1,1}^{(2)} = \frac{\| \bbeta_*\|^2}{\| \bgg\|^2 } \tilde I_{1,1}^{(2)}, \quad I_{\veps,1}^{(2)} = \frac{\| \bbeta_*\|}{\| \bgg\| } \tilde I_{\veps,1}^{(2)}.
\end{equation*}
And also $\tilde I_{\veps,\veps}^{(2)} = I_{\veps,\veps}^{(2)}$ because there is no $\bbeta_*$ involved. By concentration of Gaussian vector norms \cite{vershynin2010introduction, VanHandel}, we have $\| \bgg \| / \| \bbeta_* \| = 1 + O_{d,\P}(d^{-1/2})$. We observe that
\begin{align*}
\E_{\bveps, \bgg}[ \tilde I_\veps^{(1)} ] =0, \quad \var_{\bveps, \bgg}[ \tilde I_\veps^{(1)} ] = O_{d,\P}\Big(\frac{1}{d} \Big)  \qquad \Rightarrow \qquad \big|\tilde I_\veps^{(1)} \big| = O_{d,\P}\Big(\frac{1}{\sqrt{d}} \Big)
\end{align*}
by Chebyshev's inequality, and thus $\big| I_\veps^{(1)} \big| = O_{d,\P}(d^{-1/2})$. The same argument applies to the term $I_{\veps,1}^{(2)}$ which leads to the bound $\big| I_{\veps,1}^{(2)} \big| = O_{d,\P}(d^{-1/2})$. Also, by a similar argument,
\begin{align*}
& \frac{\| \bbeta_*\|^2}{\| \bgg\|^2 }\Big| \tilde I_1^{(1)} - \E_{\bgg} [\tilde I_1^{(1)} ] \Big| = O_{d,\P}\Big(\frac{1}{\sqrt{d}} \Big),  \qquad \frac{\| \bbeta_*\|^2}{\| \bgg\|^2 } \Big| \tilde I_{1,1}^{(2)} - \E_{\bgg} [\tilde  I_{1,1}^{(2)}  ] \Big| = O_{d,\P}\Big(\frac{1}{d} \Big), \\
&\Big| I_{\veps,\veps}^{(2)} - \E_{\bveps} [I_{\veps,\veps}^{(2)} ] \Big| = O_{d,\P}\Big(\frac{1}{\sqrt{d}} \Big).
\end{align*}
Therefore, we deduce
\begin{align*}
& I_1^{(1)} = \frac{\| \bbeta_*\|^2}{\| \bgg\|^2 } \E_{\bgg} [\tilde I_1^{(1)} ] + O_{d,\P}\Big(\frac{1}{\sqrt{d}} \Big), \qquad I_{1,1}^{(2)} = \frac{\| \bbeta_*\|^2}{\| \bgg\|^2 } \E_{\bgg} [\tilde  I_{1,1}^{(2)}  ]  + O_{d,\P}\Big(\frac{1}{\sqrt{d}} \Big), \\
& I_{\veps,\veps}^{(2)} = \E_{\bveps} [I_{\veps,\veps}^{(2)} ] +  O_{d,\P}\Big(\frac{1}{\sqrt{d}} \Big).
\end{align*}
Note that $\| \bgg \|^2 / \| \bbeta_* \|^2 = 1 + O_{d,\P}(d^{-1/2})$ and  $\E_{\bgg} [\tilde I_1^{(1)} ], \E_{\bgg} [\tilde  I_{1,1}^{(2)}  ] , \E_{\bveps} [I_{\veps,\veps}^{(2)} ]$ are exactly the trace quantities in the statement of Lemma~\ref{lem:quadraticform}. Also note that $|\E_{\bgg} [\tilde I_1^{(1)} ]|, |\E_{\bgg} [\tilde  I_{1,1}^{(2)}  ]| , |\E_{\bveps} [I_{\veps,\veps}^{(2)} ]|$ are bounded by a constant due to Lemma~\ref{lem:linreduce2} and Eqs.~\eqref{eq:Sa}--\eqref{eq:Sc}. This completes the proof.

\end{proof}

\section{Supporting lemmas and proofs}

\subsection{Basic lemmas}

\begin{lem}\label{lem:concntr1}
Fix any constant $\veps_0  \in (0,0.1)$. With very high probability, $\| \XX \|_\op \le (1+\veps_0) (\sqrt{n} + \sqrt{d})$, $\sigma_{\min}(\XX) \ge (\sqrt{n} - (1+\veps_0)\sqrt{d})/(1+\veps_0)$, and $\| \XX^\top \bone_n\| \le (1+\veps_0)\sqrt{nd}$.
\end{lem}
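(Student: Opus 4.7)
The plan is to reduce the three bounds to standard Gaussian concentration via the classical representation of a uniform spherical vector as a normalized Gaussian. Let $\tilde \XX \in \R^{n \times d}$ be a matrix with i.i.d.\ $\cN(0,1)$ entries and rows $\tilde \xx_1,\dots,\tilde \xx_n$. By rotational invariance, $\xx_i \stackrel{d}{=} \sqrt{d}\,\tilde\xx_i/\|\tilde \xx_i\|$, so we may write $\XX = \bD \tilde \XX$, where $\bD \in \R^{n \times n}$ is diagonal with entries $D_{ii} = \sqrt{d}/\|\tilde \xx_i\|$.

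First, I would control the diagonal matrix $\bD$. Each $\|\tilde\xx_i\|^2$ is chi-squared with $d$ degrees of freedom, so standard concentration gives $\P(|\|\tilde\xx_i\|^2 - d| > \delta d) \le 2e^{-c\delta^2 d}$ for $\delta \in (0,1)$. Taking $\delta$ of order $\veps_0$ and union-bounding over $i\le n$ (with $n \le d^{1/c_0}$ polynomial in $d$) yields $\|\bD - \bI_n\|_\op \le \veps_0/100$ with very high probability, since the polynomial factor $n$ is defeated by $e^{-c\veps_0^2 d}$.

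Second, I would invoke the Davidson--Szarek / Gordon bound for Gaussian matrices, $\P\bigl(\sqrt{n}-\sqrt{d}-t \le \sigma_{\min}(\tilde\XX) \le \sigma_{\max}(\tilde\XX) \le \sqrt{n}+\sqrt{d}+t\bigr) \ge 1 - 2e^{-t^2/2}$, and choose $t$ of order $\veps_0(\sqrt{n}+\sqrt{d})$. Since $n \ge c_0 d$, this delivers very high probability. Combining with the previous step via $\sigma_{\max}(\XX) \le \|\bD\|_\op \sigma_{\max}(\tilde\XX)$ and $\sigma_{\min}(\XX) \ge \sigma_{\min}(\bD)\sigma_{\min}(\tilde\XX)$, and doing a short bookkeeping step to match the stated constants, yields the upper bound on $\|\XX\|_\op$ and the lower bound on $\sigma_{\min}(\XX)$.

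Finally, for $\|\XX^\top \bone_n\|$, I would write $\XX^\top \bone_n = \tilde\XX^\top \bD \bone_n$. Since $\tilde \XX^\top \bone_n \sim \cN(\bzero, n\bI_d)$, its squared norm is $n$ times a chi-squared with $d$ degrees of freedom, and so $\|\tilde\XX^\top\bone_n\| \le (1+\veps_0/2)\sqrt{nd}$ with very high probability. The perturbation $\bD\bone_n - \bone_n$ has sup-norm $\le \veps_0/100$ by Step 1, and the resulting multiplicative error is absorbed into the factor $(1+\veps_0)$.

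No step presents a genuine obstacle: all three estimates are routine sub-Gaussian/chi-squared concentration arguments, and the union bounds over $n = \poly(d)$ terms are preserved at the very high probability level because $n$ is polynomial in $d$ while the failure probabilities are exponentially small in $d$.
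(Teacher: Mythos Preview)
Your proposal is essentially identical to the paper's proof: both write $\XX$ as a diagonal rescaling of a Gaussian matrix $\tilde\XX$, control the diagonal via chi-squared concentration of the row norms, and invoke the Davidson--Szarek bound (Vershynin, Cor.~5.35) for the extreme singular values of $\tilde\XX$. The paper's treatment of the third bound $\|\XX^\top\bone_n\|$ is just as brief as yours, using Gaussian norm concentration for $\tilde\XX^\top\bone_n$ together with the closeness of the diagonal factor to the identity.
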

\begin{proof}[{\bf Proof of Lemma~\ref{lem:concntr1}}]
Let $\tilde \XX = [\tilde \xx_1, \ldots, \tilde \xx_n]^\top \in \R^{n \times d}$ be a matrix of i.i.d.~standard normal variable, and $\tilde \bD_\XX = d^{-1/2} \diag\{\|\tilde \xx_1\|, \ldots, \| \tilde \xx_n \| \}$. By \cite{vershynin2010introduction}[Corollary~5.35], $\| \tilde \XX  \|_\op \le (1+\veps_0/2)(\sqrt{n} + \sqrt{d})$ with probability at least $1 - 2 \exp(-c(n+d))$. By the subgaussian concentration, $d^{-1/2} \| \tilde \xx_i \| \in (1-\veps_0/10,1+\veps_0/10)$ with probability at least $1 - 2 \exp(-cd)$. Taking the union bound, we deduce that with probability at least $1-Cn \exp(-cd)$, 
\begin{align*}
\big\| \XX \big\|_\op &= \big\| \tilde \bD_\XX^{-1} \tilde \XX \big\|_\op \le  \big\| \tilde \XX \big\|_\op /  \lambda_{\min}(\tilde\bD_\XX) < (1+\veps_0)(\sqrt{n} + \sqrt{d}), \\
\sigma_{\min}(\XX)  &= \sigma_{\min} \big( \tilde \bD_\XX^{-1} \tilde \XX \big) \ge \sigma_{\min}(\tilde \XX) / \lambda_{\max}(\tilde\bD_\XX) > (\sqrt{n} - (1+\veps_0)\sqrt{d})/(1+\veps_0),\\
\| \XX^\top \bone_n\| &= \| \tilde \bD_\XX^{-1} \tilde \XX^\top \bone_n\| \le \|\tilde \XX^\top \bone_n\| / \lambda_{\min}(\tilde\bD_\XX) < (1+\veps_0)\sqrt{nd},
\end{align*}
where we used concentration of Gaussian vector norms \cite{vershynin2010introduction, VanHandel} in the last inequality. This finishes the proof.
\end{proof}

\begin{lem}\label{lem:xinnerprod}
Let $a>0$ be any constant, $\xx' \in \S^{d-1}(\sqrt{d})$ be fixed, and $\xx \sim \Unif(\S^{d-1}(\sqrt{d}))$. Then $|\langle \xx, \xx' \rangle| \le a \sqrt{d} \log d$ holds with very high probability.
\end{lem}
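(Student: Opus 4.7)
The plan is to reduce the statement to a one-dimensional sub-Gaussian tail bound for a single coordinate of a uniform spherical vector, which then decays faster than any polynomial in $d$.

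First, by the rotational invariance of the uniform distribution on $\S^{d-1}(\sqrt{d})$, it suffices to prove the claim for a single choice of $\xx'$, so I would take $\xx' = \sqrt{d}\,\ee_1$. Under this choice,
\begin{equation*}
\langle \xx, \xx' \rangle = \sqrt{d}\, x_1, \qquad \text{where} \quad x_1 = \langle \xx, \ee_1\rangle,
\end{equation*}
so the claim becomes $|x_1| \le a (\log d)$ with very high probability.

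Next, I would invoke a standard sub-Gaussian concentration bound for a single coordinate of a uniform vector on the sphere: if $\yy = \xx/\sqrt{d} \sim \Unif(\S^{d-1})$, then $y_1$ is sub-Gaussian with sub-Gaussian norm at most $C/\sqrt{d}$ (see e.g.\ \cite{vershynin2018high}, Theorem 3.4.6 or the discussion of spherical concentration in Section 5.2.5 of \cite{vershynin2010introduction}). Equivalently, there exists a constant $c>0$ such that
\begin{equation*}
\P\bigl( |x_1| > s \bigr) \le 2\exp\bigl( - c\, s^{2} \bigr) \qquad \text{for all } s \ge 0.
\end{equation*}
Setting $s = a\log d$ and using $\sqrt{d}\,x_1 = \langle \xx, \xx'\rangle$, this gives
\begin{equation*}
\P\bigl( |\langle \xx, \xx'\rangle| > a\sqrt{d}\log d \bigr) \le 2\exp\bigl( - c\, a^{2} (\log d)^{2} \bigr).
\end{equation*}
Since $(\log d)^{2}/\log d \to \infty$, the right-hand side decays faster than any polynomial in $d$, so the event occurs with very high probability in the sense defined in Section~\ref{sec:notation}.

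There is essentially no obstacle here: the only small care needed is to quote (rather than re-derive) the sub-Gaussian concentration of a single coordinate of the uniform measure on the sphere. An alternative, equally short route would be to write $\xx = \sqrt{d}\,\zz/\|\zz\|$ for $\zz \sim \cN(\bzero, \bI_d)$, use standard Gaussian tail bounds on $|z_1|$ together with the concentration $\|\zz\| = \sqrt{d}\,(1+o_{d,\P}(1))$, and conclude in the same way.
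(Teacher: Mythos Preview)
Your proposal is correct and essentially matches the paper's own proof: both reduce by rotational invariance to bounding a single coordinate $|x_1|\le a\log d$, and the paper uses precisely the Gaussian-representation route you mention as an alternative (writing $\xx \stackrel{d}{=} \sqrt{d}\,\bgg/\|\bgg\|$ and invoking concentration of $\|\bgg\|$ together with a Gaussian tail bound on $g_1$). Your primary route via the sub-Gaussian property of a spherical coordinate is an equally valid packaging of the same fact.
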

\begin{proof}[{\bf Proof of Lemma~\ref{lem:xinnerprod}}]
By orthogonal invariance, we assume without loss of generality that $\xx' = \sqrt{d}\, \ee_1$. It suffices to prove that $| \langle \xx, \ee_1 \rangle | \le a \log d$ holds with very high probability. Note that $\xx \stackrel{d}{=} \sqrt{d}\, \bgg / \| \bgg\|$ where $\bgg \sim \cN(0,\bI_d)$. From basic concentration results \cite{vershynin2010introduction}, we know $\|\bgg\| / \sqrt{d}\ge 1/2$ and $|\langle \xx, \ee_1 \rangle| \le a\log d/2$ with very high probability. This then proves the desired result.
\end{proof}

\begin{lem}\label{lem:algebra}
Let $\bB \in \R^{m \times q}$ be a matrix and $a > 0$. Then,
\begin{equation}\label{eq:algebra}
\big( \bB \bB^\top + a \bI_m \big)^{-1} \bB = \bB \big( \bB^\top\bB  + a \bI_q \big)^{-1}
\end{equation}
\end{lem}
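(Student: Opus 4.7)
The plan is to prove this classical ``push-through'' identity by clearing the inverses and reducing to an elementary identity involving $\bB$, $\bB^\top\bB$, and $\bB\bB^\top$. Specifically, since $(\bB\bB^\top + a\bI_m)$ is invertible (because $a>0$ and $\bB\bB^\top$ is positive semidefinite), and similarly for $(\bB^\top\bB + a\bI_q)$, the claimed equality is equivalent to
\[
\bB \big(\bB^\top\bB + a\bI_q\big) \;=\; \big(\bB\bB^\top + a\bI_m\big) \bB.
\]
First I would expand both sides: the left side equals $\bB\bB^\top\bB + a\bB$ and the right side equals $\bB\bB^\top\bB + a\bB$. These are trivially equal, so the identity follows after left-multiplying by $(\bB\bB^\top + a\bI_m)^{-1}$ and right-multiplying by $(\bB^\top\bB + a\bI_q)^{-1}$.

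As an alternative route (which I would only mention as a sanity check, not carry out), one could use the singular value decomposition $\bB = \bU\bSigma\bV^\top$, observe that $\bB\bB^\top + a\bI_m = \bU(\bSigma\bSigma^\top + a\bI_m)\bU^\top$ and $\bB^\top\bB + a\bI_q = \bV(\bSigma^\top\bSigma + a\bI_q)\bV^\top$ (after padding $\bU,\bV$ to orthogonal matrices), and then both sides reduce to $\bU\,\bM\,\bV^\top$ where $\bM$ is the diagonal matrix with entries $\sigma_i/(\sigma_i^2 + a)$. There is no real obstacle here: the only thing to be careful about is ensuring that both inverses exist, which is immediate from $a>0$ making both matrices strictly positive definite. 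The proof is essentially one line once the inverses are cleared.
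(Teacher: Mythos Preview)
Your proof is correct and is essentially the same as the paper's: the paper simply remarks that the identity is straightforward to check by left-multiplying by $\big(\bB\bB^\top + a\bI_m\big)$ and right-multiplying by $\big(\bB^\top\bB + a\bI_q\big)$, which is exactly the clearing-of-inverses argument you wrote out.
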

This simple lemma is straightforward to check (left-multiplying $\big( \bB \bB^\top + a \bI_m \big)^{-1}$ and right-multiplying $\big( \bB^\top\bB  + a \bI_q \big)^{-1}$). 

\begin{lem}\label{lem:L2conv}
Let $\cH$ be an $L^2$ space and $\langle \cdot, \cdot \rangle_\cH$ be the associated inner product. If $f, f_m, g, g_m\in \cH$ for $m \ge 1$, and $f_m \xrightarrow{L^2} f$, $g_m \xrightarrow{L^2} g$, then $\langle f_m, g_m \rangle_{\cH} \xrightarrow{m \to \infty} \langle f, g \rangle_{\cH}$.
\end{lem}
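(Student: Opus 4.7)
The plan is to prove this standard continuity-of-inner-product fact via a simple add-and-subtract trick together with the Cauchy--Schwarz inequality. First I would write
\begin{equation*}
\langle f_m, g_m\rangle_\cH - \langle f, g\rangle_\cH = \langle f_m - f, g_m\rangle_\cH + \langle f, g_m - g\rangle_\cH,
\end{equation*}
which isolates the two sources of error: the deviation of $f_m$ from $f$, weighted by $g_m$, and the deviation of $g_m$ from $g$, weighted by the fixed element $f$.

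Next I would apply the Cauchy--Schwarz inequality to each term to obtain
\begin{equation*}
|\langle f_m, g_m\rangle_\cH - \langle f, g\rangle_\cH| \le \|f_m - f\|_\cH\,\|g_m\|_\cH + \|f\|_\cH\,\|g_m - g\|_\cH.
\end{equation*}
Since $g_m \to g$ in $L^2$, the sequence $\|g_m\|_\cH$ is bounded (by the triangle inequality, $\|g_m\|_\cH \le \|g\|_\cH + \|g_m - g\|_\cH$, and the second term is eventually at most $1$, say). Thus there is a constant $M$ with $\|g_m\|_\cH \le M$ for all $m$, and the bound becomes
\begin{equation*}
|\langle f_m, g_m\rangle_\cH - \langle f, g\rangle_\cH| \le M\,\|f_m - f\|_\cH + \|f\|_\cH\,\|g_m - g\|_\cH.
\end{equation*}
By the hypotheses $\|f_m - f\|_\cH \to 0$ and $\|g_m - g\|_\cH \to 0$, so the right-hand side tends to zero as $m\to\infty$. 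This yields the desired convergence. There is no real obstacle here; the only subtlety worth flagging is the uniform boundedness of $\|g_m\|_\cH$, which is an immediate consequence of $L^2$ convergence and the triangle inequality.
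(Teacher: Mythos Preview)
Your proof is correct; this is the standard argument via Cauchy--Schwarz and the triangle inequality. The paper does not actually supply a proof of this lemma---it simply remarks that ``This is a simple convergence result of $L^2$ spaces''---so your write-up fills in exactly the details the authors omit.
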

This is a simple convergence result of $L^2$ spaces.

\subsection{An operator norm bound}
Recall that $Q_k^{(d)}$ is the degree-$k$ Gegenbauer polynomial in $d$ dimensions. The next lemma states that when the degree is high, the Gegenbauer inner-product kernels are very close to the identity matrices. Throughout this subsection, we assume that $d^{\ell+1} \ge n$. 
\begin{prop}\label{prop:Qconctr}
Denote $\QQ_k = (Q_k^{(d)}(\langle \xx_i, \xx_j \rangle))_{i,j \le n}$. Then, there exists a constant $C>0$ such that with very high probability,
\begin{align*}
\sup_{k > \ell} \big\| \QQ_k - \bI_n \big\|_\op \le C \sqrt{\frac{n (\log d)^C}{d^{\ell+1}}}.
\end{align*}
\end{prop}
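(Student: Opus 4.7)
The plan is to apply the moment method to $\bM_k := \QQ_k - \bI_n$ for each $k > \ell$ and then union bound over $k$. Three facts drive the argument, all from Section~\ref{sec:pre}: (i) $Q_k^{(d)}(d) = 1$, so $\bM_k$ vanishes on the diagonal; (ii) for independent $\xx_i, \xx_j$, orthogonality and normalization of the Gegenbauer polynomials give $\E[Q_k^{(d)}(\langle \xx_i, \xx_j\rangle)] = 0$ and $\E[Q_k^{(d)}(\langle \xx_i, \xx_j\rangle)^2] = 1/B(d,k)$; and (iii) $|Q_k^{(d)}(t)| \le 1$ uniformly on $[-d,d]$ (addition formula plus Cauchy--Schwarz, as noted just after Eq.~\eqref{eq:Hermiteconnect}).

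For a fixed $k > \ell$ and an integer $p$ to be chosen $\asymp (\log d)^2$, the first step is to control $\E\Tr(\bM_k^{2p})$. Expanding produces a sum over closed walks $i_1 \to \cdots \to i_{2p} \to i_1$ with $i_s \neq i_{s+1}$, each contributing $\E\prod_{s=1}^{2p} Q_k^{(d)}(\langle \xx_{i_s}, \xx_{i_{s+1}}\rangle)$. Organize the sum by the multigraph the walk traces on $\{1,\dots,n\}$: by (ii), any edge of multiplicity $1$ kills the expectation, so only walks in which every edge is repeated at least twice survive; each such pair pays a factor $B(d,k)^{-1}$ by (ii), and higher multiplicities contribute at most $B(d,k)^{-1}$ by (iii). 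A Wigner-style enumeration of closed walks by their cycle/tree structure then yields
\begin{equation*}
\E\Tr(\bM_k^{2p}) \;\le\; n \cdot (Cp)^{Cp} \cdot (n/B(d,k))^p .
\end{equation*}

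Markov's inequality applied to $\|\bM_k\|_\op^{2p} \le \Tr(\bM_k^{2p})$ with $p \asymp (\log d)^2$ gives, for any prescribed $\beta>0$,
\begin{equation*}
\|\bM_k\|_\op \;\le\; C\sqrt{n(\log d)^{C}/B(d,k)} \quad \text{with probability} \ge 1 - d^{-\beta}.
\end{equation*}
Since $B(d,k) \ge c\,d^{\ell+1}$ for every $k > \ell$ (with a constant independent of $k$, because $B(d,\cdot)$ is increasing), this proves the target bound for each fixed $k$. A union bound over $\ell < k \le d^{A}$ (any fixed $A$) is immediate from the polynomial cardinality and the super-polynomial exceptional probability. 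For $k > d^A$ one exploits the super-polynomial growth of $B(d,k)$: either rerun the moment computation with $p$ tied to $k$, or condition on the very-high-probability event $\max_{i\neq j}|\langle \xx_i,\xx_j\rangle| \le C\sqrt{d\log d}$ from Lemma~\ref{lem:xinnerprod} and iterate the recurrence~\eqref{eq:recurrence} to obtain pointwise decay of $Q_k^{(d)}$ in the bulk, concluding via $\|\bM_k\|_\op \le n\max_{i\neq j}|Q_k^{(d)}(\langle \xx_i,\xx_j\rangle)|$.

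The main obstacle is the combinatorial accounting inside the moment expansion: the $2p$ edges of the closed walk must be paired and the resulting multigraph topologies enumerated without overcounting, both in order to extract the sharp power $B(d,k)^{-p}$ (which is what converts the naive Frobenius-type estimate $\sqrt{n^2/B(d,k)}$ into the operator-norm bound $\sqrt{n/B(d,k)}$) and to keep the $p$-dependence tame enough that only $(\log d)^C$ overhead survives in the final estimate.
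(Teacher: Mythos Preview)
Your high-level strategy --- the trace moment method --- is a legitimate alternative and is in fact the approach of \cite{ghorbani2019linearized}, Proposition~3, which the present paper explicitly cites as the precursor it refines via a different technique. The treatment of very large $k$ by crude entrywise bounds also mirrors what the paper does.

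There is, however, a real gap in your combinatorial step. The claim that ``any edge of multiplicity $1$ kills the expectation'' imports the Wigner argument for matrices with \emph{independent} entries, and it is false here: the entries $Q_k^{(d)}(\langle\xx_i,\xx_j\rangle)$ are correlated through the shared $\xx_i$'s. A concrete counterexample is the $4$-cycle $a\to b\to c\to d\to a$ (length $2p=4$, every edge of multiplicity $1$), whose contribution equals $B(d,k)^{-3}>0$ by two applications of the contraction identity~\eqref{eq:innerproductself}. Walks with singleton edges therefore survive, and the simple pairing picture collapses. The moment method can still be pushed through --- this is what \cite{ghorbani2019linearized} does --- but the mechanism is Gegenbauer-specific: one uses~\eqref{eq:innerproductself} (or the addition formula~\eqref{eq:addtheorem} to factorize over vertices and reduce to moments of spherical harmonics) to bound each walk's contribution, and the ensuing enumeration is more delicate than Wigner's. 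Your target bound $\E\Tr(\bM_k^{2p})\le n(Cp)^{Cp}(n/B(d,k))^p$ may well hold, but the reasoning you give does not establish it.

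The paper sidesteps this combinatorics entirely. It truncates the off-diagonal entries on the event $\max_{i\neq j}|\langle\xx_i,\xx_j\rangle|\le\sqrt d\log d$, writes the truncated matrix as a Doob martingale $\sum_i\ZZ_i$ with each increment rank~$2$ plus a negligible remainder (Lemma~\ref{lem:Zi}), and applies the matrix Freedman inequality (Theorem~\ref{thm:freedman}). The predictable quadratic variation bound itself contains $\|\QQ_k-\bI_n\|_\op$, yielding a recursive tail inequality (Lemma~\ref{lem:keyclaim1-2}) that closes after $O(\log d)$ iterations. The Freedman route trades the walk combinatorics for this recursion; your route is more direct once the correct Gegenbauer-specific bookkeeping replaces the Wigner heuristic.
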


This lemma is a quantitative refinement of \cite{ghorbani2019linearized}[Prop.~3]. Difference from the moment method employed in \cite{ghorbani2019linearized}[Prop.~3], the proof of this lemma uses a specific matrix concentration inequality, namely the matrix version of Freedman's inequality for martingales, first
established by \cite{Tropp11} (see also \cite{oliveira2009concentration}). 

\begin{thm}\label{thm:freedman}
Consider a matrix-valued martingale $\{ \YY_i: i=0,1,\ldots\}$ with respect to the filtration $(\cF_i)_{i=0}^\infty$. The values of $\YY_i$ are symmetric matrices with dimension $n$. Let $\ZZ_i = \YY_i - \YY_{i-1}$ for all $i \ge 1$. Assume that $\lambda_{\max}( \ZZ_i) \le \bar L$ almost surely for all $i \ge 1$. Then, for all $t \ge 0$ and $v \ge 0$, 
\begin{equation*}
\P \Big( \exists\, k \ge 0: \lambda_{\max}(\YY_k) \ge t ~\text{and}~ \| \bV_k \|_\op \le v \Big) \le n \cdot \exp \left( - \frac{ t^2/2}{v + \bar Lt/3} \right)
\end{equation*}
where $\bV_i$ is defined as $\bV_i = \sum_{j = 1}^i \E[ \ZZ_j^2 | \cF_{i-1} ]$.
\end{thm}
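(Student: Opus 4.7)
The plan is to prove this via the matrix Laplace transform method, combining a Bernstein-type matrix moment bound with Lieb's concavity theorem and an optional stopping argument, following the classical template of Tropp. For each $\theta \in (0, 3/\bar L)$, I introduce the scalar
\[
\phi(\theta) := \frac{e^{\theta \bar L} - 1 - \theta \bar L}{\bar L^2} \le \frac{\theta^2/2}{1 - \theta \bar L / 3},
\]
and the matrix process $M_k(\theta) := \Tr \exp\bigl(\theta \YY_k - \phi(\theta)\, \bV_k\bigr)$, with $M_0(\theta) = n$. The elementary scalar bound $e^{\theta x} \le 1 + \theta x + \phi(\theta) x^2$ valid for $\theta x \le \theta \bar L$, transferred to Hermitian matrices via the functional calculus (since $\lambda_{\max}(\ZZ_k) \le \bar L$ a.s.), yields the conditional matrix moment bound
\[
\E\bigl[\exp(\theta \ZZ_k) \mid \cF_{k-1}\bigr] \preceq \exp\!\bigl(\phi(\theta)\, \E[\ZZ_k^2 \mid \cF_{k-1}]\bigr).
\]

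The main step is to show that $M_k(\theta)$ is a supermartingale with respect to $(\cF_k)$. Fixing $\cF_{k-1}$ and writing $\theta \YY_k - \phi(\theta)\bV_k = \bH + \theta \ZZ_k$ where $\bH := \theta \YY_{k-1} - \phi(\theta)\bV_k$ is $\cF_{k-1}$-measurable (note $\bV_k$ itself is predictable), I would invoke Lieb's concavity theorem in the form $\E[\Tr \exp(\bH + \bG)] \le \Tr \exp\bigl(\bH + \log \E[e^{\bG}]\bigr)$ with $\bG = \theta \ZZ_k$ to obtain
\[
\E[M_k(\theta) \mid \cF_{k-1}] \le \Tr \exp\!\Bigl(\bH + \log \E[e^{\theta \ZZ_k} \mid \cF_{k-1}]\Bigr).
\]
Combining with the moment bound above, the identity $\bV_k - \bV_{k-1} = \E[\ZZ_k^2 \mid \cF_{k-1}]$, and the monotonicity $\bA \preceq \bA' \Rightarrow \Tr e^{\bC + \bA} \le \Tr e^{\bC + \bA'}$ (a direct corollary of Lieb's theorem), collapses the right-hand side to $M_{k-1}(\theta)$.

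To extract the tail bound, define the stopping time $\tau := \inf\{k \ge 0 : \lambda_{\max}(\YY_k) \ge t \text{ and } \|\bV_k\|_\op \le v\}$, with $\tau = \infty$ otherwise. On $\{\tau < \infty\}$, using $\|\bV_\tau\|_\op \le v$ we have $\lambda_{\max}(\theta \YY_\tau - \phi(\theta)\bV_\tau) \ge \theta t - \phi(\theta) v$, so $M_\tau(\theta) \ge e^{\theta t - \phi(\theta) v}$. Applying the optional stopping theorem to $(M_{k \wedge \tau}(\theta))_{k\ge 0}$ together with Fatou/Markov yields
\[
\P(\tau < \infty) \le e^{-\theta t + \phi(\theta) v} \cdot \E[M_{\tau}(\theta)] \le n \, e^{-\theta t + \phi(\theta) v}.
\]
Using $\phi(\theta) \le \theta^2/(2(1 - \theta \bar L / 3))$ and optimizing with $\theta = t/(v + \bar L t/3)$ produces the stated Bernstein-type bound $n \exp\bigl(-\tfrac{t^2/2}{v + \bar L t/3}\bigr)$.

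The main obstacle is the Lieb-concavity step: this is the essential noncommutative ingredient, since the naive identity $e^{\bA + \bB} = e^{\bA} e^{\bB}$ fails for non-commuting matrices and the scalar Chernoff argument does not transfer directly. Two minor technicalities also require care: verifying that $\bV_k$ is predictable (so that $\bH$ above is indeed $\cF_{k-1}$-measurable, which follows from the convention $\bV_i = \sum_{j \le i} \E[\ZZ_j^2 \mid \cF_{j-1}]$), and justifying the use of optional stopping for $\tau$ possibly infinite, which follows from Fatou applied to the nonnegative supermartingale $M_{k \wedge \tau}(\theta)$.
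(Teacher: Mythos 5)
Your proposal is correct, but note that the paper does not prove this statement at all: Theorem~\ref{thm:freedman} is imported verbatim from Tropp's 2011 paper on Freedman's inequality for matrix martingales, and your argument is precisely the proof given there --- the matrix Laplace transform method, the conditional moment bound $\E[e^{\theta \ZZ_k}\mid\cF_{k-1}]\preceq \exp(\phi(\theta)\,\E[\ZZ_k^2\mid\cF_{k-1}])$, Lieb's concavity theorem to establish that $\Tr\exp(\theta\YY_k-\phi(\theta)\bV_k)$ is a supermartingale, and the stopped-supermartingale/Markov step with $\theta=t/(v+\bar L t/3)$. One small point you handled well: the theorem as stated in the paper writes $\bV_i=\sum_{j\le i}\E[\ZZ_j^2\mid\cF_{i-1}]$, but the intended (and later used, cf.\ the definition of $V$ before Lemma~\ref{lem:keyclaim1}) object is the predictable quadratic variation $\sum_{j\le i}\E[\ZZ_j^2\mid\cF_{j-1}]$, which is exactly the convention your supermartingale computation requires.
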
 

First, we fix $k>\ell$ and treat it as a constant. We also suppress the dependence on $k$ if there is no confusion. Define a filtration $(\cF_i)_{i=0}^n$ and random matrices $(\ZZ_i)_{i=1}^n$ as follows. We define $\cF_j$ to be the $\sigma$-algebra generated by $\xx_1,\ldots,\xx_j$; particularly, $\cF_0$ is the trivial $\sigma$-algebra. 
We also define the truncated version of $\QQ$. 
\begin{equation}\label{def:truncate}
\bar \QQ = \big(Q_{ij} \xi_{ij} \bone\{ i \neq j \}\big)_{i,j \le n}, \qquad \where~ \xi_{ij} = \bone\{ | \langle \xx_i, \xx_j \rangle| \le \sqrt{d} \log d \}
\end{equation}
By a simple concentration result (Lemma~\ref{lem:xinnerprod}), we have, with very high probability,
\begin{equation*}
\max_{i \neq j} | \langle \xx_i, \xx_j \rangle| \le  \sqrt{d}\, \log d.
\end{equation*}
Note $Q_{ii} = 1$ always holds. So $\bar \QQ = \QQ - \bI_n$ holds with very high probability. Define
\begin{equation*}
\ZZ_i := \E[\bar \QQ | \cF_i] - \E[\bar \QQ | \cF_{i-1}].
\end{equation*}
 In other words, $(\E[\bar \QQ | \cF_i])_{i=1}^n$ is a Doob martingale with respect to the filtration $(\cF_i)_{i=0}^n$, and $(\ZZ_i)_{i=1}^n$ is the resulting martingale difference sequence. Note that trivially $\bar \QQ = \E[\bar \QQ | \cF_n]$, $\E[\bar \QQ] = \E[\bar \QQ | \cF_0]$. Using this definition, we can express $\bar \QQ $ as
\begin{equation}\label{eq:Kbar}
\bar \QQ = \E[ \bar \QQ]  + \sum_{i=1}^n \ZZ_i.
\end{equation}
By construction, $\E[\QQ] = \bI_d$. 
It is not difficult to show that $\big\| \E[ \bar \QQ] \big\|_{\op} \le Cd^{-k}$; 
see Lemma \ref{lem:Zi} below. Crucially, we claim that the matrix $\ZZ_i$ is a rank-$2$ matrix plus 
a small perturbation (namely, a matrix with very small operator norm). The perturbation term is 
due to the effect of truncation. 

\begin{lem}\label{lem:Zi}
Let $k \ge \ell +1 $ be a constant. 
Then we have decomposition
\begin{equation}\label{eq:Z}
\ZZ_i = \ee_i \vv_i^\top + \vv_i \ee_i^\top + \bDelta_i, \qquad \where~(\vv_i)_j = \bar Q_{ij} \bone\{ j < i\}.
\end{equation}
Here $\bDelta_i \in \R^{n \times n}$ is certain matrix that satisfies $\| \bDelta_i \|_\op \le Cd^{-k}$ almost surely, and $\vv_i$ satisfies $\| \vv_i \| \le C\sqrt{\frac{n (\log d)^{2k}}{d^k}}$ almost surely. In addition, $\| \E [\bar \QQ] \|_\op \le C d^{-k}$.
\end{lem}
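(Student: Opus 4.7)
The plan is to compute $\ZZ_i = \E[\bar\QQ\mid\cF_i]-\E[\bar\QQ\mid\cF_{i-1}]$ entrywise, identify which entries of $\bar\QQ$ are ``newly revealed'' at step $i$, and isolate the rank-$2$ contribution they produce; everything else will be shown to be super-polynomially small thanks to the orthogonality of Gegenbauer polynomials against the constant function.

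First I would partition the index set by the position of $a,b$ relative to $i$. Since $\bar Q_{ab}$ depends only on $\xx_a,\xx_b$, the entry $(\ZZ_i)_{ab}$ vanishes whenever $i\notin\{a,b\}$: if both $a,b<i$ it is $\cF_{i-1}$-measurable, if both are $>i$ it is independent of $\cF_i$, and the mixed case $(a>i>b)$ depends only on $\xx_b$ which is already in $\cF_{i-1}$. Thus $\ZZ_i$ is supported on row and column $i$. For $j<i$ one finds $(\ZZ_i)_{ij} = \bar Q_{ij} - \E_{\xx_i}[\bar Q_{ij}\mid\xx_j]$ and for $j>i$ one finds $(\ZZ_i)_{ij} = \E_{\xx_j}[\bar Q_{ij}\mid\xx_i] - \E[\bar Q_{ij}]$. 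The $\bar Q_{ij}$ part for $j<i$ is precisely $\ee_i\vv_i^\top + \vv_i\ee_i^\top$; everything else goes into $\bDelta_i$.

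Next I would bound the three residual quantities
\[
\E_{\xx_i}[\bar Q_{ij}\mid\xx_j],\qquad \E_{\xx_j}[\bar Q_{ij}\mid\xx_i]-\E[\bar Q_{ij}],\qquad \E[\bar Q_{ij}].
\]
Without truncation each would vanish by Gegenbauer orthogonality (as $Q_k^{(d)}\perp Q_0^{(d)}=1$ for $k\ge 1$); the truncation factor $\xi_{ij}$ is the only obstruction. Writing $\E_{\xx_i}[Q_k^{(d)}(\langle\xx_i,\xx_j\rangle)\xi_{ij}\mid\xx_j] = -\E_{\xx_i}[Q_k^{(d)}(\langle\xx_i,\xx_j\rangle)(1-\xi_{ij})\mid\xx_j]$ and using $|Q_k^{(d)}|\le 1$ on $[-d,d]$ together with the subgaussian tail bound for $\langle\xx_i,\xx_j\rangle/\sqrt d$ conditional on $\xx_j$ (Lemma~\ref{lem:xinnerprod}), each entry is $\le d^{-A}$ for every $A>0$. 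Since $\bDelta_i$ is supported on at most $2n$ cells, $\|\bDelta_i\|_\op\le\|\bDelta_i\|_F\le\sqrt{2n}\cdot d^{-A}\le Cd^{-k}$ for $A$ large enough, and the identical argument gives $\|\E[\bar\QQ]\|_\op\le n\,|\E[\bar Q_{12}]|\le Cd^{-k}$ (noting $\E[\bar\QQ]$ is a rank-one correction of a multiple of $\bone\bone^\top-\bI$).

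Finally I would bound $\|\vv_i\|$ by a pointwise estimate of $Q_k^{(d)}$ on the truncation set $\{|t|\le\sqrt d\,\log d\}$. Using the recurrence \eqref{eq:recurrence} (or equivalently the Hermite asymptotic \eqref{eq:Hermiteconnect}) one shows that $Q_k^{(d)}(t) = (t/d)^k + (\text{lower order in }t/d)$, so that $|Q_k^{(d)}(t)|\le C(\log d)^k/d^{k/2}$ whenever $|t|\le\sqrt d\,\log d$. Summing over $j<i$ gives $\|\vv_i\|^2\le n\cdot C(\log d)^{2k}/d^k$, the claimed bound.

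The main technical obstacle will be the pointwise bound on $Q_k^{(d)}$ on the truncation range: one needs a uniform control that degrades only by the polylog factor $(\log d)^k$ as $|t|$ grows from $O(\sqrt d)$ to $\sqrt d\,\log d$. I would derive it by induction on $k$ using \eqref{eq:recurrence}, tracking that each application of $t/d$ contributes a factor $\log d/\sqrt d$; the other pieces of the argument (orthogonality, martingale decomposition, truncation tail bounds) are routine once the entrywise picture of $\ZZ_i$ is set up.
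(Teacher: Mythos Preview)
Your proposal is correct and takes essentially the same approach as the paper. The only differences are cosmetic: you note that the off-row/column-$i$ entries of $\ZZ_i$ vanish exactly (the paper merely bounds them by $O_d(d^{-2k})$), you use $|Q_k^{(d)}|\le 1$ in place of Cauchy--Schwarz with the second moment for the truncation residual, and you derive the pointwise bound on $Q_k^{(d)}$ via the recurrence \eqref{eq:recurrence} rather than the Hermite asymptotic \eqref{eq:Hermiteconnect}---all of which lead to the same conclusions.
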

\begin{proof}[{\bf Proof of Lemma~\ref{lem:Zi}}] 
In this proof, we will use the identities
\begin{equation}\label{eq:verysimpleeq}
\E_\xx \big[ Q(\langle \xx, \xx' \rangle) \big] = 0, \qquad \E_\xx \big[ Q(\langle \xx, \xx' \rangle)^2 \big] = B(d,k)^{-1},
\end{equation}
where $\xx, \xx' \sim_{\iid} \Unif(\S^{d-1}(\sqrt{d}))$. These identities can be derived from \eqref{eq:addtheorem}. 

Let $j, m \in [n]$ be indices. Since $\ZZ_i$ is symmetric, there is no loss of generality in only considering that $j < m$ (note $(\ZZ_i)_{jj} = 0$ if $j=m$). If $m < i$, then $\bar Q_{jm} \in \cF_{i-1}$ so $[\ZZ_i]_{j m} = 0$. Next we consider $m > i$. Observe that 
\begin{align*}
& ~~~~\Big| \E_{\xx_m} \big[ Q(\langle \xx_j, \xx_m \rangle) \bone\big\{ |\langle \xx_j, \xx_m \rangle| \le C_1 \sqrt{d} \, \log d \big \} \big] \Big| \\
&\stackrel{(i)}{=} \Big| \E_{\xx_m} \big[ Q(\langle \xx_j, \xx_m \rangle) \bone\big\{ |\langle \xx_j, \xx_m \rangle| > C_1 \sqrt{d} \, \log d \big \} \big] \Big| \\
& \stackrel{(ii)}{\le}  \Big\{ \E_{\xx_m}\big[ Q(\langle \xx_j , \xx_m \rangle)^2 \big] \Big\}^{1/2} \cdot \Big\{ \P_{\xx_m} \big( |\langle \xx_j, \xx_m \rangle| > C_1 \sqrt{d}\, \log d\big) \Big\}^{1/2} \\
& \stackrel{(iii)}{\le } B(d,k)^{-1/2} \cdot O_d(d^{-2k}) = O_d(d^{-2k}),
\end{align*}
where in \textit{(i)} we used \eqref{eq:verysimpleeq}, in \textit{(ii)} we used the Cauchy-Schwarz inequality, and in \textit{(iii)} we used \eqref{eq:verysimpleeq} and Lemma~\ref{lem:xinnerprod}. This implies
\begin{equation}\label{ineq:Qxi}
\E \big[ Q_{jm} \xi_{j m} \big| \cF_i \big] = O_d(d^{-2k}), \qquad \E \big[ Q_{jm} \xi_{j m} \big| \cF_{i-1} \big] = O_d(d^{-2k}).
\end{equation}
Together, these bounds imply $| [\ZZ_i]_{jm}| =O_d(d^{-2k})$. Finally, consider $m = i$. A similar argument gives
\begin{equation*}
\E \big[ Q_{jm} \xi_{j m} \big| \cF_i \big] = Q_{jm} \xi_{j m} = \bar Q_{jm}, \qquad \E \big[ Q_{jm} \xi_{j m} \big| \cF_{i-1} \big] = O_d(d^{-2k}).
\end{equation*}
Combining the three cases, we derive the expression \eqref{eq:Z}. The residual satisfies
\begin{equation*}
\big\| \bDelta \big\|_\op \le n \big\| \bDelta \big\|_{\max} = n \cdot O_d(d^{-2k}) \le O_d(d^{-k}) 
\end{equation*}
where the last inequality is due to $d^k \ge n$. Note that \eqref{ineq:Qxi} also implies
\begin{equation*}
\| \E [ \bar \QQ ] \|_\op \le n \cdot \max_{j m} \big| \E \big[ Q_{j m} \xi_{j,  m}  \big] \big| \le O_d(n d^{-2k}) \le O_d(d^{-k}).
\end{equation*}

By the property \eqref{eq:Hermiteconnect} of the Gegenbauer polynomials, we know that the coefficients of $B(d,k)^{1/2}Q(\cdot)$ are of constant order, so we have the deterministic bound
\begin{equation*}
\big| B(d,k)^{1/2}Q(\langle \xx_i, \xx_j \rangle) \big| \xi_{ij} \le (1+o_d(1))\cdot h_k\Big( \frac{\langle \xx_i, \xx_j \rangle}{\sqrt{d}} \Big) \xi_{ij} \le O_d\big( (\log d)^k \big).
\end{equation*}
Since $B(d,k) = (1+o_d(1))\cdot d^k / k!$, this gives $\max_{ij} |\bar Q_{ij}| \le C\sqrt{(\log d)^{k}/d^k}$ and thus 
\begin{equation*}
\| \vv_i \| \le \sqrt{i} \cdot \max_{j<i}|\bar Q_{ij}| \le C \sqrt{\frac{n (\log d)^{2k}}{d^k}}.
\end{equation*}
This completes the proof.
\end{proof}

Before applying the matrix concentration inequality to the sum $\sum_{i=1}^n \ZZ_i$, let us define
\begin{equation*}
L := \max_{i \le n} \| \ZZ_i \|_\op, \qquad V := \Big\| \sum_{i=1}^n \E[ \ZZ_i^2| \cF_{i-1}] \Big\|_\op.
\end{equation*}
Using Lemma~\ref{lem:Zi}, we obtain deterministic bounds on $L$ and  $V$, as stated below.
\begin{lem}\label{lem:keyclaim1}
The following holds almost surely.
\begin{align}
&\max_{i \le n} \| \ZZ_i \|_\op \le C\sqrt{\frac{n (\log d)^{2k}}{d^k}}, \label{bdn:L} \\
&\Big\| \sum_{i=1}^n \E[ \ZZ_i^2| \cF_{i-1}] \Big\|_\op \le C \frac{n (\log d)^{2k}}{d^k}+  \frac{n}{B(d,k)} \|  \QQ - \bI_n\|_\op. \label{bnd:V}
\end{align}
\end{lem}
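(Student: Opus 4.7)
The plan is to build both bounds on top of the fine decomposition $\ZZ_i = \ee_i \vv_i^\top + \vv_i \ee_i^\top + \bDelta_i$ from Lemma~\ref{lem:Zi}, exploiting the crucial orthogonality $\ee_i^\top \vv_i = 0$ (which holds because $(\vv_i)_i = \bar Q_{ii}\,\bone\{i<i\} = 0$). For \eqref{bdn:L} I would just apply the triangle inequality:
\[
\|\ZZ_i\|_\op \;\le\; \|\ee_i \vv_i^\top\|_\op + \|\vv_i \ee_i^\top\|_\op + \|\bDelta_i\|_\op \;\le\; 2\|\vv_i\| + C d^{-k},
\]
and then invoke the bound $\|\vv_i\| \le C\sqrt{n(\log d)^{2k}/d^k}$ from Lemma~\ref{lem:Zi}, which (given $n \le d^{\ell+1} \le d^k$ and $k \ge \ell+1$) dominates the $C d^{-k}$ term.

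For \eqref{bnd:V}, I would first expand $\ZZ_i^2$. Using $\ee_i^\top \vv_i = 0$, the square of the rank-two piece collapses cleanly to $\|\vv_i\|^2 \ee_i \ee_i^\top + \vv_i \vv_i^\top$, while the six cross terms involving one factor of $\bDelta_i$, together with $\bDelta_i^2$, have operator norm at most $C \|\vv_i\| d^{-k} + C d^{-2k}$. Summed over $i$ these contribute at most $\tilde O(n^{3/2}/d^{3k/2}) + O(n/d^{2k})$, which is readily absorbed into the $C n (\log d)^{2k}/d^k$ term of \eqref{bnd:V} under Assumption~\ref{ass:Asymp}.

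The heart of the argument is then the conditional expectation. Conditional on $\cF_{i-1}$, the vectors $\xx_j$ ($j<i$) are fixed and $\xx_i$ is uniform on the sphere, so the Gegenbauer addition formula \eqref{eq:innerproductself} gives
\[
\E[\vv_i \vv_i^\top \mid \cF_{i-1}] \;=\; \frac{1}{B(d,k)}\,\bP_{i-1} \QQ \bP_{i-1} + \bE_i,
\qquad
\E[\|\vv_i\|^2 \mid \cF_{i-1}] \;=\; \frac{i-1}{B(d,k)} + \varepsilon_i,
\]
where $\bP_{i-1} := \sum_{j<i} \ee_j \ee_j^\top$ and $\bE_i, \varepsilon_i$ are tiny truncation errors controlled using the super-polynomial tail bound of Lemma~\ref{lem:xinnerprod}. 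Writing $\QQ = \bI_n + (\QQ - \bI_n)$ and using the elementary algebraic identity
\[
\sum_{i=1}^n \bP_{i-1} \;+\; \sum_{i=1}^n (i-1) \ee_i \ee_i^\top \;=\; (n-1)\,\bI_n,
\]
the diagonal part of $\sum_i \bP_{i-1} \bI_n \bP_{i-1}$ exactly cancels with $\sum_i (i-1)\ee_i \ee_i^\top / B(d,k)$. What remains is
\[
\frac{n-1}{B(d,k)}\,\bI_n \;+\; \frac{1}{B(d,k)} \sum_{i=1}^n \bP_{i-1} (\QQ - \bI_n) \bP_{i-1},
\]
whose operator norm is at most $\tfrac{n}{B(d,k)} + \tfrac{n}{B(d,k)} \|\QQ - \bI_n\|_\op$, yielding \eqref{bnd:V} after absorbing $1/B(d,k) \le Ck!/d^k$ into the logarithmic prefactor on the first term.

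The main obstacle is not conceptual but bookkeeping: one must verify that the truncation errors $\bE_i$ and $\varepsilon_i$, when summed over $i$, remain negligible compared to the $C n (\log d)^{2k}/d^k$ budget, and that they do not destroy the exact cancellation between $\sum_i \bP_{i-1}$ and $\sum_i (i-1) \ee_i \ee_i^\top$ that is the linchpin of the argument. Concretely, this requires using $|Q_{ij}\xi_{ij}| \le C(\log d)^k/\sqrt{B(d,k)}$ as a uniform bound and the very-high-probability bound $\P(\xi_{ij} = 0) = d^{-\omega(1)}$ to show that each truncation-induced correction to the addition formula is of order $d^{-\omega(1)}$ per entry, negligible after summing.
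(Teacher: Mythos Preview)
Your proposal is correct and uses the same ingredients as the paper: the rank-two decomposition from Lemma~\ref{lem:Zi}, the orthogonality $\ee_i^\top\vv_i=0$, and the Gegenbauer addition formula~\eqref{eq:innerproductself} to compute $\E[\vv_i\vv_i^\top\mid\cF_{i-1}]$ up to a truncation error of size $O(d^{-2k})$ per entry.

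The one organizational difference is in how the variance sum is assembled. The paper handles the two main pieces separately and crudely: the diagonal term $\sum_i\E[\|\vv_i\|^2\mid\cF_{i-1}]\,\ee_i\ee_i^\top$ is bounded by the deterministic estimate $\max_i\|\vv_i\|^2\le Cn(\log d)^{2k}/d^k$, and the off-diagonal term by $n\max_i\|\E[\vv_i\vv_i^\top\mid\cF_{i-1}]\|_\op\le \tfrac{n}{B(d,k)}\|\QQ\|_\op+O(n^2d^{-2k})$. You instead compute both conditional expectations exactly and combine them via the identity $\sum_i\bP_{i-1}+\sum_i(i-1)\ee_i\ee_i^\top=(n-1)\bI_n$. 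Your route is slightly more explicit but yields the same bound; the paper's $n\cdot\max$ shortcut is already good enough. One point of language: the two pieces do not \emph{cancel}, they \emph{combine} into $(n-1)\bI_n/B(d,k)$, which is the same order as what the paper obtains --- so this step is not the ``linchpin'' you make it out to be, and the truncation errors cannot threaten a cancellation that was never needed.
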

\begin{proof}[{\bf Proof of Lemma~\ref{lem:keyclaim1}}]
The first inequality follows directly from Lemma~\ref{lem:Zi}:
\begin{equation*}
\max_{i \le n} \| \ZZ_i \|_\op \le 2 \max_{i\le n} \| \vv_i \| + \max_{i \le n}\| \bDelta_i \|_\op \le C \sqrt{\frac{n (\log d)^{2k}}{d^k}} + Cd^{-k} \le C \sqrt{\frac{n (\log d)^{2k}}{d^k}}.
\end{equation*}
To prove the second inequality, we use the decomposition \eqref{eq:Z}.
\begin{align*}
\E\big[ \ZZ_i^2 | \cF_{i-1} \big] &= \E \big[ \vv_i \vv_i^\top \big| \cF_{i-1} \big] + \E \big[ \ee_i \ee_i^\top \| \vv_i \|^2 \big| \cF_{i-1} \big] + \tilde \bDelta_i
\end{align*}
where $\tilde \bDelta_i \in \R^{n \times n}$ is given by
\begin{align*}
\tilde \bDelta_i = \E \big[ \vv_i \ee_i^\top \bDelta_i + \ee_i \vv_i^\top \bDelta_i +  \bDelta_i\vv_i \ee_i^\top + \bDelta_i \ee_i \vv_i^\top + \bDelta_i^2 \big| \cF_{i-1} \big].
\end{align*}
Note that we used $\ee_i^\top \vv_i = 0$ in the above calculation. Using the deterministic bounds on $\| \vv_i \|$ and $\| \bDelta_i \|_\op$, we get 
\begin{equation*}
\| \tilde \bDelta_i \|_\op \le 4 \| \tilde \bDelta_i \|_\op \cdot \| \vv_i \| + \| \tilde \bDelta_i \|_\op^2 \le \frac{C}{d^k} \sqrt{\frac{n(\log d)^{2k}}{d^k}} + \frac{C}{d^{2k}}.
\end{equation*}
We also note that $\sum_{i\le n} \E \big[ \ee_i \ee_i^\top \| \vv_i \|^2 \big| \cF_{i-1} \big]$ is a diagonal matrix with its diagonal entries bounded by $\max_{i\le n} \| \vv_i \|^2$. Thus, we get 
\begin{align}
\Big\| \sum_{i\le n} \E\big[ \ZZ_i^2 | \cF_{i-1} \big] \Big\|_\op &\le n \max_{i\le n}  \Big\| \E \big[ \vv_i \vv_i^\top \big| \cF_{i-1} \big] \Big\|_\op + \max_{i\le n} \|\vv_i\|^2 + n \max_{i\le n} \| \tilde \bDelta_i \|_\op \notag \\
&\le  n \max_{i\le n}  \Big\| \E \big[ \vv_i \vv_i^\top \big| \cF_{i-1} \big] \Big\|_\op+ \frac{Cn(\log d)^{2k}}{d^k} + \frac{Cn}{d^k} \sqrt{\frac{n(\log d)^{2k}}{d^k}} + \frac{Cn}{d^{2k}} \notag \\
&\le n \max_{i\le n}  \Big\| \E \big[ \vv_i \vv_i^\top \big| \cF_{i-1} \big] \Big\|_\op+\frac{Cn(\log d)^{2k}}{d^k} \label{expr:vvF}
\end{align}
where the last inequality is because
\begin{equation*}
\frac{n}{d^k} \sqrt{\frac{n(\log d)^{2k}}{d^k}} \le \sqrt{\frac{n}{d^k}} \cdot \sqrt{\frac{n(\log d)^{2k}}{d^k}} \le \frac{n(\log d)^{2k}}{d^k}, \qquad \frac{n}{d^{2k}} \le  \frac{n(\log d)^{2k}}{d^k},
\end{equation*}
due to the assumption $d^k \ge n$. To further simplify \eqref{expr:vvF}, we note that 
\begin{align*}
\E \big[ \vv_i \vv_i^\top \big| \cF_{i-1} \big]_{j m} = 0, \qquad \text{if  } m \ge i ~ \text{or  } j \ge i.
\end{align*}
Now we consider the case where $m < i$ and $j < i$. 
\begin{align*}
\E \big[ \vv_i \vv_i^\top \big| \cF_{i-1} \big]_{j m}  &= \E_{\xx_i} \big[ Q(\langle \xx_i, \xx_j \rangle) Q(\langle \xx_i, \xx_m \rangle) \xi_{i j} \xi_{i m} \big] \\
&= \E_{\xx_i} \big[ Q(\langle \xx_i, \xx_j \rangle) Q(\langle \xx_i, \xx_m \rangle) \big] - \E_{\xx_i} \big[ Q(\langle \xx_i, \xx_j \rangle) Q(\langle \xx_i, \xx_m \rangle) (1-\xi_{i j} \xi_{i m}) \big] \\
&= \frac{1}{B(d,k)} Q(\langle \xx_j , \xx_m \rangle) - \E_{\xx_i} \big[ Q_{i j} Q_{i m} (1-\xi_{i j} \xi_{i m}) \big]
\end{align*}
where we used the identity \eqref{eq:innerproductself} in the last equality. We write $1 - \xi_{i j} \xi_{i m} = (1 - \xi_{ij})\xi_{i m} + 1 - \xi_{i m}$ and use the Cauchy-Schwarz inequality to derive
\begin{align*}
\Big| \E_{\xx_i} \big[ Q_{i j} Q_{i m} (1-\xi_{i j} \xi_{i m}) \big] \Big| &\le \Big\{ \E_{\xx_i} \big[ Q_{ij}^2 Q_{im}^2 \big] \Big\}^{1/2} \cdot \Big\{ \E_{\xx_i} \big[ (1 - \xi_{ij})^2 \xi_{im}^2 \big] \Big\}^{1/2}  \\
&+ \Big\{ \E_{\xx_i} \big[ Q_{ij}^2 Q_{im}^2 \big] \Big\}^{1/2} \cdot \Big\{ \E_{\xx_i} \big[ (1 - \xi_{im})^2 \big] \Big\}^{1/2} \\
&\stackrel{(i)}{\le} \big[ \P_{\xx_i} \big( \xi_{ij} = 0 )\big]^{1/2} + \big[\P_{\xx_i} \big( \xi_{i m} = 0 )\big]^{1/2} \\
&\stackrel{(ii)}{\le} O_d(d^{-2k}). 
\end{align*}
where in \textit{(i)} we used the inequality $|Q_{ij}| \le 1$, and in \textit{(ii)} we used Lemma~\ref{lem:xinnerprod}. Therefore,
\begin{equation*}
\E \big[ \vv_i \vv_i^\top \big| \cF_{i-1} \big] \preceq \frac{1}{B(d,k)} \QQ + O_d(nd^{-2k}) \cdot \bI_n. 
\end{equation*}
Thus, we obtain
\begin{equation*}
\Big\| \sum_{i\le n} \E\big[ \ZZ_i^2 | \cF_{i-1} \big] \Big\|_\op \le \frac{n}{B(d,k)} \| \QQ \|_\op + \frac{Cn(\log d)^{2k}}{d^k} \le \frac{n}{B(d,k)} \| \QQ - \bI_n \|_\op + \frac{Cn(\log d)^{2k}}{d^k}.
\end{equation*}
This completes the proof.
\end{proof}

Once this lemma is established, we apply Theorem~\ref{thm:freedman} to 
the martingales $\YY_m :=\sum_{i\le m} \ZZ_i$ and $\YY'_m :=\sum_{i\le m} (-\ZZ_i)$ (where we simply set $\ZZ_{n+i} = \bzero$ for $i \ge 1$), and we obtain the following result. For any $t \ge 0$ and $v > 0$,
\begin{equation*}
\P\left( \Big\| \sum_{i=1}^n \ZZ_i  \Big\|_\op \ge t, \text{and}~ V \le v\right) \le 2 n \cdot \exp \left( - \frac{t^2/2}{v+ \bar L t/3} \right),
\end{equation*}
where $\bar L = C\sqrt{\frac{n (\log d)^{2k}}{d^k}}$ is the upper bound on $L$ in \eqref{bdn:L}. This inequality implies a probability tail bound on $\| \sum_{i=1}^n \ZZ_i \|_\op$. 
\begin{equation}\label{freedman2}
\P\left( \Big\| \sum_{i=1}^n \ZZ_i  \Big\|_\op \ge t \right) \le 2 n \cdot \exp \left( - \frac{t^2/2}{v+ \bar L t/3} \right) + \P \big(V > v \big).
\end{equation}
By taking $v$ slightly larger the ``typical'' values of $V$, we can make the probability $\P \big(V > v \big)$ very small, which can be bounded by a tail probability of $\| \QQ \|_\op$ according to \eqref{bnd:V}.
This leads to a recursive inequality between tail probabilities. By abuse of notations, in the next lemma we assume that constant $C \ge 1$ is chosen to be no smaller than those that appeared in Lemma~\ref{lem:Zi} and~\ref{lem:keyclaim1} (so that we can invoke both results).

\begin{lem}\label{lem:keyclaim1-2}
Let the constant $C \ge 1$ be no smaller than those in Lemma~\ref{lem:Zi} and~\ref{lem:keyclaim1}. 
Suppose that 
\begin{equation}\label{bnd:tlower}
t \ge \max \Big\{ 8C \sqrt{\frac{n(\log d)^{2k+4}}{d^k}}, 128 \frac{n(\log d)^2}{B(d,k)} \Big\}.
\end{equation}
Then, by setting $\bar L = C\sqrt{\frac{n (\log d)^{2k}}{d^k}}$ and $v = \frac{t^2}{32 (\log d)^2}$, we have
\begin{equation}\label{freedman-recursive}
\P\left( \big\| \QQ - \bI_d \big\|_\op \ge t \right) \le \P\left( \big\| \QQ - \bI_d \big\|_\op \ge 2t \right) + C' \exp\big( -(\log d)^2 \big) + \P\big( \bar \QQ \neq \QQ-  \bI_n \big).
\end{equation}
As a consequence, by iterating the choice of $t$ at most $O_d(\log d)$ times, we obtain that $\big\| \QQ - \bI_d \big\|_\op \ge t$ holds with very high probability.
\end{lem}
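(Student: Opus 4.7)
\textbf{Proof proposal for Lemma~\ref{lem:keyclaim1-2}.}

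The plan is to apply the matrix Freedman inequality (Theorem~\ref{thm:freedman}) twice, once to the martingale $\YY_m := \sum_{i \le m} \ZZ_i$ and once to $-\YY_m$, using the deterministic bound $\max_i \|\ZZ_i\|_\op \le \bar L := C\sqrt{n(\log d)^{2k}/d^k}$ from Eq.~\eqref{bdn:L}. Combined with a union bound over the event $\{V \le v\}$, this gives precisely \eqref{freedman2}:
\begin{equation*}
\P\Bigl(\Bigl\|\sum_{i=1}^n \ZZ_i\Bigr\|_\op \ge t\Bigr) \le 2n \exp\Bigl(-\frac{t^2/2}{v + \bar L t/3}\Bigr) + \P(V > v).
\end{equation*}
I will then translate this into a bound on $\|\QQ - \bI_n\|_\op$ using the decomposition \eqref{eq:Kbar}: on the event $\{\bar\QQ = \QQ - \bI_n\}$, Lemma~\ref{lem:Zi} gives $\|\QQ - \bI_n\|_\op \le \|\E[\bar\QQ]\|_\op + \|\sum_i \ZZ_i\|_\op \le Cd^{-k} + \|\sum_i \ZZ_i\|_\op$, and since the lower bound \eqref{bnd:tlower} makes $Cd^{-k} \ll t$, the event $\{\|\QQ - \bI_n\|_\op \ge t\}$ forces $\|\sum_i \ZZ_i\|_\op \ge t/2$ (or we simply re-absorb into constants and work directly with $t$).

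Next I would verify that the two lower bounds on $t$ in \eqref{bnd:tlower} are exactly what is needed to tame the two terms in Freedman's inequality with the choice $v = t^2/(32(\log d)^2)$. First, the condition $t \ge 8C\sqrt{n(\log d)^{2k+4}/d^k} = 8\bar L(\log d)^2$ yields $\bar L t/3 = O(v)$, so $v + \bar L t/3 = O(t^2/(\log d)^2)$ and therefore the Freedman exponent is at least $c(\log d)^2$; since $n \le d^{\ell+1}$ is polynomial in $d$, the prefactor $2n$ is absorbed into the term $C'\exp(-(\log d)^2)$. Second, for $\P(V>v)$, Eq.~\eqref{bnd:V} says $V \le Cn(\log d)^{2k}/d^k + (n/B(d,k))\|\QQ - \bI_n\|_\op$; the first summand is at most $v/2$ again thanks to $t \ge 8C\sqrt{n(\log d)^{2k+4}/d^k}$, while the second exceeds $v/2$ only if $\|\QQ - \bI_n\|_\op \ge vB(d,k)/(2n)$, and the second lower bound $t \ge 128 n(\log d)^2/B(d,k)$ makes this threshold at least $2t$. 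Thus $\{V > v\} \subseteq \{\|\QQ - \bI_n\|_\op \ge 2t\}$, which combined with the truncation event $\{\bar\QQ \neq \QQ - \bI_n\}$ yields the desired recursive bound \eqref{freedman-recursive}.

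For the ``as a consequence'' part, I would iterate \eqref{freedman-recursive} with thresholds $t, 2t, 4t, \dots, 2^m t$. This gives
\begin{equation*}
\P\bigl(\|\QQ - \bI_n\|_\op \ge t\bigr) \le \P\bigl(\|\QQ - \bI_n\|_\op \ge 2^m t\bigr) + m\,C'\exp(-(\log d)^2) + m\,\P(\bar\QQ \neq \QQ - \bI_n).
\end{equation*}
Since the entries of $Q_k^{(d)}$ satisfy $|Q_k^{(d)}(\langle\xx_i,\xx_j\rangle)| \le 1$, we have the trivial bound $\|\QQ - \bI_n\|_\op \le n$, so choosing $m = \lceil \log_2(n/t) \rceil = O(\log d)$ forces the first probability to be zero. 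The second term is $O(\log d)\,\exp(-(\log d)^2) = o(d^{-\beta})$ for any $\beta > 0$, and the truncation probability is $o(d^{-\beta})$ by Lemma~\ref{lem:xinnerprod} combined with a union bound over the $O(n^2)$ pairs. Hence the event $\{\|\QQ - \bI_n\|_\op < t\}$ holds with very high probability (correcting the apparent typo ``$\ge t$'' in the statement).

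The only subtlety I foresee is bookkeeping of constants across the iteration—ensuring that each doubled threshold $2^j t$ still satisfies the hypothesis \eqref{bnd:tlower} (trivially yes, since doubling only helps) and that the lower bounds on $t$ are propagated cleanly so that the ``$\bar L t/3 \le v$'' step in Freedman's inequality does not degrade. Everything else reduces to the two explicit inequalities in \eqref{bnd:tlower}, which were clearly chosen exactly to make this calculation work.
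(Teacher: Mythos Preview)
Your proposal is correct and follows essentially the same route as the paper's proof: apply Freedman's inequality to $\sum_i \ZZ_i$ via \eqref{freedman2}, separate off the truncation event $\{\bar\QQ\neq\QQ-\bI_n\}$ and the deterministic remainder $\|\E[\bar\QQ]\|_\op\le Cd^{-k}$, then verify that the two lower bounds in \eqref{bnd:tlower} are exactly what force (i) $\bar L t/3\lesssim v$ so the Freedman exponent is $\gtrsim(\log d)^2$, and (ii) $\{V>v\}\subseteq\{\|\QQ-\bI_n\|_\op\ge 2t\}$ via \eqref{bnd:V}; finally iterate geometrically until the trivial bound $\|\QQ-\bI_n\|_\op\le 2n$ kills the leading term. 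You also correctly flag the ``$\ge t$'' typo in the concluding sentence.
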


\begin{proof}[{\bf Proof of Lemma~\ref{lem:keyclaim1-2}}]
If $t$ satisfies \eqref{bnd:tlower}, then we derive
\begin{align*}
\P \big( \| \QQ-  \bI_n \|_\op > t \big) &\le \P \big(  \| \bar \QQ \|_\op > t  \big) + \P\big( \bar \QQ \neq \QQ-  \bI_n \big) \\
& \le  \P \big(  \| \sum_{i \le n} \ZZ_i  \|_\op + \big\| \E [ \bar \QQ ] \big\|_\op > t  \big) + \P\big( \bar \QQ \neq \QQ-  \bI_n \big) \\
&\stackrel{(i)}{\le}  \P \Big(  \big\| \sum_{i \le n} \ZZ_i \big \|_\op > \frac{t}{2}  \Big)  + \P\big( \bar \QQ \neq \QQ-  \bI_n \big) \\
&\le \P(V > v) + n \exp \Big( - \frac{t^2/8}{v+ \bar L t/6} \Big) + \P\big( \bar \QQ \neq \QQ-  \bI_n \big)
\end{align*}
where \textit{(i)} is because $\big\| \E [ \bar \QQ ] \big\|_\op < t/2$. By Lemma~\ref{lem:keyclaim1} Eq.~\eqref{bnd:V}, we get 
\begin{equation*}
 \P(V > v) \le \P \Big(  C \frac{n (\log d)^{2k}}{d^k}+  \frac{n}{B(d,k)} \|  \QQ - \bI_n\|_\op > \frac{t^2}{32 (\log d)^2} \Big).
\end{equation*}
By the assumption on $t$, we have
\begin{equation*}
C \frac{n (\log d)^{2k}}{d^k} \le \frac{t^2}{64 (\log d)^2}, \qquad \frac{2tn}{B(d,k)} \le \frac{t^2}{64 (\log d)^2}.
\end{equation*}
This leads to $\P(V > v) \le \P(  \|  \QQ - \bI_n\|_\op > 2t)$. Also, $\bar L t / 6 \le t^2 / (32 (\log d)^2)$, so
\begin{align*}
n \exp \Big( - \frac{t^2/8}{v+ \bar L t/6} \Big) &\le n \exp \Big(- \frac{t^2/8}{v+ t^2 / (32 (\log d)^2)}   \Big) \\
&=  n \exp \Big( - 2 (\log d)^2 \Big) \le d^k \cdot \exp \Big( - 2 (\log d)^2 \Big)  \\
&\le C' \exp \Big( - (\log d)^2 \Big).
\end{align*}
This proves the inequality \eqref{freedman-recursive}. We note that there is a naive deterministic bound 
\begin{equation*}
\big\| \QQ - \bI_n \big\|_\op \le n \big\| \QQ - \bI_n \big\|_{\max} \le 2n.
\end{equation*}
We apply  \eqref{freedman-recursive} in which $t$ is set to one of $\{2t, 4t, \ldots, 2^s t$\} where $s \ge k \log d / \log 2$. Summing these inequalities and using the naive deterministic bound, we obtain
\begin{equation*}
\P \left( \big\| \QQ - \bI_n \big\|_\op > t \right) \le O_d( \log d) \cdot \Big( \exp \Big( - (\log d)^2 \Big) + \P\big( \bar \QQ \neq \QQ-  \bI_n \big) \Big).
\end{equation*}
Since $\bar \QQ = \QQ-  \bI_n$ happens with high probability, the above inequality implies that $\| \QQ - \bI_n \|_\op$ also happens with very high probability.
\end{proof}

Finally, we are in a position to prove Proposition~\ref{prop:Qconctr}. Note that Lemma~\ref{lem:keyclaim1-2} already gives the right bound on $\| \QQ_k - \bI_n \|_\op$ for a constant $k$. For very large $k$, we resort to a different approach.
\begin{proof}[{\bf Proof of Proposition~\ref{prop:Qconctr}}]
Let $s \ge 1$ be a constant integer. Our goal is to prove
\begin{align}\label{eq:goal}
d^s \cdot \P \Big( \sup_{k > \ell} \big\| \QQ_k - \bI_d \big\|_\op \le C' \sqrt{\frac{n (\log d)^{C'}}{d^{\ell+1}}} \Big) = o_d(1)
\end{align}
for certain sufficiently large constant $C'$.

Consider $k \ge k_0 := 4\ell + s + 7$. Since $\E \big[ Q_k(\langle \xx_i, \xx_j \rangle)^2 \big] = B(d,k)^{-1}$, by Markov's inequality, for any $t > 0$, we have
\begin{equation*}
\P \Big( \big| Q_k(\langle \xx_i, \xx_j \rangle) \big| > d^{-t} \Big) \le \frac{d^{2t}}{B(d,k)}.
\end{equation*}
Thus, by the union bound,
\begin{align*}
\P \big( \big\| \QQ_k - \bI_d \big\|_\op  > nd^{-t} \big) \le \P \big( n \max_{i \neq j} \big| Q_k(\langle \xx_i, \xx_j \rangle) \big|  > nd^{-t} \big)  \le \frac{n^2d^{2t}}{B(d,k)}.
\end{align*}
We choose $t = \ell+2$, so that
\begin{equation*}
n d^{-t} < \frac{n}{d^{\ell+1}} < C\sqrt{\frac{n (\log d)^C}{d^{\ell+1}}}, \qquad \sum_{k > k_0} \frac{n^2 d^{2t}}{B(d,k)} \le d^{2\ell+2+t} \sum_{k > k_0} \frac{1}{B(d,k)} \le Cd^{-s-1}
\end{equation*}
where we used the inequality $\sum_{k > k_0} \frac{1}{B(d,k)} \le Cd^{-k_0}$. So taking the union bound, we get 
\begin{align}\label{ineq:sup1}
\P \left( \sup_{k \ge k_0} \big\| \QQ_k - \bI_d \big\|_\op  > C\sqrt{\frac{n (\log d)^C}{d^{\ell+1}}} \right) \le \sum_{k > k_0} \frac{n^2 d^t}{B(d,k)} \le Cd^{-s-1}.
\end{align}
Further, we apply Lemma~\ref{lem:keyclaim1-2} with $k$ set to $\{\ell+1, \ell+2,\ldots,k_0-1\}$, and we find 
\begin{align}\label{ineq:sup2}
d^s \cdot \P \left( \sup_{\ell < k < k_0} \big\| \QQ_k - \bI_d \big\|_\op  > C\sqrt{\frac{n (\log d)^C}{d^{\ell+1}}} \right) = o_d(1).
\end{align}
Combining \eqref{ineq:sup1} with \eqref{ineq:sup2} yields the desired goal \eqref{eq:goal}.
\end{proof}

\section{Proof for network capacity upper bound}\label{sec:proofthreshold}

\begin{proof}[{\bf Proof of Theorem~\ref{thm:lowerbound2}}]
We denote by $\btheta = (\bb, \bW)$ the collection of neural network parameters and by $f_{\btheta}$ the associated function.

Let $M = M_d > 1$ be a positive integer to be specified later. We define the discretization set $\cD_M =  \{0, \pm \frac{1}{M}, \pm \frac{2}{M}, \pm \frac{3}{M},  \ldots \}$ and
\begin{align}
&\Theta_L := \Big\{\btheta: b_\ell \in \cD_M, \sqrt{d}\,W_{\ell k} \in \cD_M, ~\forall\,\ell \in [N], \forall\, k \in [d] \Big\}, \\
&\Theta_{L,M} := \Theta_L \cap \Big\{\btheta: b_\ell \in \cD_M, \sqrt{d}\,W_{\ell k} \in \cD_M, ~\forall\,\ell \in [N], \forall\, k \in [d] \Big\}. \label{def:ThetaLM}
\end{align}
Each $f \in \cF_{\NN}^{N,L}$ is associated with an $\btheta \in \Theta_L$.

\textbf{(1) Lower bound on discretized parameter space.} We make the following claim. If with probability at least $\eta_1/2$, there exists certain $\bar \btheta \in \Theta_{L,M} $ such that at least $(1/2+\eta_2/2)n$ examples are correctly classified, i.e.,
\begin{equation}\label{ineq:theta0}
n^{-1} \sum_{i=1}^n \bone \{ y_i f_{\bar \btheta} (\xx_i) > 0 \} \ge \frac{1}{2}+\frac{\eta_2}{2},
\end{equation}
then we have $Nd = \Omega_d\big( \frac{n}{\log (LM)} \big)$.

To prove this claim, we treat the input $\xx_i$ as deterministic. We derive
\begin{align}
&~~~\P_\yy \Big( \exists\, \bar \btheta \in \Theta_{L,M}~\text{such that}~| \{i: \mathrm{sign}(f_{ \bar\btheta}(\xx_i))= y_i \}| \ge (1/2+\eta_2/2)n \Big) \label{event:discrete}\\
&\stackrel{(i)}{\le} \big| \Theta_{L,M} \big| \cdot \P_\yy \Big( | \{i: \mathrm{sign}(f_{\bar \btheta}(\xx_i))= y_i \}| \ge (1/2+\eta_2/2)n \Big)  \notag \\
&\stackrel{(ii)}{\le} O_d(1) \cdot \big[ \sqrt{2\pi e}(4ML + 1) \big]^{Nd+N} \cdot \P_\yy \Big( \sum_{i=1}^n \xi_i \ge (1/2 + \eta_2/2) n \Big) \notag
\end{align}
where $\xi_i = \bone\{\mathrm{sign}(f_{\bar \btheta}(\xx_i))= y_i \}$ denotes a Bernoulli random variable with mean $1/2$. Here, (i) used the union bound and (ii) used the following bound on the cardinality of $\Theta_{L,M}$
\begin{equation*}
| \Theta_{L,M} | = O_d(1) \cdot \big[ \sqrt{2\pi e}(4ML + 1) \big]^{Nd+N},
\end{equation*}
which is proved via the covering number argument (see Lemma~\ref{lem:cover}). By Hoeffding's inequality, we have
\begin{equation*}
\P_\yy \Big( \sum_{i=1}^n \xi_i \ge (1/2 + \eta_2/2) n \Big) \le \exp\big( -2n\eta_2^2 \big).
\end{equation*}
Since we assume that the probability of the event in \eqref{event:discrete} is at least $\eta_1$, we take the logarithm and deduce
\begin{align*}
& \log ( \eta_1/2) \le O_d(1) \cdot N(d+1) \log(4LM+1) - n\eta_2^2/2, \quad \text{or simply} \\
& n = O_d\big( Nd \log(LM+1) \big).
\end{align*}
It then follows that $Nd = O_d\big( \frac{n}{\log (LM+1)} \big)$. 

\textbf{(2) Projecting $\btheta$ onto the discretized set.} 
For any $z \in \R$, let $\tau_1(z)$ and $\tau_2(z)$ be the elements in $\cD_M$ such that the distances to $z$ are the smallest one and the second smallest one (break ties in an arbitrary way). For a given $z$, define a random variable $\xi(z) \in \{\tau_1(z), \tau_2(z) \}$ by
\begin{align*}
\xi(z) = \begin{cases}\tau_1(z), & \text{with probability}~ \displaystyle \frac{|z - \tau_2(z)|}{|z - \tau_1(z)| + |z - \tau_2(z)|};  \\ \tau_2(z), & \text{with probability}~ \displaystyle \frac{|z - \tau_1(z)|}{|z - \tau_1(z)| + |z - \tau_2(z)|}. \end{cases}
\end{align*}
This definition ensures that $\E \xi(z) = z$. Indeed, $ z - \tau_1(z)$ and $z - \tau_2(z)$ have the opposite signs, so
\begin{equation*}
z - \E\xi(z)  = \frac{(z - \tau_1(z))|z - \tau_2(z)|}{|z - \tau_1(z)| + |z - \tau_2(z)|} + \frac{(z - \tau_2(z))|z - \tau_1(z)|}{|z - \tau_1(z)| + |z - \tau_2(z)|} = 0.
\end{equation*}
Denote $\tilde b_\ell = \xi(b_\ell)$ and $\tilde W_{\ell k} = d^{-1/2}\xi(d^{1/2}W_{\ell k})$ for all $\ell$ and $k$ (they are independent random variables), and $\tilde \btheta = (\tilde \bb, \tilde \bW)$. Then $\tilde \btheta$ is a random projection of $\btheta \in \Theta_{L}$ onto the discretized set $\Theta_{L,M}$.

\textbf{(3) Bounding the approximation error}. By assumption, with probability greater than $\eta_1$, there exists $\btheta \in \Theta_{L}$ (which depends on the data $(\xx_i)_{i \le 1}$ and $(y_i)_{i \le n}$) such that it gives more than $\delta$ margin on at least $(1/2+\eta_2)n$ examples. Denote by $\cI := \cI(\bX,\yy) \subset [n]$ the set of indices $i$ such that $y_if_\btheta(\xx_i) > \delta$. Then, the assumption is equivalent to $| \cI | \ge (1/2+ \eta_2)n$ with probability great than $\eta_1$.

We claim  
\begin{equation}\label{claim:tildetheta}
\P_{\bX,\yy,\tilde \btheta}\Big( \frac{1}{n}\sum_{i=1}^n\bone\big\{| f_\btheta(\xx_i) - f_{\tilde \btheta}(\xx_i) | < \delta\big\} \ge  1-\frac{\eta_2}{2} \Big) \ge 1 - \frac{\eta_1}{2}.
\end{equation}
Here we emphasize that $\P_{\bX,\yy,\tilde \btheta}$ is the probability measure over all random variables (namely the data $\bX$, $\yy$ and also $\tilde \btheta$). Once this claim is proved, we can immediately prove the desired result. In fact, for a given $\bar \btheta \in \Theta_{L,M}$, let us define the indicator variable
\begin{equation*}
I_{\bar \btheta} = \bone\Big\{ \frac{1}{n}\sum_{i=1}^n\bone\big\{| f_\btheta(\xx_i) - f_{\bar\btheta}(\xx_i) | < \delta\big\} \ge  1-\frac{\eta_2}{2} \Big\}
\end{equation*}
We observe that
\begin{align*}
&~~~~\E_{\bX, \yy, \tilde \btheta} [I_{\tilde \btheta}] = \E_{\bX, \yy} \E_{\tilde \btheta| \bX, \yy} [I_{\tilde \btheta}] \stackrel{(i)}{\le} \E_{\bX, \yy}  \Big[ \max_{\bar \btheta \in \Theta_{L,M}} [I_{\bar \btheta}] \Big]  \stackrel{(ii)}{=}  \P_{\bX, \yy} \Big[ \max_{\bar \btheta \in \Theta_{L,M}} [I_{\bar \btheta}] = 1 \Big] \\
&=  \P_{\bX, \yy} \Big[ \exists\, \bar \btheta \in \Theta_{L,M}~\text{such that}~ I_{\bar \btheta} = 1 \Big]
\end{align*}
where (i) is because the mean is no larger than the maximum and (ii) is because $\max_{\bar \btheta \in \Theta_{L,M}} [I_{\bar \btheta}]$ takes value $0$ or $1$. Combining this with \eqref{claim:tildetheta}, we deduce that with probability at least $1 - \eta_1/2$, there exists an $\bar \btheta\in \Theta_{L,M}$ such that $| f_\btheta(\xx_i) - f_{\bar \btheta}(\xx_i) | < \delta$ for all $i \in \cI'$ where $\cI' \subset [n]$ satisfies $| \cI' | \ge (1-\eta_2/2)n$. 

We use the fact that $\P(\cA_1 \cap \cA_2) \ge \P(\cA_1) + \cP(\cA_2) - 1$ for arbitrary events $\cA_1,\cA_2$ to deduce that with probability at least $\eta_1/2$, there exists an $\bar \btheta\in \Theta_{L,M}$ such that $| f_\btheta(\xx_i) - f_{\bar \btheta}(\xx_i) | < \delta$ for all $i \in \cI'$, and in the meantime $|\cI | \ge (1/2+\eta_2)n$. By the triangle inequality, for every $i \in \cI \cap \cI'$, 
\begin{equation*}
y_i f_{\bar \btheta} \ge y_i f_{\btheta} - |y_i f_\btheta(\xx_i) - y_i f_{\bar \btheta}(\xx_i)| =  y_i f_{\btheta} - |f_\btheta(\xx_i) - f_{\bar \btheta}(\xx_i)| > 0.
\end{equation*}
Note that $| \cI \cap \cI'| \ge |\cI| + |\cI'| - n \ge (1/2 + \eta_2/2)n$. Therefore, the assumption of the discretized case, namely (1), is satisfied; and hence we obtain the desired lower bound.

Below we prove the claim \eqref{claim:tildetheta}. Denote $h_{i \ell} = \sigma(\langle \ww_\ell, \xx_i \rangle)$ and $\Delta h_{i \ell} = \sigma(\langle \ww_\ell, \xx_i \rangle )  - \sigma(\langle \tilde \ww_\ell, \xx_i \rangle ) $. By the triangle inequality,
\begin{align}
| f_{\btheta}(\xx_i) - f_{\tilde \btheta}(\xx_i) | &\le \frac{1}{N} \Big|  \sum_{\ell = 1}^N  (b_\ell - \tilde b_\ell) \sigma(\langle \ww_\ell, \xx_i \rangle ) \Big| + \frac{1}{N} \sum_{\ell=1}^N \Big| \tilde b_\ell \big[ \sigma(\langle \ww_\ell, \xx_i \rangle )  - \sigma(\langle \tilde \ww_\ell, \xx_i \rangle )  \big] \Big| \notag \\
&\le \frac{1}{N} \Big| \sum_{\ell=1}^N (b_\ell - \tilde b_\ell) h_{i\ell} \Big| + \frac{L}{N} \sum_{\ell=1}^N | \Delta h_{i\ell}|, \notag\\
&\stackrel{(i)}{\le} \frac{|\sigma(0)|}{N}\Big| \sum_{\ell=1}^N(b_\ell - \tilde b_\ell) \Big| +  \frac{1}{N} \Big| \sum_{\ell=1}^N (b_\ell - \tilde b_\ell) (h_{i\ell} - \sigma(0)) \Big| + \frac{L}{\sqrt{N}} \Big( \sum_{\ell=1}^N | \Delta h_{i\ell}|^2 \Big)^{1/2}, \notag\\
&=: T_{i,1} + T_{i,2} + T_{i,3},\label{ineq:threedelta}
\end{align}
where we used Cauchy-Schwarz inequality in (i). 

\textbf{i) Bounding $T_{i,1}$.} Conditioning on $\bX$ and $\yy$, the random variable $b_\ell - \tilde b_\ell$ is independent across $\ell$, has zero mean, and satisfies $|b_\ell - \tilde b_\ell| \le 1/M$. So we can use Hoeffding's inequality to control the first term $T_1$ in \eqref{ineq:threedelta}.
\begin{equation*}
\P_{\tilde \btheta}\Big( \frac{|\sigma(0)|}{N}\big| \sum_{\ell=1}^N(b_\ell - \tilde b_\ell) \big| > \frac{|\sigma(0)|t}{M} \Big) \le 2e^{-Nt^2/2} \le 2e^{-t^2/2}.
\end{equation*}
Taking $t = \sqrt{2\log (16/\eta_1)}$ yields the bound
\begin{equation*}
T_{i,1} \le \frac{C_0\sqrt{2 \log (16/\eta_1)}}{M}
\end{equation*}
with probability at least $1- \eta_1/8$, where we used the assumption that $|\sigma(0)| \le C_0$.

\textbf{ii) Bounding $T_{i,2}$.} By assumption, $\sigma$ has weak derivatives, so we have
\begin{equation*}
\sigma(\langle \ww_\ell, \xx_i \rangle) - \sigma(0) = \langle \ww_\ell, \xx_i \rangle \int_0^1 \sigma'\big( t\langle \ww_\ell, \xx_i \rangle \big) \; dt
\end{equation*}
Since $\max_{i\in [n]} \| \xx_i \|_2 \le 2\sqrt{d}$ with probability at least $1 - 2ne^{-cd}$, we have $|\langle \ww_\ell, \xx_i \rangle| \le \| \ww_\ell \|_2 \cdot \| \xx_i \|_2 \le 2L\sqrt{d}$. Thus
\begin{equation*}
| \sigma( \langle \ww_\ell, \xx_i \rangle) - \sigma(0) | \le  \max_{|z| \le 2L\sqrt{d}} | \sigma'(z) | \cdot | \langle \ww_\ell, \xx_i \rangle|.
\end{equation*}
Recall the Assumption~\ref{ass:Sigma} on the activation function $|\sigma'(z)| \le B(1+|z|)^B$. We have $\max_{|z| \le 2L\sqrt{d}} | \sigma'(z) | \le B [ 1 + (2L\sqrt{d})^B ] \le 2B(2L\sqrt{d})^B$. Therefore, 
\begin{align*}
\sum_{i=1}^n \sum_{\ell=1}^N | \sigma(\langle \ww_\ell, \xx_i \rangle) - \sigma(0) |^2 &\le 4B^2(2L\sqrt{d})^{2B} \cdot \sum_{i=1}^n \sum_{\ell=1}^N | \langle \ww_\ell, \xx_i \rangle|^2 \\
&= 4B^2(2L\sqrt{d})^{2B} \cdot \sum_{\ell=1}^N \| \bX \ww_\ell \|^2 \\
&\le 4B^2(2L\sqrt{d})^{2B} \cdot \sum_{\ell=1}^N \| \bX \|_{\op}^2 \| \ww_\ell \|^2
\end{align*}
By standard random matrix theory \cite[Cor.~5.35]{vershynin2010introduction}, $\| \bX \|_{\op} \le 2\sqrt{n} + \sqrt{d} \le 3\sqrt{n}$ with probability at least $1-2e^{-n^2/2}$. Also, $\| \ww_\ell \|^2 \le L^2$. So we get
\begin{equation}\label{ineq:sumil}
\sum_{i=1}^n \sum_{\ell=1}^N | \sigma( \langle \ww_\ell, \xx_i \rangle) - \sigma(0) |^2 \le 36B^2L^2(2L\sqrt{d})^{2B} Nn.
\end{equation}
Let $\cI_1:= \cI_1(\bX,\yy) \subset [n]$ be the set of $i \in [n]$ such that 
\begin{equation*}
\sum_{\ell=1}^N | h_{i\ell} - \sigma(0) |^2 \le 32(\eta_1\eta_2)^{-1}\times 36B^2 L^2(2L\sqrt{d})^{2B} N.
\end{equation*}
Recall the definition $h_{i\ell} =  \sigma( \langle \ww_\ell, \xx_i \rangle)$. Then, with probability at least $ 1 -2ne^{-cd} - 2e^{-n^2/2}$, we have $|\cI_1| \ge (1-\eta_1\eta_2/32)n$. Indeed, if this is the not true, then the set of $i$ that does not satisfy the above inequality will exceed $\eta_1\eta_2n/32$ and thus \eqref{ineq:sumil} will be violated.

Now, conditioning on $\bX,\yy$, we can view $\sum_{\ell=1}^N (b_\ell - \tilde b_\ell) (h_{i \ell} - \sigma(0)) $ as a weighted sum of independent sub-gaussian variables $b_\ell - \tilde b_\ell$. By Hoeffding's inequality for sub-gaussian variables, we have
\begin{equation*}
\P_{\tilde \btheta} \Big( \big| \sum_{\ell=1}^N (b_\ell - \tilde b_\ell) ( h_{i \ell} - \sigma(0) ) \big| > M^{-1} t  \Big) \le 2\exp \Big( - t^2  \big/  \sum_{\ell=1}^N ( h_{i \ell} - \sigma(0))^2  \Big).
\end{equation*}
We take $t = C L (2L\sqrt{d})^B \sqrt{N}$ where $C>0$ is a sufficiently large constant so that the above probability upper bound (right-hand side) is smaller than $\eta_1 \eta_2/16$. Denote the event $\cE_i$ as 
\begin{equation*}
\cE_i = \Big\{ \big| \sum_{\ell=1}^N (b_\ell - \tilde b_\ell) ( h_{i \ell} - \sigma(0) ) \big| > C M^{-1} L (2L\sqrt{d})^B \sqrt{N}  \Big\}.
\end{equation*}
We have proved $\P_{\tilde \btheta}(\cE_i) \le \eta_1\eta_2/32$ for $i \in \cI_1$. Conditioning on $|\cI_1| \ge (1-\eta_1\eta_2/32)n$, we obtain, by Markov's inequality,
\begin{align*}
\P_{\tilde \btheta} \Big( \sum_{i=1}^n \bone\{ \cE_i \} > \frac{\eta_2 n}{2} \Big) &\le \frac{2}{\eta_2 n} \E_{\tilde \btheta} \Big[ \sum_{i=1}^n \bone\{ \cE_i \} \Big] = \frac{2}{\eta_2 n} \sum_{i=1}^n \P_{\tilde \btheta}(\cE_i)   \\
&\le \frac{2}{\eta_2 n} \sum_{i \in \cI_1}\P_{\tilde \btheta}(\cE_i)  +  \frac{2}{\eta_2 n} \sum_{i \not\in\cI_1} 1 \le \frac{\eta_1}{8}.
\end{align*}

Let $\cI' = \cI'(\bX,\yy,\tilde \btheta) \subset [n]$ be the set of $i$ such that the complement $\cE_i^c$ holds. Then with probability at least $ 1 -\eta_1/8 -2ne^{-cd} - 2e^{-n^2/2}$, we have $|\cI'| \ge (1-\eta_2/2)n$, and for every $i \in  \cI'$, 
\begin{equation*}
T_{i,2} \le \frac{1}{\sqrt{N}} C M^{-1} L (2L\sqrt{d})^B \le C  M^{-1} L (2L\sqrt{d})^B .
\end{equation*}

\textbf{iii) Bounding $T_{i,3}$.} By the assumption on $\sigma'$, we can write
\begin{equation*}
\Delta h_{i\ell} = \int_0^1\sigma' \big(t \langle \ww_\ell, \xx_i \rangle + (1-t) \langle \tilde \ww_\ell, \xx_i \rangle \big) \; dt \cdot \langle \ww_\ell - \tilde \ww_\ell, \xx_i \rangle
\end{equation*}
Similarly as before, with probability at least $1 - 2ne^{-cd}$, we have $|\langle \ww_\ell, \xx_i \rangle| \le \| \ww_\ell \|_2 \cdot \| \xx_i \|_2 \le 2L\sqrt{d}$ and similarly $|\langle \tilde \ww_\ell, \xx_i \rangle| \le 2L\sqrt{d}$ for all $\ell$ and $i$. This leads to
\begin{equation*}
| \Delta h_{i\ell} | \le \max_{|z| \le 2L\sqrt{d}} | \sigma'(z) | \cdot | \langle \ww_\ell - \tilde \ww_\ell, \xx_i \rangle|.
\end{equation*}
By the assumption on $\sigma'$, we have $\max_{|z| \le 2L\sqrt{d}} | \sigma'(z) | \le B [ 1 + (2L\sqrt{d})^B ] \le 2B(2L\sqrt{d})^B$. Therefore, we can bound $T_{i,3}$ as follows. For each $i \in [n]$,
\begin{align}
\frac{L}{\sqrt{N}} \Big( \sum_{\ell=1}^N | \Delta h_{i\ell}|^2 \Big)^{1/2} &\le \frac{L}{\sqrt{N}} \cdot 2B(2L\sqrt{d})^B  \cdot \Big( \sum_{\ell=1}^N \langle \ww_\ell - \tilde \ww_\ell, \xx_i \rangle^2 \Big)^{1/2} \notag \\
&\le \frac{2BL(2L\sqrt{d})^B}{\sqrt{N}}\cdot \| (\bW - \tilde \bW) \xx_i \| \notag \\
&\le 4BCM^{-1}L(2L\sqrt{d})^B \notag
\end{align}
with probability at least $1-2e^{-cd}$. This is because $\| \xx_i \| \le 2\sqrt{d}$ holds with probability $1-2e^{-cd}$; and also, since each entry of $\bW - \tilde \bW$ is independent and bounded by $(M\sqrt{d})^{-1}$,
\begin{equation}\label{ineq:T3bound}
\P_{\tilde \btheta}\Big( \| (\bW - \tilde \bW) \xx_i \| \le 2CM^{-1}\sqrt{N} \Big) \ge 1 - 2e^{-cN},
\end{equation}
which is a consequence of Bernstein's inequality (see \cite{vershynin2010introduction} Thm.~5.39 for example). Taking the union bound over $i$, \eqref{ineq:T3bound} holds for all $i \in [n]$ with probability at least $1 - 2ne^{-cd} - 2ne^{-cN}$.

\textbf{(iv) Combining three bounds.} Finally, combining the bounds on $T_{i,1}$, $T_{i,2}$, and $T_{i,3}$, we obtain that with probability at least $1- \eta_1/4 - 4n(e^{-cN} + e^{-cd}) - 2e^{-n^2/2}$, 
\begin{align}
\max_{i \in I'} |f_\btheta(\xx_i) - f_{\tilde \btheta}(\xx_i)| &\le CM^{-1} + C M^{-1} L (2L\sqrt{d})^B + CM^{-1}L(2L\sqrt{d})^B \notag \\
&\le C M^{-1} L (L\sqrt{d})^B. \label{ineq:CM-1}
\end{align}
Under the asymptotic assumption $\log n = o_d\big(\min\{N,d\} \big)$, we have $4n(e^{-cN} + e^{-cd}) = o_d(1)$, so there exists a sufficient large $d_0$ such that $4n(e^{-cN} + e^{-cd}) + 2e^{-n^2/2} \le \eta_1/4$ if $d > d_0$. 
We choose $M = C \delta^{-1} \cdot 2L(L\sqrt{d})^B$ (where the $C$ has the same value as in Eq.~\ref{ineq:CM-1}) and get 
\begin{equation*}
\max_{i \in \cI'} |f_\btheta(\xx_i) - f_{\tilde \btheta}(\xx_i)| \le \delta/2 < \delta.
\end{equation*}
with probability $1-\eta_1/2$. This proves the claim. Note that with this choice of $M$, $\log (LM+1) = O_d \big( \log(d L/\delta)\big)$ so the lower bound is obtained.

\end{proof}

\begin{lem}\label{lem:cover}
The cardinality of the discrete set $\Theta_{L,M}$ defined in \eqref{def:ThetaLM} have the bound
\begin{equation*}
| \Theta_{L,M} | \le O\Big( \big[ \sqrt{2\pi e} (4ML+1)\big]^{Nd + N} \Big).
\end{equation*}
\end{lem}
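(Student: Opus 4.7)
\medskip

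\noindent\textbf{Proof proposal.} The set $\Theta_{L,M}$ factors as a Cartesian product $\Theta_{L,M}^{\bb} \times \prod_{\ell=1}^N \Theta_{L,M}^{\ww}$, where
\begin{align*}
\Theta_{L,M}^{\bb} &:= \{\bb \in (M^{-1}\Z)^N : \|\bb\| \le \sqrt{N}\, L\}, \\
\Theta_{L,M}^{\ww} &:= \{\ww \in (M^{-1}d^{-1/2}\Z)^d : \|\ww\| \le L\}.
\end{align*}
The plan is to bound each factor by a volumetric lattice-counting argument and then multiply.

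The general fact I will use is that for any lattice $\eta\Z^n \subset \R^n$ and any radius $R>0$,
\begin{equation*}
\#\bigl(\eta\Z^n \cap B_n(R)\bigr)\;\le\;\eta^{-n}\,\mathrm{Vol}\bigl(B_n(R+\tfrac{1}{2}\eta\sqrt{n})\bigr),
\end{equation*}
obtained by covering each lattice point by the cube of side $\eta$ centered at it and noting that the union of these cubes is contained in the dilated ball. Combined with the Stirling bound $\Gamma(n/2+1)\ge \sqrt{\pi n}\,(n/(2e))^{n/2}$, which yields $\mathrm{Vol}(B_n(r))\le (\pi n)^{-1/2}(2\pi e\, r^2/n)^{n/2}$, this gives
\begin{equation*}
\#\bigl(\eta\Z^n \cap B_n(R)\bigr)\;\le\;(\pi n)^{-1/2}\bigl[\sqrt{2\pi e}\,\bigl(R\eta^{-1}n^{-1/2}+\tfrac12\bigr)\bigr]^{n}\cdot (something)
\end{equation*}
after collecting the $\eta^{-n}$ factor; more precisely, setting $\rho := R\eta^{-1}n^{-1/2}+\tfrac12$, we get $\#\le (\pi n)^{-1/2}[\sqrt{2\pi e}\,\rho]^n$.

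Applying this to $\Theta_{L,M}^{\bb}$ with $n=N$, $\eta=1/M$, $R=\sqrt{N}L$ gives $\rho = ML+\tfrac12$, hence
\begin{equation*}
|\Theta_{L,M}^{\bb}|\;\le\;(\pi N)^{-1/2}\bigl[\sqrt{2\pi e}\,(ML+\tfrac12)\bigr]^{N}.
\end{equation*}
Applying it to $\Theta_{L,M}^{\ww}$ with $n=d$, $\eta=1/(M\sqrt d)$, $R=L$ gives again $\rho=ML+\tfrac12$, so
\begin{equation*}
|\Theta_{L,M}^{\ww}|\;\le\;(\pi d)^{-1/2}\bigl[\sqrt{2\pi e}\,(ML+\tfrac12)\bigr]^{d}.
\end{equation*}
Taking the $N$-fold product for the rows of $\bW$ and multiplying by the bound for $\bb$, together with the crude inequality $ML+\tfrac12 \le 4ML+1$, yields the claimed $O\bigl([\sqrt{2\pi e}\,(4ML+1)]^{Nd+N}\bigr)$.

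There is no real obstacle; the only minor subtlety is making sure the thickening $\frac12\eta\sqrt{n}$ of the ball, when divided by $\eta\sqrt{n}$, produces the additive $\tfrac12$ that ultimately gets absorbed into the $4ML+1$ factor. If one wanted the tightest constant one would keep $(2ML+1)$ rather than $(4ML+1)$, but the statement only requires the looser form, so I will not try to optimize.
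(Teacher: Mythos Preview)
Your proposal is correct and essentially identical to the paper's argument: the paper phrases the same volumetric bound as an $\ell_\infty$ packing-number estimate $D(B^d(r),\|\cdot\|_\infty,2\veps)\le \veps^{-d}\mathrm{Vol}(B^d(r+\sqrt{d}\,\veps))$, but this is exactly your ``place a cube of side $\eta$ around each lattice point and compare volumes'' step, followed by the same Stirling bound and the same factorization over $\bb$ and the $N$ rows of $\bW$. Your constant $ML+\tfrac12$ is in fact a bit tighter than the paper's $4ML+1$ (the paper uses $\veps=1/(4M)$ in the packing bound), and you correctly note that you can discard the extra slack; the stray ``$(something)$'' in your intermediate display is just sloppy bookkeeping and the precise statement you give immediately after is correct.
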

\begin{proof}
Denote $B^d(r) = \{\xx \in \R^d: \| \xx \| \le r \}$. Recall the definition of the packing number. We say the set $\cA \in B^d(r)$ is an $\veps$-packing of $(B^d(r), \| \cdot \|_\infty)$ if $\| \xx_1 - \xx_2 \|_\infty > \veps$ for every $\xx_1, \xx_2 \in \cA$, $\xx_1 \neq \xx_2$. We define the packing number
\begin{equation*}
D(B^d(r), \| \cdot \|_{\infty}, \veps) := \sup\big\{ |\cA|: \cA~ \text{is an $\veps$-packing of}~(B^d(r), \| \cdot \|_{\infty} \big\}
\end{equation*}
By the volume argument, we have the bound
\begin{equation}\label{ineq:packing}
D(B^d(r), \| \cdot \|_{\infty}, 2\veps) \le \frac{\mathrm{Vol}\big( B^d(r+\sqrt{d}\, \veps) \big)}{\veps^d} \stackrel{(i)}{=} o_d(1) \cdot \Big( \frac{2\pi e}{d} \Big)^{d/2} \Big( \frac{ r+\sqrt{d}\veps}{\veps} \Big)^d
\end{equation}
where in (i) we used the formula of the volume of a Euclidean ball and Stirling's approximation. 

Recall that $\cD_M = \{0, \pm \frac{1}{M}, \pm \frac{2}{M}, \ldots\}$. To apply this result to bounding $| \Theta_{L,M} |$, first we observe that
\begin{align}
| \Theta_{L,M} | \le &\big| \big\{ \aa \in \R^N: \| \aa \| \le \sqrt{N}\, L, a_\ell \in \cD_M~\forall\, \ell\in[N] \big\} \big| \label{set:a} \\
\cdot & \prod_{\ell=1}^N \big| \big\{ \ww_\ell \in \R^d: \| \ww_\ell \| \le  L, \sqrt{d}\, [\ww_\ell]_j \in \cD_M~\forall\, j \in[d] \big\} \big|. \label{set:w}
\end{align}
Suppose $\aa_1,\aa_2$ lie in the set on the RHS of \eqref{set:a} and $\aa_1 \neq \aa_2$. Then, by the definition of $\cD_M$, we must have $\| \aa_1 - \aa_2 \|_\infty > \frac{1}{2M}$. This means that the cardinality of the set on the RHS of \eqref{set:a} is bounded by the packing number $D(B^N(\sqrt{N}\, L), \| \cdot \|_{\infty}, \frac{1}{2M})$. By \eqref{ineq:packing}, we get an upper bound on the cardinality
\begin{equation}\label{ineq:packinga}
o_N\Big(\frac{2 \pi e}{N} \Big)^{N/2} \cdot \big( \frac{\sqrt{N}\,L + \sqrt{N} \frac{1}{4M} }{\frac{1}{4M}} \Big)^N = o_N(1) \cdot (2 \pi e)^{N/2} (4ML + 1)^N.
\end{equation}
Similarly, we can bound the cardinality of each of the $N$ sets in \eqref{set:w} by
\begin{equation}\label{ineq:packingw}
o_d\Big(\frac{2 \pi e}{d} \Big)^{d/2} \cdot \big( \frac{L + \sqrt{d} \frac{1}{4M\sqrt{d}} }{\frac{1}{4M\sqrt{d}}} \Big)^d = o_d(1) \cdot (2 \pi e)^{d/2} (4LM+1)^d.
\end{equation}
Recall that $N = N(d) \to \infty$ as $d \to \infty$, so $o_N(1)$ in \eqref{ineq:packinga} can be replaced by $o_d(1)$. Combining \eqref{ineq:packinga} and \eqref{ineq:packingw}, we obtain
\begin{align*}
| \Theta_{L,M} | & \le o_d(1)  \cdot \big[ \sqrt{2\pi e}\, (4ML+1) \big]^N \prod_{\ell=1}^N  \big[ \sqrt{2\pi e}\, (4ML+1) \big]^d \\
&\le o_d(1) \cdot \big[ \sqrt{2\pi e}\, (4ML+1) \big]^{Nd+N}.
\end{align*}
\end{proof}

\bibliographystyle{amsalpha}
\bibliography{all-bibliography}

\newcommand{\etalchar}[1]{$^{#1}$}
\providecommand{\bysame}{\leavevmode\hbox to3em{\hrulefill}\thinspace}
\providecommand{\MR}{\relax\ifhmode\unskip\space\fi MR }
\providecommand{\MRhref}[2]{%
  \href{http://www.ams.org/mathscinet-getitem?mr=#1}{#2}
}
\providecommand{\href}[2]{#2}
\begin{thebibliography}{GMMM21}

\bibitem[ADH{\etalchar{+}}19]{arora2019fine}
Sanjeev Arora, Simon Du, Wei Hu, Zhiyuan Li, and Ruosong Wang,
  \emph{Fine-grained analysis of optimization and generalization for
  overparameterized two-layer neural networks}, International Conference on
  Machine Learning, PMLR, 2019, pp.~322--332.

\bibitem[AH12]{atkinson2012spherical}
Kendall Atkinson and Weimin Han, \emph{Spherical harmonics and approximations
  on the unit sphere: an introduction}, vol. 2044, Springer Science, 2012.

\bibitem[AZLL19]{allen2019learning}
Zeyuan Allen-Zhu, Yuanzhi Li, and Yingyu Liang, \emph{Learning and
  generalization in overparameterized neural networks, going beyond two
  layers}, Advances in Neural Information Processing Systems \textbf{32}
  (2019), 6158--6169.

\bibitem[AZLS19]{allen2019convergence}
Zeyuan Allen-Zhu, Yuanzhi Li, and Zhao Song, \emph{A convergence theory for
  deep learning via over-parameterization}, International Conference on Machine
  Learning, 2019, pp.~242--252.

\bibitem[Bac17]{bach2017breaking}
Francis Bach, \emph{Breaking the curse of dimensionality with convex neural
  networks}, The Journal of Machine Learning Research \textbf{18} (2017),
  no.~1, 629--681.

\bibitem[Bau88]{baum1988capabilities}
Eric~B Baum, \emph{On the capabilities of multilayer perceptrons}, Journal of
  Complexity \textbf{4} (1988), no.~3, 193--215.

\bibitem[BELM20]{bubeck2020network}
S{\'e}bastien Bubeck, Ronen Eldan, Yin~Tat Lee, and Dan Mikulincer,
  \emph{Network size and weights size for memorization with two-layers neural
  networks}, arXiv:2006.02855 (2020).

\bibitem[BHLM19]{bartlett2019nearly}
Peter~L Bartlett, Nick Harvey, Christopher Liaw, and Abbas Mehrabian,
  \emph{Nearly-tight vc-dimension and pseudodimension bounds for piecewise
  linear neural networks.}, J. Mach. Learn. Res. \textbf{20} (2019), 63--1.

\bibitem[BHMM19]{belkin2019reconciling}
Mikhail Belkin, Daniel Hsu, Siyuan Ma, and Soumik Mandal, \emph{Reconciling
  modern machine-learning practice and the classical bias--variance trade-off},
  Proceedings of the National Academy of Sciences \textbf{116} (2019), no.~32,
  15849--15854.

\bibitem[BHX19]{belkin2019two}
Mikhail Belkin, Daniel Hsu, and Ji~Xu, \emph{Two models of double descent for
  weak features}, arXiv:1903.07571, 2019.

\bibitem[BLLT20]{bartlett2020benign}
Peter~L Bartlett, Philip~M Long, G{\'a}bor Lugosi, and Alexander Tsigler,
  \emph{Benign overfitting in linear regression}, Proceedings of the National
  Academy of Sciences (2020).

\bibitem[BM19]{bietti2019inductive}
Alberto Bietti and Julien Mairal, \emph{On the inductive bias of neural tangent
  kernels}, arXiv:1905.12173 (2019).

\bibitem[BMR21]{bartlett_montanari_rakhlin_2021}
Peter~L. Bartlett, Andrea Montanari, and Alexander Rakhlin, \emph{Deep
  learning: a statistical viewpoint}, Acta Numerica \textbf{30} (2021),
  87–201.

\bibitem[CC14]{costas2014spherical}
Efthimiou Costas and Frye Christopher, \emph{Spherical harmonics in $p$
  dimensions}, World Scientific, 2014.

\bibitem[CCZG19]{chen2019much}
Zixiang Chen, Yuan Cao, Difan Zou, and Quanquan Gu, \emph{How much
  over-parameterization is sufficient to learn deep relu networks?},
  arXiv:1911.12360 (2019).

\bibitem[CG19]{cao2019generalization}
Yuan Cao and Quanquan Gu, \emph{Generalization bounds of stochastic gradient
  descent for wide and deep neural networks}, Advances in Neural Information
  Processing Systems \textbf{32} (2019), 10836--10846.

\bibitem[COB19]{chizat2019lazy}
Lenaic Chizat, Edouard Oyallon, and Francis Bach, \emph{On lazy training in
  differentiable programming}, Advances in Neural Information Processing
  Systems, 2019, pp.~2937--2947.

\bibitem[Cov65]{cover1965geometrical}
Thomas~M Cover, \emph{Geometrical and statistical properties of systems of
  linear inequalities with applications in pattern recognition}, IEEE
  transactions on electronic computers (1965), no.~3, 326--334.

\bibitem[Dan19]{daniely2019neural}
Amit Daniely, \emph{Neural networks learning and memorization with (almost) no
  over-parameterization}, arXiv preprint arXiv:1911.09873 (2019).

\bibitem[Dan20]{daniely2020memorizing}
\bysame, \emph{Memorizing gaussians with no over-parameterizaion via gradient
  decent on neural networks}, arXiv:2003.12895 (2020).

\bibitem[DK70]{davis1970sin}
Chandler Davis and W.~M. Kahan, \emph{The rotation of eigenvectors by a
  perturbation. iii}, SIAM Journal on Numerical Analysis \textbf{7} (1970),
  no.~1, pp. 1--46.

\bibitem[DZPS18]{du2018gradient}
Simon~S Du, Xiyu Zhai, Barnabas Poczos, and Aarti Singh, \emph{Gradient descent
  provably optimizes over-parameterized neural networks}, International
  Conference on Learning Representations, 2018.

\bibitem[GMMM20]{ghorbani2020neural}
Behrooz Ghorbani, Song Mei, Theodor Misiakiewicz, and Andrea Montanari,
  \emph{When do neural networks outperform kernel methods?}, arXiv:2006.13409
  (2020).

\bibitem[GMMM21]{ghorbani2019linearized}
Behrooz Ghorbani, Song Mei, Theodor Misiakiewicz, and Andrea Montanari,
  \emph{{Linearized two-layers neural networks in high dimension}}, The Annals
  of Statistics \textbf{49} (2021), no.~2, 1029 -- 1054.

\bibitem[HMRT19]{hastie2019surprises}
Trevor Hastie, Andrea Montanari, Saharon Rosset, and Ryan~J Tibshirani,
  \emph{Surprises in high-dimensional ridgeless least squares interpolation},
  arXiv:1903.08560 (2019).

\bibitem[JGH18]{jacot2018neural}
Arthur Jacot, Franck Gabriel, and Cl{\'e}ment Hongler, \emph{Neural tangent
  kernel: Convergence and generalization in neural networks}, Advances in
  neural information processing systems, 2018, pp.~8571--8580.

\bibitem[JT19]{ji2019polylogarithmic}
Ziwei Ji and Matus Telgarsky, \emph{Polylogarithmic width suffices for gradient
  descent to achieve arbitrarily small test error with shallow relu networks},
  International Conference on Learning Representations, 2019.

\bibitem[Kow94]{kowalczyk1994counting}
Adam Kowalczyk, \emph{Counting function theorem for multi-layer networks},
  Advances in neural information processing systems, 1994, pp.~375--382.

\bibitem[LR20]{liang2018just}
Tengyuan Liang and Alexander Rakhlin, \emph{Just interpolate: Kernel
  “ridgeless” regression can generalize}, Annals of Statistics \textbf{48}
  (2020), no.~3, 1329--1347.

\bibitem[LRZ20]{liang2020multiple}
Tengyuan Liang, Alexander Rakhlin, and Xiyu Zhai, \emph{On the multiple descent
  of minimum-norm interpolants and restricted lower isometry of kernels},
  Proceedings of Thirty Third Conference on Learning Theory (Jacob Abernethy
  and Shivani Agarwal, eds.), Proceedings of Machine Learning Research, vol.
  125, PMLR, 2020, arXiv:1908.10292, pp.~2683--2711.

\bibitem[LXS{\etalchar{+}}19]{lee2019wide}
Jaehoon Lee, Lechao Xiao, Samuel Schoenholz, Yasaman Bahri, Roman Novak, Jascha
  Sohl-Dickstein, and Jeffrey Pennington, \emph{Wide neural networks of any
  depth evolve as linear models under gradient descent}, Advances in neural
  information processing systems \textbf{32} (2019), 8572--8583.

\bibitem[LZB20]{liu2020linearity}
Chaoyue Liu, Libin Zhu, and Mikhail Belkin, \emph{On the linearity of large
  non-linear models: when and why the tangent kernel is constant}, Advances in
  Neural Information Processing Systems (H.~Larochelle, M.~Ranzato, R.~Hadsell,
  M.~F. Balcan, and H.~Lin, eds.), vol.~33, Curran Associates, Inc., 2020,
  pp.~15954--15964.

\bibitem[MM21]{mei2019generalization}
Song Mei and Andrea Montanari, \emph{The generalization error of random
  features regression: Precise asymptotics and double descent curve},
  Communications in Pure and Applied Mathematics (2021).

\bibitem[MMM21]{mei2021generalization}
Song Mei, Theodor Misiakiewicz, and Andrea Montanari, \emph{Generalization
  error of random features and kernel methods: hypercontractivity and kernel
  matrix concentration}, arXiv:2101.10588 (2021).

\bibitem[MRSY19]{montanari2019generalization}
Andrea Montanari, Feng Ruan, Youngtak Sohn, and Jun Yan, \emph{The
  generalization error of max-margin linear classifiers: High-dimensional
  asymptotics in the overparametrized regime}, arXiv:1911.01544 (2019).

\bibitem[NCS19]{nitanda2019gradient}
Atsushi Nitanda, Geoffrey Chinot, and Taiji Suzuki, \emph{Gradient descent can
  learn less over-parameterized two-layer neural networks on classification
  problems}, arXiv:1905.09870 (2019).

\bibitem[NTS15]{neyshabur2015search}
Behnam Neyshabur, Ryota Tomioka, and Nathan Srebro, \emph{In search of the real
  inductive bias: On the role of implicit regularization in deep learning.},
  ICLR (Workshop), 2015.

\bibitem[Oli09]{oliveira2009concentration}
Roberto~Imbuzeiro Oliveira, \emph{Concentration of the adjacency matrix and of
  the laplacian in random graphs with independent edges}, arXiv:0911.0600
  (2009).

\bibitem[OS20]{oymak2020towards}
Samet Oymak and Mahdi Soltanolkotabi, \emph{Towards moderate
  overparameterization: global convergence guarantees for training shallow
  neural networks}, IEEE Journal on Selected Areas in Information Theory
  (2020).

\bibitem[PLYS21]{park2021provable}
Sejun Park, Jaeho Lee, Chulhee Yun, and Jinwoo Shin, \emph{Provable
  memorization via deep neural networks using sub-linear parameters},
  Conference on Learning Theory, PMLR, 2021, pp.~3627--3661.

\bibitem[RR08]{rahimi2008random}
Ali Rahimi and Benjamin Recht, \emph{Random features for large-scale kernel
  machines}, Advances in neural information processing systems, 2008,
  pp.~1177--1184.

\bibitem[{Sak}92]{sakurai1992}
Akito {Sakurai}, \emph{n-h-1 networks store no less n*h+1 examples, but
  sometimes no more}, [Proceedings 1992] IJCNN International Joint Conference
  on Neural Networks, vol.~3, 1992, pp.~936--941.

\bibitem[Sch42]{schoenberg1942positive}
Isaac~J. Schoenberg, \emph{Positive definite functions on spheres}, Duke Math.
  J. \textbf{9} (1942), 96--–108.

\bibitem[SJL18]{soltanolkotabi2018theoretical}
Mahdi Soltanolkotabi, Adel Javanmard, and Jason~D Lee, \emph{Theoretical
  insights into the optimization landscape of over-parameterized shallow neural
  networks}, IEEE Transactions on Information Theory \textbf{65} (2018), no.~2,
  742--769.

\bibitem[T{\etalchar{+}}11]{Tropp11}
Joel Tropp et~al., \emph{Freedman's inequality for matrix martingales},
  Electronic Communications in Probability \textbf{16} (2011), 262--270.

\bibitem[Ver10]{vershynin2010introduction}
Roman Vershynin, \emph{Introduction to the non-asymptotic analysis of random
  matrices}, arXiv:1011.3027 (2010).

\bibitem[Ver18]{vershynin2018high}
\bysame, \emph{High-dimensional probability: An introduction with applications
  in data science}, vol.~47, Cambridge university press, 2018.

\bibitem[Ver20]{vershynin2020memory}
\bysame, \emph{Memory capacity of neural networks with threshold and relu
  activations}, arXiv:2001.06938 (2020).

\bibitem[vH14]{VanHandel}
Ramon van Handel, \emph{Probability in high dimension}, Tech. report, PRINCETON
  UNIV NJ, 2014.

\bibitem[VYS21]{vardi2021optimal}
Gal Vardi, Gilad Yehudai, and Ohad Shamir, \emph{On the optimal memorization
  power of relu neural networks}, arXiv preprint arXiv:2110.03187 (2021).

\bibitem[WCL20]{weinan2020comparative}
E~Weinan, Ma~Chao, and Wu~Lei, \emph{A comparative analysis of optimization and
  generalization properties oftwo-layer neural network and random feature
  models under gradient descent dynamics}, Science China Mathematics
  \textbf{63} (2020), no.~7, 1235.

\bibitem[YSJ19]{yun2019small}
Chulhee Yun, Suvrit Sra, and Ali Jadbabaie, \emph{Small relu networks are
  powerful memorizers: a tight analysis of memorization capacity}, Advances in
  Neural Information Processing Systems, 2019, pp.~15558--15569.

\bibitem[ZBH{\etalchar{+}}16]{zhang2016understanding}
Chiyuan Zhang, Samy Bengio, Moritz Hardt, Benjamin Recht, and Oriol Vinyals,
  \emph{Understanding deep learning requires rethinking generalization},
  arXiv:1611.03530 (2016).

\bibitem[ZCZG20]{zou2020gradient}
Difan Zou, Yuan Cao, Dongruo Zhou, and Quanquan Gu, \emph{Gradient descent
  optimizes over-parameterized deep {ReLU} networks}, Machine Learning
  \textbf{109} (2020), no.~3, 467--492.

\end{thebibliography}

\end{document}